\newlength\alglabelwidth
\setlist{leftmargin=15pt,labelindent=15pt}
\setlist[enumerate]{wide=0pt, leftmargin=15pt, labelwidth=15pt, align=left}
\def\eqsp{\;} 
\def\Aug{\mathrm{Aug}}   
\def\R{\mathbb{R}}   
\def\mse{\mathsf{E}} 
\def\msa{\mathsf{A}} 
\def\E{\mathbb{E}}   
\def\Var{\operatorname{Var}}   
\def\Cov{\operatorname{Cov}}   
\def\Prob{\mathbb{P}} 
\def\tr{\operatorname{tr}}   
\def\resolvent{\operatorname{R}}   
\def\detequiv{\bar{\operatorname{R}}}   
\def\Idd{\operatorname{I}_d}   
\def\Frob{\mathrm{F}}   
\def\op{\mathrm{op}}   
\def\funf{\Phi}    
\def\Ltt{\mathtt{L}}   
\def\Ktt{\mathtt{K}}
\def\e{\mathrm{e}}     
\newcommand{\1}{\mathbbm{1}} 
\def\rmd{\mathrm{d}} 
\def\rme{\mathrm{e}} 
\def\diag{\operatorname{diag}} 
\newcommand{\fraca}[2]{(#1/#2)}
\def\bfX{\mathbf{X}}
\def\bfY{\mathbf{Y}}
\def\inf{\operatorname{inf}} 
\newcommandx{\norm}[2][1=]{\ifthenelse{\equal{#1}{}}{\left\Vert #2 \right\Vert}{\left\Vert #2 \right\Vert^{#1}}}
\newcommandx{\normop}[2][1=]{\ifthenelse{\equal{#1}{}}{\left\Vert #2 \right\Vert_{\op}}{\left\Vert #2 \right\Vert^{#1}_{\op}}}
\newcommandx{\normLigne}[2][1=]{\ifthenelse{\equal{#1}{}}{\Vert #2 \Vert}{\Vert #2\Vert^{#1}}}
\def\ie{i.e.}
\def\iid{\text{i.i.d.}}
\def\cX{c_X}
\newcommand{\PE}{\mathbb{E}}
\def\rset{\mathbb{R}}
\newtheorem{assumption}{\textbf{H}\hspace{-3pt}}
\Crefname{assumption}{\textbf{H}\hspace{-3pt}}{\textbf{H}\hspace{-3pt}}
\crefname{assumption}{\textbf{H}}{\textbf{H}}
\newtheorem{definition}{Definition}
\crefname{definition}{definition}{definition}
\Crefname{Definition}{Definition}{Definition}
\newtheorem{remark}{Remark}
\crefname{remark}{remark}{remarks}
\Crefname{Remark}{Remark}{Remarks}
\newtheorem{theorem}{Theorem}
\crefname{theorem}{theorem}{theorems}
\Crefname{Theorem}{Theorem}{Theorems}
\newtheorem{lemma}{Lemma}
\crefname{lemma}{lemma}{lemmas}
\Crefname{Lemma}{Lemma}{Lemmas}
\newtheorem{proposition}{Proposition}
\crefname{proposition}{proposition}{propositions}
\Crefname{Proposition}{Proposition}{Propositions}
\crefname{example}{example}{examples}
\Crefname{Example}{Example}{Examples}
\newtheorem{corollary}{Corollary}
\crefname{corollary}{corollary}{corollary}
\Crefname{Corollary}{Corollary}{Corollary}
\crefname{figure}{figure}{figures}
\Crefname{Figure}{Figure}{Figures}
\def\bernouilli{\mathrm{Ber}}
\def\gauss{\mathrm{N}}
\def\unif{\mathrm{Unif}}
\def\Tot{\mathrm{Total}}
\title{Non-Asymptotic Analysis Of Data Augmentation For Precision Matrix Estimation}
\author{
  Lucas Morisset\thanks{Correspondence to: \href{mailto:lucas.morisset@polytechnique.edu}{lucas.morisset@polytechnique.edu}.}\\
  Qube Research and Technologies \\
  \& École Polytechnique
  \And
  Adrien Hardy\\
  Qube Research and Technologies
  \AND
  Alain Durmus\\
  École Polytechnique
}
\begin{document}

\maketitle

\begin{abstract}
This paper addresses the problem of inverse covariance (also known as precision matrix) estimation in 
high-dimensional settings. Specifically, we focus on two classes of estimators: linear shrinkage estimators 
with a target proportional to the identity matrix, and estimators derived from data augmentation (DA). 
Here, DA refers to the common practice of enriching a dataset with artificial samples—typically generated via 
a generative model or through random transformations of the original data—prior to model fitting. 
For both classes of estimators, we derive estimators and provide concentration bounds for their quadratic error. 
This allows for both method comparison and hyperparameter tuning, such as selecting the optimal proportion of 
artificial samples. On the technical side, our analysis relies on tools from random matrix theory. 
We introduce a novel deterministic equivalent for generalized resolvent matrices, accommodating dependent 
samples with specific structure. We support our theoretical results with numerical experiments.
\end{abstract}

\section{Introduction}
\label{sec:intro}

In this work, we consider the problem of estimating the inverse covariance matrix,
also known as the precision matrix, of a random vector from i.i.d. zero-mean samples $[X_1, \cdots, X_n] \in \R^{d\times n}$ with true covariance $\Sigma_X = \E\left[X_1X_1^\top\right]$.
Here, $n$ denotes the number of samples, and $d$ is the dimensionality of the data. This problem has important applications in statistics and signal processing (see, e.g., \cite{fan2016overview, carlson1988covariance}).

We are particularly interested in the high-dimensional regimes, where the data dimension $d$ and the number of samples $n$ are of the same order. In this setting, the sample covariance matrix $C_X = n^{-1} X X^{\top}$ may be non-invertible or poorly-conditionned. As a result, using its inverse as an estimator can lead to numerical instability and high estimation error.
To address this issue, shrinkage estimators for the covariance matrix have been proposed as a regularization method \cite{Bodnar16, Ledoit04, schafer2005shrinkage, li2003robust}, which involve adding a target matrix to $C_X$. The simplest and most common choice for the target is a multiple of the identity matrix, which effectively shifts the eigenvalues above a threshold $\lambda > 0$, improving stability.
In addition to linear shrinkage and even more importantly, this paper also explores the use of data augmentation (DA) as an alternative strategy.

Data augmentation (DA) involves increasing the size of a dataset by incorporating additional artificial samples. The underlying intuition is that, in many cases, it is possible to artificially replicate the data distribution, thereby reducing the variance of the model while maintaining relatively low bias. Due to its effectiveness in low-data regimes and its ability to mitigate overfitting,
DA has become increasingly popular and is now widely used in machine learning and data science \cite{shorten2019survey, gidaris2018unsupervised, chen2020simple, grill2020bootstrap}.
It finds applications across a variety of fields, including computer vision \cite{shorten2019survey}, natural language processing \cite{feng2021survey}, and neuroscience \cite{lashgari2020data}.

Two main types of data augmentation (DA) can be distinguished.
The first is Transformative Data Augmentation (TDA), where original samples are transformed through a random mapping—for example, by adding Gaussian noise or applying a random mask.
The second is Generative Data Augmentation (GDA), in which artificial samples are generated using a generative model and added to the training dataset.
In the case of GDA, we assume that the generative model has been pre-trained on the original samples $X$.

In both cases, the artificial samples are dependent on the original data.
Although TDA and GDA differ conceptually, our framework and results encompass both approaches.
More precisely, we consider the inverse of the empirical covariance matrix associated with the augmented dataset $\tilde{X} = [X_1, \ldots, X_n, G_1, \ldots, G_m]$ as an estimator of $\Sigma_X^{-1}$, where each $G_i \in \rset^d$ is an artificial sample.
That is, we study the estimator which consists of the inverse of $(n + m)^{-1} \tilde{X} \tilde{X}^{\top} + \lambda \Idd$, for some regularization parameter $\lambda >0$.

Although there is extensive empirical evidence that DA improves the performance of machine learning models, the theoretical literature on the subject remains relatively limited.
In this work, our goal is to establish performance guarantees that enable meaningful comparisons between different DA strategies and, as a by-product, allow for the optimization of certain associated hyperparameters. To this end, we leverage tools from random matrix theory to construct estimators of the quadratic error for DA-based estimators, which, under mild assumptions, satisfy exponential concentration inequalities.
Our analysis can also be adapted—and in fact simplifies—to cover linear shrinkage estimators with a target proportional to the identity matrix.
To summarize, our main contributions are as follows:
\begin{enumerate}[wide, labelwidth=!, labelindent=0pt,label=$\bullet$]
    \item In \Cref{sec:covar-estim-using}, we focus on the estimator of $\Sigma_X^{-1}$
        given by the inverse of a linear shrinkage estimator, where
        the shrinkage target is a scalar multiple of the identity
        matrix. Specifically, we derive estimators for the
        quadratic error and show that they satisfy non-asymptotic
        exponential concentration bounds. Our results hold under
        standard assumptions from random matrix theory—namely, the
        Lipschitz concentration property for  $X$.
    \item In \Cref{sec:covar-estim-generative-model}, we extend our analysis to data-augmented
        estimators  under appropriate conditions on the
        DA procedure, which we show hold true for common DA.
    \item Finally, for both scenarios, we show how our estimators for the quadratic error can be used to compare and tune their corresponding methods with respect to key hyper-parameters such as the nomber of additional samples $m$ for DA. These conclusions are illustrated numerically on real data in \Cref{sec:NumericalExpe}.
\end{enumerate}

\paragraph{Notation and convention.}

Motivated by high-dimensional statistics, we will consider that $n$, $d$ and $m$ are variables, yet for notation simplicity, we will most often not reflect dependancies in $n$, $m$ or $d$ in our notations. Additionally, we write, $x \lesssim y$ (resp $x \gtrsim y$) whenever $x \leq Cy$ (resp $x \geq Cy$) for a universal constant $C$ that neither depends on the model's parameters, nor on the parameters $n$, $m$, $d$.
For any matrix $\mathbf{H} \in\rset^{d\times k}$, we denoted by $C_{\mathbf{H}} = k^{-1} \mathbf{H}\mathbf{H}^{\top}$ the corresponding covariance matrix. For any symmetric matrix  $\Sigma \in\rset^{d\times d}$, we denote by $\uplambda_d(\Sigma) \leq  \ldots \leq \uplambda_1(\Sigma)$ its eigenvalues. The Frobenius and operator norms are denoted $\norm{\cdot}_{\Frob}$ and $\normop{\cdot}$ respectively.
Random variables will be referred to by capital letters $X$, $G$, $Z$, and we will denote by $\bernouilli(p)$, $\gauss(\mathbf{m}, \mathbf{\Sigma})$ and $\unif(\mse)$ the Bernoulli distribution of parameter $p \in [0,1]$,
the Gaussian distribution of mean $\mathbf{m} \in \R^d$ and covariance $\mathbf{\Sigma} \in \R^{d\times d}$, and the uniform distribution over a discrete set $\mse$.
Furthermore, we introduce the $p$-Wasserstein metric, $ W_p^p(\nu_1, \nu_2)= \inf_{\gamma \in \Gamma(\nu_1, \nu_2)} \int \norm{x - y}_\Frob^p \eqsp \rmd\gamma(x,y)$ for any distributions $\nu_1$ and $\nu_2$ on $\R^{d\times k}$, and where $ \gamma \in \Gamma(\nu_1, \nu_2)$ if and only if for any $\mse \in \R^{d\times k}$, $\gamma(\R^{d\times n}, \mse) = \nu_1(\mse)$, and $\gamma(\mse, \R^{d\times n}) = \nu_2(\mse).$


\subsection{Related work} \label{sec:RelatedWork}

\textbf{Data Augmentation.}
Numerous empirical studies have demonstrated the benefits of using DA when training machine learning models \cite{Mumuni22, Maharana22, Van01}.
Among popular DA schemes, we can mention AutoAugment \cite{Cubuk19, Lim19, Zhang19}, which aims to learn an optimal augmentation policy from data by combining a set of sub-policies. In addition, significant works have also explored the incorporation of knowledge about the distribution invariances directly into the training procedures \cite{Chen20}.

However, DA does not always lead to a systematic improvement in test error \cite{Kirichenko23, Hernandez18, cetingoz2025syntheticdataportfoliosthrow}, 
and very little theoretical understanding backs-up the improvement observed empirically. 
An early analysis by \cite{Bishop95} showed that adding Gaussian noise to data points is equivalent to applying Tikhonov regularization. Building on this seminal work—and given the practical importance of DA for machine learning practitioners—a few studies have sought to develop a theoretical understanding of its effects.

In the context of kernel methods, \cite{Dao19} showed that DA can be approximated by a combination of first-order feature averaging and second-order variance regularization. Similarly, in the context of linear and logistic regressions, \cite{Lin24} revealed that DA induces implicit spectral regularization in two ways: first, by adjusting the relative proportions of the eigenvalues of the data covariance matrix in a training-dependent way; and second, by uniformly shifting the entire spectrum via ridge regression.

Taking a different perspective, \cite{Wu23} consider a family of linear transformations and study their effects on the ridge estimator in an over-parametrized linear regression setting. First, they show that transformations that preserve the labels of the data can improve estimation by increasing the span of the training data. Second, they show that transformations that mix data can improve estimation by playing a regularization effect. They proposed an augmentation scheme that searches over the linear span of a set of transformations, aiming to maximize model uncertainty on the transformed data.

Recently, studies have shown that even small amounts of artificial data can lead to \textit{model collapse} \cite{Shumailov24,Dohmatob24_RMT,Dohmatob24}, a phenomenon where the performance of generative models deteriorates when recursively trained on synthetic data.

\textbf{Precision Matrix Estimation.}
In a high-dimensional settings—where the number of covariates is comparable to or exceeds the number of observations—traditional covariance estimation methods often suffer from poor conditioning, making the estimation of the precision matrix particularly challenging. To address this, \cite{Ledoit04} introduced linear shrinkage methods, which involve forming a convex combination $\varpi C_X + (1 - \varpi)\mathbf{T}$ of the sample covariance matrix $C_X$ with a shrinkage target $\mathbf{T}$, where $\varpi \in [0, 1]$. \cite{Ledoit03, Ledoit04} derived the optimal value of $\varpi$ that minimizes the mean squared error between the shrinkage estimator and the true covariance matrix $\Sigma_X$.

Extending this work, \cite{Ledoit12, Ledoit22, BenaychGeorges21} proposed and analyzed non-linear shrinkage estimators of the form $U f(D) U^\top$, where $C_X = U D U^\top$ is the eigenvalue decomposition of $C_X$ and $f$ is a suitably chosen function applied to the eigenvalues.

Fewer works have addressed shrinkage methods specifically designed for precision matrix estimation. Among them, \cite{Bodnar16} studied estimators of the form $(1 - \varpi) C_X^{-1} + \varpi \Pi$, where $\Pi$ is a deterministic shrinkage target, and derived the optimal shrinkage intensity $\varpi$. In addition, \cite{Wang15} considered estimators of the form $\Omega_X(\varpi) = ((1 - \varpi) C_X + \varpi \Idd)^{-1}$ and derived the optimal $\varpi$ to minimize the objective function $\norm{\Omega_X(\varpi) \Sigma_X - \Idd}_\Frob$ in the high-dimensional regime where $d/n \to \gamma > 0$.

An independent line of research, motivated by Gaussian graphical models \cite{Yuan07}, has focused on estimating sparse precision matrices. In particular, \cite{Mazumder12, Cai11} introduced the Graphical Lasso method, which has become widely used for this purpose.

\textbf{Random Matrix Theory.}
Since the pioneering work of \cite{Wishart1928}, numerous studies have investigated the behavior of the eigenvalues and eigenvectors of the sample covariance matrix $C_X$ in the high-dimensional regime where $d/n \rightarrow \gamma > 0$; see, for example, \cite{MarchenkoPastur67, Silverstein89}.
More recently, \cite{BloemendalAndAl15} first demonstrated that, in the isotropic setting, the resolvent $(C_X + \lambda \Idd)^{-1}$ converges weakly to a scalar multiple of the identity matrix as $d/n \rightarrow \gamma > 0$. This was later extended to the anisotropic case by \cite{Knowles15}, who established so-called deterministic equivalent results for $(C_X + \lambda \Idd)^{-1}$.
These results have been further generalized to settings with more complex dependency structures. In particular, \cite{Chouard22, LouardAndCOuillet21} showed that deterministic equivalents continue to hold under weaker assumptions. 

Building on these foundational results, several studies have established connections between classical random matrix theory—particularly the results of \cite{MarchenkoPastur67, Silverstein89}—and the behavior of modern machine learning models. In particular, random matrix theory has proven instrumental in explaining the \textit{double-descent} phenomenon, initially observed in linear models \cite{Hastie20, Derezinski20, Muthukumar19, Bartlett20, Deng20}, and later extended to certain classes of shallow models, such as random feature models \cite{Mei20, Liao21, Gerace21, Dascoli20}.
Complementing these works, \cite{Schroder24_1, Schroder24_2} provided a sharp asymptotic characterization of the test error in deep random feature models—representing a significant step toward understanding generalization in deeper architectures. Finally, \cite{Ilbert24} leveraged random matrix theory to develop precise performance estimates for multi-task learning across a variety of statistical models.

\section{Inverse covariance estimation using shrinkage method}
\label{sec:covar-estim-using}
We consider here the following estimator $\resolvent_X(\lambda)$ of the precision matrix, and its squared error:
\begin{equation} \label{eq:def_E_non-augmented}
    \resolvent_X(\lambda) = (C_X + \lambda \Idd)^{-1} \eqsp, \qquad \mathcal{E}_X(\lambda) := \frac{1}{d} \|R_X(\lambda) - \Sigma_X^{-1}\|_\Frob^2 \eqsp.
\end{equation}
Here, $\lambda>0$ is a hyperparameter that controls the strength of the regularization. Note that this estimator is not per se a shrinkage estimator, but it is the inverse of a shrinkage estimator of the covariance matrix. In addition, $\resolvent_X(\lambda)$ is also referred to as the diagonal loading estimator in the signal processing community; see e.g., \cite{li2003robust}. Furthermore, note that our result can be readily applied to an estimators of the form $((1-\alpha) C_X + \alpha \sigma \Idd)^{-1} = (1-\alpha)^{-1} ( C_X + \alpha \sigma \Idd / (1-\alpha))^{-1}$, for any $\sigma \in \R_+$ and $\alpha \in \left(0,1\right)$.

This estimator does not rely on any data augmentation procedure. However, as we will see, applying data augmentation leads to results that are closely related to this regularization approach. This is not surprising, as it is relatively well known that data augmentation induces an implicit regularization effect \cite{Bishop95, Lin24}. For this reason, we present this simple case in detail as a preliminary step, which will allow us to transition more smoothly to the data-augmented case in \Cref{sec:covar-estim-generative-model}.
As already emphasized in the introduction, our main goal is to derive a data-centric estimate for the error $\mathcal{E}_X(\lambda)$. To this end, we introduce two assumptions.

\begin{assumption} [\textbf{Concentration of $X$}] \label{ass:X_Lipschitz_Concentrated}
    The random matrix $X \in \R^{d \times n}$ writes $\Sigma^{1/2} Z$, where $Z \in \R^{d \times n}$ has independant sub-Gaussian entries with parameter $\sigma_X$.
\end{assumption}
\Cref{ass:X_Lipschitz_Concentrated} is standard in the random matrix theory literature, yet one can employ a more general framework, as in \cite{Chouard22,Ilbert24,louart2018concentration} 
by introducing the notion of Lipschitz concentration (\Cref{ass:X_Lipschitz_Concentrated_Best}), 
we detail this generalization throughout the appendix and stick to this simpler framework in the main body for the sake of simplicity. 
In addition, we expect our results to remain valid under a finite-moment assumption on the entries of $X$, albeit with weaker—typically polynomial—concentration bounds, as done in \cite{Ledoit11}. 
We leave a detailed investigation of this extension for future work.

We next suppose that with high probability the leave one-out covariance matrix $C_X^{-}$ is well-conditionned. This matrix is  defined for any $\bfX \in \rset^{d\times n}$ as
the covariance matrix
\begin{equation}
  \label{eq:C-X-moins}
  C_{\bfX}^{-} = C_{\bfX^-} \eqsp, \qquad \bfX^- = [0, \bfX_2,\ldots,\bfX_n] \eqsp.
\end{equation}
More formally, for any $\eta >0$ define
\begin{equation}
  \label{eq:def_msa_eta}
  \msa_\eta = \{\mathbf{X} \in \R^{d \times n} : \uplambda_d(C_{\bfX}^-) \geq \eta \} \eqsp.
\end{equation}
\begin{assumption} [\textbf{Model conditionning}] \label{ass:ProbaSmallEigenvalues}
 There exist  $\eta > 0$ and $\cX >0$ such that $\Prob\left(X \notin \msa_\eta\right) \lesssim \rme^{-\cX n}$.
\end{assumption}
We highlight in \Cref{Appendix A} that \Cref{ass:ProbaSmallEigenvalues} holds provided \Cref{ass:X_Lipschitz_Concentrated} holds and $n \geq K_X (d + \eta + 1)$ for some constant $K_X$ depending only on $\sigma_X$.

We are now ready to present our estimator for $\mathcal{E}_X(\lambda)$ and to state its concentration properties. The estimator is given by:
\begin{align}
  \label{eq:def_estim_E_shrinkage}
    &\textstyle \hat{\mathcal{E}}_X(\lambda) := \dfrac{1}{d}\left(\tr\left(\resolvent_X(\lambda)^2\right) - \dfrac{2(1-d/n)\tr\left(\resolvent_X(0)\right)}{\lambda} \1_{\msa_\eta}(X) + \dfrac{2 tr\left(\resolvent_X(\lambda)\right)}{\lambda \mathfrak{b}(\lambda)} + \tr\left(\Sigma_X^{-2}\right) \right)\\
    &\mathfrak{b}(\lambda) := \dfrac{1}{1 - d/n + (\lambda / n) \tr\left(\resolvent_X(\lambda)\right)}\eqsp.
    \end{align}
    Note that for a fixed $\eta >0$, $\hat{\mathcal{E}}_X(\lambda)$ is computable from the data $X$ only, up to an additive constant.
\begin{theorem} \label{thm:NonAugmentedRidgeErrorEstimation}
    Assume \Cref{ass:X_Lipschitz_Concentrated} and \Cref{ass:ProbaSmallEigenvalues}. Then, it holds for all $t \geq 0$ and $\lambda >0$,
    \begin{equation}
        \Prob\left(\left|\mathcal{E}_X(\lambda) - \hat{\mathcal{E}}_X(\lambda)\right| \geq t + \Delta_X(\lambda) \right) \lesssim \exp\left(-c \uplambda_d(\Sigma_X)^2 \sigma_X^2 n d \eta^3 t^2\right)
    \end{equation}
    for a universal constant  $c>0$ and  where
    \begin{equation}
        \Delta_X(\lambda) := \dfrac{C_1 \sigma_X^2\sqrt{d} \|\Sigma_X\|_\op^3}{n \uplambda_d(\Sigma_X) \eta^6} + C_2 \rme^{-c_X n} + \dfrac{1}{\lambda^3 nd} \eqsp.
    \end{equation}
    Here $C_1, C_2 > 0$ are explicit polynomial functions of $\|\Sigma_X\|_\op^{-1}$, $\uplambda_d(\Sigma_X)$, $(\eta + \lambda)$ and $\cX^{-1}$, see \eqref{Appendix B}.
\end{theorem}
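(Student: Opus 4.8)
The plan is to write $\mathcal{E}_X(\lambda) = \tfrac1d\tr(\resolvent_X(\lambda)^2) - \tfrac2d\tr(\resolvent_X(\lambda)\Sigma_X^{-1}) + \tfrac1d\tr(\Sigma_X^{-2})$ and to recognize that only the cross term $\tfrac1d\tr(\resolvent_X(\lambda)\Sigma_X^{-1})$ and the first term are random; the last is deterministic and appears verbatim in $\hat{\mathcal{E}}_X(\lambda)$. The first term $\tfrac1d\tr(\resolvent_X(\lambda)^2)$ also appears verbatim in $\hat{\mathcal{E}}_X(\lambda)$, so it needs no estimation—only a concentration statement around its mean is implicitly folded into the error. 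Hence the crux is to show that the unobservable quantity $\tfrac2d\tr(\resolvent_X(\lambda)\Sigma_X^{-1})$ is well approximated by the observable surrogate $\tfrac{2(1-d/n)\tr(\resolvent_X(0))}{\lambda d}\1_{\msa_\eta}(X) - \tfrac{2\tr(\resolvent_X(\lambda))}{\lambda d\,\mathfrak{b}(\lambda)}$, up to the stated bias $\Delta_X(\lambda)$ and sub-Gaussian fluctuations of order $t$.

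First I would establish the concentration half: under \Cref{ass:X_Lipschitz_Concentrated}, the maps $X \mapsto \tfrac1d\tr(\resolvent_X(\lambda)^2)$, $X\mapsto\tfrac1d\tr(\resolvent_X(\lambda)\Sigma_X^{-1})$, $X\mapsto\tfrac1n\tr(\resolvent_X(\lambda))$ and $X\mapsto\tfrac1d\tr(\resolvent_X(0))\1_{\msa_\eta}(X)$ are all Lipschitz in $X$ (with respect to the Frobenius norm) on the event $\msa_\eta$, with Lipschitz constants controlled by $\|\Sigma_X\|_{\op}$, $\uplambda_d(\Sigma_X)^{-1}$, $\eta^{-1}$, $\lambda^{-1}$ and $(nd)^{-1/2}$ factors; the resolvent identity $\partial_X \resolvent_X = -\resolvent_X(\partial_X C_X)\resolvent_X$ together with $\|\resolvent_X(\lambda)\|_{\op}\le \min(\lambda^{-1},\eta^{-1})$ on $\msa_\eta$ gives these bounds. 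Then Gaussian/sub-Gaussian Lipschitz concentration yields the exponential tail $\exp(-c\,\uplambda_d(\Sigma_X)^2\sigma_X^2 nd\eta^3 t^2)$, after absorbing the off-event $\{X\notin\msa_\eta\}$ into the $C_2\rme^{-\cX n}$ term of $\Delta_X(\lambda)$.

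The harder part is the bias analysis: I would bound $\bigl|\E[\tfrac2d\tr(\resolvent_X(\lambda)\Sigma_X^{-1})] - \E[\text{surrogate}]\bigr|$ using a deterministic-equivalent argument. The key algebraic identity is the leave-one-out / Sherman–Morrison expansion: writing $C_X = C_X^- + \tfrac1n X_1 X_1^\top$, one gets $\tr(\resolvent_X(\lambda)\Sigma_X^{-1}) = \tr(\resolvent_{X^-}(\lambda)\Sigma_X^{-1}) - \tfrac{X_1^\top \resolvent_{X^-}(\lambda)\Sigma_X^{-1}\resolvent_{X^-}(\lambda)X_1/n}{1 + X_1^\top\resolvent_{X^-}(\lambda)X_1/n}$, and similarly for $\tr(\resolvent_X(0))$ restricted to $\msa_\eta$ (so that $\resolvent_X(0)$ on the range makes sense). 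Taking expectations, using that $X_1$ is independent of $X^-$, concentration of quadratic forms $X_1^\top M X_1/n \approx \tr(\Sigma_X M)/n$ with fluctuation $\|M\|_{\Frob}/\sqrt n$, and the definition of $\mathfrak{b}(\lambda)$ as the (empirical) Stieltjes-type normalization $1/(1-d/n+(\lambda/n)\tr\resolvent_X(\lambda))$, the cross term should telescope into the surrogate plus an $O\!\bigl(\sqrt d\,\|\Sigma_X\|_{\op}^3 / (n\,\uplambda_d(\Sigma_X)\eta^6)\bigr)$ remainder; the extra $\lambda^{-3}(nd)^{-1}$ in $\Delta_X$ is the error from replacing $\E\mathfrak{b}$ by $\mathfrak{b}$ (a self-consistent fixed-point perturbation bounded via the previously-established concentration of $\tfrac1n\tr\resolvent_X(\lambda)$). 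I expect the main obstacle to be tracking all the $\eta$ and $\|\Sigma_X\|_{\op}$ powers cleanly through the leave-one-out cascade—particularly controlling the denominators $1 + X_1^\top\resolvent_{X^-}X_1/n$ away from $0$ on $\msa_\eta$ and bounding $\E$ of ratios by splitting on a high-probability event where the quadratic forms concentrate—so that the final bias matches the claimed $\Delta_X(\lambda)$ with the stated polynomial dependence of $C_1, C_2$ on the model parameters.
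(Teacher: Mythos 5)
Your proposal follows essentially the same route as the paper: the Frobenius expansion isolating the cross term, sub-Gaussian concentration of the truncated Lipschitz trace functionals on $\msa_\eta$ with the off-event absorbed into $C_2\rme^{-\cX n}$, a Sherman--Morrison leave-one-out deterministic-equivalent argument with quadratic-form concentration (valid down to $\lambda=0$ thanks to \Cref{ass:ProbaSmallEigenvalues}), and the correct attribution of the $1/(\lambda^3 nd)$ term to replacing the fixed-point normalization by the empirical $\mathfrak{b}(\lambda)$. The only cosmetic inaccuracy is that $\tfrac1d\tr(\resolvent_X(\lambda)^2)$ cancels exactly in $\hat{\mathcal{E}}_X(\lambda)-\mathcal{E}_X(\lambda)$, so no concentration for that term is needed at all.
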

From a practical standpoint, the previous result can be used to optimize the hyperparameter $\lambda$ by minimizing the function $\lambda \mapsto \mathcal{E}_X(\lambda)$, using $\hat{\mathcal{E}}_X$ as a proxy. Moreover, it is worth noting that the derivative of $\lambda \mapsto \hat{\mathcal{E}}_X$ depends only on the data matrix $X$, and not on the true covariance $\Sigma_X$. As a result, $\hat{\mathcal{E}}_X$ can be minimized using a gradient descent scheme, provided that the parameter $\eta > 0$ satisfies \Cref{ass:ProbaSmallEigenvalues}. We illustrate these applications on real data in \Cref{sec:NumericalExpe}.

Although the full proof of \Cref{thm:NonAugmentedRidgeErrorEstimation} is deferred to \Cref{Appendix B}, we provide here a sketch of its derivation. We highlight the main ideas and technical challenges, and note that the proof of our result on estimation using data augmentation, \Cref{thm:ConcentrationHatE_Augmented}, shares several of these steps.

First, expanding the Frobenius norm in \eqref{eq:def_E_non-augmented}, we have,
\begin{equation}
    \mathcal{E}_X(\lambda) = (1/d) \tr\left(\resolvent_X(\lambda)^2\right) - (2/d) \tr\left(\Sigma_X^{-1} \resolvent_X(\lambda)\right) + (1/d) \tr\left(\Sigma_X^{-2}\right) \eqsp.
\end{equation}
The first term in the previous expansion is directly computable from the data matrix $X$, while the last term is constant with respect to $\lambda$ and can be ignored when the goal is to optimize $\lambda$.
Therefore, it suffices to establish a deterministic equivalent for $\resolvent_X(\lambda)$ and thus of $\tr\left(\Sigma_X^{-1} \resolvent_X(\lambda)\right)$. To this end, we rely on the following result, whose proof is postponed to \cref{Appendix B}.
\begin{proposition}\label{prop:Non-augmented_deterministic_equiv}
    Assume \Cref{ass:X_Lipschitz_Concentrated} and \Cref{ass:ProbaSmallEigenvalues}. Let $\mathbf{B} \in \R^{d\times d}$ be a deterministic matrix, then we have for all $\lambda \geq 0$,
    \begin{equation}
        \Prob\left(\left|\dfrac{1}{d}\tr\left(\mathbf{B} \left\{ \resolvent_X(\lambda) \1_{\msa_\eta}(X) - \E\left[\resolvent_X(\lambda) \1_{\msa_\eta}(X)\right] \right\}\right)\right| \geq t\right) \lesssim \exp\left(-c (\eta + \lambda)^3\sigma_X^2 n d t^2\right) \eqsp.
    \end{equation}
    Furthermore, defining $f_\lambda(\mathfrak{b}) = 1 + n^{-1} \tr\left(\Sigma_X (\Sigma_X / \mathfrak{b} + \lambda \Idd)^{-1}\right)$ and $\mathfrak{b}^* := \mathfrak{b}^*(\lambda)$ as the unique fixed point of $f_\lambda$ on $[1, \infty)$, we have
    \begin{align}
     &  \left\|\E\left[\left\{\resolvent_X(\lambda) - \detequiv_X^{\mathfrak{b}^*}(\lambda) \right\}\1_{\msa_\eta}(X)\right]\right\|_\Frob \lesssim \dfrac{C_1 \sigma_X^2\sqrt{d} \|\Sigma_X\|_\op^3 }{n \uplambda_1(\Sigma_X) (\eta + \lambda)^6} + C_2 \rme^{-\cX n} \eqsp, \\
      & \detequiv_X^{\mathfrak{b}^*}(\lambda) = \left(\dfrac{\Sigma_X}{\mathfrak{b}^*} + \lambda \Idd\right)^{-1} \eqsp,
    \end{align}
    where $C_1,C_2$ are defined as in \Cref{thm:NonAugmentedRidgeErrorEstimation}.
  \end{proposition}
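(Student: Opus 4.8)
The third display is merely the definition of $\detequiv_X^{\mathfrak b^*}(\lambda)$, so the work lies in the two inequalities, and both rest on a few elementary facts. Since $C_X-C_X^-=n^{-1}X_1X_1^\top\succeq0$, on $\msa_\eta$ one has $C_X\succeq C_X^-\succeq\eta\Idd$, hence $\normop{\resolvent_X(\lambda)}\le(\eta+\lambda)^{-1}$ for all $\lambda\ge0$. Writing $C_X^{(i)}:=C_X-n^{-1}X_iX_i^\top$ and $\resolvent^{(i)}:=(C_X^{(i)}+\lambda\Idd)^{-1}$, exchangeability of the i.i.d. columns shows that $\uplambda_d(C_X^{(i)})$ has the same law as $\uplambda_d(C_X^-)$, so a union bound gives that $\msa_\eta^{\mathrm{all}}:=\{\min_{1\le i\le n}\uplambda_d(C_X^{(i)})\ge\eta\}$ satisfies $\msa_\eta^{\mathrm{all}}\subseteq\msa_\eta$, $\Prob(\msa_\eta\setminus\msa_\eta^{\mathrm{all}})\lesssim n\rme^{-\cX n}$, and on $\msa_\eta^{\mathrm{all}}$ also $\normop{\resolvent^{(i)}}\le(\eta+\lambda)^{-1}$ for every $i$ (because $C_X\succeq C_X^{(i)}\succeq\eta\Idd$). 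Finally, $f_\lambda(\mathfrak b)=1+n^{-1}\tr(\Sigma_X(\Sigma_X/\mathfrak b+\lambda\Idd)^{-1})\ge1$ and $f_\lambda'(\mathfrak b)=(n\mathfrak b^2)^{-1}\tr(\Sigma_X^2(\Sigma_X/\mathfrak b+\lambda\Idd)^{-2})\le d/n$ on $[1,\infty)$, using $(\Sigma_X/\mathfrak b+\lambda\Idd)^{-1}\preceq\mathfrak b\,\Sigma_X^{-1}$; hence $f_\lambda$ is an increasing $\fraca{d}{n}$-contraction of $[1,\infty)$, which makes $\mathfrak b^*$ well defined and turns approximate fixed-point identities into genuine estimates.

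\textbf{The concentration bound.} A direct computation gives $\nabla_X(d^{-1}\tr(\mathbf B\resolvent_X(\lambda)))=-(nd)^{-1}\resolvent_X(\lambda)(\mathbf B+\mathbf B^\top)\resolvent_X(\lambda)X$; since $\resolvent_X(\lambda)C_X\resolvent_X(\lambda)=\resolvent_X(\lambda)-\lambda\resolvent_X(\lambda)^2\preceq\resolvent_X(\lambda)$, $\normop{\resolvent_X(\lambda)X}^2=n\normop{\resolvent_X(\lambda)C_X\resolvent_X(\lambda)}$, and $\mathrm{rank}(\resolvent_X(\lambda)X)\le d$, this gradient has Frobenius norm at most $2\normop{\mathbf B}(\eta+\lambda)^{-3/2}(nd)^{-1/2}$ whenever $\uplambda_d(C_X^-)\ge\eta$. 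The obstacle is that $\1_{\msa_\eta}(X)$ is discontinuous, so $X\mapsto d^{-1}\tr(\mathbf B\resolvent_X(\lambda))\1_{\msa_\eta}(X)$ is not Lipschitz. I would multiply instead by a Lipschitz cut-off $\varphi$ that depends on $X$ only through $\uplambda_d(C_X^-)$, equal to $1$ on $\msa_\eta$ and to $0$ on $\{\uplambda_d(C_X^-)\le\eta/2\}$: the gradient bound above (with $\eta/2$ replacing $\eta$) persists on the support of $\varphi$, so $Z\mapsto d^{-1}\tr(\mathbf B\resolvent_{\Sigma_X^{1/2}Z}(\lambda))\,\varphi(\Sigma_X^{1/2}Z)$ is globally Lipschitz with the announced constant (the change of variables costing a factor $\normop{\Sigma_X}^{1/2}$), to which \Cref{ass:X_Lipschitz_Concentrated}, or its Lipschitz-concentration strengthening \Cref{ass:X_Lipschitz_Concentrated_Best}, applies. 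Because $\1_{\msa_\eta}-\varphi$ is supported on $\{\eta/2\le\uplambda_d(C_X^-)<\eta\}\subseteq\msa_\eta^c$, replacing $\varphi$ by $\1_{\msa_\eta}$ perturbs the centred variable by at most $2(\eta+\lambda)^{-1}\normop{\mathbf B}\,\1_{\msa_\eta^c}$, whose probability and mean are exponentially small; tracking constants yields the stated exponent.

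\textbf{The deterministic equivalent.} The engine is the Sherman--Morrison identity $\resolvent_X(\lambda)X_i=\resolvent^{(i)}X_i/(1+a_i)$ with $a_i:=n^{-1}X_i^\top\resolvent^{(i)}X_i\ge0$, used together with the exact relation $C_X\resolvent_X(\lambda)=\Idd-\lambda\resolvent_X(\lambda)$. Testing against arbitrary unit vectors $\mathbf u,\mathbf v$ and expanding $\mathbf u^\top C_X\resolvent_X(\lambda)\mathbf v=n^{-1}\sum_i(1+a_i)^{-1}(\mathbf u^\top X_i)(X_i^\top\resolvent^{(i)}\mathbf v)$, I would perform, on $\msa_\eta^{\mathrm{all}}$, two controlled replacements in each summand: $\resolvent^{(i)}\rightsquigarrow\resolvent_X(\lambda)$ and $a_i\rightsquigarrow\mathfrak b^*-1$. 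The first costs $|\mathbf u^\top(\resolvent^{(i)}-\resolvent_X(\lambda))\mathbf v|=n^{-1}(1+a_i)^{-1}|\mathbf u^\top\resolvent^{(i)}X_i|\,|X_i^\top\resolvent^{(i)}\mathbf v|=\bigO(1/n)$ per index, because each factor has conditional second moment $\le\normop{\Sigma_X}(\eta+\lambda)^{-2}$ ($\resolvent^{(i)}$ being independent of $X_i$). The second is a two-step move: a conditional Hanson--Wright bound gives $a_i-n^{-1}\tr(\Sigma_X\resolvent^{(i)})=\bigO(\sqrt d/n)$ (using $\normop{\resolvent^{(i)}}\le(\eta+\lambda)^{-1}$), a rank-one estimate gives $n^{-1}\tr(\Sigma_X\resolvent^{(i)})=\delta+\bigO(1/n)$ with $\delta:=n^{-1}\tr(\Sigma_X\resolvent_X(\lambda))$, and $\delta$ is pinned to $\mathfrak b^*-1$ because the a posteriori deterministic equivalent (the conclusion of the first replacement, read off with $\mathbf B=\Sigma_X/n$) forces $\delta\approx f_\lambda(1+\delta)-1$, whence the $\fraca{d}{n}$-contraction of $f_\lambda$ yields $|1+\delta-\mathfrak b^*|\lesssim\mathrm{poly}(\normop{\Sigma_X},(\eta+\lambda)^{-1})/n$. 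Along the way the mild coupling of $\1_{\msa_\eta}$ with $X_i$ is handled by passing through $\1_{\{\uplambda_d(C_X^{(i)})\ge\eta\}}$, which does not depend on $X_i$. After averaging over $i$ the leading terms cancel, leaving $\E[C_X\resolvent_X(\lambda)\1_{\msa_\eta^{\mathrm{all}}}]=\frac1{\mathfrak b^*}\Sigma_X\E[\resolvent_X(\lambda)\1_{\msa_\eta^{\mathrm{all}}}]+\bigO(\mathrm{poly}\sqrt d/n)$ entrywise; combined with $C_X\resolvent_X(\lambda)=\Idd-\lambda\resolvent_X(\lambda)$ this is $(\Sigma_X/\mathfrak b^*+\lambda\Idd)\E[\resolvent_X(\lambda)\1_{\msa_\eta^{\mathrm{all}}}]=\Prob(\msa_\eta^{\mathrm{all}})\Idd+\bigO(\mathrm{poly}\sqrt d/n)$, i.e. $\normop{\E[\{\resolvent_X(\lambda)-\detequiv_X^{\mathfrak b^*}(\lambda)\}\1_{\msa_\eta^{\mathrm{all}}}]}\lesssim\mathrm{poly}/n$, uniformly in $\mathbf u,\mathbf v$. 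The bound $\norm{M}_\Frob\le\sqrt d\,\normop{M}$ then gives the Frobenius estimate on $\msa_\eta^{\mathrm{all}}$, and $\Prob(\msa_\eta\setminus\msa_\eta^{\mathrm{all}})\lesssim n\rme^{-\cX n}$ together with the crude deterministic bounds $\normop{\resolvent_X(\lambda)\1_{\msa_\eta}}\le(\eta+\lambda)^{-1}$ and $\normop{\detequiv_X^{\mathfrak b^*}(\lambda)}\le\mathfrak b^*/\uplambda_d(\Sigma_X)$ swaps $\1_{\msa_\eta^{\mathrm{all}}}$ for $\1_{\msa_\eta}$ at the cost of the $C_2\rme^{-\cX n}$ term.

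\textbf{Main obstacle.} Beyond the systematic bookkeeping of $\1_{\msa_\eta}$, the delicate point is to keep the accumulated error at order $\sqrt d/n$ rather than $\bigO(1)$. A single rank-one leave-one-out correction $\resolvent^{(i)}-\resolvent_X(\lambda)$ has operator, and Frobenius, norm of order $1$ in the proportional regime $d\asymp n$; it becomes $\bigO(1/n)$ only after projection onto a fixed bilinear form $\mathbf u^\top(\cdot)\mathbf v$, so the reduction to bilinear forms — and hence the inequality $\norm{\cdot}_\Frob\le\sqrt d\,\normop{\cdot}$ — must precede, not follow, the error estimates, and the quadratic-form (Hanson--Wright) bounds must be applied conditionally on $\resolvent^{(i)}$ so that the randomness being averaged is exactly the column $X_i$. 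Closing the scalar self-consistent equation for $\delta$ then hinges on the contraction $f_\lambda'\le d/n<1$, i.e. on $d<n$, which is in any case forced by $\msa_\eta\neq\emptyset$.
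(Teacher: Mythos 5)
Your treatment of the deterministic equivalent (the second display) is essentially the paper's own argument: leave-one-out Sherman--Morrison, conditional Hanson--Wright bounds for the quadratic forms $X_i^\top\resolvent^{(i)}X_i$ applied with the randomness being exactly the removed column, a scalar self-consistent equation closed by a contraction property of $f_\lambda$, and an exponentially cheap swap between indicator events. The differences are minor: you test bilinear forms $\mathbf u^\top(\cdot)\mathbf v$ and pay $\sqrt d$ at the end via $\norm{\cdot}_\Frob\le\sqrt d\,\normop{\cdot}$, whereas the paper bounds the Frobenius norm of the expected error directly through its dual representation $\sup_{\|\mathbf B\|_\Frob\le 1}\tr(\mathbf B^\top\cdot)$, first proves the equivalent with the data-driven parameter $\mathfrak a^*=1+n^{-1}\tr(\Sigma_X\E[\resolvent_X(\lambda)\1_{\msa_\eta}(X)])$, and only then transfers to $\mathfrak b^*$ by showing $\mathfrak a^*$ is an approximate fixed point; your contraction constant $f_\lambda'\le d/n$ (correct) plays the role of the paper's $(\mathfrak b^*-1)/\mathfrak b^*$. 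Both routes give the same $\sqrt d/n$ rate.

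The concentration bound is where you genuinely deviate, and where the one real gap sits. You replace the paper's handling of the indicator (Lipschitz extension of $f$ off $\msa_\eta$, clipping, and the MGF decomposition of \Cref{prop:ConcentrationPropertyOfTruncatedLipschitzTransformsOfX}) by a mollifier $\varphi$ of $\1_{\msa_\eta}$ and assert that $X\mapsto f(X)\varphi(X)$ is globally Lipschitz with the constant $2\normop{\mathbf B}(\eta+\lambda)^{-3/2}(nd)^{-1/2}$. As stated this fails twice. First, a cut-off that is a Lipschitz function of $\uplambda_d(C_X^-)$ is not a globally Lipschitz function of $X$: $X\mapsto C_X^-$ is quadratic, so its Lipschitz constant grows with $\normop{X}$; the fix is to mollify in $s_{\min}(X^-)=\sqrt{n\,\uplambda_d(C_X^-)}$, which is $1$-Lipschitz, yielding $L_\varphi\asymp(n\eta)^{-1/2}$. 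Second, even after that fix the product carries the cross term $\|f\|_\infty L_\varphi\asymp\normop{\mathbf B}(\eta+\lambda)^{-1}(n\eta)^{-1/2}$ in its Lipschitz constant, which has no $d^{-1/2}$, so your route does not by itself reproduce the clean exponent $\exp(-c(\eta+\lambda)^3 n d\,t^2)$; it gives a variance proxy of the form $L_f^2+\|f\|_\infty^2\,O(1/(n\eta))$, structurally comparable to (but not sharper than) the paper's appendix bound, whose second term $\|f\|_\infty^2\sigma_{\msa_\eta}^2\lesssim\|f\|_\infty^2/(n\cX)$ exploits that $\Prob(X\notin\msa_\eta)$ is exponentially small rather than the width of a mollification layer. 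Finally, your gradient computation only yields Lipschitzness along segments, and $\msa_\eta$ is not convex; argue instead as in \Cref{lem:lip_resolvent} via the two-point resolvent identity, which gives the same constant. None of this breaks your architecture, but the concentration step needs these repairs, and after them you land on the appendix-level bound rather than the simplified main-text display.
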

\Cref{prop:Non-augmented_deterministic_equiv} shows that $\resolvent_X(\lambda)$ concentrates around $\detequiv_X^{\mathfrak{b}^*}(\lambda)$, which we refer to as a deterministic equivalent of $\resolvent_X(\lambda)$. Our result extends that of \cite{Chouard22} by covering the case of vanishing regularization ($\lambda \to 0$). This extension constitutes the main technical innovation required for the proof of \Cref{thm:NonAugmentedRidgeErrorEstimation}.

We can now leverage the deterministic equivalent of \Cref{prop:Non-augmented_deterministic_equiv} to rewrite $(1/d) \tr\left(\Sigma_X^{-1} \resolvent_X(\lambda)\right)$. Informally, it holds with high probability that
\begin{align}
    \dfrac{1}{d}\tr\left(\Sigma_X^{-1} \resolvent_X(\lambda)\right)
    &\approx \dfrac{1}{d} \tr\left(\Sigma_X^{-1} \left(\dfrac{\Sigma_X}{\mathfrak{b}^*(\lambda)} + \lambda \Idd\right)^{-1}\right)\1_{\msa_\eta}(X) \\
    &= \dfrac{1}{\lambda d \mathfrak{b}^*(0) } \tr\left(\detequiv_X^{\mathfrak{b}^*(0)}(0)\right)\1_{\msa_\eta}(X) - \dfrac{1}{\lambda {d}  \mathfrak{b}^*(\lambda)} \tr\left(\detequiv_X^{\mathfrak{b}^*(\lambda)}(\lambda)\right)\1_{\msa_\eta}(X)
\end{align}
where the last equality follows from   the identity $\mathbf{A}^{-1} - \mathbf{B}^{-1} = \mathbf{A}^{-1}\{\mathbf{B} - \mathbf{A}\}\mathbf{B}^{-1}$.
Finally, using \Cref{prop:Non-augmented_deterministic_equiv} again, and that $\mathfrak{b}^*(0)=(1-d/n)^{-1}$ is the fixed point of $f_0 : \mathfrak{b} \mapsto 1 + \mathfrak{b} d/n$,  we get
\begin{equation}\label{eq:Cross_term_almost_there_non-augmented}
    \dfrac{1}{d}\tr\left(\Sigma_X^{-1} \resolvent_X(\lambda)\right) \approx \dfrac{1-d/n}{\lambda d} \tr\left(\resolvent_X(0)\right)\1_{\msa_\eta}(X) - \dfrac{1}{\lambda d \mathfrak{b}^*(\lambda)} \tr\left(\resolvent_X(\lambda)\right) \eqsp.
\end{equation}
Finally, by the definition of $\mathfrak{b}^*(\lambda)$ and through straightforward algebraic manipulations, we obtain
\begin{align}
    \mathfrak{b}^*(\lambda) &= 1 + \dfrac{1}{n}\tr\left(\Sigma_X \detequiv_X^{\mathfrak{b}^*(\lambda)}(\lambda)\right) = 1 + \mathfrak{b}^*(\lambda)\left\{\dfrac{d}{n} - \dfrac{\lambda}{n} \tr\left(\detequiv_X^{\mathfrak{b}^*(\lambda)}(\lambda)\right)\right\} \eqsp.
\end{align}
Therefore, applying \Cref{prop:Non-augmented_deterministic_equiv} again yields
\begin{equation}
    \mathfrak{b}^*(\lambda) = \dfrac{1}{1 - (d/n) + (\lambda / n) \tr\left(\detequiv_X^{\mathfrak{b}^*(\lambda)}(\lambda)\right) } \approx \dfrac{1}{1 - (d/n) + (\lambda / n) \tr\left(\resolvent_X(\lambda)\right)} = \mathfrak{b}(\lambda) \eqsp,
\end{equation}
Plugging this estimate in  \eqref{eq:Cross_term_almost_there_non-augmented}, we identify $\hat{\mathcal{E}}_X(\lambda)$ \eqref{eq:def_estim_E_shrinkage} and it completes the proof of  \Cref{thm:NonAugmentedRidgeErrorEstimation}. The formal proof is postponed to \Cref{Appendix B} in the supplement.


\section{Precision matrix estimation using generic data augmentation}
\label{sec:covar-estim-generative-model}
In this section, we investigate a data augmentation strategy to improve the estimation of the inverse covariance matrix of $X$. Specifically, we consider an additional set of artificial samples $G = [G_1, \cdots, G_m]$, which may depend on $X$ and are typically generated using either TDA or GDA techniques; see \Cref{table:Description Few Augmentations}. 
More precisely, given $X$, we assume that $G$ is drawn from a known regular conditional distribution $(\bfX, \mathsf{A}) \mapsto \nu_{\bfX}(\mathsf{A})$, meaning that for any measurable set  $\mse \subset \R^d$, 
we have $\Prob\left(G_i \in \mse \mid X\right) = \nu_X(\mse)$. 
Then, we consider the following new estimator and define its quadratic error as:
\begin{equation}
  \label{eq:def_res_aug}
  \resolvent_{\Aug}(\lambda) := \left((n+m)^{-1} \{XX^\top + GG^\top\} + \lambda \Idd\right)^{-1}\eqsp, \quad \mathcal{E}_{\Aug}(\lambda) := (1/d) \|  \resolvent_{\Aug}(\lambda) - \Sigma_X^{-1}\|_\Frob^2 \eqsp.
\end{equation}
In the following, in addition to \Cref{ass:X_Lipschitz_Concentrated} and \Cref{ass:ProbaSmallEigenvalues}, which pertain to $X$, we introduce further assumptions on $G$. These are organized into three categories: a concentration assumption on $G$, a smoothness assumption on $\nu_X$, and a stability assumption on $\nu_X$.

\begin{assumption} [\textbf{Concentration of $G$}] \label{ass:G_Lipschitz_Concentrated_cond_X}
  The random matrix $G \in \R^{d \times m}$ has i.i.d centered columns conditionally on $X$, i.e., $\PE[G_j|X] = 0$ for any $j\in\{1,\ldots,m\}$. In addition,
  \begin{enumerate}[wide, labelwidth=!, labelindent=0pt,label=(\roman*)]
  \item \label{ass:G_Lipschitz_Concentrated_cond_X_i} The columns of $G$ are sub-Gaussian, with parameter $\sigma_G$
  \item  \label{ass:G_Lipschitz_Concentrated_cond_X_ii} There exist 
    $0 \leq \beta \leq 1$, and
    $\Lambda_G : \R^{d \times n} \to \R^{d \times d}$ such that almost surely
    \begin{equation}
        \E\left[C_G \mid X\right] = \beta C_X + \Lambda_G(X) \eqsp,
    \end{equation}
    and $\Lambda_G(X)$ is a positive definite matrix satisfying for some $\kappa >0$, $\kappa^{-1} \leq \uplambda_d(\Lambda_G(X)) \leq \uplambda_1(\Lambda_G(X)) \leq \kappa$ almost surely on $\msa_{\eta}$ defined in \eqref{eq:def_msa_eta}.
  \end{enumerate}

\end{assumption}
Part~\ref{ass:G_Lipschitz_Concentrated_cond_X_i} of \Cref{ass:G_Lipschitz_Concentrated_cond_X} is a concentration assumption on $G$ conditional on $X$, 
similar to \Cref{ass:X_Lipschitz_Concentrated}.

Regarding the second part \ref{ass:G_Lipschitz_Concentrated_cond_X_ii}, it can be interpreted as a structural assumption. 
In most cases, the parameters $\beta$ and $\Lambda_G$ can be directly derived from the definition of the augmentation process. 
\Cref{table:Description Few Augmentations} provides values of $\beta$ and $\Lambda_G$ for a range of common DA schemes.
As an example, consider the case where $G$ is drawn from a TDA procedure of the form $G_j = g(X_{I_j}, Z_j)$, 
where $\{Z_j\}_{j=1}^m$ are \iid~and $1$-Lipschitz concentrated (\Cref{def:LipschitzConcentration}), 
and $(I_j)_{j=1}^m$ are i.i.d, with $I_1 \sim \mathrm{Unif}(\{1, \cdots, n\})$, we also assume that for any $x \in \R^d$, 
$\E_Z\left[g(Z_1, x)\right]= \sqrt{\beta^{(\rme)}}x$ for some $\beta^{(\rme)} \geq 0$. 
Then, it is straightforward to verify that  \Cref{ass:G_Lipschitz_Concentrated_cond_X}-\ref{ass:G_Lipschitz_Concentrated_cond_X_ii} is satisfied with $\beta \leftarrow \beta^{(\rme)}$ and $\Lambda(X) \leftarrow \Lambda^{(\rme)}(X)$ where
\begin{align} \label{Lambda_formula_transformativeDA}
\Lambda^{(\rme)}(X) = \dfrac{1}{n} \sum_{i=1}^n \E\left[\{ g(Z_1, X_i) - \sqrt{\beta^{(e)}} X_i\}\{ g(Z_1, X_i) - \sqrt{\beta^{(e)}} X_i\}^\top \eqsp \middle| \eqsp X_i\right] \eqsp.
\end{align}
\Cref{table:Description Few Augmentations} below, shows the value of $\beta$ and $\Lambda_G(X)$ for a variety of common data-augmentation scheme.

\begin{table}
  \centering
{  \small 
\setlength{\tabcolsep}{4pt} 
  \renewcommand{\arraystretch}{1.3} 
  \begin{tabular}{|l|l|c|c|c|}
  \hline
  & \textbf{Augmentation Name} & \textbf{Description} &  \textbf{$\Lambda_G$} & \textbf{$\beta$} \\
  \hline \hline
  \multirow{2}{*}{\rotatebox{90}{\textbf{GDA}}}
  & Fixed Gaussian GDA & $G_j\sim \gauss(0,\Lambda)$ & $\Lambda$ & $0$  \\
  & Gaussian mixture GDA & $G_j \sim \sum_{i=1}^k w_i\gauss(\mu_i, \Lambda_i)$ & $ \sum_{i=1}^k w_i \{\Lambda_i +  \mu_i \mu_i^\top\}$ & $0$ \\
  \hline \hline
  \multirow{3}{*}{\rotatebox{90}{\textbf{TDA}}}
  & Fixed Gaussian TDA  & $X_{I_j} + Z_j \eqsp, \eqsp Z_j \sim \gauss(0,\Lambda)$ & $\Lambda$ & $1$ \\
  & Random mask TDA & $X_{I_j} \odot Z_j \eqsp, \eqsp Z_j \sim \bernouilli(\rho)^{\otimes d}$ & $\rho(1-\rho) \diag(C_X)$ & $\rho$ \\
  & Salt \& Pepper TDA & $X_{I_j} \odot Z_j + (1-Z_j) \odot \gauss(0, \sigma^2)$ & $\rho(1-\rho) \diag(C_X) + (1-\rho) \sigma^2 \Idd$ & $\rho$ \\
  \hline
  \end{tabular}
}

  \caption{Various augmentation procedures and corresponding $\beta$ and $\Lambda_G$. We used the notation $I_j \sim \unif(\{1, \cdots, n\})$. For more details, we refer to \Cref{Appendix A}.}
  \label{table:Description Few Augmentations}
\end{table}
Our second assumption on $G$ suppose that  $\mathbf{X}\mapsto \nu_{\bfX}$ and $\bfX\mapsto \Lambda_G(\bfX)$ are Lipschitz. More precisely:
\begin{assumption} [\textbf{Smoothness of the artificial distribution}] \label{ass:Smoothness}
There exist  $\Ltt_G \geq 0$ and $\Ltt_\Lambda \geq 0$ such that for any $\mathbf{X}, \mathbf{Y} \in \R^{d \times n}$, and $m \in \mathbb{N}$,
    \begin{equation}
        W_1(\nu_\mathbf{X}^{\otimes m} , \nu_\mathbf{Y}^{\otimes m}) \leq \sqrt{m}\Ltt_G \|\mathbf{X} - \mathbf{Y}\|_\Frob \eqsp,
   \qquad
        \|\Lambda_G(\mathbf{X}) - \Lambda_G(\mathbf{Y})\|_\Frob \leq \Ltt_\Lambda \|\mathbf{X} - \mathbf{Y}\|_\Frob \eqsp.
    \end{equation}
  \end{assumption}
Note that the DA examples \Cref{table:Description Few Augmentations} all satisfy this assumption provided $X$ has compact support. Otherwise, we believe that our results are robust enough to hold only when $\Lambda_G$ and $\mathbf{X} \mapsto \nu_\mathbf{X}$ are locally Lipschitz, albeit with slightly weaker convergence guarantees.

\begin{assumption} [\textbf{Stability of the artificial distribution}] \label{ass:Stability}
  \begin{enumerate}[wide, labelwidth=!, labelindent=0pt,label=(\roman*)]
  \item The map $\mathbf{X} \mapsto \nu_\mathbf{X}$ is invariant under
    permutation of the columns of $\mathbf{X}$, \ie, for any
    permutation $\varsigma : \{1,\ldots,n\} \to \{1,\ldots,n\}$,
    $\nu_\mathbf{X} = \nu_{\bfX_\varsigma}$ where
    $\bfX_{\varsigma} = [\bfX_{\varsigma(1)},\ldots,
    \bfX_{\varsigma(n)}]$.
  \item \label{ass:Stability_ii} Furthermore, we assume that there exists $\Ktt \geq 0$, such that for any
    $m \in \mathbb{N}$,
    \begin{equation}
        W_1(\nu_X^{\otimes m}, \nu_{X^-}^{\otimes m}) \leq \sqrt{m} \Ktt \eqsp, \quad \text{a.s.}
    \end{equation}
  \end{enumerate}
\end{assumption}
Typically, $\Ktt$ should remain bounded with respect to both $n$ and $d$. \Cref{ass:Stability} can be interpreted as a condition ensuring that the data augmentation procedure used to generate the $\{G_j\}_{j=1}^m$ does not depend on any specific individual sample. It is met by various data augmentation procedures found in the literature.
We provide in our next result a condition on $\nu_X$ and $\nu_{X^-}$ only, which implies  \Cref{ass:Stability}-\ref{ass:Stability_ii}. It proof is postponed to \Cref{Appendix A}.
\begin{proposition}
  \label{propo:W_2_impli_W_2}
  Suppose that $W_2(\nu_X, \nu_{X^-}) \leq \Ktt$. Then, \Cref{ass:Stability}-\ref{ass:Stability_ii} holds.
\end{proposition}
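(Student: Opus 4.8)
The plan is to combine two classical facts from optimal transport: the additivity of the squared $2$-Wasserstein cost under tensorization, and the domination $W_1 \leq W_2$ obtained via Cauchy--Schwarz. Everything below is understood conditionally on $X$, so that the conclusion inherits the almost-sure qualifier from the hypothesis $W_2(\nu_X,\nu_{X^-}) \leq \Ktt$. Note that this hypothesis (or, in any case, the sub-Gaussianity in \Cref{ass:G_Lipschitz_Concentrated_cond_X}) ensures $\nu_X$ and $\nu_{X^-}$ have finite second moments, so all the Wasserstein distances appearing below are finite and optimal couplings exist.

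First I would fix an optimal coupling $\gamma^\star \in \Gamma(\nu_X, \nu_{X^-})$, which exists since the Frobenius cost is lower semicontinuous and the marginals have finite second moment, so that $W_2^2(\nu_X,\nu_{X^-}) = \int \norm{x-y}_\Frob^2 \, \rmd\gamma^\star(x,y)$. Its $m$-fold product $\gamma^{\star\otimes m}$ is then an element of $\Gamma(\nu_X^{\otimes m}, \nu_{X^-}^{\otimes m})$ on $\R^{d\times m}\times\R^{d\times m}$. Since the Frobenius cost decomposes coordinatewise, $\norm{x-y}_\Frob^2 = \sum_{j=1}^m \norm{x_j - y_j}_\Frob^2$ for $x=[x_1,\ldots,x_m]$ and $y=[y_1,\ldots,y_m]$, and since under $\gamma^{\star\otimes m}$ each pair $(x_j,y_j)$ has law $\gamma^\star$, Tonelli's theorem gives
\[
\int \norm{x-y}_\Frob^2 \, \rmd\gamma^{\star\otimes m}(x,y)
= \sum_{j=1}^m \int \norm{x_j-y_j}_\Frob^2 \, \rmd\gamma^\star(x_j,y_j)
= m\, W_2^2(\nu_X,\nu_{X^-}) \leq m\Ktt^2 \eqsp.
\]

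Next I would pass from the quadratic cost to the linear one. Using $\gamma^{\star\otimes m}$ as an admissible (not necessarily optimal) coupling in the definition of $W_1(\nu_X^{\otimes m},\nu_{X^-}^{\otimes m})$ and then Cauchy--Schwarz (equivalently, Jensen applied to the concave map $t\mapsto\sqrt{t}$ against the probability measure $\gamma^{\star\otimes m}$), we obtain
\[
W_1(\nu_X^{\otimes m},\nu_{X^-}^{\otimes m}) \leq \int \norm{x-y}_\Frob \, \rmd\gamma^{\star\otimes m}
\leq \left(\int \norm{x-y}_\Frob^2 \, \rmd\gamma^{\star\otimes m}\right)^{1/2}
\leq \sqrt{m}\,\Ktt \eqsp,
\]
which is precisely \Cref{ass:Stability}-\ref{ass:Stability_ii}.

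As for the difficulty: there is essentially no obstacle here — the argument is a textbook tensorization-plus-H\"older estimate, and the only points deserving a line of care are the existence/measurability of the optimal $W_2$-coupling and the verification that $\gamma^{\star\otimes m}$ has the right marginals, both routine. One could instead quote a known tensorization inequality for Wasserstein distances directly, but I prefer the explicit product coupling since it makes the $\sqrt{m}$ factor transparent and keeps the proof self-contained.
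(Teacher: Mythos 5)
Your argument is correct and coincides with the paper's own proof: the paper establishes exactly this tensorization bound $W_1(\mu_1^{\otimes m},\mu_2^{\otimes m}) \le \sqrt{m}\,W_2(\mu_1,\mu_2)$ (via the product of an optimal $W_2$-coupling and Cauchy--Schwarz, as you do) and then applies it with $\mu_1 = \nu_X$, $\mu_2 = \nu_{X^-}$ together with the hypothesis $W_2(\nu_X,\nu_{X^-})\le\Ktt$. No gaps to report.
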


\begin{remark}
As a non-trivial example of a DA scheme that satisfies \Cref{ass:Smoothness} and \Cref{ass:Stability}, let us consider the Random mask TDA, described in \Cref{table:Description Few Augmentations}.
We further illustrate our assumptions on other DA strategies in \Cref{Appendix A}. Let $\mathbf{X}, \mathbf{Y} \in \R^{d\times n}$, and consider the coupling of $\nu_\mathbf{X}$ and $\nu_\mathbf{Y}$, defined as for $j\in\{1,\ldots,m\}$, $G_j = Z_j \odot \bfX_{I_j}$, $G'_j = Z_j \odot \bfY_{I_j}$, where $I_j \sim \unif(\{1, \cdots, n\})$, $Z_j \sim \bernouilli(\rho)^{\otimes d}$, and $\odot$ is the elementwise multiplication. Then we  have by the Cauchy-Schwarz inequality,
\begin{align}
  W_1(\nu_{\mathbf{X}}^{\otimes m}, \nu_\mathbf{Y}^{\otimes m}) \leq \sqrt{m} W_2(\nu_\mathbf{X}, \nu_\mathbf{Y}) \leq \sqrt{m} \sqrt{\E\left[\|(\bfX_{I_1} - \bfY_{I_1}) \odot Z_1\|_2^2\right]} \leq \sqrt{m} \rho \|\mathbf{X} - \mathbf{Y}\|_\Frob \eqsp.
\end{align}
Furthermore, from \Cref{table:Description Few Augmentations}, we know that $\Lambda_G(\mathbf{X}) = \tfrac{1-\rho}{\rho} \diag(C_\mathbf{X})$, therefore it is locally-Lipschitz only. However, assuming that $X$ is bounded, we can always find another function $\tilde{\Lambda}_G$ satisfying \Cref{ass:G_Lipschitz_Concentrated_cond_X}-\ref{ass:G_Lipschitz_Concentrated_cond_X_ii} and which is Lipschitz.

We show through similar computations and using \Cref{propo:W_2_impli_W_2} that \Cref{ass:Stability} is satisfied
\begin{align}
  W_2(\nu_{\mathbf{X}}, \nu_\mathbf{X^-}) &\leq  \sqrt{\E\left[\|(X_{I_1} - X^-_{I_1}) \odot Z_1\|_2^2\right]} \leq  \rho \sqrt{n^{-1} \E\left[\|X_{1}\|_2^2\right]} =  \rho  \sqrt{n^{-1}\tr\left(\Sigma_X\right)} \eqsp.
\end{align}
\end{remark}

We are now ready to introduce our estimate of $\mathcal{E}_\Aug(\lambda)$. To this end, for any $\mathfrak{a} \geq 1$,
\begin{equation} \label{eqdef_partial_equiv}
    \detequiv_{G \mid X}^{\mathfrak{a}}(\lambda) := \left(\left(1 - \alpha\right)C_X + \alpha\dfrac{\E\left[C_G \mid X\right]}{\mathfrak{a}} + \lambda \Idd\right)^{-1} \eqsp.
\end{equation}
where $\alpha = m/(n+m)$. In addition, we also consider the quantities
\begin{equation}
    \label{eqdef_dilation_factors}
    \begin{aligned}
       \mathfrak{a}_x(X) &= 1 + \dfrac{1 - (1 - \beta / \mathfrak{a}_g(X))\alpha}{n}X_1^\top \E\left[ R_{X^{-} \sqcup G}(\lambda) \mid  X\right] X_1 \eqsp, \\
       \mathfrak{a}_g(X) &= 1 + \dfrac{\alpha}{m}\tr\left(\E\left[C_G \mid X\right] \E\left[ R_{X \sqcup G}(\lambda) \mid X \right] \right) \eqsp,
    \end{aligned}
\end{equation}
and the two functions
\begin{equation} \label{eq:Phi_1_2_def}
    \begin{aligned}
        \funf_1(X) &= \dfrac{(1-d/n)}{d}\tr\left(R_X(0) \left(\dfrac{\alpha\Lambda_G(X)}{\mathfrak{a}_g(X)} + \lambda \Idd\right)^{-1}\right) \1_{\msa_\eta}(X) \eqsp,\\
        \funf_2(X) &= \dfrac{1 - (1-\beta / \mathfrak{a}_g(X))\alpha }{d\mathfrak{a}_x(X)}\tr\left(\detequiv_{G \mid X}^{\mathfrak{a}_g(X)}(\lambda) \left(\dfrac{\alpha\Lambda_G(X)}{\mathfrak{a}_g(X)} + \lambda \Idd\right)^{-1} \right) \eqsp,
    \end{aligned}
\end{equation}
Finally, we set
\begin{equation} \label{eq:HatE_Augmented_def}
    \hat{\mathcal{E}}_{\Aug}(\lambda) := \dfrac{1}{d}\tr\left(\resolvent_{\Aug}(\lambda)^2\right) - 2 (\funf_1(X) - \funf_2(X)) + \dfrac{1}{d}\tr\left(\Sigma_X^{-2}\right) \eqsp,
\end{equation}
and we emphasize that $\hat{\mathcal{E}}_{\Aug}(\lambda)$ is computable from $X$ alone, 
provided we can sample from the distribution of $G$ conditionally on $X$. 
Our main result below states conditions under which $\hat{\mathcal{E}}_{\Aug}(\lambda)$ concentrates 
around $\mathcal{E}_{\Aug}(\lambda)$, for $\lambda$ arbitrarily small.
\begin{theorem} \label{thm:ConcentrationHatE_Augmented}
  Assume \Cref{ass:X_Lipschitz_Concentrated} to \Cref{ass:Stability}. Let $\hat{\mathcal{E}}_{\Aug}(\lambda)$ be defined in \eqref{eq:HatE_Augmented_def}. Denoting $\varepsilon = \min\{\eta, \lambda\}$, for two scalars $\tau_1$ and $\tau_2$, (also independant of $n$, $d$ and $m$, and depending polynomially on $\varepsilon$) defined in \eqref{definition_rho}, it holds
    \begin{equation}
        \Prob\left(\left|\hat{\mathcal{E}}_{\Aug}(\lambda) - \mathcal{E}_{\Aug}(\lambda)\right| \geq t + \Delta_\Aug\right) \lesssim n \exp\left(-k (n+m) \min\{\varepsilon^9 t^2 / \tau_2 , \varepsilon^7 t / \tau_1\}\right) \eqsp,
    \end{equation}
    where
    \begin{equation}
        \begin{aligned}
            \Delta_\Aug := \eqsp &\tilde{C}_1 \dfrac{(\sigma_X^2 + \sigma_G^2)(1+\cX^{-1}) (\|\Sigma_X\|_\op^4 \kappa + \|\Sigma_X\|_\op\kappa^4)}{(1-\alpha) n \uplambda_d(\Sigma_X)^2 \varepsilon^7}  + \tilde{C}_2 \dfrac{\E\left[\|\Lambda_G(X) - \E\left[\Lambda_G(X)\right]\|_\Frob\right]}{\varepsilon^3 \sqrt{d}} \\
            &+ \dfrac{\|\Sigma_X\|_\op^2}{\sqrt{d} \varepsilon^2} \|\Sigma_X\E\left[\Lambda_G(X)\right] - \E\left[\Lambda_G(X)\right] \Sigma_X\|_\Frob \eqsp.
        \end{aligned}
    \end{equation}
   and the constants $\tilde{C}_1$ and $\tilde{C}_2$ depend polynomially on $\uplambda_d(\Sigma_X)$, $\|\Sigma_X\|_\op^{-1}$, $\kappa^{-1}$, $n/m$, $\Ktt$, $\Ltt_G$ and $\varepsilon$.
\end{theorem}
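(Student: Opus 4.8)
The plan is to follow the route used for \Cref{thm:NonAugmentedRidgeErrorEstimation}, carrying the extra layer of conditioning forced by the dependence of $G$ on $X$. Expanding the Frobenius norm in \eqref{eq:def_res_aug} gives
\begin{equation*}
  \mathcal{E}_{\Aug}(\lambda) = \frac1d\tr\left(\resolvent_{\Aug}(\lambda)^2\right) - \frac2d\tr\left(\Sigma_X^{-1}\resolvent_{\Aug}(\lambda)\right) + \frac1d\tr\left(\Sigma_X^{-2}\right) ,
\end{equation*}
so that by the definition \eqref{eq:HatE_Augmented_def} of $\hat{\mathcal{E}}_{\Aug}$ the squared-resolvent and $\Sigma_X^{-2}$ terms cancel and
\begin{equation*}
  \hat{\mathcal{E}}_{\Aug}(\lambda) - \mathcal{E}_{\Aug}(\lambda) = \frac2d\tr\left(\Sigma_X^{-1}\resolvent_{\Aug}(\lambda)\right) - 2\left(\funf_1(X) - \funf_2(X)\right) .
\end{equation*}
Hence it is enough to prove that $d^{-1}\tr(\Sigma_X^{-1}\resolvent_{\Aug}(\lambda))$ concentrates, at the stated rate and up to $\Delta_\Aug$, around the random quantity $\funf_1(X) - \funf_2(X)$.

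I would then build a two-layer deterministic equivalent for $\resolvent_{\Aug}(\lambda)$. Conditioning on $X$: on $\msa_\eta$ the matrix $(1-\alpha)C_X + \lambda\Idd$ is well conditioned and, given $X$, the columns of $G$ are i.i.d., centered and sub-Gaussian (\Cref{ass:G_Lipschitz_Concentrated_cond_X}-\ref{ass:G_Lipschitz_Concentrated_cond_X_i}) with covariance $\beta C_X + \Lambda_G(X)$ (\Cref{ass:G_Lipschitz_Concentrated_cond_X}-\ref{ass:G_Lipschitz_Concentrated_cond_X_ii}). Applying the generalized-resolvent deterministic equivalent of \Cref{Appendix B} — a matrix-shift extension of \Cref{prop:Non-augmented_deterministic_equiv} in which the deterministic shift $(1-\alpha)C_X + \lambda\Idd$ replaces $\lambda\Idd$ — yields, conditionally on $X$, concentration of $\tr(\mathbf{B}\{\resolvent_{\Aug}(\lambda) - \E[\resolvent_{\Aug}(\lambda)\mid X]\})$ for deterministic $\mathbf{B}$, together with a Frobenius bound of the form of the $\tilde{C}_1$ term of $\Delta_\Aug$ for $\E[\resolvent_{\Aug}(\lambda)\1_{\msa_\eta}(X)\mid X] - \detequiv_{G\mid X}^{\mathfrak{a}_g(X)}(\lambda)\1_{\msa_\eta}(X)$, the dilation factor $\mathfrak{a}_g(X)$ of \eqref{eqdef_dilation_factors} being produced by the leave-one-$G$-column-out (cavity) identity. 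Now $\detequiv_{G\mid X}^{\mathfrak{a}_g(X)}(\lambda) = (c(X)\, C_X + M(X))^{-1}$ with $c(X) = 1 - (1 - \beta/\mathfrak{a}_g(X))\alpha$ and $M(X) = \alpha\Lambda_G(X)/\mathfrak{a}_g(X) + \lambda\Idd$ is itself a generalized resolvent in $X$; after replacing $\Lambda_G(X)$ by $\bar{\Lambda}_G := \E[\Lambda_G(X)]$ (the error is the $\tilde{C}_2$ term, controlled through \Cref{ass:G_Lipschitz_Concentrated_cond_X}-\ref{ass:G_Lipschitz_Concentrated_cond_X_ii}, \Cref{ass:Smoothness} and the Lipschitz concentration of $X$), a second application of \Cref{prop:Non-augmented_deterministic_equiv} with the deterministic shift $\bar{M}(X) = \alpha\bar{\Lambda}_G/\mathfrak{a}_g(X) + \lambda\Idd$ gives $(c(X) C_X + \bar{M}(X))^{-1} \approx (c(X)\Sigma_X/\mathfrak{a}_x(X) + \bar{M}(X))^{-1}$, the dilation factor $\mathfrak{a}_x(X)$ of \eqref{eqdef_dilation_factors} arising from the $X$-cavity step together with the Hanson--Wright identification $X_1^\top\E[\resolvent_{X^{-}\sqcup G}(\lambda)\mid X]X_1 \approx \tr(\Sigma_X\E[\resolvent_{X^{-}\sqcup G}(\lambda)\mid X])$. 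The coefficient $c(X)$, rather than $1-\alpha$, shows up here because the term $\beta C_X$ inside $\E[C_G\mid X]$ makes the $X$- and $G$-cavity expansions couple.

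It then remains to turn $d^{-1}\tr(\Sigma_X^{-1}(c(X)\Sigma_X/\mathfrak{a}_x(X) + \bar{M}(X))^{-1})$ into $\funf_1(X) - \funf_2(X)$ by the same algebraic manipulation as in the non-augmented case. Writing $\mathfrak{a} = \mathfrak{a}_x(X)$, $c = c(X)$, $\bar{M} = \bar{M}(X)$ and $Q = (c\Sigma_X/\mathfrak{a} + \bar{M})^{-1}$, the identities $\bar{M}^{-1} - Q = (c/\mathfrak{a})\bar{M}^{-1}\Sigma_X Q$ and $\Sigma_X^{-1}\bar{M}^{-1}\Sigma_X - \bar{M}^{-1} = \Sigma_X^{-1}\bar{M}^{-1}(\Sigma_X\bar{M} - \bar{M}\Sigma_X)\bar{M}^{-1}$ give, after multiplying by $\Sigma_X^{-1}$ and taking traces,
\begin{equation*}
  \tr\left(\Sigma_X^{-1}Q\right) = \tr\left(\Sigma_X^{-1}\bar{M}^{-1}\right) - \frac{c}{\mathfrak{a}}\tr\left(\bar{M}^{-1}Q\right) - \frac{c}{\mathfrak{a}}\tr\left(\Sigma_X^{-1}\bar{M}^{-1}(\Sigma_X\bar{M} - \bar{M}\Sigma_X)\bar{M}^{-1}Q\right) ,
\end{equation*}
where $\Sigma_X\bar{M} - \bar{M}\Sigma_X = (\alpha/\mathfrak{a}_g(X))(\Sigma_X\bar{\Lambda}_G - \bar{\Lambda}_G\Sigma_X)$ is the commutator responsible for the last, $\|\Sigma_X\bar{\Lambda}_G - \bar{\Lambda}_G\Sigma_X\|_\Frob$, term of $\Delta_\Aug$. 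Undoing the two deterministic-equivalent steps — replacing $\Sigma_X^{-1}$ by $(1-d/n)C_X^{-1} = (1-d/n)\resolvent_X(0)$ (via \Cref{prop:Non-augmented_deterministic_equiv} at $\lambda = 0$ on $\msa_\eta$), $Q$ back by $\detequiv_{G\mid X}^{\mathfrak{a}_g(X)}(\lambda)$, and $\bar{M}$ back by $M(X)$ — identifies $d^{-1}\tr(\Sigma_X^{-1}\resolvent_{\Aug}(\lambda))$ with $\funf_1(X) - \funf_2(X)$ from \eqref{eq:Phi_1_2_def}, up to $\Delta_\Aug$.

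Finally, for the concentration one collects the error of every "$\approx$" above. The fluctuations split into those of $\resolvent_{\Aug}(\lambda)$ around $\E[\resolvent_{\Aug}(\lambda)\mid X]$, which concentrate over $G\mid X$ at rate proportional to $m$ by \Cref{ass:G_Lipschitz_Concentrated_cond_X}-\ref{ass:G_Lipschitz_Concentrated_cond_X_i}, and those of the $X$-measurable quantities $\mathfrak{a}_g(X)$, $\mathfrak{a}_x(X)$, $\tr(\Sigma_X\E[\resolvent_{X^{-}\sqcup G}(\lambda)\mid X])$, $\tr(\bar{\Lambda}_G C_X^{-1})$ and so on, which are Lipschitz functions of $X$ — here \Cref{ass:Smoothness} controls the $\mathbf{X}\mapsto\nu_{\mathbf{X}}$ dependence of the $G$-block and \Cref{ass:Stability} controls the leave-one-out perturbation $X\to X^{-}$ — and therefore concentrate over $X$ at rate proportional to $n$ by \Cref{ass:X_Lipschitz_Concentrated}. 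Since the cross term is ultimately a quadratic form in the sub-Gaussian entries of $Z$ and in the columns of $G\mid X$, the relevant deviation inequalities are of Bernstein type, producing the two-regime exponent $\min\{\varepsilon^9 t^2/\tau_2, \varepsilon^7 t/\tau_1\}$; the powers of $\varepsilon = \min\{\eta,\lambda\}$ track the repeated use of $\|\resolvent_{\Aug}(\lambda)\|_\op \le \varepsilon^{-1}$ and $\|\detequiv_{G\mid X}^{\mathfrak{a}_g(X)}(\lambda)\|_\op \lesssim \varepsilon^{-1}$ on $\msa_\eta$, and the prefactor $n$ is a union bound over the $n$ leave-one-out configurations of the $X$-block entering $\mathfrak{a}_x$. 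The main obstacle is precisely the coupled two-layer deterministic equivalent above: because $G\sim\nu_X$, removing a column of $X$ in the $X$-cavity argument perturbs the conditional law of the whole $G$-block, so the two cavity expansions cannot be run independently; showing that this perturbation shifts the $G$-contribution by only $\bigO(\Ktt/\sqrt{n})$ and $\bigO(\Ltt_G/\sqrt{n})$, uniformly on $\msa_\eta$ and uniformly as $\lambda\to 0$, is exactly what \Cref{ass:Smoothness} and \Cref{ass:Stability} provide and is the technical heart of the argument carried out in \Cref{Appendix B}.
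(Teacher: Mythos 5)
Your proposal follows essentially the same route as the paper: the same expansion isolating $d^{-1}\tr(\Sigma_X^{-1}\resolvent_{\Aug}(\lambda))$, the same two-layer (conditional-on-$X$, then in-$X$) deterministic equivalent with dilation factors $\mathfrak{a}_g(X)$ and $\mathfrak{a}_x(X)$ built from the matrix-shift version of \Cref{prop:Non-augmented_deterministic_equiv}, the same replacement $\Lambda_G(X)\to\bar{\Lambda}_G$ and commutator manipulation producing the three terms of $\Delta_\Aug$, and the same concentration mechanism (Hanson--Wright plus Lipschitz concentration of the dilation factors, with the union bound over the $n$ columns giving the prefactor). This matches the argument carried out in \Cref{Appendix C} and \Cref{Appendix D}, so the sketch is correct in approach.
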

In the statement above, the three contributions to $\Delta_{\Aug}$ are small under natural conditions. 
The first term decays like $n^{-1}$ provided the covariance matrices $\Sigma_X$ and $\Lambda_G(X)$ remain well‐conditioned and the fraction of artificial samples stays bounded away from one. 
The second term vanishes if the fluctuations of $\Lambda_G(X)$ are adequately controlled. 
Finally, the third term is negligible only when $\E\left[\Lambda_G(X)\right]$ approximately commutes with $\Sigma_X$, 
for instance, when the eigenvectors of $\Sigma_X$ are known, when the augmentation is isotropic on average (so that $\E\left[\Lambda_G(X)\right]$ is a scalar matrix), 
or more generally when $\E\left[\Lambda_G(X)\right]$ splits into a low‐rank component plus a multiple of the identity (as in Gaussian mixture augmentations with few components relative to $d$, c.f. \cref{table:Description Few Augmentations}).


\section{Numerical experiments} \label{sec:NumericalExpe}

In this section, we illustrate \Cref{thm:NonAugmentedRidgeErrorEstimation} and \Cref{thm:ConcentrationHatE_Augmented} on real datasets. We use MNIST and CIFAR10, consisting of $70{,}000$ labeled $28\times 28$ images and $60{,}000$ labeled $32 \times 32$ images, respectively, with the following preprocessing:
\begin{enumerate}
    \item [\textbf{MNIST.}]  We discard the labels, normalize pixel values to $[0,1]$, and add pixel-level Gaussian noise with standard deviation $\sigma = 0.1$ to ensure that the covariance matrix $\Sigma_X$ is well-conditioned.
    \item [\textbf{CIFAR10.}] We discard the labels and convert images to grayscale.
\end{enumerate}

For both datasets, we denote by $X = [X_1, \dots, X_n] \in \R^{d \times n}$ the matrix formed by the first $n$ samples, for varying $n>0$. To approximate $\mathcal{E}_X(\lambda)$ and $\mathcal{E}_\Aug(\lambda)$, we use the sample covariance matrix $\hat{\Sigma}_X$ computed from all available samples ($70{,}000$ for MNIST, $60{,}000$ for CIFAR10), and consider the proxies
\begin{equation}\label{Proxies_def}
   \mathcal{E}_X^\mathcal{D}(\lambda) := \frac{1}{d}\bigl\|\resolvent_X(\lambda) - \hat{\Sigma}_X^{-1}\bigr\|_\Frob^2
   \quad \text{and} \quad
   \mathcal{E}_{\mathrm{Aug}}^\mathcal{D}(\lambda) := \frac{1}{d}\bigl\|\resolvent_{\Aug}(\lambda) - \hat{\Sigma}_X^{-1}\bigr\|_\Frob^2,
\end{equation}
which are expected to closely approximate $\mathcal{E}_X(\lambda)$ and $\mathcal{E}_\Aug(\lambda)$ since the sample size greatly exceeds the data dimension.

\Cref{fig:mnist_three_across} summarizes our results for MNIST. In particular, \cref{fig:Numerical_results_RidgeEstim} reports $\lambda \mapsto \hat{\mathcal{E}}_X(\lambda)$ for various $\gamma = 784/n$ over $\lambda \in [10^{-3},1]$, and compares it with the proxy above. \Cref{fig:Numerical_results_GaussianGDA} and \Cref{fig:Numerical_results_GaussianTDA} present $\hat{\mathcal{E}}_{\Aug}(0)$ as a function of $\alpha = m/(n+m)$ under two data-augmentation schemes. The first is a $k$-centroid Gaussian-mixture GDA,
\[
   G_j = m_{I_j}(X) + \sigma \gauss(0,\Idd),
\]
where the centroids $\{m_i\}_{i=1}^k$ are estimated via EM on $X$ and $I_j \sim \unif(\{1,\dots,k\})$. The second is a Gaussian-noise TDA,
\[
   G_j = X_{I_j} + \sigma \gauss(0,\Idd),
\]
with $I_j \sim \unif(\{1,\dots,n\})$. In both cases, the minimizers of $\lambda \mapsto \hat{\mathcal{E}}_X(\lambda)$ and $\lambda \mapsto \hat{\mathcal{E}}_\Aug(\lambda)$ are consistently close to those of the proxies $\mathcal{E}_X^\mathcal{D}(\lambda)$ and $\mathcal{E}_\Aug^\mathcal{D}(\lambda)$, which should very closely approximate the true errors.

Symmetrically, for CIFAR10 (after grayscale conversion, so $d=1024$), 
\cref{fig:Numerical_results_RidgeEstim_CIFAR} reports 
$\lambda \mapsto \hat{\mathcal{E}}_X(\lambda)$ for various $\gamma = 1024/n$ over $\lambda \in [10^{-3},1]$ 
and compares it with the proxy in \eqref{Proxies_def}. 
\Cref{fig:Numerical_results_GaussianGDA_CIFAR,fig:Numerical_results_GaussianTDA_CIFAR} present 
$\hat{\mathcal{E}}_{\Aug}(0)$ as a function of $\alpha = m/(n+m)$ under the same 
$k$-centroid Gaussian-mixture GDA and Gaussian-noise TDA schemes as above. 
In all cases, the minimizers of $\lambda \mapsto \hat{\mathcal{E}}_X(\lambda)$ and 
$\lambda \mapsto \hat{\mathcal{E}}_\Aug(\lambda)$ closely match those of the proxies 
$\mathcal{E}_X^\mathcal{D}(\lambda)$ and $\mathcal{E}_\Aug^\mathcal{D}(\lambda)$.

\begin{figure}[H]
    \centering
    \begin{subfigure}[t]{0.333\textwidth}
        \centering
        \includegraphics[width=\linewidth, height=3.5cm]{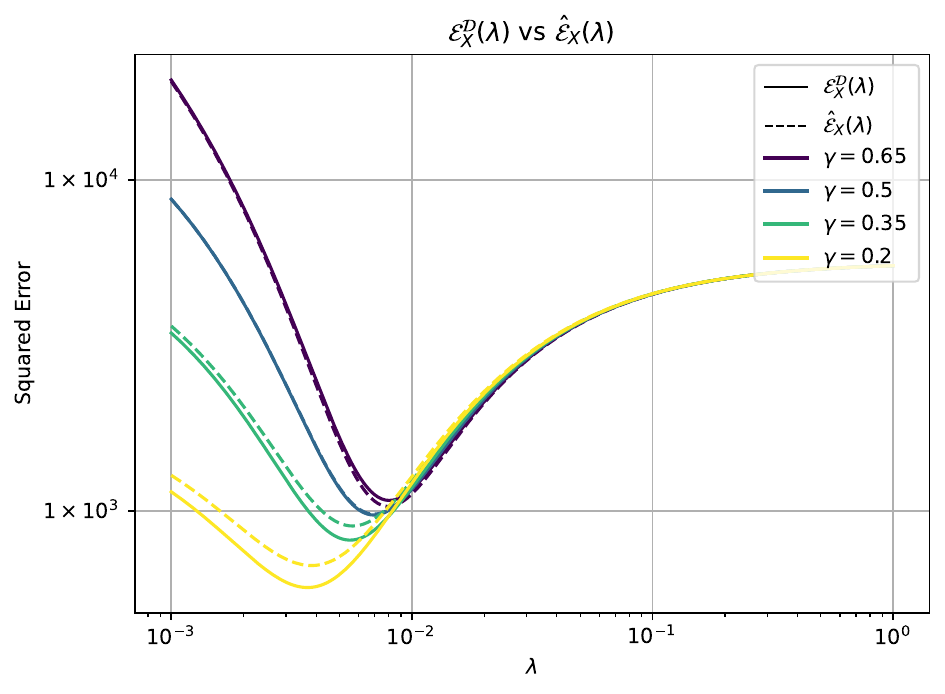}
        \caption{Non-augmented, Ridge-like estimator.}
        \label{fig:Numerical_results_RidgeEstim}
    \end{subfigure}\hfill
    \begin{subfigure}[t]{0.333\textwidth}
        \centering
        \includegraphics[width=\linewidth, height=3.5cm]{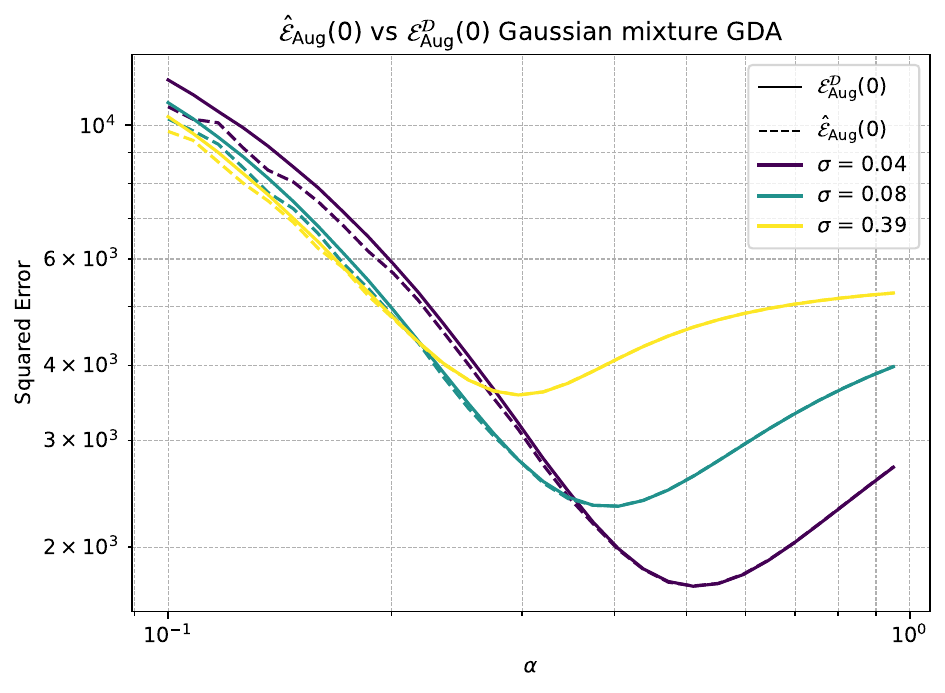}
        \caption{10-centroid Gaussian mixture GDA.}
        \label{fig:Numerical_results_GaussianGDA}
    \end{subfigure}\hfill
    \begin{subfigure}[t]{0.333\textwidth}
        \centering
        \includegraphics[width=\linewidth, height=3.5cm]{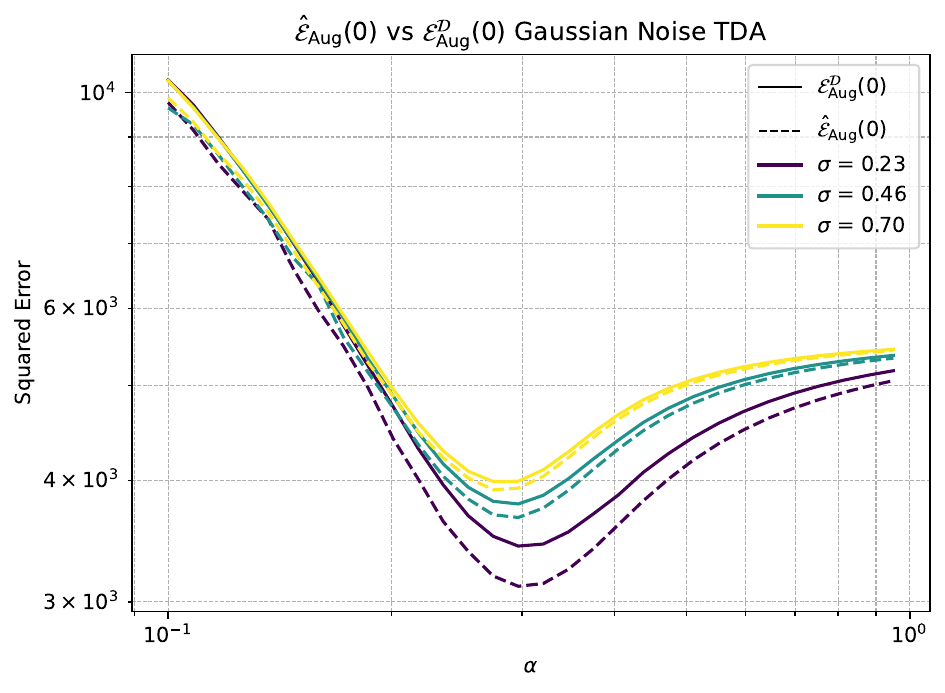}
        \caption{Gaussian noise TDA.}
        \label{fig:Numerical_results_GaussianTDA}
    \end{subfigure}
    \caption{Numerical results on MNIST for $\hat{\mathcal{E}}_X(\lambda)$ and $\hat{\mathcal{E}}_\Aug(\lambda)$, compared with \eqref{Proxies_def}.}
    \label{fig:mnist_three_across}
\end{figure}

\begin{figure}[H]
    \centering
    \begin{subfigure}[t]{0.333\textwidth}
        \centering
        \includegraphics[width=\linewidth, height=3.5cm,  trim=12pt 0pt 0pt 0pt]{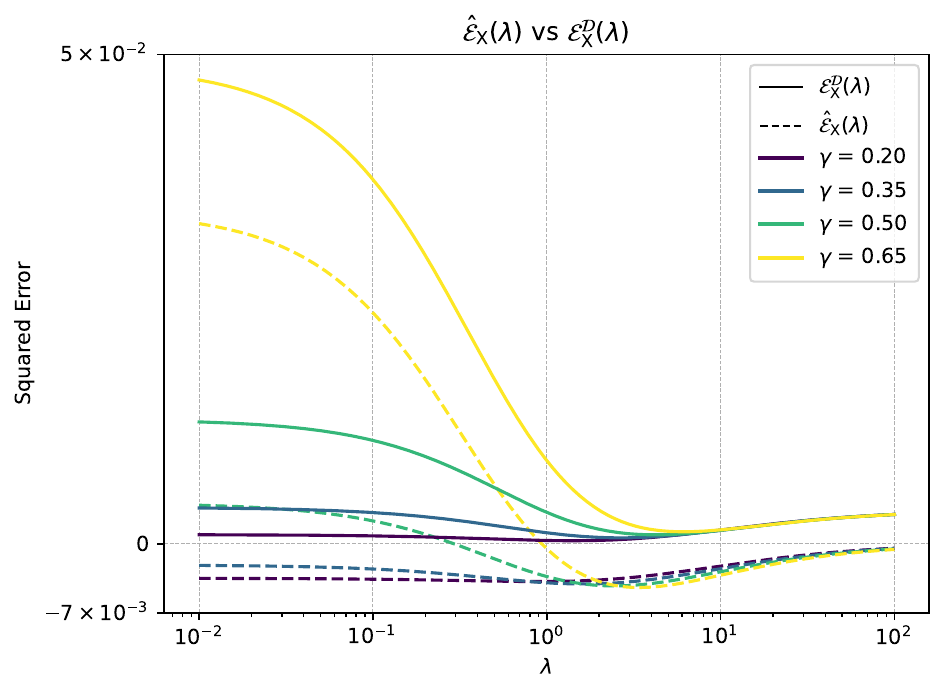}
        \caption{Non-augmented, Ridge-like estimator.}
        \label{fig:Numerical_results_RidgeEstim_CIFAR}
    \end{subfigure}\hfill
    \begin{subfigure}[t]{0.333\textwidth}
        \centering
        \includegraphics[width=\linewidth, height=3.5cm]{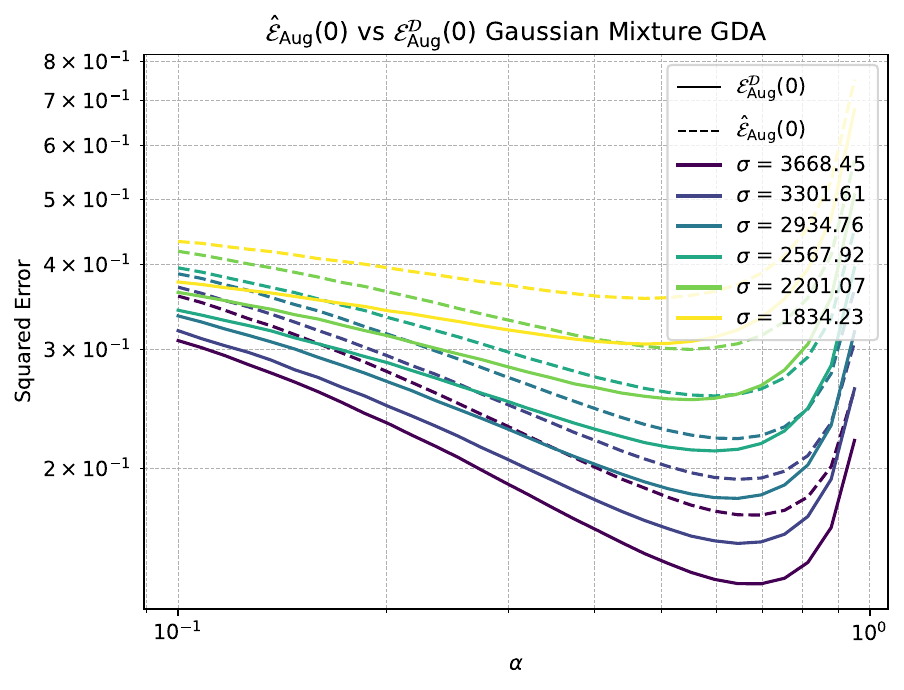}
        \caption{10-centroid Gaussian mixture GDA.}
        \label{fig:Numerical_results_GaussianGDA_CIFAR}
    \end{subfigure}\hfill
    \begin{subfigure}[t]{0.333\textwidth}
        \centering
        \includegraphics[width=\linewidth, height=3.5cm]{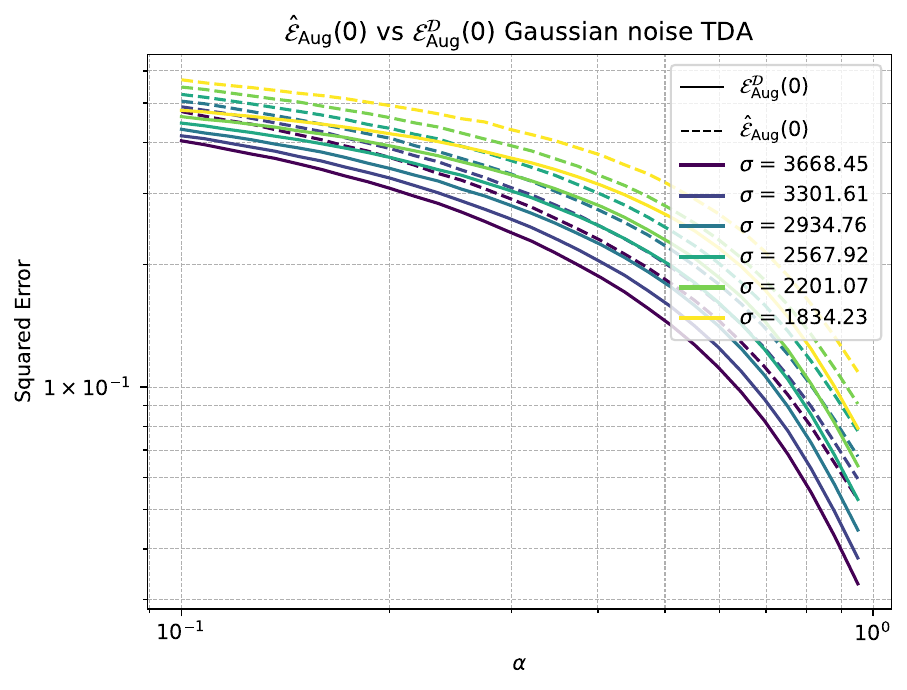}
        \caption{Gaussian noise TDA.}
        \label{fig:Numerical_results_GaussianTDA_CIFAR}
    \end{subfigure}
    \caption{Numerical results on CIFAR-10 for $\hat{\mathcal{E}}_X(\lambda)$ and $\hat{\mathcal{E}}_\Aug(\lambda)$, compared with \eqref{Proxies_def}.}
    \label{fig:cifar_three_across}
\end{figure}

\section{Conclusion}
In this paper, we established new results based on random matrix theory that allow one to quantify from data only the impact of the regularization effect induced by data augmentation on a common class of precision matrix estimates. 
In the meantime, we presented a formula that allows one to compute from data only the error of a non-augmented "Ridgelike" precision matrix estimator. 
From a practical point of view, our results might allow one to optimally tune the hyperparameters of a data augmentation scheme for estimating the bottom eigenvalues and eigenvectors of the covariance matrix of the data, provided the data augmentation scheme satisfies a strict commutativity condition. 
Furthermore, it is well understood that the precision matrix is a fundamental object in many statistical models; hence, a natural extension of this work would be to study the generalization error of various machine learning models, such as linear regression, kernel regression, or some class of shallow networks.

\clearpage
\bibliographystyle{plainnat}
\bibliography{bibliography}


\appendix

\section{In-depth justification of the hypothesis} \label{Appendix A}

This appendix provides detailed justifications for the technical assumptions introduced in the main text.
In \Cref{subsection:DiscussionProbaSmallEigenvalue}, we analyze the concentration of the smallest eigenvalues
 of empirical covariance matrices under mild conditions, thereby establishing \Cref{ass:ProbaSmallEigenvalues} 
 for standard random matrix models commonly studied in the literature.
Subsequently, in \Cref{subsection:DiscussionAssumptionOnGenerativeModel}, we focus on data augmentation schemes 
and identify natural conditions for TDA and GDA under which Assumptions~\Cref{ass:G_Lipschitz_Concentrated_cond_X}–\Cref{ass:Stability} 
are satisfied.

\subsection{Discussions on \Cref{ass:ProbaSmallEigenvalues}} \label{subsection:DiscussionProbaSmallEigenvalue}

In this subsection, we establish explicit conditions under which \Cref{ass:ProbaSmallEigenvalues} holds and provide 
closed-form expressions for the parameters $\eta$ and $\cX$.
These expressions are not directly estimable from data, as they depend on structural properties of the population 
covariance $\Sigma_X$, in particular its smallest eigenvalue $\uplambda_d(\Sigma_X)$.
Nonetheless, they yield useful theoretical insight into the regimes where our results are applicable.
Formally, we obtain the following result:

\begin{proposition} \label{Prop:DiscussionH2}
Assume that $X$ satisfies \Cref{ass:X_Lipschitz_Concentrated}, and that $\uplambda_d(\Sigma_X) > 0$. 
There exists a universal constant $c$ such that whenever $n > d > 0$, 
\Cref{ass:ProbaSmallEigenvalues} is guarenteed to hold for any choice of \(\eta\) and \(\cX\) satisfying:
\begin{align}
  \eta <  \uplambda_d(\Sigma_X)\Bigl(\sqrt{\tfrac{n-1}{n}} - \sqrt{\tfrac{d}{n}}\Bigr),
  \quad \text{and} \quad 
  \cX = c\Bigl(\sqrt{\tfrac{n-1}{n}} - \sqrt{\tfrac{d}{n}} - \sqrt{\tfrac{\eta}{\uplambda_d(\Sigma_X)}}\Bigr)^2 \eqsp.   
\end{align}
\end{proposition}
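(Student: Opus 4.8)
The plan is to reduce the statement to a sharp non-asymptotic lower tail bound for the smallest singular value of a tall matrix with i.i.d.\ sub-Gaussian entries. First I would unwind the definition of $C_X^-$: by \eqref{eq:C-X-moins}, $C_X^- = C_{X^-}$ with $X^-=[0,X_2,\dots,X_n]$, so $C_X^- = n^{-1}\sum_{i=2}^n X_iX_i^\top = \Sigma^{1/2}\bigl(n^{-1}\bar Z\bar Z^\top\bigr)\Sigma^{1/2}$, where under \Cref{ass:X_Lipschitz_Concentrated} $\bar Z:=[Z_2,\dots,Z_n]\in\R^{d\times(n-1)}$ collects the last $n-1$ columns of $Z$, and where the entries of $Z$ are taken centered with unit variance so that $\Sigma=\Sigma_X$. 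Since $\Sigma_X\succ 0$ and $n^{-1}\bar Z\bar Z^\top\succeq 0$, using $B\succeq\uplambda_d(B)\Idd$ and then $A\succeq\uplambda_d(A)\Idd$ gives $A^{1/2}BA^{1/2}\succeq\uplambda_d(A)\uplambda_d(B)\Idd$, hence $\uplambda_d(C_X^-)\ge\uplambda_d(\Sigma_X)\,\uplambda_d\!\bigl(n^{-1}\bar Z\bar Z^\top\bigr)=\tfrac{\uplambda_d(\Sigma_X)}{n}\,\sigma_{\min}(\bar Z)^2$. Therefore
\[
  \Prob(X\notin\msa_\eta)=\Prob\bigl(\uplambda_d(C_X^-)<\eta\bigr)\le\Prob\!\Bigl(\sigma_{\min}(\bar Z)<\sqrt{n\eta/\uplambda_d(\Sigma_X)}\Bigr).
\]

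Second, since $n>d$ we have $n-1\ge d$, so $\bar Z$ is a $d\times(n-1)$ matrix with at least as many columns as rows and i.i.d.\ centered unit-variance entries, sub-Gaussian with parameter $\sigma_X$. I would then invoke the sharp non-asymptotic lower bound on its smallest singular value — the Bai--Yin/Davidson--Szarek estimate in the Gaussian case, and its sub-Gaussian analogue in the style of Rudelson--Vershynin: there is a constant $c>0$, depending only on $\sigma_X$, such that for all $t\ge 0$,
\[
  \Prob\bigl(\sigma_{\min}(\bar Z)\le\sqrt{n-1}-\sqrt d-t\bigr)\le 2\exp(-ct^2).
\]

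Third, I would tune the free parameter. Choosing $t:=\sqrt{n-1}-\sqrt d-\sqrt{n\eta/\uplambda_d(\Sigma_X)}$ — which is nonnegative exactly when $\eta\le\uplambda_d(\Sigma_X)\bigl(\sqrt{(n-1)/n}-\sqrt{d/n}\bigr)^2$, that is, precisely in the regime where the announced formula for $\cX$ is positive — we have $\sqrt{n-1}-\sqrt d-t=\sqrt{n\eta/\uplambda_d(\Sigma_X)}$, so the two displays combine into
\[
  \Prob(X\notin\msa_\eta)\le 2\exp(-ct^2),\qquad ct^2=c\,n\Bigl(\sqrt{\tfrac{n-1}{n}}-\sqrt{\tfrac{d}{n}}-\sqrt{\tfrac{\eta}{\uplambda_d(\Sigma_X)}}\Bigr)^2=\cX\,n,
\]
which is exactly \Cref{ass:ProbaSmallEigenvalues} with the stated $\cX$.

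The genuinely delicate step is the second one: obtaining coefficient exactly $1$ in front of both $\sqrt{n-1}$ and $\sqrt d$. Off-the-shelf sub-Gaussian smallest-singular-value inequalities carry an ambient multiplicative constant in front of $\sqrt d$, and recovering the sharp linear term — which is what the stated thresholds on $\eta$ and $\cX$ reflect — requires the Gaussian-comparison/Bai--Yin refinement; this is where the bookkeeping will need care, including how the dependence of $c$ on $\sigma_X$ is absorbed. If one is content with the cruder sufficient condition of the main text, $n\ge K_X(d+\eta+1)$, then a version of the bound with an ambient constant is enough and the rest of the argument is unchanged. Two minor points to track throughout: the leave-one-out bookkeeping, since $\bar Z$ has $n-1$ columns against the $1/n$ normalization, which is what produces $\sqrt{(n-1)/n}$ rather than $1$; and, if the entries of $Z$ are not normalized to unit variance, carrying the resulting scalar factor through $\Sigma=\mathrm{cst}\cdot\Sigma_X$, which leaves the final form of the bound unchanged.
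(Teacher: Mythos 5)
Your proposal is correct and follows essentially the same route as the paper: whiten the leave-one-out matrix, lower-bound $\uplambda_d(C_X^-)$ by $\tfrac{\uplambda_d(\Sigma_X)}{n}s_{\min}(Z^-)^2$, apply the Rudelson--Vershynin smallest-singular-value tail bound, and tune $t$ so that the threshold equals $\sqrt{n\eta/\uplambda_d(\Sigma_X)}$, exactly as in the paper's \Cref{cor:DiscussionH2Corro}. Your remark about the sharp coefficient $1$ in front of $\sqrt{d}$ for general sub-Gaussian (non-Gaussian) entries is a fair caveat that the paper's citation of \cite[Theorem~5.39]{vershynin11} glosses over, but it does not change the structure of the argument.
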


To support the previous claim, we introduce a standard non-asymptotic result from random matrix theory. 
For a rectangular matrix $\mathbf{A} \in \mathbb{R}^{d \times n}$, we denote by $s_{\min}(\mathbf{A})$ its smallest singular value. 
The following theorem, due to Rudelson and Vershynin~\cite[Theorem~5.39]{vershynin11}, provides a sharp lower bound on $s_{\min}$ for random sub-Gaussian matrices.  

\begin{theorem}[Rudelson--Vershynin {\cite[Theorem~5.39]{vershynin11}}]
    \label{th:RV}
    Let $Z$ be a $d \times n$ random matrix with $n \geq d$, whose columns are independent, identically distributed, mean-zero, isotropic sub-Gaussian random vectors in $\mathbb{R}^d$.  
    Then there exist absolute constants $c > 0$ such that, for all $t \geq 0$,
    \[
        \Prob\!\left( s_{\min}(Z) \geq \sqrt{n} - \sqrt{d} - t \right) 
        \;\geq\; 1 - 2 e^{-c t^2}.
    \]
\end{theorem}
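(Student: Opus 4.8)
The plan is to run the classical $\epsilon$-net argument of Rudelson--Vershynin. Write $Z=[Z_1,\ldots,Z_n]$ with $Z_j\in\R^d$ i.i.d., isotropic and sub-Gaussian with norm $K:=\|Z_1\|_{\psi_2}$. Since $n\geq d$, the matrix $Z$ is wide and $s_{\min}(Z)^2=\uplambda_d(ZZ^\top)=n\,\uplambda_d(n^{-1}ZZ^\top)$, so by Weyl's inequality $\uplambda_d(n^{-1}ZZ^\top)\geq 1-\|n^{-1}ZZ^\top-\Idd\|_{\op}$. Hence on the event $\{\|n^{-1}ZZ^\top-\Idd\|_{\op}\leq 1\}$,
\[
s_{\min}(Z)\;\geq\;\sqrt n\,\sqrt{1-\|n^{-1}ZZ^\top-\Idd\|_{\op}}\;\geq\;\sqrt n\,\bigl(1-\|n^{-1}ZZ^\top-\Idd\|_{\op}\bigr),
\]
so the whole statement reduces to proving $\|n^{-1}ZZ^\top-\Idd\|_{\op}\lesssim\sqrt{d/n}+t/\sqrt n$ with probability at least $1-2\rme^{-ct^2}$. (When $\sqrt n-\sqrt d-t\leq 0$ the claim is vacuous, and when the bound above would exceed $1$ one truncates $t$ at $\sqrt n$; both cases are routine.)

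Second, discretize. Fix a $1/4$-net $\mathcal N$ of $S^{d-1}$ with $|\mathcal N|\leq 9^d$. For the symmetric matrix $M:=n^{-1}ZZ^\top-\Idd$ the standard net bound gives $\|M\|_{\op}\leq 2\max_{x\in\mathcal N}|x^\top M x|$, and $x^\top M x=\tfrac1n\sum_{j=1}^n(\langle Z_j,x\rangle^2-1)$. By isotropy the summands are centered, and since $\langle Z_j,x\rangle$ is sub-Gaussian with norm $\lesssim K$, its square is sub-exponential with norm $\lesssim K^2$; Bernstein's inequality for i.i.d. sub-exponentials therefore yields, for each fixed $x\in S^{d-1}$,
\[
\Prob\!\left(\Bigl|\tfrac1n\textstyle\sum_{j=1}^n(\langle Z_j,x\rangle^2-1)\Bigr|\geq s\right)\;\leq\;2\exp\!\bigl(-c_K\, n\min(s^2,s)\bigr).
\]

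Third, union-bound over $\mathcal N$ and calibrate. Take $s=\tfrac12\bigl(C_1\sqrt{d/n}+t/\sqrt n\bigr)$, which lies in $[0,1]$ in the nontrivial regime so that $\min(s^2,s)=s^2$; the union bound gives a failure probability at most $2\cdot 9^d\exp(-c_K n s^2)$. Using $s^2\geq\tfrac14(C_1^2 d/n+t^2/n)$ and choosing the absolute constant $C_1$ large enough that $\tfrac14 c_K C_1^2\geq 2\ln 9$, the factor $9^d$ is swallowed by $\exp(-\tfrac14 c_K C_1^2 d)$ and the residual leaves $2\rme^{-c t^2}$ with $c=c_K/4$. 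On the complementary event $\|M\|_{\op}\leq 2s=C_1\sqrt{d/n}+t/\sqrt n$, and the first step gives $s_{\min}(Z)\geq\sqrt n\bigl(1-C_1\sqrt{d/n}-t/\sqrt n\bigr)=\sqrt n-C_1\sqrt d-t$; a final rescaling of $t$ puts the bound in the displayed form (the coefficient of $\sqrt d$ is the constant $C_1=C_K$ depending only on the sub-Gaussian norm — the stated form with coefficient $1$ amounts to absorbing it, and is exact asymptotically in the Gaussian case via Gordon's comparison inequality). The main obstacle is exactly this calibration: one must beat the combinatorial factor $9^d$ coming from the net while still taking $\epsilon=\|M\|_{\op}$ of the optimal order $\sqrt{d/n}$, and this balance works only because $\langle Z_j,x\rangle^2-1$ is sub-exponential rather than merely sub-Gaussian, so that Bernstein contributes $\exp(-cns^2)$ with the full factor $n$ in the exponent; pinning down the sub-exponential norm of these squared linear forms (uniformly over $x\in S^{d-1}$) is the one genuinely quantitative input.
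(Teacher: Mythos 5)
Your argument is the canonical net-plus-Bernstein proof, which is exactly the route behind the cited result: the reduction $s_{\min}(Z)\ge\sqrt n\,(1-\|n^{-1}ZZ^\top-\Idd\|_{\op})$, the $1/4$-net with $|\mathcal N|\le 9^d$, the sub-exponentiality of $\langle Z_j,x\rangle^2-1$ uniformly over the sphere, and the choice of $C_1$ large enough to beat the $9^d$ entropy factor are all correct and are the same ingredients as in \cite[Theorem~5.39]{vershynin11}. Note also that the paper itself gives no proof of this statement (it is quoted from the reference), so the comparison is against the standard argument, which you have reproduced faithfully up to the last step.

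The genuine gap is the final ``rescaling of $t$.'' What your argument delivers is $s_{\min}(Z)\ge\sqrt n - C_K\sqrt d - t$ with probability $\ge 1-2\e^{-c_K t^2}$, where $C_K,c_K$ depend on the sub-Gaussian norm $K$ of the columns. A deficit of $(C_K-1)\sqrt d$ cannot be absorbed by reparametrizing $t$: the failure probability $2\e^{-ct^2}$ carries no $d$-dependence to trade against, so matching the displayed bound would force $t\gtrsim(C_K-1)\sqrt d$, which changes the statement rather than rescales it. Likewise $c$ cannot be an absolute constant; it must degrade with $K$ (isotropy bounds $K$ from below, not above). To be fair, the blame is partly on the statement as printed: \cite[Theorem~5.39]{vershynin11} asserts $s_{\min}\ge\sqrt N - C\sqrt n - t$ with $C,c$ depending on the sub-Gaussian norm, and the coefficient-$1$ form with absolute constants is known only in the Gaussian case (Gordon, Davidson--Szarek), as you yourself flag. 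So your proof establishes the theorem in its correct cited form, but not the literal statement; carrying the constant $C$ (and the $K$-dependent $c$) through is also what the downstream uses in \Cref{cor:DiscussionH2Corro} and \Cref{Prop:DiscussionH2} actually require, where the admissible $\eta$ should read $\uplambda_d(\Sigma_X)\bigl(\sqrt{(n-1)/n}-C\sqrt{d/n}\bigr)^2$-type thresholds rather than the constant-free version.
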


Observing that $X^- = \Sigma_X^{1/2} Z^-$ where $Z$ has isotropic and independant columns (under \Cref{ass:X_Lipschitz_Concentrated}) and applying \Cref{th:RV} to $Z$, 
we obtain the following bound on the probability of encountering small 
eigenvalues in the leave-one-out covariance matrix $C_X^-$.  

\begin{corollary} \label{cor:DiscussionH2Corro}
    Assume that $X$ satisfies \Cref{ass:X_Lipschitz_Concentrated}. 
    Then, for every $\epsilon > 0$,
    \[
        \Prob\!\left(\uplambda_d(C_X^-) \leq \eta\right) 
        \;\lesssim\; 
        \exp\!\left(
            -c \left(
                \sqrt{\tfrac{n-1}{n}} 
                - \sqrt{\tfrac{d}{n}} 
                - \sqrt{\tfrac{\eta}{\uplambda_d(\Sigma_X)}}
            \right)^{\!2} n
        \right)\eqsp,
    \]
    where $ c > 0$ is the same absolute constants as in \Cref{th:RV}. 
    In particular, \Cref{Prop:DiscussionH2} follows directly.
\end{corollary}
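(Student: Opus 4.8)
The plan is to reduce \Cref{cor:DiscussionH2Corro} to a direct invocation of \Cref{th:RV} applied to the matrix obtained from $X^-$ by deleting its zeroed column. Under \Cref{ass:X_Lipschitz_Concentrated} write $X = \Sigma_X^{1/2} Z$, so that $X^- = \Sigma_X^{1/2} Z^-$ with $Z^- = [0, Z_2, \ldots, Z_n]$, and hence $C_X^- = n^{-1} \Sigma_X^{1/2} Z^- (Z^-)^\top \Sigma_X^{1/2} = n^{-1} \Sigma_X^{1/2} \tilde Z \tilde Z^\top \Sigma_X^{1/2}$, where $\tilde Z = [Z_2, \ldots, Z_n] \in \R^{d \times (n-1)}$ simply drops the first column. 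The columns of $\tilde Z$ are i.i.d., mean-zero, sub-Gaussian and isotropic (the normalization being implicit in \Cref{ass:X_Lipschitz_Concentrated}, since $\Sigma_X = \E[X_1 X_1^\top] = \Sigma_X^{1/2} \E[Z_1 Z_1^\top]\Sigma_X^{1/2}$ forces $\E[Z_1 Z_1^\top] = \Idd$), so \Cref{th:RV} is applicable to $\tilde Z$ as soon as $n-1 \geq d$, which holds whenever $n > d$.

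Next I would record a deterministic lower bound on $\uplambda_d(C_X^-)$. Since $\uplambda_d(\Sigma_X) > 0$, the matrix $\Sigma_X^{1/2}$ is invertible, and for any PSD matrix $M$ the Courant--Fischer principle gives $\uplambda_d(\Sigma_X^{1/2} M \Sigma_X^{1/2}) = \min_{\|u\|=1} (\Sigma_X^{1/2} u)^\top M (\Sigma_X^{1/2} u) \geq \uplambda_d(\Sigma_X)\, \uplambda_d(M)$, because $\|\Sigma_X^{1/2} u\|^2 \geq \uplambda_d(\Sigma_X)$ and $M \succeq 0$. Taking $M = \tilde Z \tilde Z^\top$ and using $\uplambda_d(\tilde Z \tilde Z^\top) = s_{\min}(\tilde Z)^2$ yields $\uplambda_d(C_X^-) \geq n^{-1}\uplambda_d(\Sigma_X)\, s_{\min}(\tilde Z)^2$.

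It then remains to combine the two ingredients and optimize. By \Cref{th:RV}, with probability at least $1 - 2e^{-ct^2}$ one has $s_{\min}(\tilde Z) \geq \sqrt{n-1} - \sqrt{d} - t$, so on the complementary (small-probability) event $\{\uplambda_d(C_X^-) \leq \eta\}$ the deterministic bound forces $s_{\min}(\tilde Z)^2 \leq n\eta/\uplambda_d(\Sigma_X)$. Choosing $t = \sqrt{n-1} - \sqrt{d} - \sqrt{n\eta/\uplambda_d(\Sigma_X)}$ makes $\sqrt{n-1} - \sqrt d - t = \sqrt{n\eta/\uplambda_d(\Sigma_X)}$, so $\{\uplambda_d(C_X^-)\leq\eta\} \subseteq \{s_{\min}(\tilde Z) \leq \sqrt{n-1}-\sqrt d - t\}$, and therefore $\Prob(\uplambda_d(C_X^-) \leq \eta) \leq 2e^{-ct^2}$ with $t^2 = n\bigl(\sqrt{(n-1)/n} - \sqrt{d/n} - \sqrt{\eta/\uplambda_d(\Sigma_X)}\bigr)^2$, which is exactly the asserted inequality. \Cref{Prop:DiscussionH2} follows immediately: the stated range $\eta < \uplambda_d(\Sigma_X)\bigl(\sqrt{(n-1)/n} - \sqrt{d/n}\bigr)$ is precisely what guarantees $t > 0$, and setting $\cX = c\bigl(\sqrt{(n-1)/n} - \sqrt{d/n} - \sqrt{\eta/\uplambda_d(\Sigma_X)}\bigr)^2$ gives $\Prob(X \notin \msa_\eta) \lesssim e^{-\cX n}$.

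None of the steps is a genuine obstacle; the points requiring a little care are (i) verifying the isotropy hypothesis of \Cref{th:RV}, which relies on the implicit unit-variance normalization in \Cref{ass:X_Lipschitz_Concentrated}; (ii) ensuring the chosen $t$ is nonnegative, which is exactly the admissible range of $\eta$ in \Cref{Prop:DiscussionH2} and outside of which the claimed inequality is vacuous (the implicit constant in $\lesssim$ absorbing the trivial regime $2e^{-ct^2} \geq 1$); and (iii) keeping track that the constant $c$ is inherited verbatim from \Cref{th:RV}, up to its dependence on the sub-Gaussian parameter $\sigma_X$, which is treated as fixed. The only conceptual point is recognizing that deflating one column of $Z$ leaves an $(n-1)$-column matrix to which Rudelson--Vershynin applies directly.
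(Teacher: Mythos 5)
Your proposal is correct and follows essentially the same route as the paper: factor $X^- = \Sigma_X^{1/2} Z^-$, apply the Rudelson--Vershynin bound (\Cref{th:RV}) to the $(n-1)$-column isotropic matrix, lower-bound $\uplambda_d(C_X^-)$ by $n^{-1}\uplambda_d(\Sigma_X)\,s_{\min}(Z^-)^2$, and choose $t = t_\eta$ so the thresholds match. The only cosmetic difference is that you justify the eigenvalue comparison via Courant--Fischer while the paper uses $s_{\min}(AB)\ge s_{\min}(A)\,s_{\min}(B)$, and you make explicit the nonnegativity of $t_\eta$ and the vacuous regime, which the paper leaves implicit.
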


\begin{proof}
    Let $Z = \Sigma_X^{-1/2} X$. Then $Z$ is a random matrix with i.i.d.\ isotropic sub-Gaussian columns, since $X$ satisfies \Cref{ass:X_Lipschitz_Concentrated} and,
    \[
        \E\!\left[\tfrac{1}{n} Z Z^\top\right] 
        = \Sigma_X^{-1/2} \, \E[C_X] \, \Sigma_X^{-1/2} 
        = \Idd \eqsp.
    \]
    Using the inequality $s_{\min}(AB) \geq s_{\min}(A) s_{\min}(B)$, we obtain
    \[
        \uplambda_d(C_X^-) 
        = \tfrac{1}{n} s_{\min}(X^-)^2
        \;\geq\; \tfrac{1}{n} \uplambda_d(\Sigma_X)\, s_{\min}(Z^-)^2 \eqsp, \quad 
        \text{where} \quad Z^- = [Z_2, \ldots, Z_n]
    \]
    Hence, for any $0 \leq t \leq \sqrt{n-1} - \sqrt{d}$, we have
    \[
        \Prob\!\left(
            \uplambda_d(C_X^-) 
            \geq \uplambda_d(\Sigma_X)\,\frac{(\sqrt{n-1} - \sqrt{d} - t)^2}{n}
        \right)
        \;\geq\;
        \Prob\!\left(
            s_{\min}(Z^-) \geq \sqrt{n-1} - \sqrt{d} - t
        \right) \eqsp.
    \]
    Applying \Cref{th:RV}, we deduce that for all $t \geq 0$, we have,
    \begin{equation}\label{eq:CorollaryProofAlmostThere}
        \Prob\!\left(
            \uplambda_d(C_X^-) 
            \geq \uplambda_d(\Sigma_X)
            \left(
                \sqrt{\tfrac{n-1}{n}} - \sqrt{\tfrac{d}{n}} - \tfrac{t}{\sqrt{n}}
            \right)^{\!2}
        \right)
        \;\geq\; 1 - 2 \rme^{-c t^2} \eqsp.
    \end{equation}

    Now fix any $0 < \eta \leq \uplambda_d(\Sigma_X)\bigl(\sqrt{(n-1)/n} - \sqrt{d/n}\bigr)^2$ and define
    \[
        t_\eta 
        = \Biggl(
            \sqrt{\tfrac{n-1}{n}} - \sqrt{\tfrac{d}{n}}
            - \sqrt{\tfrac{\eta}{\uplambda_d(\Sigma_X)}}
        \Biggr)\sqrt{n} \eqsp.
    \]
    By construction, $t_\eta \geq 0$. And substituting $t = t_\eta$ into
    \eqref{eq:CorollaryProofAlmostThere} yields
    \[
        \Prob\!\left(\uplambda_d(C_X^-) \geq \eta \right)
        \;\geq\; 1 - 2 \rme^{-c t_\eta^2} \eqsp,
    \]
    which is the desired bound.
\end{proof}

\subsection{Discussions on \Cref{ass:G_Lipschitz_Concentrated_cond_X}, \Cref{ass:Smoothness}, \Cref{ass:Stability}}
\label{subsection:DiscussionAssumptionOnGenerativeModel}

In this section, we demonstrate that several common data augmentation (DA) schemes satisfy 
Assumptions~\Cref{ass:G_Lipschitz_Concentrated_cond_X}–\Cref{ass:Stability}.
We also discuss the limitations of these assumptions and identify scenarios in which they hold exactly, 
thereby clarifying the regimes where our results apply.
We begin by introducing a generalization of \Cref{ass:X_Lipschitz_Concentrated} which will help us achieve more general statements, as well as simplify the proofs. 
To this end, we introduce the following definition of Lipschitz concentrated random vectors:
\begin{definition}[Lispchitz concentration] \label{def:LipschitzConcentration}
    We say that,
    \begin{enumerate}[label=(\roman*)]
        \item The random vector $X_1 \in \R^d$ is Lispchitz concentrated with parameter $\sigma$ if and only if for any 
              $1$-Lipschitz function $f$, and any $s \geq 0$, we have,
              \begin{equation}
                    \E\left[\exp\left(s \{f(X_1) - \E\left[f(X_1)\right]\}\right)\right] \leq \exp\left(\sigma^2 s^2\right)
              \end{equation}
        \item The probability distribution $\mu \in \mathcal{P}(\R^d)$ has the Lispchitz concentration property of paramet $\sigma$ 
              if and only if for $X_1 \sim \mu$, $X_1$ is Lipschitz concentrated with parameter $\sigma$.
    \end{enumerate}    
\end{definition}

and we replace \Cref{ass:X_Lipschitz_Concentrated} by the following assumption:
\begin{assumption}[Lipschitz concentration of the data]\label{ass:X_Lipschitz_Concentrated_Best}
    The columns $X_1,\dots,X_n \in \R^d$ of the data matrix $X \in \R^{d\times n}$ are independent random vectors, 
    each of which is Lipschitz concentrated with parameter $\sigma_X > 0$ in the sense of Definition~\ref{def:LipschitzConcentration}. 
    Equivalently, for every $1$-Lipschitz function $f:\R^d\to\R$ and every $s\ge 0$, 
    \[
        \E\!\left[\exp\!\left(s\{f(X_i)-\E[f(X_i)]\}\right)\right] 
        \;\le\; 2\exp\!\left(\tfrac{\sigma_X^2 s^2}{2}\right), 
        \qquad \text{for all } i\in\{1,\dots,n\}.
    \]
\end{assumption}

One can easily check \Cref{ass:X_Lipschitz_Concentrated} implies \Cref{ass:X_Lipschitz_Concentrated_Best}, 
furthermore the class of matrix satifying \Cref{ass:X_Lipschitz_Concentrated_Best} being stable by Lispchitz transformations (up to a rescaling of a concentration parameter), 
will turn out very convenient for the proofs of our main results. 

We now provide a set of simple sufficient conditions under which \Cref{ass:G_Lipschitz_Concentrated_cond_X} is satisfied, 
we believe that the vast majority of common data augmentation scheme satify this condition. First, in the case of GDA schemes, 
we show that under an almost sure smoothness property of the sample generation process \Cref{ass:G_Lipschitz_Concentrated_cond_X_i} is satisfied:

\begin{proposition} \label{Prop:ConcentrationAssumptionGDA}
    Let $X \in \R^{d\times n}$ be a random matrix. 
    Assume that for each $j \in \{1,\dots,m\}$, $G_j = f(Z_j,X)$, where $Z_j$ are i.i.d.\ random vectors with the $\sigma_Z$-Lipschitz concentration property, 
    and where $f(\cdot,X): \R^d \to \R^d$ is almost surely $\Ltt_f$-Lipschitz. 
    Then $G = [G_1,\ldots,G_m]$ satisfies \Cref{ass:G_Lipschitz_Concentrated_cond_X_i} of \Cref{ass:G_Lipschitz_Concentrated_cond_X}, 
    with parameter 
    $$
        \sigma_G \leftarrow \Ltt_f \sigma_Z.
    $$ 
\end{proposition}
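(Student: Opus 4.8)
\textbf{Proof proposal for Proposition~\ref{Prop:ConcentrationAssumptionGDA}.}

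The plan is to show that, conditionally on $X$, each column $G_j = f(Z_j, X)$ inherits the Lipschitz concentration property from $Z_j$ via the composition rule for Lipschitz functions, and then to recall that Lipschitz concentration of a random vector implies the sub-Gaussian bound required in \Cref{ass:G_Lipschitz_Concentrated_cond_X_i}. The key observation is the elementary fact that if $h : \R^d \to \R$ is $1$-Lipschitz and $f(\cdot, X) : \R^d \to \R^d$ is $\Ltt_f$-Lipschitz (almost surely in $X$), then $h \circ f(\cdot, X)$ is $\Ltt_f$-Lipschitz, hence $\Ltt_f^{-1}\, h \circ f(\cdot, X)$ is $1$-Lipschitz. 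Applying \Cref{def:LipschitzConcentration} to $Z_j$ with this rescaled function and then undoing the rescaling in the exponent gives, for every $1$-Lipschitz $h$ and every $s \geq 0$,
\begin{equation*}
  \E\!\left[\exp\!\left(s\{h(G_j) - \E[h(G_j) \mid X]\}\right) \;\middle|\; X\right]
  = \E\!\left[\exp\!\left(s \Ltt_f \cdot \Ltt_f^{-1}\{h(f(Z_j,X)) - \E[\cdots \mid X]\}\right)\;\middle|\; X\right]
  \leq \exp\!\left(\sigma_Z^2 \Ltt_f^2 s^2\right),
\end{equation*}
almost surely. This is precisely the statement that, conditionally on $X$, $G_j$ is Lipschitz concentrated with parameter $\sigma_G = \Ltt_f \sigma_Z$ in the sense of \Cref{def:LipschitzConcentration}.

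It then remains to translate conditional Lipschitz concentration into the conditional sub-Gaussianity of the columns asserted in \Cref{ass:G_Lipschitz_Concentrated_cond_X_i}. For this I would invoke the standard fact (used implicitly throughout the random matrix literature, and the reason \Cref{ass:X_Lipschitz_Concentrated} implies \Cref{ass:X_Lipschitz_Concentrated_Best}) that a Lipschitz-concentrated vector has sub-Gaussian one-dimensional marginals: for any fixed unit vector $u$, the map $z \mapsto \ps{u}{z}$ is $1$-Lipschitz, so $\ps{u}{G_j}$ is sub-Gaussian with parameter controlled by $\Ltt_f \sigma_Z$, conditionally on $X$; since this holds uniformly over $u$, the column $G_j$ is a sub-Gaussian random vector with parameter $\sigma_G = \Ltt_f\sigma_Z$ given $X$. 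The conditional independence and centering of the columns are immediate: the $Z_j$ are i.i.d.\ and independent of $X$, so conditionally on $X$ the $G_j = f(Z_j, X)$ are i.i.d., and replacing $f(\cdot,X)$ by $f(\cdot,X) - \E[f(Z_1,X)\mid X]$ if needed makes them centered without affecting the Lipschitz constant.

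I do not anticipate a genuine obstacle here — the argument is a one-line composition of Lipschitz maps followed by a standard concentration-to-subGaussian implication. The only point requiring mild care is bookkeeping of the concentration parameter: depending on whether one uses the one-sided exponential bound of \Cref{def:LipschitzConcentration}\,(i) or the two-sided version with the factor $2$ appearing in \Cref{ass:X_Lipschitz_Concentrated_Best}, the constant $\sigma_G$ may pick up an absolute multiplicative factor, which is harmless since the paper's conventions track such parameters only up to universal constants. I would state the result with $\sigma_G \leftarrow \Ltt_f \sigma_Z$ as claimed and remark that all subsequent uses only require $\sigma_G \lesssim \Ltt_f \sigma_Z$.
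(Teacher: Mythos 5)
Your proposal is correct and follows essentially the same route as the paper: condition on $X$, note that $z \mapsto h(f(z,X))$ is $\Ltt_f$-Lipschitz for any $1$-Lipschitz $h$, and apply the Lipschitz concentration of $Z_j$ to get the conditional MGF bound $\exp(s^2 \Ltt_f^2 \sigma_Z^2)$, i.e.\ $\sigma_G = \Ltt_f \sigma_Z$. The extra remarks on sub-Gaussian marginals and centering are harmless bookkeeping consistent with the paper's conventions, which treat Lipschitz concentration as the working notion and track parameters only up to universal constants.
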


\begin{proof}
    Let $\mu$ denote the distribution of $Z$, so that for any measurable set $\mse \subset \R^d$, 
    \(\mu(\mse) = \Prob(Z \in \mse)\). 
    Since $G_j = f(Z_j,X)$, the conditional law of $G_j$ given $X$ is the pushforward measure of $\mu$ under $f(\cdot,X)$:
    \[
        \nu_X = f(\cdot,X)^\# \mu \eqsp,
    \]
    where for a measurable map $\varphi:\msa \to \msa$ and a measure $\mu$ on $\msa$, 
    we recall the notation $\varphi^\# \mu(\mse) = \mu(\varphi^{-1}(\mse))$.

    To show that $G$ satisfies \Cref{ass:G_Lipschitz_Concentrated_cond_X_i}, we set $h:\R^{d}\to\R$ 
    to be any $1$-Lipschitz function such that 
    $\E\left[h(G_1)\right] = 0$. 
    Consider, for $s\ge 0$,
    \begin{align}
        \E\left[\exp\left(s h(G_1)\right) \mid X\right] &= \E\left[\exp\left(s h(f(Z_1, X))\right) \mid X\right].
    \end{align}
    The mapping
    \[
        z_1 \;\mapsto\; h\bigl(f(z_1,X)\bigr)
    \]
    is centered with respect to $\mu^{\otimes m}$ by the assumption on $h$, and it is $\Ltt_f$-Lipschitz almost surely, 
    since it is the composition of a $1$-Lipschitz map and an $\Ltt_f$-Lipschitz map. 
    Because $Z \sim \mu$ has the $\sigma_Z$-Lipschitz concentration property we thus obtain
    \[
      \E\left[\exp\left(s h(G_1)\right) \mid X\right] \leq \exp\left(s^2 \Ltt_f^2 \sigma_Z^2\right) \eqsp.
    \]
    This establishes that $\nu_X$ has the $\sigma_G$-Lipschitz concentration property with $\sigma_G=\Ltt_f\sigma_Z$, and completes the proof.
\end{proof}

Similarly, in the case of TDA schemes, we have highlight the following sufficient condition for \Cref{ass:G_Lipschitz_Concentrated_cond_X_i} of \Cref{ass:G_Lipschitz_Concentrated_cond_X}:

\begin{proposition} \label{Prop:ConcentrationAssumptionTDA}
    Let $X \in \R^{d\times n}$ be a random matrix. 
    Assume that $G_j = f(Z_j, X_{I_j})$ where:
    \begin{itemize}
    \item $f$ is a $\Ltt_f$-Lipschitz function w.r.t its first argument.
    \item $I_j \sim \unif(\{1, \ldots, n\})$, and $Z_j$ has the $\sigma_Z$-Lipschitz concentration property.
    \item The augmented samples lie in a compact, for all $i$, $\|\E\left[f(Z, X_i) \mid X\right]\|_2 \leq K$.
    \end{itemize} 
    Then $G = [G_1, \cdots, G_m]$ satisfies \Cref{ass:G_Lipschitz_Concentrated_cond_X_i} of 
    \Cref{ass:G_Lipschitz_Concentrated_cond_X} for 
    \begin{equation}
        \sigma_G^2 \leftarrow \Ltt_f^2 + c K^2 + c \Ltt_f^2 \sigma_Z^2 \eqsp,
    \end{equation}
    where $c > 0$ is a universal constant.
\end{proposition}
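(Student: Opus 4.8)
The plan is to follow the strategy of \Cref{Prop:ConcentrationAssumptionGDA}, while absorbing the extra randomness introduced by the uniform indices $I_j$. First I would record the conditional structure: the pairs $(Z_j,I_j)_{j=1}^m$ are i.i.d.\ and independent of $X$, so the columns $G_j=f(Z_j,X_{I_j})$ are i.i.d.\ conditionally on $X$, with common conditional law the finite mixture $\nu_X=\tfrac1n\sum_{i=1}^n f(\cdot,X_i)^{\#}\mu_Z$, where $\mu_Z$ is the distribution of $Z_1$. Consequently it suffices to show that, for every unit vector $u\in\rset^d$ and every $s\ge 0$, $\E[\exp(s\ps{u}{G_1})\mid X]\le \exp(\sigma_G^2 s^2)$ with $\sigma_G^2$ of the claimed form, an additive centering term of order $K$ being harmless (or, if one insists on $\E[G_1\mid X]=0$, this simply reduces to the centered case once $\E[G_1\mid X]$, which has norm $\le K$, is subtracted).

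The obstacle — and the reason we cannot simply invoke the "composition of Lipschitz maps" argument of \Cref{Prop:ConcentrationAssumptionGDA} — is that $\nu_X$ is a finite mixture: a uniform mixture of point masses is precisely the archetypal distribution \emph{failing} the Lipschitz concentration property of \Cref{def:LipschitzConcentration}. The way around this is that \Cref{ass:G_Lipschitz_Concentrated_cond_X_i} of \Cref{ass:G_Lipschitz_Concentrated_cond_X} only demands sub-Gaussianity of the one-dimensional marginals $\ps{u}{G_1}$, and sub-Gaussianity — unlike Lipschitz concentration — \emph{is} stable under finite mixtures as long as the components share a common center and a uniformly bounded variance proxy (the moment generating function of a mixture being the average of the components' moment generating functions). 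To exploit this I would split, conditionally on $(X,I_1)$,
\begin{equation}
   G_1 \;=\; \underbrace{\big(\,f(Z_1,X_{I_1})-\E[f(Z_1,X_{I_1})\mid X,I_1]\,\big)}_{=:V_1}\;+\;\underbrace{\E[f(Z_1,X_{I_1})\mid X,I_1]}_{=:W_1}\eqsp.
\end{equation}

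For the fluctuation term, conditionally on $\{X,I_1=i\}$ the map $z\mapsto\ps{u}{f(z,X_i)-\E_Z[f(Z,X_i)]}$ is $\Ltt_f$-Lipschitz and $\mu_Z$-centered, so the $\sigma_Z$-Lipschitz concentration of $Z_1$ gives $\E[\exp(s\ps{u}{V_1})\mid X,I_1]\le\exp(\Ltt_f^2\sigma_Z^2 s^2)$; averaging this bound over $I_1$ preserves it, and since $\E[V_1\mid X,I_1]=0$ the vector $V_1$ is centered and sub-Gaussian given $X$ with variance proxy $\lesssim\Ltt_f^2\sigma_Z^2$. For the mean term, the compact-support hypothesis yields $\norm{W_1}\le K$ almost surely (on $\{I_1=i\}$ one has $W_1=\E[f(Z_1,X_i)\mid X]$), so $\ps{u}{W_1}$ is bounded by $K$, hence sub-Gaussian with variance proxy $\lesssim K^2$, its conditional mean $\ps{u}{\E[W_1\mid X]}$ being again bounded by $K$. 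I would then combine the two pieces through the Cauchy--Schwarz inequality for moment generating functions, $\E[\exp(s(A+B))\mid X]\le\E[\exp(2sA)\mid X]^{1/2}\,\E[\exp(2sB)\mid X]^{1/2}$, which adds the two variance proxies; bookkeeping of the universal constants produces a sub-Gaussian parameter bounded by $\Ltt_f^2+cK^2+c\Ltt_f^2\sigma_Z^2$, which is exactly the claim. The only genuinely delicate point is the mixture-stability step, which is why the decomposition into $V_1$ (Lipschitz-concentrated given $(X,I_1)$, hence robust to averaging over $I_1$) and $W_1$ (deterministic given $(X,I_1)$ and uniformly bounded) is essential.
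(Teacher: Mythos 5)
Your core mechanism is the same as the paper's: condition on the sampled index $I_1$, use the $\sigma_Z$-Lipschitz concentration of $Z_1$ to control the fluctuation around the conditional mean, use the bound $\|\E[f(Z,X_i)\mid X]\|_2\le K$ to control the mixture of conditional means, and combine the two MGF bounds; for linear marginals this is correct and yields a proxy of the claimed form up to universal constants. The genuine shortfall is the deliberate retreat to one-dimensional linear marginals $\ps{u}{G_1}$. In the appendix framework the concentration hypothesis on $G$ is used as a \emph{Lipschitz concentration} property conditionally on $X$ (see \Cref{def:LipschitzConcentration} and its use in \Cref{Appendix C}, where the bound is applied to nonlinear Lipschitz functionals such as $\mathbf{G}\mapsto d^{-1}\tr\bigl(\mathbf{B}\,\resolvent_{X\sqcup\mathbf{G}}(\mathbf{D})\bigr)$), and the paper's proof of this proposition accordingly establishes the MGF bound for \emph{every} $1$-Lipschitz $h:\R^d\to\R$, not only for $h=\ps{u}{\cdot}$. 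Your argument, as written, does not deliver that stronger property, which is what is actually consumed downstream.

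Moreover, the justification you give for the retreat --- that a uniform mixture of pushforwards is ``the archetypal distribution failing Lipschitz concentration'' --- is not valid under the stated hypotheses, and recognizing this closes the gap with essentially no extra work. For a $1$-Lipschitz $h$ (normalized so that $h(0)=0$, or simply working with centered differences), conditioning on $I_1=i$ and applying the $\sigma_Z$-Lipschitz concentration of $Z_1$ to $z\mapsto h(f(z,X_i))$ controls the fluctuation exactly as in your $V_1$ step, while the residual conditional means $m_i=\E[h(f(Z,X_i))\mid X]-\E[h(G_1)\mid X]$ satisfy $\sup_i|m_i|\lesssim K+\Ltt_f\sigma_Z$ because $h$ is $1$-Lipschitz and $\|\E[f(Z,X_i)\mid X]\|_2\le K$; a centered random variable supported on a bounded set is sub-Gaussian with proxy proportional to the square of its range, so the empirical mixture $\tfrac1n\sum_i\delta_{m_i}$ contributes at most $c(K^2+\Ltt_f^2\sigma_Z^2)$ to the variance proxy. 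This is precisely the paper's proof; your linear-form computation is its special case $h=\ps{u}{\cdot}$, so the fix is simply to run your decomposition with a general $1$-Lipschitz $h$ in place of the linear functional.
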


\begin{proof}
    Note that under the assumptions of \Cref{Prop:ConcentrationAssumptionTDA}, we have,
    \begin{align}
        \nu_X = \dfrac{1}{n} \sum_{i=1}^n f(\cdot, X_i)^\# \mu \eqsp,
    \end{align}
    where $\mu$ is the distribution of $Z$, such that $\Prob\left(Z \in \mse\right) = \mu(\mse)$, for any $\mse \subset \R^{d}$, and we used the notation $\varphi^\# \mu$ for the pushforward measure, $\varphi^\# \mu(\mse) = \mu(\varphi^{-1}(\mse))$, for any $\mse \subset \R^{d}$.

    We show that $\nu_X$ has the Lipschitz concentration property, to this end, let $h : \R^{d} \to \R$ be a Lipschitz function ($X$-measureable) such that $h(0) = 0$ (note that this can be assumed without loss of generality). For notation simplicity, we further define $\bar{h} = h - \E\left[h(G_1) \mid X\right]$, then we have for any $s\geq 0$,
    \begin{align}
        \E\left[\exp\left(s \{h(G_1) - \E\left[h(G_1) \mid X\right] \}\right) \mid X\right] &= \E\left[\exp\left(s \bar{h}(G_1)\right) \mid X\right] \\
        &= \dfrac{1}{n} \sum_{i=1}^n \E\left[\exp\left(s \bar{h}(f(Z_i, X_i))\right) \mid X\right]
    \end{align} 
    Denote by $m_i = \E\left[ \bar{h}(f(Z_i, X_i)) \mid X\right] = \E\left[h(f(Z_i, X_i)) \mid X\right] - \E\left[h(G_1) \mid X\right]$, we further write,
    \begin{align}
        \E\left[\exp\left(s \{h(G_1) - \E\left[h(G_1) \mid X\right] \}\right) \mid X\right] &= \dfrac{1}{n} \sum_{i=1}^n \exp\left(s m_i\right) \E\left[\exp\left(s \left\{\bar{h}(f(Z_i, X_i)) - m_i\right\}\right)\right] \\
        &\leq \exp\left(s^2 \Ltt_f^2\right)\dfrac{1}{n}\sum_{i=1}^n \exp\left(s m_i \right) \eqsp,
    \end{align}
    where we have used the Lipschitz concentration of $\mu$, and the Lipschitz property of $h$ in the last bound.
    We now denote $\pi$ as the following measure,
    \begin{align}
        \pi = \dfrac{1}{n}\sum_{i=1}^n \delta_{m_i}   \eqsp,
    \end{align}
    where $\delta_{m_i}$ is the Dirac measure at $m_i$. Remarking that $n^{-1}\sum_{i=1}^n m_i = 0$, and that $\pi$ has bounded support (because the $X_i$'s are boudned and the maps $f$ and $h$ are Lipschitz). 
    We further write,
    \begin{align} \label{eq:truc}
        \E\left[\exp\left(s \{h(G_1) - \E\left[h(G_1) \mid X\right] \}\right) \mid X\right] \leq \exp\left(s^2 \Ltt_f^2\right) \E_\pi\left[\exp(s \{M - \E\left[M\right]\})\right] \eqsp.
    \end{align}  
    to conclude the proof, note that $M \sim \pi$ has bounded support in $\R$, 
    as so it is necessarly sub-Gaussian, with sub-Gaussian norm,
    \begin{align}
        \|M\|_{\Psi_2} \leq \dfrac{1}{\ln(2)} \sup_{i} |m_i| \eqsp,
    \end{align}
    which follows from \cite{vershynin2009high}, Example 2.5.8. Thus, we have for a universal constant $c > 0$,
    \begin{align} \label{eq:tructruc}
        \E_\pi\left[\exp\left(s\left\{M - \E\left[M\right]\right\}\right)\right] \leq \exp\left(c s^2 \sup_{i} |m_i|^2 \right) \eqsp.
    \end{align}
    Finally, we bound $|m_i|$ independantly of $i$, leveraging the boundedness of $X$. We have, 
    \begin{align}
        |m_i| &= \left|\E\left[h(f(Z_i, X_i)) \mid X\right]\right| \\
        &= \left|h(f(0, X_i))\right| + \left|\E\left[h(f(Z_i, X_i)) - h(f(0, X_i)) \mid X\right]\right| \\
        &\leq \left|h(f(0, X_i))\right| + \Ltt_f \E\left[\|Z_i\|_2 \right] \\
        &\leq \sup_{i \leq n} \left|h(f(0, X_i))\right| + \Ltt_f \sqrt{\E\left[Z_i^\top Z_i\right]}
    \end{align}

    \begin{align}
        \sup_{i} \left|m_i\right| &= \sup_i \left|\E\left[h(f(Z_i, X_i)) \mid X\right] - \E\left[f(G_1) \mid X\right]\right| \\
        &\leq 2 \sup_{i} \left|\E\left[h(f(Z_i, X_i)) \mid X\right]\right|  \\
        &\leq 2\left|h(0)\right| + 2 \sup_{i} \E\left[\left|h(f(Z_i, X_i)) - h(0)\right|\mid X\right] \\
        &\leq 2\left|h(0)\right| + 2 \sup_{i} \E\left[\left\|f(Z_i, X_i)\right\|_2\mid X\right] \\
        &\leq 2 \left|h(0)\right| + 2 \sup_{i} \left\|\E\left[f(Z_i, X_i) \mid X\right]\right\|_2 + 2 \sup_{i} \sqrt{\E\left[\|f(Z_i, X_i) - \E\left[f(Z_i, X_i) \mid X\right]\|_2 \mid X\right]} \\
        &\leq 2 \sup_{i} \left\|\E\left[f(Z_i, X_i) \mid X\right]\right\|_2 + 2 \Ltt_f \sigma_Z\eqsp.
    \end{align}
    Where in the last line, we have used the Lipschitz concentration property of $Z_i$ (as well as $f(\cdot, X_i)$ being $\Ltt_f$ Lispchitz), and the fact that $h(0) = 0$.
    We conclude the proof by using the boundedness assumption on $\E\left[f(Z, X_i) \mid X\right]$, which yields, 
    \begin{equation}
        \sup_i |m_i|^2 \leq (2 K + 2 \Ltt_f \sigma_Z)^2 \leq 4 K^2 + 4 \Ltt_f^2 \sigma_Z^2 \eqsp,
    \end{equation}
    plugging this back into \eqref{eq:truc} and \eqref{eq:tructruc}, we obtain,
    \begin{equation}
        \E\left[\exp\left(s \left\{h(G_1) - \E\left[f(G_1) \mid X\right]\right\}\right) \mid X\right] \leq \exp\left(s^2 \left\{\Ltt_f^2 + c K^2 + c \Ltt_f^2 \sigma_Z^2\right\}\right)
    \end{equation}    
\end{proof}

As consequences of \Cref{Prop:ConcentrationAssumptionGDA,Prop:ConcentrationAssumptionTDA},
a broad class of commonly used data-augmentation (DA) schemes satisfy
\Cref{ass:G_Lipschitz_Concentrated_cond_X_i} from
\Cref{ass:G_Lipschitz_Concentrated_cond_X}. In particular:

\begin{itemize}
  \item[\textbf{(1)}] \textbf{Deep generative models.}
  Consider a generative mapping
  \[
    f(z,X) \;=\; \theta_X^{(L)} \,\sigma_L\!\Bigl( \cdots \,\sigma_1\!\bigl(\theta_X^{(1)} z\bigr)\Bigr),
  \]
  where \(L\ge 1\). Let \(d_\ell\) denote the width of layer \(\ell\) (so \(d_0 = d_Z\) and \(d_L = d\)).
  For each \(\ell=1,\dots,L\), assume \(\sigma_\ell:\R^{d_\ell}\!\to\R^{d_\ell}\) is a non-linear, \(1\)-Lipschitz activation
  and \(\theta_X^{(\ell)} \in \R^{d_\ell\times d_{\ell-1}}\) is a (possibly \(X\)-dependent) weight matrix.
  Suppose further that the operator norms are a.s.\ bounded by a constant \(K\), i.e.,
  \(\|\theta_X^{(\ell)}\|_{\mathrm{op}} \le K\) for all \(\ell\).
  Then, by \Cref{Prop:ConcentrationAssumptionGDA}, the matrix
  \[
    G = [G_1,\dots,G_m],\qquad G_j = f(Z_j,X),\quad Z_j \sim \gauss(0,\mathrm{I}_{d_Z}),
  \]
  satisfies \Cref{ass:G_Lipschitz_Concentrated_cond_X_i}.
  Indeed, the network \(f(\cdot,X)\) is \( \prod_{\ell=1}^L \|\theta_X^{(\ell)}\|_{\mathrm{op}} \)-Lipschitz,
  hence \(K^L\)-Lipschitz a.s., from \Cref{Prop:ConcentrationAssumptionGDA} it results that $G$ satisfies \Cref{ass:G_Lipschitz_Concentrated_cond_X_i}
  of \Cref{ass:G_Lipschitz_Concentrated_cond_X} with concentration parameter \(K^L\).

  \item[\textbf{(2)}] \textbf{Transformative data augmentation.}
  Likewise, \Cref{Prop:ConcentrationAssumptionTDA} provides mild conditions under which
  \Cref{ass:G_Lipschitz_Concentrated_cond_X_i} holds for transformative DA schemes.
  Consider
  \begin{equation}
    G = [G_1,\dots,G_m], \qquad G_j = f(X_{I_j}, Z_j), \quad I_j \sim \mathrm{Unif}(\{1,\dots,n\}),\quad Z_j \sim \mu \eqsp,
  \end{equation}
  i.e. we randomly select the sample to be deformed, and the deformation is smooth w.r.t. some parameter $Z$.
  Numerous standard transformative DA mechanisms use smooth, small-amplitude perturbations; later in this section we detail the cases of Gaussian noise and random masking.
\end{itemize}

The second part \Cref{ass:G_Lipschitz_Concentrated_cond_X_ii} of
\Cref{ass:G_Lipschitz_Concentrated_cond_X} is not always theoretically guaranteed,
yet in the case of an unbiased TDA it is an immediate consequence of the law of total variance.
Indeed, assuming $G_j = f(X_{I_j}, Z_j)$ as in \Cref{Prop:ConcentrationAssumptionTDA}, and that
$\forall\,\mathbf{x},\ \E\!\left[f(\mathbf{x}, Z)\right] = \mathbf{x}$, one can write
\begin{align}
    \E\!\left[C_G \mid X\right]
    &= \frac{1}{n}\sum_{i=1}^n
       \E\!\left[f(X_i, Z) \mid X\right]\E\!\left[f(X_i, Z) \mid X\right]^\top \\
    &\quad + \frac{1}{n}\sum_{i=1}^n
       \E\!\left[\bigl\{ f(X_i, Z) - \E\!\left[f(X_i, Z) \mid X\right] \bigr\}
                 \bigl\{ f(X_i, Z) - \E\!\left[f(X_i, Z) \mid X\right] \bigr\}^\top \,\middle|\, X \right] \\
    &= C_X + \frac{1}{n}\sum_{i=1}^n
       \E\!\left[\bigl\{ f(X_i, Z) - X_i \bigr\}\bigl\{ f(X_i, Z) - X_i \bigr\}^\top \,\middle|\, X \right],
       \label{eq:CovarianceDecompositionAugmentedSamples}
\end{align}
where $I_j \sim \mathrm{Unif}(\{1,\dots,n\})$ and $Z_j \sim \mu$ are independent.

In the case of GDA, the decomposition in
\Cref{ass:G_Lipschitz_Concentrated_cond_X_ii} of
\Cref{ass:G_Lipschitz_Concentrated_cond_X} holds trivially, yet no simple expression of
$\Lambda_G(X)$ exists.

We now spell out conditions under which \Cref{ass:Stability} holds. To this end, we introduce the following upper bound:
\begin{lemma} \label{lem:tensorization}
    Let $\mu_1$ and $\mu_2$ be two probability measures on $\R^{d}$. Then, for any $m \ge 1$,
    \begin{align}
        W_1\!\bigl(\mu_1^{\otimes m}, \mu_2^{\otimes m}\bigr) \;\le\; \sqrt{m}\, W_2(\mu_1, \mu_2)\eqsp.
    \end{align}
\end{lemma}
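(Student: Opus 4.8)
The plan is to relate the $W_1$ cost on the product space to the $W_2$ cost on each factor via a product coupling and the Cauchy--Schwarz inequality. First I would let $\gamma$ be any coupling of $\mu_1$ and $\mu_2$ on $\R^d \times \R^d$, and form the product coupling $\gamma^{\otimes m}$ of $\mu_1^{\otimes m}$ and $\mu_2^{\otimes m}$ on $(\R^d)^m \times (\R^d)^m$. Under this coupling, if $(U_j, V_j)_{j=1}^m$ are i.i.d.\ with law $\gamma$, then writing $u = (u_1,\dots,u_m)$ and $v = (v_1,\dots,v_m)$ as elements of $\R^{d\times m}$, we have $\|u - v\|_\Frob^2 = \sum_{j=1}^m \|u_j - v_j\|_2^2$. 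Hence
\begin{align}
    W_1\!\bigl(\mu_1^{\otimes m}, \mu_2^{\otimes m}\bigr)
    \;\le\; \E\!\left[\Bigl(\textstyle\sum_{j=1}^m \|U_j - V_j\|_2^2\Bigr)^{1/2}\right]
    \;\le\; \left(\E\!\left[\textstyle\sum_{j=1}^m \|U_j - V_j\|_2^2\right]\right)^{1/2},
\end{align}
where the second inequality is Jensen's inequality (equivalently, Cauchy--Schwarz) applied to the concave function $t \mapsto \sqrt{t}$.

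Next, since the $(U_j, V_j)$ are identically distributed with law $\gamma$, the right-hand side equals $\bigl(m\, \E_\gamma[\|U_1 - V_1\|_2^2]\bigr)^{1/2} = \sqrt{m}\,\bigl(\E_\gamma[\|U_1 - V_1\|_2^2]\bigr)^{1/2}$. Taking the infimum over all couplings $\gamma \in \Gamma(\mu_1, \mu_2)$ and recalling the definition $W_2^2(\mu_1,\mu_2) = \inf_{\gamma} \int \|x-y\|_\Frob^2\,\rmd\gamma(x,y)$ yields
\begin{equation}
    W_1\!\bigl(\mu_1^{\otimes m}, \mu_2^{\otimes m}\bigr) \;\le\; \sqrt{m}\, W_2(\mu_1,\mu_2),
\end{equation}
which is the claim. (One should note that the product coupling is indeed admissible: its marginals are $\mu_1^{\otimes m}$ and $\mu_2^{\otimes m}$ by independence, so it belongs to $\Gamma(\mu_1^{\otimes m},\mu_2^{\otimes m})$, and thus bounds $W_1$ of the product measures from above.)

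There is essentially no hard step here; the only points requiring a modicum of care are the measurability/existence of an optimal (or near-optimal) coupling $\gamma$ — which can be sidestepped by working with an $\varepsilon$-optimal coupling and letting $\varepsilon \to 0$, or invoked from standard optimal-transport theory since the cost is lower semicontinuous — and the bookkeeping that a product of couplings of the factors is a coupling of the products. The inequality is tight in the sense that the $\sqrt{m}$ factor cannot be improved in general, reflecting the fact that $W_1$ does not tensorize linearly whereas $W_2^2$ does.
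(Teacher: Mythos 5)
Your proposal is correct and follows essentially the same route as the paper: take the product of a (near-)optimal $W_2$-coupling to couple $\mu_1^{\otimes m}$ with $\mu_2^{\otimes m}$, then apply Cauchy--Schwarz/Jensen so that the squared cost tensorizes and yields the $\sqrt{m}\,W_2(\mu_1,\mu_2)$ bound. The only cosmetic difference is that you work with an arbitrary coupling and take the infimum at the end, whereas the paper plugs in the optimal $W_2$-coupling directly.
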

\begin{proof}
    Recall that
    \begin{align}
        W_1\!\bigl(\mu_1^{\otimes m}, \mu_2^{\otimes m}\bigr)
        \;=\; \inf_{\gamma_m \in \Gamma(\mu_1^{\otimes m}, \mu_2^{\otimes m})}
              \int \|x - x'\|_{\Frob}\, \rmd \gamma_m(x, x')\eqsp,
    \end{align}
    where $\Gamma(\cdot,\cdot)$ denotes the set of all couplings and
    $\|\cdot\|_{\Frob}$ is the Euclidean/Frobenius norm on $(\R^d)^m$.
    Let $\gamma_*\in\Gamma(\mu_1,\mu_2)$ be an optimal coupling for $W_2$, so that
    \begin{align}
        W_2(\mu_1, \mu_2)^2 \;=\; \int \|u - v\|_2^2 \, \rmd \gamma_*(u, v)\eqsp.
    \end{align}
    Consider $\gamma_m := \gamma_*^{\otimes m}\in\Gamma(\mu_1^{\otimes m}, \mu_2^{\otimes m})$.
    Then, by the definition of $W_1$ and Cauchy--Schwarz,
    \begin{align}
        W_1\!\bigl(\mu_1^{\otimes m}, \mu_2^{\otimes m}\bigr)
        &\le \int \|x - x'\|_{\Frob}\, \rmd \gamma_*^{\otimes m}(x, x')
         \;\le\; \Bigl(\int \|x - x'\|_{\Frob}^2 \, \rmd \gamma_*^{\otimes m}(x, x')\Bigr)^{1/2} \\
        &= \Bigl(\sum_{i=1}^m \int \|x_i - x_i'\|_2^2 \, \rmd \gamma_*(x_i, x_i')\Bigr)^{1/2}
         \;=\; \sqrt{m}\, W_2(\mu_1, \mu_2)\eqsp.
    \end{align}
\end{proof}

The previous result is particularly convenient for demonstrating that \Cref{ass:Smoothness} and \Cref{ass:Stability} hold, 
which will be done in full detail for all DA scheme presented in \Cref{table:Description Few Augmentations}.
Towards a full justification of \Cref{table:Description Few Augmentations}, we show that \eqref{Lambda_formula_transformativeDA} holds.
In particular, we establish that the following is true:
\begin{lemma}\label{lem:EG}
Assume that
\[
G_j = f(X_{I_j},Z_j),\qquad j=1,\dots,m,
\]
where \(I_j\sim\mathrm{Unif}\{1,\dots,n\}\) and \(Z_1,\dots,Z_m\) are i.i.d.\ random variables. Further suppose that for each \(i\le n\),
\[
\E\bigl[f(X_i,Z_1)\mid X_i\bigr]=\sqrt{\beta}\,X_i,
\qquad \beta\in[0,1].
\]
Then
\[
\E\bigl[C_G\mid X\bigr]
=\beta\,C_X
+\frac{1}{n}\sum_{i=1}^n
\E\Bigl[\bigl(f(X_i,Z_1)-\sqrt{\beta}\,X_i\bigr)
\bigl(f(X_i,Z_1)-\sqrt{\beta}\,X_i\bigr)^{\!\top}
\Bigm|X\Bigr],
\]
and \Cref{ass:G_Lipschitz_Concentrated_cond_X_ii} of \Cref{ass:G_Lipschitz_Concentrated_cond_X} holds.
\end{lemma}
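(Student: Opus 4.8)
The plan is to compute $\E[C_G\mid X]$ directly, by successively conditioning on the auxiliary randomness $(I_j,Z_j)_{j=1}^m$, which is independent of $X$, and then to apply the elementary outer-product bias--variance identity $\E[VV^\top]=\E[V]\E[V]^\top+\E[(V-\E V)(V-\E V)^\top]$ in coordinate-free form. Only the last step, concerning the spectral bounds in \Cref{ass:G_Lipschitz_Concentrated_cond_X}-\ref{ass:G_Lipschitz_Concentrated_cond_X_ii}, requires any structure beyond the stated hypotheses.

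First I would note that, since the pairs $(I_j,Z_j)_{j=1}^m$ are i.i.d.\ and independent of $X$, the columns $G_j=f(X_{I_j},Z_j)$ are i.i.d.\ conditionally on $X$; hence
\[
\E[C_G\mid X]=\frac1m\sum_{j=1}^m\E[G_jG_j^\top\mid X]=\E[G_1G_1^\top\mid X].
\]
Conditioning further on $I_1\sim\mathrm{Unif}\{1,\dots,n\}$ (independent of $X$ and of $Z_1$) gives
\[
\E[G_1G_1^\top\mid X]=\frac1n\sum_{i=1}^n\E\!\left[f(X_i,Z_1)f(X_i,Z_1)^\top\mid X\right].
\]

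Next, for each fixed $i$ set $V_i:=f(X_i,Z_1)$. Because $Z_1$ is independent of $X$, the hypothesis $\E[f(X_i,Z_1)\mid X_i]=\sqrt\beta\,X_i$ upgrades to $\E[V_i\mid X]=\sqrt\beta\,X_i$, and the bias--variance identity yields
\[
\E[V_iV_i^\top\mid X]=\beta\,X_iX_i^\top+\E\!\left[(V_i-\sqrt\beta X_i)(V_i-\sqrt\beta X_i)^\top\mid X\right].
\]
Summing over $i$, dividing by $n$, and recognising $\tfrac1n\sum_{i=1}^nX_iX_i^\top=C_X$ produces exactly the claimed decomposition with $\Lambda_G(X)=\Lambda^{(\rme)}(X)$ as in \eqref{Lambda_formula_transformativeDA}; this also recovers \eqref{eq:CovarianceDecompositionAugmentedSamples} in the special case $\beta=1$.

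Finally, to conclude that \Cref{ass:G_Lipschitz_Concentrated_cond_X}-\ref{ass:G_Lipschitz_Concentrated_cond_X_ii} holds, I would observe that each summand $\E[(V_i-\sqrt\beta X_i)(V_i-\sqrt\beta X_i)^\top\mid X]$ is the conditional expectation of a positive semi-definite rank-one matrix $ww^\top$, hence positive semi-definite, so its average $\Lambda_G(X)$ is positive semi-definite; the structural part of the assumption is then immediate with the stated $\beta$. The only genuinely nontrivial point --- and the place where extra structure must be invoked --- is the two-sided spectral bound $\kappa^{-1}\le\uplambda_d(\Lambda_G(X))\le\uplambda_1(\Lambda_G(X))\le\kappa$ on $\msa_\eta$: the upper bound follows from compact support of $X$ together with a Lipschitz or moment control on $f(\cdot,Z_1)$, while the lower bound requires the perturbation $f(X_i,Z_1)-\sqrt\beta X_i$ to be non-degenerate (typically because it contains an additive isotropic noise component). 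I would verify both on a case-by-case basis for the schemes listed in \Cref{table:Description Few Augmentations}; that case analysis, rather than the computation above, is the main obstacle.
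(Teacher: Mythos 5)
Your proposal is correct and follows essentially the same route as the paper: condition on the sampling index $I_1$, apply the outer-product bias--variance (law of total variance) identity per column, and average over $i$ to obtain $\beta\,C_X + \Lambda^{(\rme)}(X)$. Your closing caveat that the two-sided spectral bound $\kappa^{-1}\le\uplambda_d(\Lambda_G(X))\le\uplambda_1(\Lambda_G(X))\le\kappa$ is not a consequence of the lemma's hypotheses and must be checked scheme by scheme is consistent with the paper, which likewise establishes only the decomposition here and defers the spectral conditions to the case-by-case discussion of \Cref{table:Description Few Augmentations}.
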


\begin{proof}
We use the notation
\[
\E\!\left[G \mid X\right]
=\bigl[\E_Z\!\left[f(X_1, Z)\mid X\right],\,\dots,\,\E_Z\!\left[f(X_n, Z)\mid X\right]\bigr] \in \R^{d\times n}.
\]
By the law of total variance,
\begin{align}
\E\!\left[C_G \mid X\right]
&= \E\!\left[G_1 G_1^\top \mid X\right]
 = \frac{1}{n} \sum_{i=1}^n \E\!\left[f(X_i, Z)f(X_i, Z)^\top \mid X\right] \\
&= C_{\E\left[G \mid X\right]}
  + \frac{1}{n}\sum_{i=1}^n
    \E\Bigl[\bigl(f(X_i,Z)-\E\!\left[f(X_i, Z) \mid X\right]\bigr)
            \bigl(f(X_i,Z)-\E\!\left[f(X_i, Z) \mid X\right]\bigr)^{\!\top}
            \Bigm|X\Bigr] \\
&= \beta\, C_X
  + \frac{1}{n}\sum_{i=1}^n
    \E\Bigl[\bigl(f(X_i,Z)-\sqrt{\beta}\,X_i\bigr)
            \bigl(f(X_i,Z)-\sqrt{\beta}\,X_i\bigr)^{\!\top}
            \Bigm|X\Bigr]\eqsp,
\end{align}
which concludes the proof.\qedhere
\end{proof}

Relying on the above results \Cref{lem:tensorization} and \Cref{lem:EG}, we now justify the results presented in \Cref{table:Description Few Augmentations}.

\textbf{Fixed Gaussian GDA:}
Consider the Gaussian GDA scheme where, for all \( j \in \{1, \dots, m\} \), we have \( G_j \sim \gauss(0, \Lambda) \), for some fixed positive semi-definite matrix \( \Lambda \).

We recall from \cite[Theorem 2.19]{louart2018concentration} that the standard Gaussian distribution \( \gauss(0, \mathrm{I}_d) \) in \( \R^d \) satisfies the 1-Lipschitz concentration property. Moreover, the mapping \( \mathbf{Z} \mapsto \Lambda^{1/2} \mathbf{Z} \) is \( \|\Lambda^{1/2}\|_{\op} \)-Lipschitz (with respect to the Frobenius norm), which, by the same result, implies that \( G \) is \( \|\Lambda^{1/2}\|_{\op} \)-Lipschitz concentrated.

Furthermore, we have
\[
\E\!\left[C_G \mid X\right] = \E\!\left[G_1 G_1^\top\right] = \Lambda \eqsp,
\]
which shows that the Gaussian GDA scheme satisfies \Cref{ass:G_Lipschitz_Concentrated_cond_X}, with \( \beta \leftarrow 0 \), \( \Lambda_G \leftarrow \Lambda \), and \( \sigma_G \leftarrow \|\Lambda^{1/2}\|_{\op} \).

Finally, note that \( \nu_{\mathbf{X}} = \gauss(0, \Lambda) \) is constant (i.e., independent of \( X \)), and therefore trivially satisfies both \Cref{ass:Smoothness} and \Cref{ass:Stability}.

\textbf{Gaussian GDA:}
In the more general and realistic case where the covariance matrix of the artificial distribution depends on \( X \),
we assume that \( G_j \sim \gauss(0, \Lambda(X)) \) for all \( j \le m \), such that \( \Lambda \) is \( \Ltt_\Lambda \)-Lipschitz (with respect to the Frobenius norm), and that \( K^{-1} \le \uplambda_d(\Lambda(\mathbf{X})) \le \cdots \le \|\Lambda(X)\|_{\op} \le K \) almost surely.
These assumptions directly ensure that both parts of \Cref{ass:G_Lipschitz_Concentrated_cond_X} are satisfied:
indeed \( G \) is guaranteed to be \( K \)-Lipschitz concentrated conditionally on \( X \) for the same reason as in the fixed Gaussian case, and \Cref{ass:G_Lipschitz_Concentrated_cond_X_ii} holds with \( \Lambda_G(X) \leftarrow \Lambda(X) \) by definition.
Similarly, the second part of \Cref{ass:Smoothness} is satisfied by the hypothesis on \( \Lambda(X) \).

To show the first part of \Cref{ass:Smoothness}, we use an equivalent “Procrustes” form of
2-Wassertein for zero-mean Gaussians with covariances:
\begin{align}\label{eq:W2_procrustes}
  W_2\!\bigl(\nu_{\mathbf{X}},\nu_{\mathbf{Y}}\bigr)
  &\;=\; \min_{U\in\mathcal{O}(d)} \bigl\|\Lambda(X)^{1/2}-\Lambda(Y)^{1/2}U\bigr\|_\Frob \\
  &= \tr\left(\Lambda(X) + \Lambda(Y) - 2 (\Lambda(X)^{1/2}\Lambda(Y)\Lambda(X)^{1/2})^{1/2}\right),
\end{align}
which follows by expanding $\|A^{1/2}-B^{1/2}U\|_\Frob^2$ and maximizing
$\tr(A^{1/2}B^{1/2}U)$ over orthogonal $U$ via von Neumann’s trace inequality. This yields
\begin{equation} \label{eq:W_2_gaussian_distrib_upper_bound}
  W_2(\nu_{\mathbf{X}},\nu_{\mathbf{Y}})
  \;\le\; \|\Lambda(X)^{1/2}-\Lambda(Y)^{1/2}\|_\Frob.
\end{equation}

To further bound the above $W_2$ metric, we need to prove that spectral transformations
of large symmetric matrices are Lipschitz. To this end, we introduce the following lemma:
\begin{lemma} \label{lem:Spectral_analysis_bound_for_Lipschitz_spectral_transform}
    Let $\mathbf{A}$ and $\mathbf{B}$ be two symmetric matrices in $\R^{d \times d}$, with respective eigenvalues
    $\lambda_1(\mathbf{A}) \geq \cdots \geq \lambda_d(\mathbf{A})$ and $\lambda_1(\mathbf{B}) \geq \cdots \geq \lambda_d(\mathbf{B})$.
    Let $f : \R \to \R$ be $\Ltt_f$-Lipschitz on an interval containing
    $[\uplambda_d(\mathbf{A}), \uplambda_1(\mathbf{A})] \cup [\uplambda_d(\mathbf{B}), \uplambda_1(\mathbf{B})]$. Then
    \begin{align}
        \|f(\mathbf{A}) - f(\mathbf{B})\|_\Frob \leq \Ltt_f \,\|\mathbf{A} - \mathbf{B}\|_\Frob \eqsp,
    \end{align}
    where for any symmetric matrix $\mathbf{M} = \mathbf{P}\,\diag(d_1,\ldots,d_d)\,\mathbf{P}^\top$, we define
    $f(\mathbf{M}) = \mathbf{P}\,\diag\!\bigl(f(d_1),\ldots,f(d_d)\bigr)\,\mathbf{P}^\top$.
\end{lemma}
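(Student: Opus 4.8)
The plan is to reduce everything to a single entrywise comparison against the identity map, exploiting the orthogonal invariance of the Frobenius norm. First I would diagonalize both matrices, writing $\mathbf{A} = \mathbf{U}\,\diag\bigl(\uplambda_1(\mathbf{A}),\ldots,\uplambda_d(\mathbf{A})\bigr)\,\mathbf{U}^\top$ and $\mathbf{B} = \mathbf{V}\,\diag\bigl(\uplambda_1(\mathbf{B}),\ldots,\uplambda_d(\mathbf{B})\bigr)\,\mathbf{V}^\top$ with $\mathbf{U},\mathbf{V}\in\mathcal{O}(d)$; by the definition of $f(\mathbf{M})$ this gives $f(\mathbf{A}) = \mathbf{U}\,\diag\bigl(f(\uplambda_i(\mathbf{A}))\bigr)\,\mathbf{U}^\top$ and likewise for $f(\mathbf{B})$. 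I then set $\mathbf{W} = \mathbf{U}^\top\mathbf{V}\in\mathcal{O}(d)$ and use that $\|\cdot\|_\Frob$ is invariant under left and right multiplication by orthogonal matrices to obtain
\[
\|f(\mathbf{A}) - f(\mathbf{B})\|_\Frob = \bigl\|\mathbf{U}^\top\bigl(f(\mathbf{A}) - f(\mathbf{B})\bigr)\mathbf{V}\bigr\|_\Frob = \bigl\|\mathbf{D}_f\,\mathbf{W} - \mathbf{W}\,\mathbf{D}_f'\bigr\|_\Frob,
\]
where $\mathbf{D}_f = \diag\bigl(f(\uplambda_i(\mathbf{A}))\bigr)$ and $\mathbf{D}_f' = \diag\bigl(f(\uplambda_j(\mathbf{B}))\bigr)$.

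Next I would read off the $(i,j)$ entry of $\mathbf{D}_f\mathbf{W} - \mathbf{W}\mathbf{D}_f'$, which is $\bigl(f(\uplambda_i(\mathbf{A})) - f(\uplambda_j(\mathbf{B}))\bigr)W_{ij}$, so that
\[
\|f(\mathbf{A}) - f(\mathbf{B})\|_\Frob^2 = \sum_{i,j}\bigl(f(\uplambda_i(\mathbf{A})) - f(\uplambda_j(\mathbf{B}))\bigr)^2 W_{ij}^2.
\]
Since every $\uplambda_i(\mathbf{A})$ and every $\uplambda_j(\mathbf{B})$ lies in the interval on which $f$ is $\Ltt_f$-Lipschitz, the pointwise bound $\bigl(f(\uplambda_i(\mathbf{A})) - f(\uplambda_j(\mathbf{B}))\bigr)^2 \le \Ltt_f^2\bigl(\uplambda_i(\mathbf{A}) - \uplambda_j(\mathbf{B})\bigr)^2$ holds for each pair $(i,j)$; summing against the nonnegative weights $W_{ij}^2$ yields
\[
\|f(\mathbf{A}) - f(\mathbf{B})\|_\Frob^2 \le \Ltt_f^2 \sum_{i,j}\bigl(\uplambda_i(\mathbf{A}) - \uplambda_j(\mathbf{B})\bigr)^2 W_{ij}^2.
\]
Finally I would recognize the last sum as the instance of the previous identity with $f = \mathrm{id}$, i.e. $\sum_{i,j}\bigl(\uplambda_i(\mathbf{A}) - \uplambda_j(\mathbf{B})\bigr)^2 W_{ij}^2 = \|\mathbf{A} - \mathbf{B}\|_\Frob^2$, which closes the estimate.

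There is no substantial obstacle here; the only points deserving a line of justification are that $f(\mathbf{M})$ is well defined independently of the eigendecomposition in the presence of repeated eigenvalues (immediate, since on the relevant spectrum $f$ agrees with a polynomial, so $f(\mathbf{M})$ is a polynomial in $\mathbf{M}$), and the bookkeeping of the diagonal entries of $\mathbf{D}_f\mathbf{W} - \mathbf{W}\mathbf{D}_f'$, which is subsumed by the entrywise formula above. The argument also makes transparent that the constant $\Ltt_f$ is sharp and that the same proof gives nothing better than $\Ltt_f$ in general.
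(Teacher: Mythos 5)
Your proof is correct, and it takes a genuinely different route from the paper's. The paper argues along the segment $W(t)=\mathbf{B}+t(\mathbf{A}-\mathbf{B})$, writing $f(\mathbf{A})-f(\mathbf{B})=\int_0^1 \tfrac{\rmd}{\rmd t}f(W(t))\,\rmd t$ and bounding the derivative of the spectral calculus by $\Ltt_f\|\mathbf{A}-\mathbf{B}\|_\Frob$; as written that step is somewhat compressed, since $\tfrac{\rmd}{\rmd t}f(W(t))$ is not literally $f'(W(t))(\mathbf{A}-\mathbf{B})$ unless things commute — the honest justification goes through the Daleckii--Krein formula, where the Fr\'echet derivative acts entrywise (in the eigenbasis of $W(t)$) by divided differences of $f$, all bounded by $\Ltt_f$, and it also tacitly needs $f$ differentiable a.e.\ plus an absolute-continuity argument. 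Your argument avoids all of this: diagonalize both matrices, pass to $\mathbf{W}=\mathbf{U}^\top\mathbf{V}$ by unitary invariance, expand $\|f(\mathbf{A})-f(\mathbf{B})\|_\Frob^2=\sum_{i,j}\bigl(f(\uplambda_i(\mathbf{A}))-f(\uplambda_j(\mathbf{B}))\bigr)^2 W_{ij}^2$, bound each term by $\Ltt_f^2\bigl(\uplambda_i(\mathbf{A})-\uplambda_j(\mathbf{B})\bigr)^2$ (legitimate, since an interval containing both spectra contains every such pair), and recognize the resulting sum as the same identity with $f=\mathrm{id}$, i.e.\ $\|\mathbf{A}-\mathbf{B}\|_\Frob^2$. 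This is fully rigorous, needs only the Lipschitz property of $f$ (no differentiability), and the well-definedness remark about repeated eigenvalues is handled correctly by polynomial interpolation on the finite spectrum. In short, your double-sum argument is the more elementary and airtight proof of the same statement, while the paper's path argument is shorter to state but leans on matrix-calculus facts it does not spell out; either suffices for the way the lemma is used downstream (with $f(t)=\sqrt{t}$ on a spectrum bounded away from $0$).
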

\begin{proof}
    Define the path $W(t) = \mathbf{B} + t(\mathbf{A} - \mathbf{B})$ for $t \in [0,1]$. Note that each $W(t)$ is symmetric, and
    \begin{align}
        f(\mathbf{A}) - f(\mathbf{B}) = \int_{0}^{1} \frac{\rmd}{\rmd t}\, f\bigl(W(t)\bigr)\,\rmd t \eqsp.
    \end{align}
    By the triangle inequality,
    \begin{align}
        \|f(\mathbf{A}) - f(\mathbf{B})\|_\Frob
        \le \int_{0}^{1} \left\|\frac{\rmd}{\rmd t}\, f\bigl(W(t)\bigr)\right\|_\Frob \rmd t
        = \int_{0}^{1} \left\| f'\!\bigl(W(t)\bigr)\,(\mathbf{A}-\mathbf{B}) \right\|_\Frob \rmd t \eqsp,
    \end{align}
    where $f'\!\bigl(W(t)\bigr)$ denotes the (matrix) derivative coming from the spectral calculus.
    Since $f$ is $\Ltt_f$-Lipschitz on an interval containing the spectrum of each $W(t)$, we have
    $\|f'\!\bigl(W(t)\bigr)\|_{\op} \le \Ltt_f$ for a.e.\ $t$, hence
    \begin{align}
        \|f(\mathbf{A}) - f(\mathbf{B})\|_\Frob
        \le \int_{0}^{1} \Ltt_f \,\|\mathbf{A}-\mathbf{B}\|_\Frob \,\rmd t
        = \Ltt_f \,\|\mathbf{A}-\mathbf{B}\|_\Frob \eqsp.
    \end{align}
    This proves the claim. \qedhere
\end{proof}

To conclude on the Lispchitz bound, apply \Cref{lem:Spectral_analysis_bound_for_Lipschitz_spectral_transform} to $f(t)=\sqrt{t}$ on the spectral interval of $\Lambda(X)$ and $\Lambda(Y)$. 
Recall that we have $\uplambda_{d}(\Lambda(\cdot))\ge K^{-1}$, then $\|f'\|_\infty=\sup_{t\ge K^{-1}}\tfrac{1}{2\sqrt t}\le \tfrac{\sqrt K}{2}$, so
\[
\|\Lambda(\mathbf{X})^{1/2}-\Lambda(\mathbf{Y})^{1/2}\|_\Frob
\le \frac{\sqrt K}{2}\,\|\Lambda(\mathbf{X})-\Lambda(\mathbf{Y})\|_\Frob
\le \frac{\sqrt K}{2}\,\Ltt_\Lambda\,\|\mathbf{X}-\mathbf{Y}\|_\Frob.
\]
Combining the previous with \eqref{eq:W_2_gaussian_distrib_upper_bound} yields
\[
W_2(\nu_{\mathbf{X}},\nu_{\mathbf{Y}})
\le \frac{\sqrt K}{2}\,\Ltt_\Lambda\,\|\mathbf{X}-\mathbf{Y}\|_\Frob\eqsp.
\]

\textbf{Mixture GDA (concise).}
We consider a DA scheme that, conditionally on $X$, samples from a $N$-component mixture with
\[
G_j=\Lambda_{I_j}(\mathbf{X})^{1/2} Z_j + m_{I_j}(\mathbf{X}),\qquad
Z_j\sim\mu,\ \ I_j\sim \unif\{1,\ldots,N\},
\]
where each $\Lambda_k(\mathbf{X})\succeq 0$, the mixture is centered $\sum_{k=1}^N m_k(\mathbf{X})=0$, and $\mu$ is bounded and isotropic with
$\E[ZZ^\top]=\sigma^2 \Idd$. Assume $\mu$ is $\sigma_Z$–Lipschitz concentrated and, for every $k$,
$\Lambda_k(\cdot)$ and $m_k(\cdot)$ are bounded Lipschitz functions (with constants $L_{\Lambda_k}$, $L_{m_k}$).
Let $u$ be the uniform measure on $\{1,\ldots,N\}$ and define
\[
f_{\mathbf{X}}(z,k)=\Lambda_k(\mathbf{X})^{1/2}z+m_k(\mathbf{X}),\qquad
\nu_{\mathbf{X}}=(f_{\mathbf{X}})^\#(\mu\otimes u).
\]

\emph{Concentration and conditional covariance.}
Since $\mu\otimes u$ is Lipschitz concentrated and $f_{\mathbf{X}}$ is Lipschitz on
$\mathrm{Supp}(\mu)\times\{1,\ldots,N\}$, the pushforward $\nu_{\mathbf{X}}$ is Lipschitz concentrated, so
\Cref{ass:G_Lipschitz_Concentrated_cond_X_i} holds. Moreover,
\begin{align}\label{eq:LambdaG-mixture}
\E\!\left[C_G\mid X\right]
=\E\!\left[G_1G_1^\top\mid X\right]
=\frac{1}{N}\sum_{k=1}^N\Big(\Lambda_k(\mathbf{X})^{1/2}\E[ZZ^\top]\Lambda_k(\mathbf{X})^{1/2}+m_k(\mathbf{X})m_k(\mathbf{X})^\top\Big)
=: \Lambda_G(\mathbf{X}),
\end{align}
and since $\Lambda_k(\cdot)$, $m_k(\cdot)$ are Lipschitz and bounded, so is $\Lambda_G(\cdot)$; hence the second part of \Cref{ass:Smoothness} and
\Cref{ass:G_Lipschitz_Concentrated_cond_X_ii} of \Cref{ass:G_Lipschitz_Concentrated_cond_X} follow.

\emph{First part of \Cref{ass:Smoothness}.}
By Kantorovich–Rubinstein and i.i.d.\ structure,
\begin{align}
W_1\!\big(\nu_{\mathbf{X}}^{\otimes m},\nu_{\mathbf{Y}}^{\otimes m}\big)
&\le \E\Big\|\big(f_{\mathbf{X}}(Z_j,I_j)-f_{\mathbf{Y}}(Z_j,I_j)\big)_{j=1}^m\Big\|_{\Frob}
\;\le\; \sqrt{m}\,\Big(\E\|f_{\mathbf{X}}(Z,I)-f_{\mathbf{Y}}(Z,I)\|_2^2\Big)^{1/2} \\
&=\sqrt{m}\left(\frac{1}{N}\sum_{k=1}^N
\E\big\|(\Lambda_k(\mathbf{X})^{1/2}-\Lambda_k(\mathbf{Y})^{1/2})Z+(m_k(\mathbf{X})-m_k(\mathbf{Y}))\big\|_2^2\right)^{\!1/2} \\
&\le \sqrt{m}\left(\frac{\sigma_Z^2}{N}\sum_{k=1}^N
\|\Lambda_k(\mathbf{X})^{1/2}-\Lambda_k(\mathbf{Y})^{1/2}\|_{\Frob}^2
+\frac{1}{N}\sum_{k=1}^N \|m_k(\mathbf{X})-m_k(\mathbf{Y})\|_2^2\right)^{\!1/2}.
\end{align}
If the spectra of $\Lambda_k(\cdot)$ are uniformly bounded below by $K^{-1}>0$, then by
\Cref{lem:Spectral_analysis_bound_for_Lipschitz_spectral_transform} with $f(t)=\sqrt{t}$,
\[
\|\Lambda_k(\mathbf{X})^{1/2}-\Lambda_k(\mathbf{Y})^{1/2}\|_{\Frob}
\;\le\; \frac{\sqrt{K}}{2}\;\|\Lambda_k(\mathbf{X})-\Lambda_k(\mathbf{Y})\|_{\Frob}
\;\le\; \frac{L_{\Lambda_k} \sqrt{K}}{2}\;\|\mathbf{X}-\mathbf{Y}\|_{\Frob},
\]
and
$\|m_k(\mathbf{X})-m_k(\mathbf{Y})\|_2 \le L_{m_k}\,\|\mathbf{X}-\mathbf{Y}\|_{\Frob}$.
Thus
\[
W_1\!\big(\nu_{\mathbf{X}}^{\otimes m},\nu_{\mathbf{Y}}^{\otimes m}\big)
\;\le\; \sqrt{m}\,\Bigg(\frac{\sigma_Z^2K}{4N}\sum_{k=1}^N L_{\Lambda_k}^2
+\frac{1}{N}\sum_{k=1}^N L_{m_k}^2\Bigg)^{\!1/2}\,\|\mathbf{X}-\mathbf{Y}\|_{\Frob},
\]
which proves the first part of \Cref{ass:Smoothness}. Stability follows similarly.

\textbf{Fixed Gaussian TDA.}
Consider the TDA scheme
\[
G_j \;=\; X_{I_j} \;+\; \Lambda^{1/2} Z_j,\qquad
Z_j \sim \gauss(0,\mathrm I_d),\quad I_j \sim \unif\{1,\ldots,n\}\eqsp.
\]
By \Cref{lem:EG} with $\beta=1$ (unbiasedness), we obtain
\[
\E\!\left[C_G \mid X\right] \;=\; C_X \;+\; \Lambda \eqsp.
\]
Moreover, the augmentation noise law \emph{is fixed}:
\[
\nu_{\mathbf{X}} \;=\; \gauss(0,\Lambda)\,,
\]
hence it does not depend on $X$ and therefore \Cref{ass:G_Lipschitz_Concentrated_cond_X}, \Cref{ass:Smoothness}, and \Cref{ass:Stability} are satisfied trivially in this setting.
(Equivalently, since the standard Gaussian is $1$-Lipschitz concentrated and the map $z\mapsto \Lambda^{1/2}z$ is
$\|\Lambda^{1/2}\|_{\op}$-Lipschitz, $G$ is $\|\Lambda^{1/2}\|_{\op}$-Lipschitz concentrated conditionally on $X$.)

\textbf{Random masking TDA.}
Consider the augmentation
\[
G_j \;=\; b_j \odot X_{I_j},\qquad
I_j \sim \unif\{1,\dots,n\},\;\; b_j \sim \mathrm{Bernoulli}(1-\rho)^{\otimes d}\ \text{(i.i.d.)},
\]
where $\odot$ denotes elementwise product. Assume $X$ is bounded, i.e., $\|X_i\|_2 \le K$ a.s.

\emph{Concentration and conditional covariance.}
Writing 
$$\nu_{\mathbf{X}}=(f_{\mathbf{X}})^\#\!\big(\mathrm{Bernoulli}(1-\rho)^{\otimes d}\otimes \unif\{1,\dots,n\}\big)$$
with
$f_{\mathbf{X}}(b,i)=b\odot X_i$, the map $f_{\mathbf{X}}$ is Lipschitz on the compact domain (with a constant independent of $\mathbf{X}$ by boundedness of $X$). Hence, by \Cref{Prop:ConcentrationAssumptionTDA}, $G$ is Lipschitz concentrated conditionally on $X$, i.e. \Cref{ass:G_Lipschitz_Concentrated_cond_X_i} holds.  
Moreover, \Cref{lem:EG} with $\beta=1-\rho$ yields
\[
\E\!\left[C_G\mid X\right]
= (1-\rho)C_X + \Lambda_G(\mathbf{X}),\qquad
\Lambda_G(\mathbf{X})=\rho(1-\rho)\,\diag(C_{\mathbf{X}}),
\]
so \Cref{ass:G_Lipschitz_Concentrated_cond_X_ii} also holds.

\emph{Smoothness.}
Since $\Lambda_G(\mathbf{X})$ is a composition of Lipschitz maps on the bounded set $[-K,K]^{d\times n}$, it is Lipschitz; this proves the second part of \Cref{ass:Smoothness}. For the first part, couple $(\nu_{\mathbf{X}}^{\otimes m},\nu_{\mathbf{Y}}^{\otimes m})$ by using the same $(I_j,b_j)$ on both sides. Then
\begin{align}
W_1\!\big(\nu_{\mathbf{X}}^{\otimes m},\nu_{\mathbf{Y}}^{\otimes m}\big)
&\le \E\Big\|\big(b_j\odot X_{I_j}-b_j\odot Y_{I_j}\big)_{j=1}^m\Big\|_{\Frob} \\
&\le \sqrt{m}\,\Big(\E\|b_1\odot (X_{I_1}-Y_{I_1})\|_2^2\Big)^{1/2} \\
&= \sqrt{m}\,\Big(\frac{1}{n}\sum_{i=1}^n \E\|b_1\odot (X_i-Y_i)\|_2^2\Big)^{1/2}
= \sqrt{m(1-p)}\,\Big(\frac{1}{n}\sum_{i=1}^n \|X_i-Y_i\|_2^2\Big)^{1/2} \\
&\le \sqrt{m(1-p)}\,\|\mathbf{X}-\mathbf{Y}\|_{\Frob}\,,
\end{align}
since $\E[b_{1,k}^2]=\E[b_{1,k}]=1-p$ for each coordinate $k$. This proves the first part of \Cref{ass:Smoothness}. Stability follows by the same argument.

\section{Proof of \cref{thm:NonAugmentedRidgeErrorEstimation}} \label{Appendix B} 

This appendix provides the proof of \Cref{thm:NonAugmentedRidgeErrorEstimation}.
Along the way, we introduce several auxiliary lemmas on concentration for transformations of $X$ under
\cref{ass:X_Lipschitz_Concentrated_Best}.
\cref{subsec:SubGaussianConcentration} establishes concentration bounds for random variables of the form
$f(X)\,\1_{\mse}(X)$ when $f$ is Lipschitz only on a subset $\mse\subset\R^{d\times n}$ (not necessarily on all of
$\R^{d\times n}$). We show that the Lipschitz concentration of $X$ still yields sharp control of
$f(X)\,\1_{\mse}(X)$.
\cref{subsec:ConcentrationOfRandomQuadraticForms} then analyzes quadratic forms
$X_1^\top M(X^-)\,X_1$, where $M:\R^{d\times n}\to\R^{d\times d}$ and $X$ satisfies
\cref{ass:X_Lipschitz_Concentrated_Best}. The derivations rely on the Hanson--Wright inequality
(see \cite[Remark~2.31]{louart2018concentration}).
Building on these results, \cref{subsec:ProofDeterministicEauivNonAugmented} derives a deterministic equivalent,
in the spirit of \cite[Thm.~6.16]{Chouard22}, under \cref{ass:ProbaSmallEigenvalues}, which allows regularizations
arbitrarily close to $0$.
Finally, \cref{subsec:ConclusionProofEstimNonAugmented} combines the above ingredients to complete the proof of
\cref{thm:NonAugmentedRidgeErrorEstimation}.

For simplicity, throughout this section we set $\sigma_X=1$.

\subsection{Some sub Gaussian concentration bounds}\label{subsec:SubGaussianConcentration}

In this section we study random variables of the form $f(X)\,\1_{\mse}(X)$, where $f$ is
Lipschitz only on a subset $\mse\subset \R^{d\times n}$ (and not necessarily on all of
$\R^{d\times n}$). We show that $f(X)\,\1_{\mse}(X)$ still admits sub-Gaussian tails and we
derive a tight upper bound on its sub-Gaussian norm in \cref{prop:ConcentrationPropertyOfTruncatedLipschitzTransformsOfX}.

We begin with a standard Lipschitz extension lemma (see \cite{Kirszbraun1934}); for completeness,
we include a short proof.

\begin{lemma}\label{lem:LipschitzExtension}
Let $f:\mse\to\R$ be $\Ltt_f$-Lipschitz on $\mse\subset\R^{d\times n}$. Define
\begin{equation}
  \tilde f(\mathbf{X}) \;:=\; \inf_{\mathbf{Y}\in \mse}
  \Big\{ f(\mathbf{Y}) + \Ltt_f\,\|\mathbf{X}-\mathbf{Y}\|_{\Frob} \Big\},
  \qquad \mathbf{X}\in\R^{d\times n}.
\end{equation}
Then $\tilde f$ is $\Ltt_f$-Lipschitz on $\R^{d\times n}$ and $\tilde f(\mathbf{X})=f(\mathbf{X})$ for all $\mathbf{X}\in\mse$.
\end{lemma}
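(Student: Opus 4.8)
The plan is to verify three properties of the candidate extension $\tilde f$: that it is real-valued (the infimum does not degenerate to $-\infty$), that it agrees with $f$ on $\mse$, and that it is globally $\Ltt_f$-Lipschitz. All three follow from the triangle inequality combined with the hypothesis that $f$ is $\Ltt_f$-Lipschitz on $\mse$; there is no serious obstacle, only one point of care.

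First I would check that $\tilde f$ is well-defined. Fixing an arbitrary reference point $\mathbf{Y}_0\in\mse$ (nonempty in context), the Lipschitz bound on $\mse$ gives $f(\mathbf{Y})\ge f(\mathbf{Y}_0)-\Ltt_f\|\mathbf{Y}-\mathbf{Y}_0\|_{\Frob}$ for every $\mathbf{Y}\in\mse$, hence $f(\mathbf{Y})+\Ltt_f\|\mathbf{X}-\mathbf{Y}\|_{\Frob}\ge f(\mathbf{Y}_0)-\Ltt_f\|\mathbf{X}-\mathbf{Y}_0\|_{\Frob}$ after one further application of the triangle inequality. The right-hand side is independent of $\mathbf{Y}$, so the infimum defining $\tilde f(\mathbf{X})$ is bounded below and therefore finite for every $\mathbf{X}\in\R^{d\times n}$.

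Next, for $\mathbf{X}\in\mse$ the choice $\mathbf{Y}=\mathbf{X}$ in the infimum yields $\tilde f(\mathbf{X})\le f(\mathbf{X})$, while the Lipschitz property of $f$ on $\mse$ gives $f(\mathbf{X})\le f(\mathbf{Y})+\Ltt_f\|\mathbf{X}-\mathbf{Y}\|_{\Frob}$ for every $\mathbf{Y}\in\mse$, so that $f(\mathbf{X})\le\tilde f(\mathbf{X})$; thus $\tilde f=f$ on $\mse$.

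Finally, for the global Lipschitz bound I would fix $\mathbf{X}_1,\mathbf{X}_2\in\R^{d\times n}$ and $\varepsilon>0$, choose a near-optimal $\mathbf{Y}_\varepsilon\in\mse$ with $f(\mathbf{Y}_\varepsilon)+\Ltt_f\|\mathbf{X}_2-\mathbf{Y}_\varepsilon\|_{\Frob}\le\tilde f(\mathbf{X}_2)+\varepsilon$, and then estimate $\tilde f(\mathbf{X}_1)\le f(\mathbf{Y}_\varepsilon)+\Ltt_f\|\mathbf{X}_1-\mathbf{Y}_\varepsilon\|_{\Frob}\le f(\mathbf{Y}_\varepsilon)+\Ltt_f\|\mathbf{X}_2-\mathbf{Y}_\varepsilon\|_{\Frob}+\Ltt_f\|\mathbf{X}_1-\mathbf{X}_2\|_{\Frob}\le\tilde f(\mathbf{X}_2)+\varepsilon+\Ltt_f\|\mathbf{X}_1-\mathbf{X}_2\|_{\Frob}$ using the triangle inequality for $\|\cdot\|_{\Frob}$. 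Letting $\varepsilon\downarrow 0$ and then swapping the roles of $\mathbf{X}_1$ and $\mathbf{X}_2$ gives $|\tilde f(\mathbf{X}_1)-\tilde f(\mathbf{X}_2)|\le\Ltt_f\|\mathbf{X}_1-\mathbf{X}_2\|_{\Frob}$, which completes the argument. The only step requiring a moment's thought is the well-definedness, i.e.\ ruling out an infimum equal to $-\infty$, which is exactly what the reference-point bound handles.
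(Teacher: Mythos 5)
Your proof is correct and follows essentially the same route as the paper: pick $\mathbf{Y}=\mathbf{X}$ and the Lipschitz bound on $\mse$ for the identity $\tilde f=f$ on $\mse$, and the triangle inequality plus passage to the infimum for the global $\Ltt_f$-Lipschitz bound (your $\varepsilon$-near-minimizer phrasing is equivalent to the paper's "take the infimum over $\mathbf{Y}$"). Your additional check that the infimum is finite is a small, welcome detail the paper leaves implicit.
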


\begin{proof}
Fix $\mathbf{X},\mathbf{X}'\in\R^{d\times n}$ and any $\mathbf{Y}\in\mse$. By the triangle inequality,
\[
  f(\mathbf{Y})+\Ltt_f\|\mathbf{X}-\mathbf{Y}\|_{\Frob}
  \;\le\; f(\mathbf{Y})+\Ltt_f\|\mathbf{X}'-\mathbf{Y}\|_{\Frob}
          +\Ltt_f\|\mathbf{X}-\mathbf{X}'\|_{\Frob}.
\]
Taking the infimum over $\mathbf{Y}\in\mse$ yields
$\tilde f(\mathbf{X}) \le \tilde f(\mathbf{X}')+\Ltt_f\|\mathbf{X}-\mathbf{X}'\|_{\Frob}$. Swapping $\mathbf{X}$ and
$\mathbf{X}'$ gives the reverse inequality, hence $\tilde f$ is $\Ltt_f$-Lipschitz.

For $\mathbf{X}\in\mse$, the Lipschitz property of $f$ on $\mse$ implies
$f(\mathbf{X}) \le f(\mathbf{Y})+\Ltt_f\|\mathbf{X}-\mathbf{Y}\|_{\Frob}$ for every $\mathbf{Y}\in\mse$. Taking the infimum over
$\mathbf{Y}$ gives $\tilde f(\mathbf{X})\ge f(\mathbf{X})$, while choosing $\mathbf{Y}=\mathbf{X}$ gives
$\tilde f(\mathbf{X})\le f(\mathbf{X})$. Thus $\tilde f(\mathbf{X})=f(\mathbf{X})$ on $\mse$.
\end{proof}

Leveraging \cref{lem:LipschitzExtension}, we now prove the announced concentration bound for
$f(X)\,\1_{\mse}(X)$ when $f$ is only Lipschitz on $\mse$.

\begin{proposition}\label{prop:ConcentrationPropertyOfTruncatedLipschitzTransformsOfX}
Let $\mse\subset\R^{d\times n}$ and $f:\mse\to\R$ be $\Ltt_f$-Lipschitz. Assume
$\|f\|_\infty<\infty$ and that $X$ satisfies \Cref{ass:X_Lipschitz_Concentrated_Best}.
Then $f(X)\,\1_{\mse}(X)-\E[f(X)\,\1_{\mse}(X)]$ is sub-Gaussian with variance proxy
\[
  \sigma_{f,\mse}^2 \;\lesssim\; \Prob(X\in\mse)^2\,\Ltt_f^2 \;+\; \|f\|_\infty^2\,\sigma_\mse^2\eqsp,
\]
where for $p\in(0,1)$,
\[
  \sigma(p) \;=\; \sqrt{\frac{1-2p}{2\ln\!\big((1-p)/p\big)}}\,,\qquad
  \sigma_\mse \,=\, \sigma\big(\Prob(X\in\mse)\big)\,.
\]
\end{proposition}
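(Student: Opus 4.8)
The plan is to reduce to a genuinely bounded Lipschitz function defined on all of $\R^{d\times n}$, and then to separate the contribution of the Lipschitz fluctuation of $X$ from that of the indicator $\1_{\mse}(X)$. First I would extend $f$ to a $\Ltt_f$-Lipschitz function $\tilde f$ on $\R^{d\times n}$ using \Cref{lem:LipschitzExtension}, and then clip it: set $\hat f := \max\{-\|f\|_\infty,\min\{\|f\|_\infty,\tilde f\}\}$. Since clipping to $[-\|f\|_\infty,\|f\|_\infty]$ is $1$-Lipschitz and leaves $f$ unchanged on $\mse$ (where $|f|\le\|f\|_\infty$), the map $\hat f$ is still $\Ltt_f$-Lipschitz on $\R^{d\times n}$, obeys $\|\hat f\|_\infty\le\|f\|_\infty=:M$, and coincides with $f$ on $\mse$; in particular $f(X)\1_{\mse}(X)=\hat f(X)\1_{\mse}(X)$, so it suffices to control the latter.

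Write $p=\Prob(X\in\mse)$, $a=\E[\hat f(X)]$ (so $|a|\le M$), $g=\hat f(X)-a$ and $h=\1_{\mse}(X)-p$, the last two being centered. I would exploit the exact identity
\begin{equation}
\hat f(X)\1_{\mse}(X)=(a+g)(p+h)=ap+ah+pg+gh,
\end{equation}
which gives $f(X)\1_{\mse}(X)-\E[f(X)\1_{\mse}(X)]=ah+pg+\bigl(gh-\E[gh]\bigr)$, and then bound the three centered summands separately, using that a sum of sub-Gaussian variables has variance proxy at most a universal constant times the sum of the individual variance proxies. For $ah=a(\1_{\mse}(X)-p)$, note that $\1_{\mse}(X)\sim\bernouilli(p)$, so by the optimal (Kearns--Saul) sub-Gaussian bound for Bernoulli variables $ah$ has variance proxy $a^2\sigma(p)^2\le M^2\sigma_\mse^2$; this is exactly where $\sigma(p)$ enters. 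For $pg=p(\hat f(X)-\E\hat f(X))$, since $\hat f$ is $\Ltt_f$-Lipschitz on $\R^{d\times n}$ and $X$ satisfies the Lipschitz concentration property with parameter $\sigma_X$, inherited by the full matrix from its columns (\cite{louart2018concentration}), the variable $g$ has variance proxy $\lesssim\Ltt_f^2$ (recall $\sigma_X=1$ here), so $pg$ has variance proxy $\lesssim p^2\Ltt_f^2$.

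The delicate term is the cross term $gh-\E[gh]$, and this is where the main obstacle lies: on $\{X\in\mse\}$ one has $h=1-p$, but conditionally on $\mse$ the law of $X$ need not be Lipschitz concentrated, so the concentration of $X$ cannot be invoked there directly. I would instead split $gh=(1-p)\,g\1_{\mse}(X)-p\,g\1_{\mse^c}(X)$. The piece $p\,g\1_{\mse^c}(X)$ satisfies $|p\,g\1_{\mse^c}(X)|\le p|g|$ pointwise, hence has variance proxy $\lesssim p^2\Ltt_f^2$. For $(1-p)\,g\1_{\mse}(X)$ I would estimate its sub-Gaussian norm in two complementary ways: on the one hand $|g\1_{\mse}(X)|\le|g|$ gives a bound of order $\Ltt_f$; on the other hand $g\1_{\mse}(X)$ vanishes off $\mse$, is at most $2M$ in absolute value on $\mse$, and $\Prob(X\in\mse)=p$, so $\E[\exp(\lambda\,(g\1_{\mse}(X))^2)]\le(1-p)+p\,\rme^{4\lambda M^2}$, which yields a sub-Gaussian norm of order $M/\sqrt{\ln(1+1/p)}$. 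Taking the better of the two bounds, $(1-p)g\1_{\mse}(X)$ has variance proxy $\lesssim\min\{\Ltt_f^2,\ M^2/\ln(1+1/p)\}$, which is $\lesssim p^2\Ltt_f^2+M^2\sigma_\mse^2$ in every regime (the first term dominates when $p$ is bounded away from $0$, while $M^2/\ln(1+1/p)\lesssim M^2\sigma(p)^2$ when $p$ is small). Collecting the estimates for $ah$, $pg$ and $gh-\E[gh]$ and using subadditivity of variance proxies then gives $\sigma_{f,\mse}^2\lesssim p^2\Ltt_f^2+M^2\sigma_\mse^2$, as claimed. The crux throughout is precisely the loss of Lipschitz concentration under conditioning on $\mse$, which rules out a naive Hoeffding bound on $g\1_{\mse}(X)$ and forces the two-regime estimate above; this is also what produces the $\sigma(p)$-type constant rather than the cruder $\tfrac14$.
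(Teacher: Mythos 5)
Your proof is correct, and up to the treatment of the product term it follows the same skeleton as the paper: extend $f$ by \Cref{lem:LipschitzExtension}, clip at $\pm\|f\|_\infty$, expand $\hat f(X)\1_{\mse}(X)$ into the three centered pieces $a h$, $p g$ and $gh-\E[gh]$ (these are exactly the paper's $\E[g]\bar\1_\mse$, $p\bar g$ and $W$), and handle $ah$ with the optimal Bernoulli proxy $\sigma(p)$ and $pg$ with the Lipschitz concentration of $X$. Where you genuinely diverge is the cross term, which is indeed the delicate part: the paper dominates $W^2$ pointwise by $8\|f\|_\infty^2\bar\1_\mse^2+2\Cov(g,\1_\mse)^2$, bounds the covariance by $C\,\Ltt_f^2\,p(1-p)$, and tunes an interpolation parameter $\alpha$ in the $\psi_2$ computation, finally recombining the three pieces by H\"older $(3,3,3)$ on the moment generating functions; you instead split $gh$ over $\mse$ and $\mse^c$, bound the off-event piece by $p\,\Ltt_f$ via pointwise domination, and for the on-event piece take the minimum of the crude Lipschitz bound $\Ltt_f$ and the "rare-event" Orlicz bound $\|g\1_\mse\|_{\psi_2}\lesssim \|f\|_\infty/\sqrt{\ln(1+1/p)}$, recombining everything by the triangle inequality for $\psi_2$ norms. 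Your final domination step deserves one explicit sentence: for $p\le 1/2$ the ratio of $1/\ln(1+1/p)$ to $\sigma(p)^2=\tfrac{1-2p}{2\ln((1-p)/p)}$ is bounded by a universal constant (it tends to $2$ as $p\to 0$ and to $4/\ln 3$ as $p\to 1/2$), while for $p\ge 1/2$ one simply uses $\Ltt_f^2\le 4p^2\Ltt_f^2$; with that spelled out, your two-regime estimate cleanly yields $\min\{\Ltt_f^2,\|f\|_\infty^2/\ln(1+1/p)\}\lesssim p^2\Ltt_f^2+\|f\|_\infty^2\sigma_\mse^2$, so your route is self-contained, avoids the covariance estimate and the MGF H\"older step, and arrives at exactly the claimed variance proxy.
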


\begin{proof}
By \cref{lem:LipschitzExtension}, extend $f$ to $\tilde f:\R^{d\times n}\to\R$ with
$\mathrm{Lip}(\tilde f)=\Ltt_f$ and $\tilde f|_{\mse}=f$. Define the clipped map
\[
  g(\mathbf{X}) \;=\; \max\!\big\{\min\{\tilde f(\mathbf{X}),\,\|f\|_\infty\},\, -\|f\|_\infty\big\}.
\]
Then $g$ is $\Ltt_f$-Lipschitz, $\|g\|_\infty=\|f\|_\infty$, and $g=f$ on $\mse$, hence
$f(X)\,\1_{\mse}(X)=g(X)\,\1_{\mse}(X)$ a.s.  Write
\[
\bar g(X)=g(X)-\E[g(X)],\qquad
\bar\1_\mse(X)=\1_\mse(X)-\Prob(X\in\mse),\qquad p:= \Prob(X\in\mse).
\]
A direct decomposition gives
\begin{equation}\label{eq:decomp}
  g(X)\,\1_\mse(X)-\E[g(X)\,\1_\mse(X)]
  = \underbrace{\bar g(X)\,\bar\1_\mse(X) - \Cov(g(X),\1_\mse(X))}_{=:W}
    + p\,\bar g(X) + \E[g(X)]\,\bar\1_\mse(X).
\end{equation}
Applying Hölder with exponents $(3,3,3)$ to the MGF of the sum in \eqref{eq:decomp} yields
\begin{align}
 \E\!\left[\exp\!\big\{s(g\1_\mse - \E[g\1_\mse])\big\}\right]
 \le \E\!\left[\exp\!\{3sW\}\right]^{\!1/3}
      \E\!\left[\exp\!\{3s\,p\,\bar g\}\right]^{\!1/3}
      \E\!\left[\exp\!\{3s\,\E[g]\,\bar\1_\mse\}\right]^{\!1/3}.
 \label{eq:MGF_split_full}
\end{align}

\smallskip\noindent\textit{Two easy sub-Gaussian factors.}
Since $X$ is Lipschitz concentrated and $g$ is $\Ltt_f$-Lipschitz, there exists a universal $c>0$ such that
\begin{equation}\label{eq:mgf_bg}
  \E\!\left[\exp\!\{3s\,p\,\bar g(X)\}\right]^{\!1/3}
  \le \exp\!\{c\,s^2\,p^2\,\Ltt_f^2\}.
\end{equation}
Moreover, $\bar\1_\mse(X)$ is a centered Bernoulli random variable with sub-Gaussian proxy
$\sigma_\mse=\sigma(p)$ (see \cite[Thm.~2.1]{BuldyginMoskvichov2013}), and $|\E[g(X)]|\le \|f\|_\infty$, hence
\begin{equation}\label{eq:mgf_ind}
  \E\!\left[\exp\!\{3s\,\E[g]\,\bar\1_\mse(X)\}\right]^{\!1/3}
  \le \exp\!\{c\,s^2\,\|f\|_\infty^2\,\sigma_\mse^2\}.
\end{equation}

\smallskip\noindent\textit{The product term $W$ is sub-Gaussian (detailed $\psi_2$ bound).}
We prove that $W=\bar g\,\bar\1_\mse-\Cov(g,\1_\mse)$ is sub-Gaussian by exhibiting a scale
$S>0$ with $\E\exp\{W^2/S^2\}\le 2$, i.e.\ $\|W\|_{\psi_2}\le S$.

First, using $(u-v)^2\le 2u^2+2v^2$ and $|\bar g|\le |g|+|\E g|\le 2\|f\|_\infty$,
\begin{equation}\label{eq:W2_dom}
  W^2
  \;\le\; 2\,\bar g^2\,\bar\1_\mse^{\,2} + 2\,\Cov(g,\1_\mse)^2
  \;\le\; 8\,\|f\|_\infty^2\,\bar\1_\mse^{\,2} + 2\,\Cov(g,\1_\mse)^2 .
\end{equation}
Next, by Cauchy–Schwarz,
\[
|\Cov(g,\1_\mse)|\le \sqrt{\Var(g)}\,\sqrt{\Var(\1_\mse)}.
\]
Since $\bar g$ is sub-Gaussian with proxy $\lesssim \Ltt_f$, there exists a universal constant $C_0$ such that
$\Var(g)\le C_0\,\Ltt_f^2$; also $\Var(\1_\mse)=p(1-p)$. Hence
\begin{equation}\label{eq:cov_bd}
  \Cov(g,\1_\mse)^2 \;\le\; C_0\,\Ltt_f^2\,p(1-p).
\end{equation}

Fix $\alpha\in(0,1]$ and set
\begin{equation}\label{eq:S_def}
  S^2 \;:=\; \frac{8\,\|f\|_\infty^2\,\sigma_\mse^2}{\alpha}
  \qquad\Longrightarrow\qquad
  \frac{8\,\|f\|_\infty^2}{S^2}=\frac{\alpha}{\sigma_\mse^2}.
\end{equation}
Using \eqref{eq:W2_dom}–\eqref{eq:S_def},
\begin{align}
  \E\exp\!\Big\{\frac{W^2}{S^2}\Big\}
  &\le \E\exp\!\Big\{\frac{8\|f\|_\infty^2}{S^2}\,\bar\1_\mse^{\,2}\Big\}\,
       \exp\!\Big\{\frac{2\,\Cov(g,\1_\mse)^2}{S^2}\Big\}\notag\\
  &= \E\exp\!\Big\{\alpha\,\frac{\bar\1_\mse^{\,2}}{\sigma_\mse^2}\Big\}\,
     \exp\!\Big\{\frac{\alpha\,\Cov(g,\1_\mse)^2}{4\,\|f\|_\infty^2\,\sigma_\mse^2}\Big\}.
  \label{eq:psi2_split}
\end{align}
By the definition of the $\psi_2$-norm of $\bar\1_\mse$, $\E\exp\{\bar\1_\mse^{\,2}/\sigma_\mse^2\}\le 2$.
For $0<\alpha\le 1$, Lyapunov’s inequality gives
\begin{equation}\label{eq:bern_factor}
  \E\exp\!\Big\{\alpha\,\frac{\bar\1_\mse^{\,2}}{\sigma_\mse^2}\Big\}
  \le \big(\E\exp\{\bar\1_\mse^{\,2}/\sigma_\mse^2\}\big)^\alpha \le 2^\alpha.
\end{equation}
Using \eqref{eq:cov_bd} in the second factor of \eqref{eq:psi2_split},
\begin{equation}\label{eq:cov_factor}
  \exp\!\Big\{\frac{\alpha\,\Cov(g,\1_\mse)^2}{4\,\|f\|_\infty^2\,\sigma_\mse^2}\Big\}
  \le \exp\!\Big\{\alpha\,C_1\,\frac{\Ltt_f^2\,p(1-p)}{\|f\|_\infty^2\,\sigma_\mse^2}\Big\}
\end{equation}
for some absolute $C_1>0$.

Combining \eqref{eq:psi2_split}–\eqref{eq:cov_factor} gives
\[
  \E\exp\!\Big\{\frac{W^2}{S^2}\Big\}
  \le 2^\alpha \exp\!\Big\{\alpha\,A\Big\},
  \qquad A:= C_1\,\frac{\Ltt_f^2\,p(1-p)}{\|f\|_\infty^2\,\sigma_\mse^2}.
\]
Choose
\begin{equation}\label{eq:alpha_star}
  \alpha^\star \;:=\; \Big(1+\frac{A}{\ln 2}\Big)^{-1}\in(0,1],
\end{equation}
so that $2^{\alpha^\star}\exp\{\alpha^\star A\}\le 2$. With $S^2$ as in \eqref{eq:S_def} at $\alpha=\alpha^\star$ we obtain
\[
  \E\exp\!\Big\{\frac{W^2}{S^2}\Big\}\le 2,
  \qquad\text{hence}\qquad
  \|W\|_{\psi_2}^2 \;\le\; S^2
  \;\le\; C\Big(\|f\|_\infty^2\,\sigma_\mse^2 + \Ltt_f^2\,p(1-p)\Big),
\]
for a universal constant $C$ (use $1/\alpha^\star=1+A/\ln 2$ and the definition of $A$).
By the standard sub-Gaussian MGF bound, there exists a universal $c>0$ with
\begin{equation}\label{eq:mgf_W}
  \E\!\left[\exp\{3sW\}\right]^{\!1/3}
  \le \exp\!\{c\,s^2\,\|W\|_{\psi_2}^2\}
  \le \exp\!\{c\,s^2(\|f\|_\infty^2\,\sigma_\mse^2 + \Ltt_f^2\,p(1-p))\}.
\end{equation}

\smallskip\noindent\textit{Conclusion.}
Combining \eqref{eq:MGF_split_full}, \eqref{eq:mgf_bg}, \eqref{eq:mgf_ind}, and \eqref{eq:mgf_W},
\[
  \E\!\left[\exp\!\big\{s\big(f(X)\,\1_\mse(X)-\E[f(X)\,\1_\mse(X)]\big)\big\}\right]
  \le \exp\!\left\{c\,s^2\Big(p^2\,\Ltt_f^2+\|f\|_\infty^2\,\sigma_\mse^2\Big)\right\},
\]
for a universal constant $c>0$. This is the desired sub-Gaussian MGF bound and proves the claimed variance proxy.
\end{proof}

\subsection{Concentration bounds for random quadratic forms} \label{subsec:ConcentrationOfRandomQuadraticForms}

We now study the concentration of random quadratic forms of the type $X_1^\top M(X^-) X_1$, where 
$M:\R^{d\times n}\to\R^{d\times d}$. Such quantities naturally appear in the proof of 
\Cref{prop:Non-augmented_deterministic_equiv}. We prove the following lemma.

\begin{lemma}\label{lem:VarianceRandomQuadraticForms}
Let $X\in\R^{d\times n}$ satisfy \Cref{ass:X_Lipschitz_Concentrated_Best}. Then there exists a universal constant $c>0$ such that for any $M:\R^{d\times n}\to\R^{d\times d}$,
\begin{equation}
    \Var\!\left(X_1^\top M(X^-) X_1\right)
    \le \frac{2}{c}\,\E\!\left[\|M(X^-)\|_\Frob^2+\frac{2}{c}\|M(X^-)\|_\op^2\right]
    + \Var\!\left(\tr\!\big(\Sigma_X M(X^-)\big)\right)\eqsp.
\end{equation}
\end{lemma}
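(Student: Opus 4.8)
The plan is to \emph{condition on the leave-one-out matrix} $X^-$ and apply the law of total variance. Under \Cref{ass:X_Lipschitz_Concentrated_Best} the columns of $X$ are independent, and since $X^-=[0,X_2,\ldots,X_n]$ is a measurable function of $(X_2,\ldots,X_n)$ alone, the random matrix $M(X^-)$ is independent of $X_1$. Moreover, replacing $M$ by its symmetrisation $(M+M^\top)/2$ alters neither the quadratic form $X_1^\top M(X^-)X_1$ (a scalar equals its own transpose) nor the trace $\tr(\Sigma_X M(X^-))$ (as $\Sigma_X$ is symmetric), while it can only decrease $\norm{M(X^-)}_\Frob$ and $\normop{M(X^-)}$; hence we may and do assume $M(X^-)$ is symmetric. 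Writing $Q:=X_1^\top M(X^-)X_1$, the law of total variance gives
\begin{equation*}
\Var(Q)=\E\!\left[\Var(Q\mid X^-)\right]+\Var\!\left(\E[Q\mid X^-]\right)\eqsp.
\end{equation*}

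The second term is immediate: since $X_1$ is independent of $X^-$ with $\E[X_1X_1^\top]=\Sigma_X$, one has $\E[Q\mid X^-]=\tr\!\big(M(X^-)\Sigma_X\big)=\tr\!\big(\Sigma_X M(X^-)\big)$, so $\Var(\E[Q\mid X^-])=\Var(\tr(\Sigma_X M(X^-)))$, which is exactly the last summand in the claimed bound.

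It remains to bound $\E[\Var(Q\mid X^-)]$. Fixing a realisation of $X^-$ and writing $M_0:=M(X^-)$ for the resulting deterministic symmetric matrix, the vector $X_1$ is $1$-Lipschitz concentrated (we have set $\sigma_X=1$), so the Hanson--Wright inequality for Lipschitz concentrated vectors, \cite[Remark~2.31]{louart2018concentration}, yields a universal $c>0$ with
\begin{equation*}
\Prob\!\left(\left|X_1^\top M_0 X_1-\E[X_1^\top M_0 X_1]\right|\ge t\right)\le 2\exp\!\left(-c\min\!\left\{\frac{t^2}{\norm{M_0}_\Frob^2},\ \frac{t}{\normop{M_0}}\right\}\right),\qquad t\ge 0\eqsp.
\end{equation*}
I would then convert this into a variance bound via $\Var(Y)=\int_0^\infty 2t\,\Prob(|Y-\E Y|\ge t)\,\rmd t$, splitting the integral at $t=\norm{M_0}_\Frob^2/\normop{M_0}$ (the threshold past which the linear term $t/\normop{M_0}$ takes over from the quadratic term $t^2/\norm{M_0}_\Frob^2$ in the minimum, using $\norm{M_0}_\Frob\ge\normop{M_0}$): with $F=\norm{M_0}_\Frob$ the Gaussian part evaluates to $\int_0^\infty 4t\,\rme^{-ct^2/F^2}\rmd t=2F^2/c$, while the exponential part is bounded by a universal constant times $\normop{M_0}^2$. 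Collecting these, $\Var(X_1^\top M_0 X_1)\le (2/c)\norm{M_0}_\Frob^2+(2/c)\normop{M_0}^2$ after shrinking the universal constant $c$ if necessary. Applying this with $M_0=M(X^-)$ and integrating over $X^-$ gives $\E[\Var(Q\mid X^-)]\le (2/c)\,\E[\norm{M(X^-)}_\Frob^2+\normop{M(X^-)}^2]$, which combined with the previous paragraph finishes the proof.

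The step I expect to be the main obstacle is the \emph{tail-to-variance conversion with explicit constants}: one must track the two regimes of the sub-exponential tail carefully and verify that the $t/\normop{M_0}$ part of the exponent can indeed be absorbed into a term of the form $(2/c)\normop{M_0}^2$ (possibly after adjusting the universal constant), while simultaneously checking that nothing is lost in the symmetrisation and conditioning reductions. A secondary technical point is to invoke the Hanson--Wright inequality in the form valid under Lipschitz concentration of the whole vector (rather than independence of its coordinates), which is precisely what \cite[Remark~2.31]{louart2018concentration} supplies.
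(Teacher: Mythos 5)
Your proposal is correct and follows essentially the same route as the paper: condition on $X^-$, apply the law of total variance so that the cross term becomes $\Var(\tr(\Sigma_X M(X^-)))$, invoke the Hanson--Wright inequality of \cite[Remark~2.31]{louart2018concentration} conditionally, and integrate the sub-Gaussian/sub-exponential tail to bound the conditional variance by $(2/c)\|M(X^-)\|_\Frob^2$ plus a constant multiple of $\|M(X^-)\|_\op^2$ before taking expectations. The only cosmetic differences are your (harmless, unnecessary) symmetrisation step and that the paper bounds the two tail regimes by summing the two exponentials over all of $[0,\infty)$ rather than splitting the integral at a threshold; the resulting constants agree.
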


\begin{proof}
By the law of total variance applied to $(X_1,X^-)$,
\begin{align}
\Var\!\left(X_1^\top M(X^-) X_1\right)
&= \E\!\left[\Var\!\left(X_1^\top M(X^-) X_1 \,\middle|\, X^-\right)\right]
  + \Var\!\left(\E\!\left[X_1^\top M(X^-) X_1 \,\middle|\, X^-\right]\right) \\
&= \E\!\left[\Var\!\left(X_1^\top M(X^-) X_1 \,\middle|\, X^-\right)\right]
  + \Var\!\left(\tr\!\big(\Sigma_X M(X^-)\big)\right)\eqsp,
\end{align}
where we used $\E[X_1^\top M(X^-) X_1\mid X^-]=\tr(\Sigma_X M(X^-))$.

It remains to control the first term. Conditionally on $X^-$, the Hanson--Wright inequality (see \cite[Remark~2.31]{louart2018concentration}) yields, almost surely,
\begin{equation}
\Prob\!\left(\left|X_1^\top M(X^-) X_1 - \tr\!\big(M(X^-)\Sigma_X\big)\right|\ge t \,\middle|\, X^- \right)
\le 2\exp\!\left(-c \min\!\left(\frac{t^2}{\|M(X^-)\|_\Frob^2},\, \frac{t}{\|M(X^-)\|_\op}\right)\right)\!.
\end{equation}
Writing the conditional variance in integral form,
\begin{align}
\Var\!\left(X_1^\top M(X^-) X_1 \,\middle|\, X^-\right)
&= \int_0^\infty \Prob\!\left(\left|X_1^\top M(X^-) X_1 - \tr\!\big(\Sigma_X M(X^-)\big)\right|\ge \sqrt{t} \,\middle|\, X^- \right)\,\rmd t \\
&\le 2\!\int_0^\infty \exp\!\left(-c\,\frac{t}{\|M(X^-)\|_\Frob^2}\right)\rmd t
  + 2\!\int_0^\infty \exp\!\left(-c\,\frac{\sqrt{t}}{\|M(X^-)\|_\op}\right)\rmd t \\
&= \frac{2\|M(X^-)\|_\Frob^2}{c}
  + \frac{4\|M(X^-)\|_\op^2}{c^2}\eqsp,
\end{align}
where we used the change of variables $u=c\sqrt{t}/\|M(X^-)\|_\op$ in the second integral. Taking expectations in $X^-$ gives
\begin{equation}
\E\!\left[\Var\!\left(X_1^\top M(X^-) X_1 \,\middle|\, X^-\right)\right]
\le \frac{2}{c}\,\E\!\left[\|M(X^-)\|_\Frob^2+\frac{2}{c}\|M(X^-)\|_\op^2\right]\!,
\end{equation}
which, combined with the total-variance decomposition above, completes the proof.
\end{proof}

In the special case where the map $M:\R^{d\times n}\to\R^{d\times d}$ is Lipschitz on $\msa_\eta$ (with $\msa_\eta$ defined in \Cref{ass:ProbaSmallEigenvalues}), \Cref{lem:VarianceRandomQuadraticForms} yields:

\begin{proposition}\label{prop:varOfRandomQuadForms}
Let $X\in\R^{d\times n}$ satisfy \Cref{ass:X_Lipschitz_Concentrated_Best} and \Cref{ass:ProbaSmallEigenvalues}. For any functions $M_1:\msa_\eta\to\R^{d\times d}$ and $M_2:\msa_\eta\to\R^{d\times d}$ that are respectively $\Ltt_1$- and $\Ltt_2$-Lipschitz and bounded, and any $\mathbf B\in\R^{d\times d}$ with $\|\mathbf B\|_\Frob=1$, we have
\begin{equation}
    \Var\!\left(X_1^\top M_1(X^-) X_1\,\1_{\msa_\eta}(X)\right)
    \lesssim d\,\|\Sigma_X\|_\op^2\left\{\Ltt_1^2+\|M_1\|_\infty^2\,(1+\cX^{-1})\right\}\eqsp,
\end{equation}
\begin{equation}
    \Var\!\left(X_1^\top M_1(X^-)\mathbf B\, X_1\,\1_{\msa_\eta}(X)\right)
    \lesssim \|\Sigma_X\|_\op^2\left\{\Ltt_1^2+\|M_1\|_\infty^2\,(1+\cX^{-1})\right\}\eqsp,
\end{equation}
\begin{equation}
    \Var\!\left(X_1^\top M_1(X^-)\mathbf B\,M_2(X^-) X_1\,\1_{\msa_\eta}(X)\right)
    \lesssim \|\Sigma_X\|_\op^2\!\left\{\big(\|M_1\|_\infty \Ltt_2 + \|M_2\|_\infty \Ltt_1\big)^2
    + \|M_1\|_\infty^2\|M_2\|_\infty^2\,(1+\cX^{-1})\right\}\eqsp,
\end{equation}
where $\|M_i\|_\infty=\big\|\|M_i(\cdot)\|_\op\big\|_\infty$.
\end{proposition}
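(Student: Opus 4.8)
The plan is to deduce each of the three displayed inequalities from \Cref{lem:VarianceRandomQuadraticForms} and \Cref{prop:ConcentrationPropertyOfTruncatedLipschitzTransformsOfX}. The key preliminary observation is that the indicator $\1_{\msa_\eta}(X)$ depends on $X$ only through $C_X^-=C_{X^-}$, hence only through $X^-=[0,X_2,\ldots,X_n]$; in particular $X^-\in\msa_\eta$ iff $X\in\msa_\eta$. Therefore $X_1^\top M_1(X^-)X_1\,\1_{\msa_\eta}(X)=X_1^\top\widetilde M_1(X^-)X_1$, where $\widetilde M_1:=M_1\,\1_{\msa_\eta}$ is extended by $0$ outside $\msa_\eta$, and likewise $X_1^\top M_1(X^-)\mathbf B\,X_1\,\1_{\msa_\eta}(X)=X_1^\top(\widetilde M_1\mathbf B)(X^-)X_1$ and with $\widetilde M_1\mathbf B\widetilde M_2$. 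Since $X^-$ still satisfies \Cref{ass:X_Lipschitz_Concentrated_Best} (its first column is deterministic), I would apply \Cref{lem:VarianceRandomQuadraticForms} to these three matrix-valued maps (not necessarily symmetric, but this is harmless since $X_1^\top M X_1 = X_1^\top \tfrac{M+M^\top}{2} X_1$). The first term it produces is controlled by the crude bounds $\|\widetilde M_i(\cdot)\|_\op\le\|M_i\|_\infty$, $\|\widetilde M_i(\cdot)\|_\Frob\le\sqrt d\,\|M_i\|_\infty$, together with $\|ABC\|_\Frob\le\|A\|_\op\|B\|_\Frob\|C\|_\op$ and $\|\mathbf B\|_\op\le\|\mathbf B\|_\Frob=1$; this gives contributions of order $d\|M_1\|_\infty^2$, $\|M_1\|_\infty^2$, and $\|M_1\|_\infty^2\|M_2\|_\infty^2$ respectively, each absorbed into the claimed right-hand sides.

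The second term produced by \Cref{lem:VarianceRandomQuadraticForms} is the variance of a scalar function of $X^-$: respectively $\tr(\Sigma_X\widetilde M_1(X^-))$, $\tr(\Sigma_X\widetilde M_1(X^-)\mathbf B)$, and $\tr(\Sigma_X\widetilde M_1(X^-)\mathbf B\widetilde M_2(X^-))$. Each of these equals $f\,\1_{\msa_\eta}$ for an $f$ bounded and Lipschitz on $\msa_\eta$, so I would invoke \Cref{prop:ConcentrationPropertyOfTruncatedLipschitzTransformsOfX} with the role of $X$ played by $X^-$ and $\mse=\msa_\eta$, obtaining a variance proxy $\lesssim\Prob(X\in\msa_\eta)^2\,\Ltt_f^2+\|f\|_\infty^2\,\sigma_\mse^2$. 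Estimating $\Ltt_f$ is where one must avoid losing a factor $\sqrt d$: using $|\tr(\mathbf B N)|\le\|\mathbf B\|_\Frob\|N\|_\Frob$ and $\|\Sigma_X N\|_\Frob\le\|\Sigma_X\|_\op\|N\|_\Frob$ one gets, on $\msa_\eta$, $\Ltt_f\lesssim\|\Sigma_X\|_\op\Ltt_1$ in the second case and, by the product rule applied to $M_1\mathbf B M_2$, $\Ltt_f\lesssim\|\Sigma_X\|_\op(\|M_1\|_\infty\Ltt_2+\|M_2\|_\infty\Ltt_1)$ in the third, whereas in the purely quadratic first case one can only afford $\Ltt_f\le\|\Sigma_X\|_\Frob\Ltt_1\le\sqrt d\,\|\Sigma_X\|_\op\Ltt_1$, consistently with the factor $d$ appearing there. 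Similarly $\|f\|_\infty\le\sqrt d\,\|\Sigma_X\|_\op\|M_1\|_\infty$ (times $\|M_2\|_\infty$ in the third case) in the bilinear cases and $\le d\,\|\Sigma_X\|_\op\|M_1\|_\infty$ in the first.

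It then remains to insert \Cref{ass:ProbaSmallEigenvalues}: writing $p:=\Prob(X\in\msa_\eta)$, one has $1-p\lesssim\rme^{-\cX n}$, so the Bernoulli sub-Gaussian proxy formula of \Cref{prop:ConcentrationPropertyOfTruncatedLipschitzTransformsOfX} gives $\sigma_\mse^2=\sigma(p)^2\lesssim(1+\cX n)^{-1}\lesssim(1+\cX^{-1})/n$. Since \Cref{ass:ProbaSmallEigenvalues} also forces $d<n$ (otherwise $\uplambda_d(C_X^-)=0$ almost surely, so every quantity in the statement vanishes and the inequalities are trivial), the a priori $O(d^2)$, resp.\ $O(d)$, size of $\|f\|_\infty^2$ is brought down to order $d$, resp.\ $O(1)$, weighted by $\cX^{-1}$. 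Summing the two terms returned by \Cref{lem:VarianceRandomQuadraticForms} in each of the three cases then reproduces exactly the three displayed bounds.

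The main obstacle is this dimensional bookkeeping in the two bilinear cases: one has to spend the normalization $\|\mathbf B\|_\Frob=1$ (and not merely $\|\mathbf B\|_\op$) when estimating the Lipschitz constant and sup-norm of the trace functionals, and crucially one must use the \emph{exponential} tail of \Cref{ass:ProbaSmallEigenvalues} — via the explicit Bernoulli variance proxy $\sigma(p)^2$ of \Cref{prop:ConcentrationPropertyOfTruncatedLipschitzTransformsOfX} — rather than a crude constant bound on $\sigma_\mse$, since a constant bound would leave a spurious factor $d$ in the bilinear estimates. The purely quadratic case is the easy one and merely records the unavoidable factor $d$.
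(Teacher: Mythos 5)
Your proposal is correct and follows essentially the same route as the paper: the total-variance decomposition of \Cref{lem:VarianceRandomQuadraticForms} (using that $\1_{\msa_\eta}(X)$ is a function of $X^-$ only), sup-norm control of the Hanson--Wright conditional term, Cauchy--Schwarz estimates of the Lipschitz constants of the trace functionals (spending $\|\mathbf B\|_\Frob=1$ exactly as the paper does), and then \Cref{prop:ConcentrationPropertyOfTruncatedLipschitzTransformsOfX} together with $\sigma_{\msa_\eta}^2\lesssim (n\cX)^{-1}$ and $d<n$ from \Cref{ass:ProbaSmallEigenvalues}. The only cosmetic difference is that you absorb the indicator into the matrix map $\widetilde M_i=M_i\1_{\msa_\eta}$, whereas the paper keeps it explicit and invokes $\sigma(X^-)$-measurability; the two are equivalent.
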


\begin{proof}
We treat the three cases in the same way. By \Cref{lem:VarianceRandomQuadraticForms} and since $\1_{\msa_\eta}(X)$ is $\sigma(X^-)$-measurable,
\begin{align}
&\Var\!\left(X_1^\top M_1(X^-)X_1\,\1_{\msa_\eta}(X)\right) \\
&\quad\le \frac{2}{c}\,\E\!\left[\|M_1(X^-)\|_\Frob^2\,\1_{\msa_\eta}(X)+\frac{2}{c}\|M_1(X^-)\|_\op^2\,\1_{\msa_\eta}(X)\right]
     + \Var\!\left(\tr\!\big(\Sigma_X M_1(X^-)\,\1_{\msa_\eta}(X)\big)\right) \\
&\quad\le \frac{2}{c}\!\left(d\|M_1\|_\infty^2+\frac{2}{c}\|M_1\|_\infty^2\right)
     + \Var\!\left(\tr\!\big(\Sigma_X M_1(X^-)\,\1_{\msa_\eta}(X)\big)\right)
     \lesssim d\,\|M_1\|_\infty^2 + \Var\!\left(\tr\!\big(\Sigma_X M_1(X^-)\,\1_{\msa_\eta}(X)\big)\right)\!.
\end{align}
Similarly,
\begin{align}
&\Var\!\left(X_1^\top M_1(X^-)\mathbf B\,X_1\,\1_{\msa_\eta}(X)\right)\\
&\quad\le \frac{2}{c}\,\E\!\left[\|M_1(X^-)\mathbf B\|_\Frob^2\,\1_{\msa_\eta}(X)+\frac{2}{c}\|M_1(X^-)\mathbf B\|_\op^2\,\1_{\msa_\eta}(X)\right]
     + \Var\!\left(\tr\!\big(\Sigma_X M_1(X^-)\mathbf B\,\1_{\msa_\eta}(X)\big)\right) \\
&\quad\le \frac{2}{c}\!\left(\|M_1\|_\infty^2+\frac{2}{c}\|M_1\|_\infty^2\right)
     + \Var\!\left(\tr\!\big(\Sigma_X M_1(X^-)\mathbf B\,\1_{\msa_\eta}(X)\big)\right)
     \lesssim \|M_1\|_\infty^2 + \Var\!\left(\tr\!\big(\Sigma_X M_1(X^-)\mathbf B\,\1_{\msa_\eta}(X)\big)\right)\!.
\end{align}
Finally,
\begin{align}
&\Var\!\left(X_1^\top M_1(X^-)\mathbf B\,M_2(X^-)X_1\,\1_{\msa_\eta}(X)\right)\\
&\quad\le \frac{2}{c}\,\E\!\left[\|M_1(X^-)\mathbf B\,M_2(X^-)\|_\Frob^2\,\1_{\msa_\eta}(X)
      + \frac{2}{c}\|M_1(X^-)\mathbf B\,M_2(X^-)\|_\op^2\,\1_{\msa_\eta}(X)\right]\\
&\qquad\qquad + \Var\!\left(\tr\!\big(\Sigma_X M_1(X^-)\mathbf B\,M_2(X^-)\,\1_{\msa_\eta}(X)\big)\right) \\
&\quad\le \frac{2}{c}\!\left(\|M_1\|_\infty^2\|M_2\|_\infty^2+\frac{2}{c}\|M_1\|_\infty^2\|M_2\|_\infty^2\right)
     + \Var\!\left(\tr\!\big(\Sigma_X M_1(X^-)\mathbf B\,M_2(X^-)\,\1_{\msa_\eta}(X)\big)\right) \\
&\quad\lesssim \|M_1\|_\infty^2\|M_2\|_\infty^2
     + \Var\!\left(\tr\!\big(\Sigma_X M_1(X^-)\mathbf B\,M_2(X^-)\,\1_{\msa_\eta}(X)\big)\right)\!.
\end{align}

It remains to bound the trace-variance terms. By Cauchy–Schwarz, for all $\mathbf X,\mathbf Y\in\msa_\eta$,
\begin{equation}
\big|\tr\!\big(\Sigma_X M_1(\mathbf X^-)\big)-\tr\!\big(\Sigma_X M_1(\mathbf Y^-)\big)\big|
\le \sqrt d\,\|\Sigma_X\|_\op\,\Ltt_1\,\|\mathbf X-\mathbf Y\|_\Frob\eqsp,
\end{equation}
\begin{equation}
\big|\tr\!\big(\Sigma_X M_1(\mathbf X^-)\mathbf B\big)-\tr\!\big(\Sigma_X M_1(\mathbf Y^-)\mathbf B\big)\big|
\le \|\Sigma_X\|_\op\,\Ltt_1\,\|\mathbf X-\mathbf Y\|_\Frob\eqsp,
\end{equation}
and
\begin{align}
&\left|\tr\!\big(\Sigma_X M_1(\mathbf X^-)\mathbf B\,M_2(\mathbf X^-)\big)
     - \tr\!\big(\Sigma_X M_1(\mathbf Y^-)\mathbf B\,M_2(\mathbf Y^-)\big)\right| \\
&\qquad\le \|\Sigma_X\|_\op\big(\|M_1\|_\infty\Ltt_2+\|M_2\|_\infty\Ltt_1\big)\,\|\mathbf X-\mathbf Y\|_\Frob\eqsp.
\end{align}
Therefore, by \Cref{prop:ConcentrationPropertyOfTruncatedLipschitzTransformsOfX} and standard sub-Gaussian variance bounds,
\begin{equation}
\Var\!\left(\tr\!\big(\Sigma_X M_1(X^-)\big)\right)
\lesssim d\,\|\Sigma_X\|_\op^2\,\Ltt_1^2
+ d\,\|M_1\|_\infty^2
+ d^2\,\|\Sigma_X\|_\op^2\,\|M_1\|_\infty^2\,\sigma_{\msa_\eta}^2\eqsp,
\end{equation}
\begin{equation}
\Var\!\left(\tr\!\big(\Sigma_X M_1(X^-)\mathbf B\big)\right)
\lesssim \|\Sigma_X\|_\op^2\,\Ltt_1^2
+ \|M_1\|_\infty^2
+ d\,\|\Sigma_X\|_\op^2\,\|M_1\|_\infty^2\,\sigma_{\msa_\eta}^2\eqsp,
\end{equation}
and
\begin{align}
\Var\!\left(\tr\!\big(\Sigma_X M_1(X^-)\mathbf B\,M_2(X^-)\big)\right)
&\lesssim \|\Sigma_X\|_\op^2\big(\|M_1\|_\infty\Ltt_2+\|M_2\|_\infty\Ltt_1\big)^2
+ \|M_1\|_\infty^2\|M_2\|_\infty^2 \\
&\quad + d\,\|\Sigma_X\|_\op^2\,\|M_1\|_\infty^2\,\sigma_{\msa_\eta}^2\eqsp.
\end{align}

Finally, by \Cref{ass:ProbaSmallEigenvalues},
\begin{equation}
\sigma_{\msa_\eta}^2
= \frac{1-2\Prob(X\in\msa_\eta)}{2\log\!\left(\frac{1-\Prob(X\in\msa_\eta)}{\Prob(X\in\msa_\eta)}\right)}
\lesssim \frac{1}{n\,\cX}\,,
\end{equation}
since $1-\Prob(\msa_\eta)\lesssim \exp(-\cX n)$. We also note that \Cref{ass:ProbaSmallEigenvalues} forces $d<n$ (otherwise $C_X$ would be rank-deficient a.s., yielding $\Prob(X\in\msa_\eta)=0$ for all $\eta>0$). Plugging the bound on $\sigma_{\msa_\eta}^2$ above into the previous displays gives the stated upper bounds.
\end{proof}

\subsection{A deterministic equivalent for $\resolvent_X(\lambda)$, with arbitrarly small regularization parameter} \label{subsec:ProofDeterministicEauivNonAugmented}

We first show that the resolvent map is locally Lipschitz, and in particular Lipschitz on $\msa_\eta$.

\begin{lemma}\label{lem:lip_resolvent}
Let $\mathbf X_1,\mathbf X_2\in\R^{d\times n}$ and let $\mathbf D\succeq 0$. Assume that for $i\in\{1,2\}$,
$\uplambda_d(C_{\mathbf X_i}+\mathbf D)\ge \epsilon>0$. Then
\begin{equation}
\left\|(C_{\mathbf X_1}+\mathbf D)^{-1}-(C_{\mathbf X_2}+\mathbf D)^{-1}\right\|_{\Frob}
\;\le\; \frac{2}{\sqrt{n\,\epsilon^{3}}}\,\|\mathbf X_1-\mathbf X_2\|_{\Frob}\eqsp.
\end{equation}
\end{lemma}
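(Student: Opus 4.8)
The plan is to combine the resolvent identity with a telescoping of the difference of empirical covariances, the crucial observation being that the operator norms of $\mathbf X_1$ and $\mathbf X_2$ — which are \emph{not} controlled by the hypothesis — can be absorbed into the resolvents themselves. Set $\mathbf A = C_{\mathbf X_1}+\mathbf D$ and $\mathbf B = C_{\mathbf X_2}+\mathbf D$; by assumption both are positive definite with smallest eigenvalue at least $\epsilon$, so $\normop{\mathbf A^{-1}},\normop{\mathbf B^{-1}}\le \epsilon^{-1}$. The resolvent identity gives $\mathbf A^{-1}-\mathbf B^{-1}=\mathbf A^{-1}(\mathbf B-\mathbf A)\mathbf B^{-1}$, and since $\mathbf B-\mathbf A = C_{\mathbf X_2}-C_{\mathbf X_1}=n^{-1}(\mathbf X_2\mathbf X_2^\top-\mathbf X_1\mathbf X_1^\top)$, I would use the splitting
\begin{equation}
\mathbf X_2\mathbf X_2^\top-\mathbf X_1\mathbf X_1^\top=(\mathbf X_2-\mathbf X_1)\mathbf X_2^\top+\mathbf X_1(\mathbf X_2-\mathbf X_1)^\top,
\end{equation}
so that
\begin{equation}
\mathbf A^{-1}-\mathbf B^{-1}=\tfrac1n\,\mathbf A^{-1}(\mathbf X_2-\mathbf X_1)\mathbf X_2^\top\mathbf B^{-1}+\tfrac1n\,\mathbf A^{-1}\mathbf X_1(\mathbf X_2-\mathbf X_1)^\top\mathbf B^{-1}.
\end{equation}

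The second ingredient is the elementary bound $\normop{(C_{\mathbf H}+\mathbf D)^{-1}\mathbf H}^2\le n/\uplambda_d(C_{\mathbf H}+\mathbf D)$, valid for any $\mathbf H\in\R^{d\times n}$ and $\mathbf D\succeq 0$ with $C_{\mathbf H}+\mathbf D$ invertible. Indeed, $(C_{\mathbf H}+\mathbf D)^{-1}\mathbf H\mathbf H^\top(C_{\mathbf H}+\mathbf D)^{-1}=n\,(C_{\mathbf H}+\mathbf D)^{-1}C_{\mathbf H}(C_{\mathbf H}+\mathbf D)^{-1}\preceq n\,(C_{\mathbf H}+\mathbf D)^{-1}$ using $C_{\mathbf H}\preceq C_{\mathbf H}+\mathbf D$, and taking operator norms. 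Applied to $\mathbf H=\mathbf X_1$ with $\mathbf D$ and to $\mathbf H=\mathbf X_2$ with $\mathbf D$, this yields $\normop{\mathbf A^{-1}\mathbf X_1}\le\sqrt{n/\epsilon}$ and $\normop{\mathbf B^{-1}\mathbf X_2}=\normop{\mathbf X_2^\top\mathbf B^{-1}}\le\sqrt{n/\epsilon}$.

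Finally I would estimate each of the two terms above by the submultiplicative inequality $\norm{\mathbf U\mathbf V\mathbf W}_\Frob\le\normop{\mathbf U}\,\norm{\mathbf V}_\Frob\,\normop{\mathbf W}$, placing the Frobenius norm on the factor $\mathbf X_2-\mathbf X_1$ in each case. For the first term this gives $\tfrac1n\normop{\mathbf A^{-1}}\,\norm{\mathbf X_2-\mathbf X_1}_\Frob\,\normop{\mathbf X_2^\top\mathbf B^{-1}}\le \tfrac1n\cdot\epsilon^{-1}\cdot\sqrt{n/\epsilon}\,\norm{\mathbf X_1-\mathbf X_2}_\Frob$, and for the second term, symmetrically, $\tfrac1n\normop{\mathbf A^{-1}\mathbf X_1}\,\norm{\mathbf X_2-\mathbf X_1}_\Frob\,\normop{\mathbf B^{-1}}\le \tfrac1n\cdot\sqrt{n/\epsilon}\cdot\epsilon^{-1}\,\norm{\mathbf X_1-\mathbf X_2}_\Frob$; adding the two contributions produces exactly the constant $2/\sqrt{n\epsilon^3}$. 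The only mildly subtle point is recognizing that a naive application of the resolvent identity would leave an uncontrolled factor $\normop{\mathbf X_i}$, so the proof genuinely hinges on the "absorbed resolvent" estimate of the second paragraph; the remainder is routine matrix-norm bookkeeping.
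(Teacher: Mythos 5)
Your proof is correct and follows essentially the same route as the paper: the resolvent identity combined with the splitting $\mathbf X_2\mathbf X_2^\top-\mathbf X_1\mathbf X_1^\top=(\mathbf X_2-\mathbf X_1)\mathbf X_2^\top+\mathbf X_1(\mathbf X_2-\mathbf X_1)^\top$, and the key "absorbed resolvent" bound $\normop{(C_{\mathbf X_i}+\mathbf D)^{-1}\mathbf X_i}\le\sqrt{n/\epsilon}$ obtained from $C_{\mathbf X_i}\preceq C_{\mathbf X_i}+\mathbf D$, which is exactly how the paper avoids the uncontrolled factor $\normop{\mathbf X_i}$. The norm bookkeeping and the resulting constant $2/\sqrt{n\epsilon^3}$ match the paper's argument.
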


\begin{proof}
Using $A^{-1}-B^{-1}=A^{-1}(B-A)B^{-1}$ and $\|AB\|_{\Frob}\le \|A\|_{\op}\|B\|_{\Frob}$,
\begin{align}
\left\|(C_{\mathbf X_1}+\mathbf D)^{-1}-(C_{\mathbf X_2}+\mathbf D)^{-1}\right\|_{\Frob}
&= \left\|(C_{\mathbf X_1}+\mathbf D)^{-1}\big(C_{\mathbf X_2}-C_{\mathbf X_1}\big)(C_{\mathbf X_2}+\mathbf D)^{-1}\right\|_{\Frob} \\
&=\frac{1}{n}\left\|(C_{\mathbf X_1}+\mathbf D)^{-1}\!\left(\mathbf X_1(\mathbf X_1-\mathbf X_2)^\top+(\mathbf X_1-\mathbf X_2)\mathbf X_2^\top\right)\!(C_{\mathbf X_2}+\mathbf D)^{-1}\right\|_{\Frob}.
\end{align}
Using $\|UV^\top\|_{\Frob}\le \|U\|_{\op}\|V\|_{\Frob}$ and the triangle inequality,
\begin{align}
\cdots \;\le\; \frac{1}{n}\Big(
\|(C_{\mathbf X_1}+\mathbf D)^{-1}\mathbf X_1\|_{\op}\,\|(C_{\mathbf X_2}+\mathbf D)^{-1}\|_{\op}
+\|(C_{\mathbf X_2}+\mathbf D)^{-1}\mathbf X_2\|_{\op}\,\|(C_{\mathbf X_1}+\mathbf D)^{-1}\|_{\op}
\Big)\,\|\mathbf X_1-\mathbf X_2\|_{\Frob}.
\end{align}
Since $\uplambda_d(C_{\mathbf X_i}+\mathbf D)\ge\epsilon$, we have $\|(C_{\mathbf X_i}+\mathbf D)^{-1}\|_{\op}\le \epsilon^{-1}$.
Moreover,
\begin{align}
\frac{1}{\sqrt n}\,\|(C_{\mathbf X_i}+\mathbf D)^{-1}\mathbf X_i\|_{\op}
&=\sqrt{\uplambda_{d}\!\left((C_{\mathbf X_i}+\mathbf D)^{-1}\frac{\mathbf X_i\mathbf X_i^\top}{n}(C_{\mathbf X_i}+\mathbf D)^{-1}\right)}\\
&=\sqrt{\uplambda_{d}\!\left((C_{\mathbf X_i}+\mathbf D)^{-1}C_{\mathbf X_i}(C_{\mathbf X_i}+\mathbf D)^{-1}\right)}
\;\le\; \sqrt{\uplambda_{d}\!\left((C_{\mathbf X_i}+\mathbf D)^{-1}\right)}
\;\le\; \epsilon^{-1/2}\eqsp,
\end{align}
where we used $C_{\mathbf X_i}\preceq C_{\mathbf X_i}+\mathbf D$.
Plugging these bounds into the previous display yields
\[
\left\|(C_{\mathbf X_1}+\mathbf D)^{-1}-(C_{\mathbf X_2}+\mathbf D)^{-1}\right\|_{\Frob}
\;\le\; \frac{2}{\sqrt{n}}\,\epsilon^{-1/2}\,\epsilon^{-1}\,\|\mathbf X_1-\mathbf X_2\|_{\Frob}
= \frac{2}{\sqrt{n\,\epsilon^{3}}}\,\|\mathbf X_1-\mathbf X_2\|_{\Frob}\!,
\]
as claimed.
\end{proof}

\noindent\textbf{In particular.} On $\msa_\eta=\{\mathbf X:\,\uplambda_d(C_{\mathbf X})\ge \eta\}$ (take $\mathbf D=0$), the map $\mathbf X\mapsto (C_{\mathbf X} + \lambda\Idd)^{-1}$ is Lipschitz with constant $2/\sqrt {n (\eta + \lambda)^3}$, for all $\lambda \geq 0$.

Define, for any $\mathfrak b\in[1,\infty)$ and any matrix $\mathbf D\in\R^{d\times d}$,
\[
    \detequiv_X^{\mathfrak b}(\mathbf D)\;:=\;\Big(\tfrac{\Sigma_X}{\mathfrak b}+\mathbf D\Big)^{-1}\eqsp.
\]
We provide two choices of the parameter $\mathfrak b$ for which $\detequiv_X^{\mathfrak b}(\mathbf D)$ is a deterministic equivalent of $\resolvent_X(\mathbf D)$. Precisely:

\begin{proposition}\label{DeterministicEquivNonAugmented}
    Assume $X$ satisfies \Cref{ass:X_Lipschitz_Concentrated_Best} and \Cref{ass:ProbaSmallEigenvalues} for some $\eta>0$. 
    Let $\mathbf B\in\R^{d\times d}$ and let $\mathbf D\succeq 0$ be positive semidefinite. Define
    \[
    \mathfrak a^* \;=\; 1+\frac{1}{n}\,\tr\!\left(\Sigma_X\,\E\!\left[\resolvent_X(\mathbf D)\,\1_{\msa_\eta}(X)\right]\right),
    \] 
    and $\mathfrak{b}^*$ be the unique fixed point of
    \[
    f_{\mathbf D}:\ \mathfrak b\mapsto 1+\frac{1}{n}\tr\!\left(\Sigma_X\,\detequiv_X^{\mathfrak b}(\mathbf D)\right).
    \]
    Then, for an absolute constant $k>0$ and all $t>0$,
    \begin{equation}
    \Prob\!\left(\left|\frac{1}{d}\tr\!\left(\mathbf B\!\left\{\resolvent_X(\mathbf D)\,\1_{\msa_\eta}(X)
    - \E\!\left[\resolvent_X(\mathbf D)\,\1_{\msa_\eta}(X)\right]\right\}\right)\right|\ge t\right)
    \;\lesssim\;
    \exp\!\left(
    -\;k\,\frac{\cX\,(\eta+\uplambda_d(\mathbf D))^{3}\,n\,t^{2}}
    {\|\mathbf B\|_\op^{2}\,\big(\eta+\uplambda_d(\mathbf D)+\cX/d\big)}
    \right)\!.\label{eq:det-equiv-tail}
    \end{equation}
    Furthermore, define the polynomial $Q:\R^{5}\to\R$ by
    \[
    Q(X,Y,Z,U,V) \;=\; (1+UX+VX)\,(X^3 Z + X^2 + YX^2 Z + YX) \;+\; YX^2 Z \;+\; YX^4 Y^2\eqsp,
    \]
    and set $q = Q\big(\eta+\uplambda_d(\mathbf D),\, \uplambda_d(\Sigma_X),\, \|\Sigma_X\|_\op^{-1},\, \cX^{-1},\, n^{-1}\big)$. Then
    \begin{equation}
    \left\|\E\!\left[\big\{\resolvent_X(\mathbf D)-\detequiv^{\mathfrak a^*}_X(\mathbf D)\big\}\,\1_{\msa_\eta}(X)\right]\right\|_\Frob
    \;\lesssim\; \frac{q\,\sqrt d\,\|\Sigma_X\|_\op^{3}}{n\,\uplambda_d(\Sigma_X)\,(\eta+\uplambda_d(\mathbf D))^{6}}\eqsp,
    \end{equation}
    and
    \begin{align}
    \left\|\E\!\left[\big\{\resolvent_X(\mathbf D)-\detequiv^{\mathfrak b^*}_X(\mathbf D)\big\}\,\1_{\msa_\eta}(X)\right]\right\|_\Frob
    \;\lesssim\;&
    \left(1+\frac{d\,\|\Sigma_X\|_\op}{n\,\uplambda_d(\Sigma_X)}\right)
    \Bigg(
    \frac{q\,\sqrt d\,\|\Sigma_X\|_\op^{3}}{n\,\uplambda_d(\Sigma_X)\,(\eta+\uplambda_d(\mathbf D))^{6}}
    \\[-0.2em]
    &\hspace{6.2em}
    +\left(\frac{d\,\|\Sigma_X\|_\op}{n}+\frac{d^{2}\,\|\Sigma_X\|_\op^{2}}{(\eta+\uplambda_d(\mathbf D))\,n^{2}}\right)\,\e^{-\cX n}
    \Bigg)\eqsp.
    \end{align}
\end{proposition}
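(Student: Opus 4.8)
The plan is to prove the three displays separately, reusing the truncated‑Lipschitz concentration of \cref{subsec:SubGaussianConcentration}, the quadratic‑form bounds of \cref{subsec:ConcentrationOfRandomQuadraticForms}, and the resolvent Lipschitz estimate \cref{lem:lip_resolvent}. Throughout set $\epsilon=\eta+\uplambda_d(\mathbf D)$, so that on $\msa_\eta$ one has $\uplambda_d(C_{\mathbf X}+\mathbf D)\ge\epsilon$ and $\uplambda_d(C_{\mathbf X^-}+\mathbf D)\ge\epsilon$, hence $\|\resolvent_{\mathbf X}(\mathbf D)\|_\op\vee\|\resolvent_{\mathbf X^-}(\mathbf D)\|_\op\le\epsilon^{-1}$. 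For the tail bound \eqref{eq:det-equiv-tail} I would apply \cref{prop:ConcentrationPropertyOfTruncatedLipschitzTransformsOfX} to $f(\mathbf X)=\tfrac1d\tr\bigl(\mathbf B\,\resolvent_{\mathbf X}(\mathbf D)\bigr)$ restricted to $\mse=\msa_\eta$. On $\msa_\eta$ this map is bounded, $\|f\|_\infty\le\|\mathbf B\|_\op/\epsilon$ (using $|\tr(\mathbf B M)|\le\|\mathbf B\|_\op\tr(M)$ for $M\succeq0$ and $\tr(\resolvent_{\mathbf X}(\mathbf D))\le d/\epsilon$), and $\Ltt_f$‑Lipschitz with $\Ltt_f\lesssim\|\mathbf B\|_\op/\sqrt{d\,n\,\epsilon^{3}}$ by \cref{lem:lip_resolvent} with $\mathbf D\succeq0$ (the extra $\sqrt d$ from passing to the trace norm is absorbed by the $1/d$ prefactor). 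Plugging $\|f\|_\infty$, $\Ltt_f$ and $\sigma_{\msa_\eta}^2\lesssim(n\cX)^{-1}$ (from \cref{ass:ProbaSmallEigenvalues}, exactly as at the end of the proof of \cref{prop:varOfRandomQuadForms}) into the variance proxy of \cref{prop:ConcentrationPropertyOfTruncatedLipschitzTransformsOfX} gives $\sigma_{f,\msa_\eta}^2\lesssim\|\mathbf B\|_\op^2\bigl(\tfrac1{dn\epsilon^{3}}+\tfrac1{n\cX\epsilon^{2}}\bigr)$, and the sub‑Gaussian tail $\exp(-c\,t^2/\sigma_{f,\msa_\eta}^2)$ is precisely \eqref{eq:det-equiv-tail} after clearing denominators.

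\textbf{The $\mathfrak a^*$ estimate.} This is the core and follows the scheme of \cite[Thm.~6.16]{Chouard22}, the only novelty being that every bound is tracked polynomially in $\epsilon$ so that $\epsilon\downarrow0$ is admissible. From the resolvent identity $\resolvent_{\mathbf X}(\mathbf D)-\detequiv^{\mathfrak a^*}_{\mathbf X}(\mathbf D)=\resolvent_{\mathbf X}(\mathbf D)\bigl(\tfrac{\Sigma_X}{\mathfrak a^*}-C_{\mathbf X}\bigr)\detequiv^{\mathfrak a^*}_{\mathbf X}(\mathbf D)$, multiplying by $\1_{\msa_\eta}$ and taking expectations, the claim reduces to bounding $\bigl\|\E[\resolvent_X(\mathbf D)(C_X-\tfrac{\Sigma_X}{\mathfrak a^*})\1_{\msa_\eta}(X)]\bigr\|_\Frob$, up to a $\detequiv^{\mathfrak a^*}_X(\mathbf D)\,\Prob(X\notin\msa_\eta)$ correction controlled by $\|\detequiv^{\mathfrak a^*}_X(\mathbf D)\|_\Frob\le\sqrt d\,\mathfrak a^*/\uplambda_d(\Sigma_X)$ and $\Prob(X\notin\msa_\eta)\lesssim\rme^{-\cX n}$. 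Writing $C_X=\tfrac1n\sum_j X_jX_j^\top$ and applying Sherman--Morrison, $\resolvent_X(\mathbf D)X_j=(1+\delta_j)^{-1}\resolvent_X^{(j)}(\mathbf D)X_j$ with $\delta_j=\tfrac1n X_j^\top\resolvent_X^{(j)}(\mathbf D)X_j\ge0$ and $\resolvent_X^{(j)}(\mathbf D)$ the leave‑$j$‑out resolvent; by exchangeability of the columns it suffices to treat $j=1$, where $\resolvent_X^{(1)}(\mathbf D)=\resolvent_{X^-}(\mathbf D)$ and $\1_{\msa_\eta}(X)$ is $\sigma(X^-)$‑measurable (for $j\neq1$ one passes through the leave‑$j$‑out variants $\msa_\eta^{(j)}$ of $\msa_\eta$, whose symmetric differences with $\msa_\eta$ carry probability $\lesssim\rme^{-\cX n}$ and on which one still has $\|\resolvent_X(\mathbf D)\|_\op\le\epsilon^{-1}$ since $C_X\succeq C_{X^{(j)}}$). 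Then I would (i) replace $(1+\delta_1)^{-1}$ by $1/\mathfrak a^*$, the error being controlled by the $L^2$‑deviation of $\delta_1$ from $\mathfrak a^*-1$, bounded via the quadratic‑form variance estimate \cref{prop:varOfRandomQuadForms} (with $M_1=\resolvent_{X^-}(\mathbf D)$, $\Ltt_1\lesssim(n\epsilon^{3})^{-1/2}$, $\|M_1\|_\infty\le\epsilon^{-1}$), the Lipschitz concentration of $\tr(\Sigma_X\resolvent_{X^-}(\mathbf D))$ on $\msa_\eta$, and the definition of $\mathfrak a^*$; (ii) use $\E[X_1X_1^\top\mid X^-]=\Sigma_X$ to get $\E[\resolvent_{X^-}(\mathbf D)X_1X_1^\top\1_{\msa_\eta}]=\E[\resolvent_{X^-}(\mathbf D)\1_{\msa_\eta}]\Sigma_X$; and (iii) replace $\resolvent_{X^-}(\mathbf D)$ by $\resolvent_X(\mathbf D)$ under the expectation, the correction $\resolvent_X(\mathbf D)-\resolvent_{X^-}(\mathbf D)=-\tfrac1{n(1+\delta_1)}\resolvent_{X^-}(\mathbf D)X_1X_1^\top\resolvent_{X^-}(\mathbf D)$ having operator norm $\le\|X_1\|_2^2/(n\epsilon^{2})$ with $\E[\|X_1\|_2^2]\le d\|\Sigma_X\|_\op$ and $\E[\|X_1\|_2^4]\lesssim d^2\|\Sigma_X\|_\op^2$. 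Collecting (i)--(iii) gives $\E[\resolvent_X(\mathbf D)C_X\1_{\msa_\eta}]=\tfrac1{\mathfrak a^*}\E[\resolvent_X(\mathbf D)\1_{\msa_\eta}]\Sigma_X+\mathcal R$, and since $\E[\resolvent_X(\mathbf D)\tfrac{\Sigma_X}{\mathfrak a^*}\1_{\msa_\eta}]$ equals the first term exactly, the difference is $\mathcal R$; right‑multiplying by $\detequiv^{\mathfrak a^*}_X(\mathbf D)$ (operator norm $\le\mathfrak a^*/\uplambda_d(\Sigma_X)$) preserves the estimate. Recording each occurrence of $\epsilon^{-1}$, $\|\Sigma_X\|_\op$, $\uplambda_d(\Sigma_X)^{-1}$, $\cX^{-1}$, $n^{-1}$, together with $\mathfrak a^*\le1+d\|\Sigma_X\|_\op/(n\epsilon)$, the monomials assemble into the polynomial $Q$ and the power $\epsilon^{-6}$ of the stated bound.

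\textbf{The $\mathfrak b^*$ estimate.} I would feed the previous bound into the scalar equation: applying $\tfrac1n\tr(\Sigma_X\,\cdot\,)$ (and $|\tr(\Sigma_X M)|\le\sqrt d\|\Sigma_X\|_\op\|M\|_\Frob$) shows that $\mathfrak a^*$ is an approximate fixed point of $f_{\mathbf D}$, i.e.\ $|\mathfrak a^*-f_{\mathbf D}(\mathfrak a^*)|$ is of the same order as the $\mathfrak a^*$ bound plus an $\rme^{-\cX n}$ truncation term coming from $\Prob(X\notin\msa_\eta)$. A direct computation yields $0\le f_{\mathbf D}'(\mathfrak b)\le d/n<1$ on $[1,\infty)$ (using $\Sigma_X\preceq\mathfrak b(\Sigma_X/\mathfrak b+\mathbf D)$ and $d<n$), so $f_{\mathbf D}$ is a contraction with unique fixed point $\mathfrak b^*$ and $|\mathfrak a^*-\mathfrak b^*|\le(1-d/n)^{-1}|\mathfrak a^*-f_{\mathbf D}(\mathfrak a^*)|$. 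Finally $\detequiv^{\mathfrak a^*}_X(\mathbf D)-\detequiv^{\mathfrak b^*}_X(\mathbf D)=\bigl(\tfrac1{\mathfrak b^*}-\tfrac1{\mathfrak a^*}\bigr)(\tfrac{\Sigma_X}{\mathfrak a^*}+\mathbf D)^{-1}\Sigma_X(\tfrac{\Sigma_X}{\mathfrak b^*}+\mathbf D)^{-1}$ has Frobenius norm $\le|\mathfrak a^*-\mathfrak b^*|\,\sqrt d\,\|\Sigma_X\|_\op/\uplambda_d(\Sigma_X)^2$, so the triangle inequality with the $\mathfrak a^*$ bound gives the claimed estimate; the prefactor $1+d\|\Sigma_X\|_\op/(n\uplambda_d(\Sigma_X))$ is exactly what the contraction constant $(1-d/n)^{-1}$ and this Lipschitz‑in‑$\mathfrak b$ conversion produce, and the $\rme^{-\cX n}$ term is inherited from the approximate‑fixed‑point step.

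\textbf{Main obstacle.} The delicate part is steps (i)--(iii) of the $\mathfrak a^*$ estimate: one must simultaneously keep every bound polynomial in $\epsilon=\eta+\uplambda_d(\mathbf D)$—so that the regularization may vanish, which is what \cref{lem:lip_resolvent} and the truncated‑Lipschitz machinery of \cref{subsec:SubGaussianConcentration} are designed for—while routing all fluctuations through the indicator $\1_{\msa_\eta}$ so that $\resolvent_X(\mathbf D)$ is never evaluated off $\msa_\eta$; this forces careful bookkeeping of the $\rme^{-\cX n}$ truncation corrections and of the moments of $\|X_1\|_2^2$ entering the leave‑one‑out remainders, and it is the reconciliation of these two requirements that makes the shape of the polynomial $Q$ and the exponent $\epsilon^{-6}$ unavoidable.
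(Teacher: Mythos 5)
Your proposal follows the same route as the paper's own proof: the tail bound via the truncated-Lipschitz concentration of \cref{prop:ConcentrationPropertyOfTruncatedLipschitzTransformsOfX} together with \cref{lem:lip_resolvent} and $\sigma_{\msa_\eta}^2\lesssim (n\cX)^{-1}$; the $\mathfrak a^*$ bias via the resolvent identity, Sherman--Morrison leave-one-out and the quadratic-form variance bounds of \cref{prop:varOfRandomQuadForms}; and the $\mathfrak b^*$ bound via an approximate-fixed-point plus contraction argument. In that sense the strategy is correct and matches the paper.

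Two places in your bookkeeping, however, would not reproduce the bounds as stated (with $\epsilon=\eta+\uplambda_d(\mathbf D)$ as in your notation). First, in your step (iii) you control the leave-one-out correction $\tfrac{1}{n(1+\delta_1)}\resolvent_{X^-}(\mathbf D)X_1X_1^\top\resolvent_{X^-}(\mathbf D)$ through $\E[\|X_1\|_2^2]\le d\|\Sigma_X\|_\op$; since this matrix is rank one its Frobenius and operator norms coincide, so this route yields $d\|\Sigma_X\|_\op/(n\epsilon^2)$, a factor $\sqrt d$ worse than the stated $\sqrt d$ scaling. The paper instead drops $(1+\delta_1)^{-1}\le 1$ by Löwner monotonicity, takes the expectation over $X_1$ (independent of $X^-$, with $\1_{\msa_\eta}(X)$ being $\sigma(X^-)$-measurable) \emph{inside} the Frobenius norm so that $X_1X_1^\top$ is replaced by $\Sigma_X$ before any norm is taken, giving $\|\E[\resolvent_{X^-}(\mathbf D)\Sigma_X\resolvent_{X^-}(\mathbf D)\1_{\msa_\eta}(X)]\|_\Frob\le \sqrt d\,\|\Sigma_X\|_\op/\epsilon^2$; similarly, the cross term requires not only the $L^2$ deviation of $\delta_1$ but also, via the dual representation of the Frobenius norm and Cauchy--Schwarz, a variance bound on the companion form $X_1^\top\detequiv^{\mathfrak a^*}_X(\mathbf D)\mathbf B\,\resolvent_{X^-}(\mathbf D)X_1$ uniformly over unit-Frobenius $\mathbf B$, which is exactly what the third estimate of \cref{prop:varOfRandomQuadForms} supplies and which your sketch does not invoke. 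Second, in the $\mathfrak b^*$ step your contraction constant $d/n$ gives $|\mathfrak a^*-\mathfrak b^*|\le(1-d/n)^{-1}|\mathfrak a^*-f_{\mathbf D}(\mathfrak a^*)|$, and $(1-d/n)^{-1}$ is \emph{not} bounded by the stated prefactor $1+d\|\Sigma_X\|_\op/(n\uplambda_d(\Sigma_X))$ (take $\Sigma_X$ isotropic and $d/n\to1$); the paper's sharper contraction estimate at the fixed point, $|f_{\mathbf D}(\mathfrak b)-f_{\mathbf D}(\mathfrak b^*)|\le\tfrac{\mathfrak b^*-1}{\mathfrak b^*}|\mathfrak b-\mathfrak b^*|$, yields $|\mathfrak a^*-\mathfrak b^*|\le\mathfrak b^*\,|\mathfrak a^*-f_{\mathbf D}(\mathfrak a^*)|$ with $\mathfrak b^*\le 1+d\|\Sigma_X\|_\op/(n\uplambda_d(\Sigma_X))$, and the conversion to the Frobenius bound should use $\Sigma_X\detequiv^{\mathfrak b^*}_X(\mathbf D)/\mathfrak b^*\preceq\Idd$ to obtain $\sqrt d/\uplambda_d(\Sigma_X)$ rather than your $\sqrt d\,\|\Sigma_X\|_\op/\uplambda_d(\Sigma_X)^2$. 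These are fixable refinements within the same strategy rather than a different argument, but as written your sketch establishes the proposition only with strictly weaker polynomial prefactors than the ones claimed.
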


\noindent\emph{Remark.} This result generalizes those of \cite{Chouard22}. Firstly, under \Cref{ass:ProbaSmallEigenvalues} one may take an arbitrarily small regularization $\mathbf D\succeq 0$ and still retain favorable concentration properties for the resolvent (even $\mathbf{D} = 0$), secondly we provide fully explicit bounds which allow to understand deeper the dependancies to all the parameters.
Our proof follows \cite{Chouard22} closely.

\begin{proof}
    The proof procees in two parts, first we derive the claimed concentration bound for terms of the form $d^{-1}\tr\left(\mathbf{B} \resolvent_X(\mathbf{D})\right)$, which follows from concentration of Lipschitz transformations of $X$ (\Cref{ass:X_Lipschitz_Concentrated_Best}), as well as \Cref{prop:ConcentrationPropertyOfTruncatedLipschitzTransformsOfX}. Then we will derive the claimed bias bound, using the Shermann-Morison formula. 

    \textbf{Concentration of $d^{-1}\tr\left(\mathbf{B} \resolvent_X(\mathbf{D})\right)\1_{\msa_\eta}(X)$}
    We mainly rely on \Cref{prop:ConcentrationPropertyOfTruncatedLipschitzTransformsOfX}, first note that the map
    \begin{equation}
        h_{\mathbf{B}, \mathbf{D}} : \begin{cases} 
                                        \msa_\eta &\to \R \\
                                        \mathbf{X} &\mapsto \dfrac{1}{d}\tr\left(\mathbf{B}\resolvent_\mathbf{X}(\mathbf{D})\right)
        \end{cases}\eqsp,
    \end{equation}
    is $2 \|\mathbf{B}\|_\op (\eta + \lambda_1(\mathbf{D}))^{-3/2} n^{-1/2} d^{-1/2}$-Lipschitz  from \Cref{lem:lip_resolvent}. Moreover $\|h_{\mathbf{B}, \mathbf{D}}\|_\infty \leq \|\mathbf{B}\|_\op^2(\eta + \lambda_1(\mathbf{D}))^{-1}$, we have from \Cref{prop:ConcentrationPropertyOfTruncatedLipschitzTransformsOfX} that $h_{\mathbf{B}, \mathbf{D}}(X)\1_{\msa_\eta}(X)$ is sub Gaussian, with parameter,
    \begin{equation}
        \sigma_{h_{\mathbf{B}, \mathbf{D}}(X)}^2 \lesssim \Prob(X \in \msa_\eta)^2 \dfrac{\|\mathbf{B}\|_\op^2}{(\eta + \uplambda_d(\mathbf{D}))^3 n d} + \dfrac{\|\mathbf{B}\|_\op^2\sigma(\Prob(X \in \msa_\eta))^2}{(\eta + \uplambda_d(\mathbf{D}))^2}\eqsp,
    \end{equation}
    and, remarking that by definition of $\sigma$ given in \Cref{prop:ConcentrationPropertyOfTruncatedLipschitzTransformsOfX}, since $\eta$ satisfies \cref{ass:ProbaSmallEigenvalues}, we have,
    \begin{equation}
        \sigma(\Prob(X\in\mse))^2 \lesssim \dfrac{1}{n \cX} \eqsp,
    \end{equation} 
    which implies that 
    \begin{align}
        \sigma_{h_{\mathbf{B}, \mathbf{D}}}^2 \lesssim \dfrac{\|\mathbf{B}\|_\op^2}{nd (\eta + \uplambda_d(\mathbf{D}))^3} + \dfrac{\|\mathbf{B}\|_\op^2}{\cX n(\eta + \uplambda_d(\mathbf{D}))^{2}} = \dfrac{\|\mathbf{B}\|_\op^2}{\cX n(\eta + \uplambda_d(\mathbf{D}))^{3}} \left(\dfrac{\cX}{d} + (\eta + \uplambda_d(\mathbf{D}))\right) \eqsp,
    \end{align}
    hence, using the variance bound for sub Gaussian random variable, we have for a universal constant $k$,
    \begin{equation}
        \Prob\left(\left|\tr\left(\mathbf{B} \resolvent_X(\mathbf{D})\right)\1_{\msa_\eta}(X) - \E\left[\tr\left(\mathbf{B} \resolvent_X(\mathbf{D})\right)\1_{\msa_\eta}(X)\right]\right| \ge t\right) \lesssim \exp \left(-k\dfrac{t^2 \cX (\eta + \uplambda_d(\mathbf{D}))^{3} n t^2}{\|\mathbf{B}\|_\op^2( \eta + \uplambda_d(\mathbf{D} + \cX / d))}\right)\eqsp,
    \end{equation}
    \textbf{First equivalent for $\E\left[\resolvent_X(\mathbf{D}) \1_{\msa_\eta}(X)\right]$}
    Recall the notation $\resolvent_X^-(\mathbf{D}) = R_{X^-}(\mathbf{D})$ where $X^- = [0, X_1 ,\cdots, X_n]$, we have from the Shermann-Morison formula \cite{sherman1950adjustment}, 
    \begin{equation}
        \resolvent_X(\mathbf{D})\1_{\msa_\eta}(X) = \left\{\resolvent_X^-(\mathbf{D}) - \dfrac{1}{n} \dfrac{\resolvent_X^-(\mathbf{D}) X_1 X_1^\top \resolvent_X^-(\mathbf{D})}{1 + n^{-1} X_1^\top \resolvent_X^-(\mathbf{D})X_1}\right\}\1_{\msa_\eta}(X) \eqsp. \label{ShermanMorison}
    \end{equation}
    hence, multiplying both sides by $X_1$, we obtain,
    \begin{equation}
        \resolvent_X(\mathbf{D})X_1 \1_{\msa_\eta}(X) = \dfrac{\resolvent_X^-(\mathbf{D}) X_1}{1 + n^{-1} X_1^\top \resolvent_X^-(\mathbf{D})X_1} \1_{\msa_\eta}(X)\eqsp.
    \end{equation}
    Denoting $\mathfrak{a}_X = 1 + n^{-1} X_1^\top \resolvent_X^-(\mathbf{D})X_1 \1_{\msa_\eta}(X)$, we simplify the above expression to,
    \begin{equation}
        \resolvent_X(\mathbf{D})X_1 \1_{\msa_\eta}(X) = \dfrac{\resolvent_X^-(\mathbf{D}) X_1}{\mathfrak{a}_X} \1_{\msa_\eta}(X)\eqsp. \label{LeaveOneOutIdentity}
    \end{equation}
    We now focus on bounding the bias of $\resolvent_X(\mathbf{D}) \1_{\msa_\eta}(X)$. First, using the identity $\mathbf{A}^{-1} - \mathbf{B}^{-1} = \mathbf{A}^{-1}(\mathbf{B} - \mathbf{A})\mathbf{B}^{-1}$, we have,
    \begin{align}
        \E\left[\left\{\resolvent_X(\mathbf{D}) - \detequiv^{\mathfrak{a}^*}_X(\mathbf{D})\right\}\1_{\msa_\eta}(X)\right] &= \E\left[\resolvent_X(\mathbf{D})\left\{\dfrac{\Sigma_X}{\mathfrak{a}^*} - C_X\right\}\detequiv^{\mathfrak{a}^*}_X(\mathbf{D})\1_{\msa_\eta}(X)\right] \\
        & = \E\left[\resolvent_X(\mathbf{D})\left\{\dfrac{\Sigma_X}{\mathfrak{a}^*} - X_1 X_1^\top\right\}\detequiv^{\mathfrak{a}^*}_X(\mathbf{D})\1_{\msa_\eta}(X)\right] \\
        & = \E\left[\left\{\dfrac{1}{\mathfrak{a}^*}\resolvent_X(\mathbf{D})\Sigma_X \detequiv_X^{\mathfrak{a}^*}(\mathbf{D}) - \dfrac{1}{\mathfrak{a}_X}\resolvent_X^-(\mathbf{D}) X_1 X_1^\top \detequiv_X^{\mathfrak{a}^*}(\mathbf{D})\right\}\1_{\msa_\eta}(X)\right]
    \end{align}
    where, in the last equality, we have used \eqref{LeaveOneOutIdentity}. Further rearranging the terms, we get,
    \begin{align}
        &\E\left[\left\{\resolvent_X(\mathbf{D}) - \detequiv^{\mathfrak{a}^*}_X(\mathbf{D})\right\}\1_{\msa_\eta}(X)\right] \\
        &\quad = \E\left[\dfrac{1}{\mathfrak{a}^*}\{\resolvent_X(\mathbf{D}) - \resolvent_X^-(\mathbf{D})\}\Sigma_X \detequiv_X^{\mathfrak{a}^*}(\mathbf{D})\1_{\msa_\eta}(X) + \left(\dfrac{1}{\mathfrak{a}^*} - \dfrac{1}{\mathfrak{a}_X}\right) \resolvent_X^-(\mathbf{D}) X_1 X_1^\top \detequiv_X^{\mathfrak{a}^*}(\mathbf{D})\1_{\msa_\eta}(X)\right] \eqsp,
    \end{align}
    Hence, by applying the triangle inequality to bound the bias, we obtain,
    \begin{align}
        &\left\|\E\left[\left\{\resolvent_X(\mathbf{D}) - \detequiv^{\mathfrak{a}^*}_X(\mathbf{D})\right\}\1_{\msa_\eta}(X)\right] \right\|_\Frob \\
        &\quad \leq \left\|\E\left[\{\resolvent_X(\mathbf{D}) - \resolvent_X^-(\mathbf{D})\}\1_{\msa_\eta}(X)\right]\dfrac{\Sigma_X \detequiv_X^{\mathfrak{a}^*}(\mathbf{D})}{\mathfrak{a}^*}\right\|_\Frob + \left\|\E\left[ \left(\dfrac{1}{\mathfrak{a}^*} - \dfrac{1}{\mathfrak{a}_X}\right) \resolvent_X^-(\mathbf{D}) X_1 X_1^\top \detequiv_X^{\mathfrak{a}^*}(\mathbf{D})\1_{\msa_\eta}(X)\right]\right\|_\Frob \label{FirstBiasUpperBound}
    \end{align}
    Controlling each term individualy, we first use $\|\mathbf{A} \mathbf{B}\|_\Frob \leq \|\mathbf{A}\|_\op\|\mathbf{B}\|_\Frob$ to get,
    \begin{align}
        \left\|\E\left[\{\resolvent_X(\mathbf{D}) - \resolvent_X^-(\mathbf{D})\}\1_{\msa_\eta}(X)\right]\dfrac{\Sigma_X \detequiv_X(\mathbf{D})}{\mathfrak{a}^*}\right\|_\Frob &\leq \left\|\E\left[\{\resolvent_X(\mathbf{D}) - \resolvent_X^-(\mathbf{D})\}\1_{\msa_\eta}(X)\right]\right\|_\Frob \left\|\dfrac{\Sigma_X \detequiv_X^{\mathfrak{a}}(\mathbf{D})}{\mathfrak{a}^*}\right\|_\op \\
        &\leq \left\|\E\left[\{\resolvent_X(\mathbf{D}) - \resolvent_X^-(\mathbf{D})\}\1_{\msa_\eta}(X)\right]\right\|_\Frob \eqsp,
    \end{align}
    From \eqref{LeaveOneOutIdentity}, 
    \begin{align}
        \left\|\E\left[\{\resolvent_X(\mathbf{D}) - \resolvent_X^-(\mathbf{D})\}\1_{\msa_\eta}(X)\right]\right\|_\Frob = \dfrac{1}{n}\left\|\E\left[\dfrac{1}{\mathfrak{a}_X}\resolvent_X^-(\mathbf{D})X_1 X_1^\top \resolvent_X^-(\mathbf{D})\1_{\msa_\eta}(X)\right]\right\|_\Frob \eqsp, 
    \end{align}
    In order to easily bound the riht-hand side of the previous inequality, we introduce the Lowner order on symetrix matrices. We say that $\mathbf{A} \preceq \mathbf{B}$ if and only if $\mathbf{B} - \mathbf{A}$ is a PSD matrix, then we have $\mathfrak{a}_X^{-1}\resolvent_X^-(\mathbf{D})X_1 X_1^\top \resolvent_X^-(\mathbf{D})\1_{\msa_\eta}(X) \preceq \resolvent_X^-(\mathbf{D})X_1 X_1^\top \resolvent_X^-(\mathbf{D})\1_{\msa_\eta}(X)$ almost surely. It is clear that this ordering is preserved when averaging the matrices, we get
    \begin{equation}
        \E\left[\dfrac{1}{\mathfrak{a}_X}\resolvent_X^-(\mathbf{D})X_1 X_1^\top \resolvent_X^-(\mathbf{D})\1_{\msa_\eta}(X)\right] \preceq \E\left[\resolvent_X^-(\mathbf{D})X_1 X_1^\top \resolvent_X^-(\mathbf{D})\1_{\msa_\eta}(X)\right]
    \end{equation}
    and, using the fact that the Frobenius norm is non-decreasing w.r.t. the Lowner order on PSD matrices, we deduce,
    \begin{align}
        \left\|\E\left[\{\resolvent_X(\mathbf{D}) - \resolvent_X^-(\mathbf{D})\}\1_{\msa_\eta}(X)\right]\right\|_\Frob &= \dfrac{1}{n}\left\|\E\left[\dfrac{1}{\mathfrak{a}_X}\resolvent_X^-(\mathbf{D})X_1 X_1^\top \resolvent_X^-(\mathbf{D})\1_{\msa_\eta}(X)\right]\right\|_\Frob \\
        &\leq \dfrac{1}{n}\left\|\E\left[\resolvent_X^-(\mathbf{D})X_1 X_1^\top \resolvent_X^-(\mathbf{D})\1_{\msa_\eta}(X)\right]\right\|_\Frob \\
        &\leq \dfrac{1}{n}\left\|\E\left[\resolvent_X^-(\mathbf{D})\Sigma_X \resolvent_X^-(\mathbf{D})\1_{\msa_\eta}(X)\right]\right\|_\Frob \\
        &\leq \dfrac{\sqrt{d}\|\Sigma_X\|_\op}{n (\eta + \uplambda_d(\mathbf{D}))^{2}}  \eqsp. 
    \end{align}
    Plugging the previous computations in \eqref{FirstBiasUpperBound}, we get
    \begin{align} \label{AlmostThere}
        &\left\|\E\left[\left\{\resolvent_X(\mathbf{D}) - \detequiv^{\mathfrak{a}^*}_X(\mathbf{D})\right\}\1_{\msa_\eta}(X)\right] \right\|_\Frob \\
        &\quad \leq \left\|\E\left[ \left(\dfrac{1}{\mathfrak{a}^*} - \dfrac{1}{\mathfrak{a}_X}\right) \resolvent_X^-(\mathbf{D}) X_1 X_1^\top \detequiv_X^{\mathfrak{a}^*}(\mathbf{D})\1_{\msa_\eta}(X)\right]\right\|_\Frob + \dfrac{\sqrt{d}\|\Sigma_X\|_\op}{n (\eta + \uplambda_d(\mathbf{D}))^{2}}\eqsp.
    \end{align}
    It remains only to bound the second term in the right hand side of \eqref{FirstBiasUpperBound}, recalling the dual representation of the Frobenius norm, 
    \begin{equation}
        \|\mathbf{A}\|_\Frob = \sup_{\|\mathbf{B}\|_\Frob \leq 1} \tr(\mathbf{B}^\top \mathbf{A} )\eqsp,
    \end{equation}
    we define for any $\mathbf{B}$ of unit Frobenius norm, the random variable $\zeta_{\mathbf{B}, X} = X_1^\top \detequiv_X^{\mathfrak{a}^*}(\mathbf{D})\mathbf{B}\resolvent_X^-(\mathbf{D}) X_1 \1_{\msa_\eta}(X)$, we have
    \begin{align}
        &\left\|\E\left[ \left(\dfrac{1}{\mathfrak{a}^*} - \dfrac{1}{\mathfrak{a}_X}\right) \resolvent_X^-(\mathbf{D}) X_1 X_1^\top \detequiv_X^{\mathfrak{a}^*}(\mathbf{D})\1_{\msa_\eta}(X)\right]\right\|_\Frob \\
        &\quad = \sup_{\|\mathbf{B}\|_\Frob = 1} \E\left[\tr\left(\mathbf{B}^\top \left(\dfrac{1}{\mathfrak{a}^*} - \dfrac{1}{\mathfrak{a}_X}\right) \resolvent_X^-(\mathbf{D}) X_1 X_1^\top \detequiv_X^{\mathfrak{a}^*}(\mathbf{D})\1_{\msa_\eta}(X)\right)\right] \\ 
        &\quad = \sup_{\|\mathbf{B}\|_\Frob = 1} \left|\E\left[\tr\left(\mathbf{B} \left(\dfrac{1}{\mathfrak{a}^*} - \dfrac{1}{\mathfrak{a}_X}\right) \resolvent_X^-(\mathbf{D}) X_1 X_1^\top \detequiv_X^{\mathfrak{a}^*}(\mathbf{D})\1_{\msa_\eta}(X)\right)\right]\right| \\
        &\quad = \sup_{\|\mathbf{B}\|_\Frob = 1} \left|\E\left[\left(\dfrac{1}{\mathfrak{a}^*} - \dfrac{1}{\mathfrak{a}_X}\right) \zeta_{\mathbf{B}, X}\right]\right| \\
    \end{align}
    To conclude the proof, we use \cref{prop:varOfRandomQuadForms} to bound the variances of $\mathfrak{a}_X$ as well as $\zeta_{\mathbf{B}, X}$, to bound the above term uniformly over all the possible choices of $\mathbf{B}$.
    Using the triangle inequality, we have,
    \begin{align} \label{BiasSecondTermBound}
        \left|\E\left[\left(\dfrac{1}{\mathfrak{a}^*} - \dfrac{1}{\mathfrak{a}_X}\right) \zeta_{\mathbf{B}, X}\right]\right| & \leq \left|\left(\dfrac{1}{\mathfrak{a}^*} - \dfrac{1}{\E\left[\mathfrak{a}_X\right]}\right)\E\left[ \zeta_{\mathbf{B}, X}\right]\right| + \left|\E\left[ \left(\dfrac{1}{\E\left[\mathfrak{a}_X\right]} - \dfrac{1}{\mathfrak{a}_X}\right)\zeta_{\mathbf{B}, X}\right]\right|
    \end{align}
    We further rewrite the first term by remarking that $\mathfrak{a}_X \geq 1$ almost surely, and similarly $\mathfrak{a}^*$, we get:
    \begin{align}\label{BiasBoundSecondTermFirstTerm}
        \left|\left(\dfrac{1}{\mathfrak{a}^*} - \dfrac{1}{\E\left[\mathfrak{a}_X\right]}\right)\E\left[ \zeta_{\mathbf{B}, X}\right]\right| &\leq |\mathfrak{a}^* - \E\left[\mathfrak{a}_X\right]| \dfrac{\E\left[\zeta_{\mathbf{B}, X}\right]}{\mathfrak{a}^*} \eqsp. 
    \end{align}
    Now, observe that $|\E\left[\zeta_{\mathbf{B}, X}\right]|$ may be explicitly controlled by,
    \begin{align}
        |\E\left[\zeta_{\mathbf{B}, X}\right]/ \mathfrak{a}^*| &= \left|\E\left[\tr\left(\dfrac{\Sigma_X}{\mathfrak{a}^*}\detequiv_X^{\mathfrak{a}^*}(\mathbf{D})\mathbf{B} \resolvent_X^-(\mathbf{D})\1_{\msa_\eta}(X)\right)\right]\right| \\
        &\leq \dfrac{\sqrt{d}}{\eta + \uplambda_d(\mathbf{D})} \eqsp.  \label{eq:BoundExpectationZeta}
    \end{align}
    Where we have used the fact that $\Sigma_X \detequiv_X^{\mathfrak{a}^*}(\lambda)/\mathfrak{a}^* \preceq \Idd$. Secondly, the bias of $\mathfrak{a}_X$ can be bounded using \eqref{ShermanMorison} as,
    \begin{align}
        |\mathfrak{a}^* - \E\left[\mathfrak{a}_X\right]| &= \left|\dfrac{1}{n}\tr\left(\Sigma_X \E\left[\resolvent_X^{-}(\mathbf{D}) - \resolvent_X(\mathbf{D})\right]\right)\right| = \dfrac{1}{n^2} \tr\left(\Sigma_X \E\left[\dfrac{1}{\mathfrak{a}_X} \resolvent_X^-(\mathbf{D}) X_1 X_1^\top \resolvent_X^{-}(\mathbf{D})\right]\right) \\
        &\leq \dfrac{1}{n^2} \tr\left(\Sigma_X \E\left[ \resolvent_X^-(\mathbf{D}) X_1 X_1^\top \resolvent_X^{-}(\mathbf{D})\right]\right) \leq \dfrac{1}{n^2} \E\left[\tr\left((\Sigma_X \resolvent_X^{-}(\mathbf{D}))^2\right)\right]\\
        &\leq \dfrac{d\|\Sigma_X\|_\op^2}{(\eta + \uplambda_d(\mathbf{D}))^2 n^2} \eqsp.
    \end{align}
    Which implies the following bound on the first term in \eqref{BiasSecondTermBound},
    \begin{equation}\label{BiasSecondTermBoundFirstTerm}
        \left|\left(\dfrac{1}{\mathfrak{a}^*} - \dfrac{1}{\E\left[\mathfrak{a}_X\right]}\right)\E\left[ \zeta_{\mathbf{B}, X}\right]\right| \lesssim \dfrac{d^{3/2} \|\Sigma_X\|_\op^2}{(\eta + \uplambda_d(\mathbf{D}))^3 n^{2}}\eqsp. 
    \end{equation}
    
    Now dealing with the second term in \eqref{BiasSecondTermBound}, using the Cauchy-Schwarz inequality, as well as $\E[\mathfrak{a}_X] \geq 1$, we have,
    \begin{align}
        \left|\E\left[ \left(\dfrac{1}{\E\left[\mathfrak{a}_X\right]} - \dfrac{1}{\mathfrak{a}_X}\right)\zeta_{\mathbf{B}, X}\right]\right| &= \left|\E\left[(\mathfrak{a}_X - \E\left[\mathfrak{a}_X\right])\dfrac{\zeta_{\mathbf{B}, X}}{\mathfrak{a}_X \E\left[\mathfrak{a}_X\right]}\right] \right| \leq \sqrt{\Var\left(\mathfrak{a}_X\right) \Var\left(\zeta_{\mathbf{B}, X} / \mathfrak{a}_X\right)} 
    \end{align}
    We write $\zeta_{\mathbf{B}, X}/ \mathfrak{a}_X = (\zeta_{\mathbf{B}, X} - \E\left[\zeta_{\mathbf{B}, X}\right]) / \mathfrak{a}_X + \E\left[\zeta_{\mathbf{B}, X}\right] / \mathfrak{a}_X$, and using $\Var(a + b) \leq 2 \Var(a) + 2 \Var(b)$, we have,
    \begin{align}
        \Var\left(\zeta_{\mathbf{B}, X} / \mathfrak{a}_X\right) &\leq 2 \Var\left((\zeta_{\mathbf{B}, X} - \E\left[\zeta_{\mathbf{B}, X}\right]) / \mathfrak{a}_X\right) + 2 \Var\left(\E\left[\zeta_{\mathbf{B}, X}\right] / \mathfrak{a}_X\right) \\
        &\leq 2 \E \left[\left((\zeta_{\mathbf{B}, X} - \E\left[\zeta_{\mathbf{B}, X}\right]) / \mathfrak{a}_X\right)^2 \right] + 2\E\left[\zeta_{\mathbf{B}, X}\right]^2 \Var\left( \mathfrak{a}_X^{-1}\right)  \\
        &\leq 2 \E \left[\left(\zeta_{\mathbf{B}, X} - \E\left[\zeta_{\mathbf{B}, X}\right]\right)^2 \right] + 2\E\left[\zeta_{\mathbf{B}, X}\right]^2 \Var\left( \mathfrak{a}_X^{-1}\right) \\
        &\leq 2\Var(\zeta_{\mathbf{B}, X}) + 2\E\left[\zeta_{\mathbf{B}, X}\right]^2 \Var(\mathfrak{a}_X^{-1})\eqsp.
    \end{align}
    Furthermore, $\Var(\mathfrak{a}_X^{-1}) = \inf_m \E\left[(\mathfrak{a}_X^{-1} - m)^2\right] \leq \E\left[(\mathfrak{a}_X^{-1} - \E\left[\mathfrak{a}_X\right]^{-1})^2\right] \leq \Var(\mathfrak{a}_X)$, which follows from $\mathfrak{a}_X \geq 1$ again, we get,
    \begin{equation}
        \Var\left(\zeta_{\mathbf{B}, X} / \mathfrak{a}_X\right) \leq 2 \Var\left(\zeta_{\mathbf{B}, X}\right) + 2 \E\left[\zeta_{\mathbf{B}, X}\right]^2 \Var(\mathfrak{a}_X)\eqsp,
    \end{equation}
    which results in,
    \begin{equation} \label{BiasSecondTermBoundSecondTerm_AlmostThere}
        \E\left[ \left(\dfrac{1}{\E\left[\mathfrak{a}_X\right]} - \dfrac{1}{\mathfrak{a}_X}\right)\zeta_{\mathbf{B}, X}\right]^2 \leq 2 \Var\left(\mathfrak{a}_X\right)\Var\left(\zeta_{\mathbf{B}, X}\right) + 2 \E\left[\zeta_{\mathbf{B}, X}\right]^2 \Var\left(\mathfrak{a}_X\right)^2
    \end{equation}
    
    We conclude by bounding $\Var(\mathfrak{a}_X)$ and $\Var(\zeta_{\mathbf{B}, X})$, using \Cref{prop:varOfRandomQuadForms} and the Lipschitz property of $\mathbf{X} \mapsto \resolvent_\mathbf{X}^-(\mathbf{D})$ on $\msa_\eta$ (which results from \Cref{lem:lip_resolvent}),  we have,
    \begin{align}
        \Var\left(\mathfrak{a}_X\right) &= \dfrac{1}{n^2}\Var\left(X_1^\top \resolvent_X^-(\mathbf{D}) X_1 \1_{\msa_\eta}(X)\right) \lesssim  \dfrac{d \|\Sigma_X\|_\op^2}{n^2} \left\{\dfrac{1}{(\eta + \uplambda_d(\mathbf{D}))^3n} + \dfrac{1 + \cX^{-1}}{(\eta + \uplambda_d(\mathbf{D}))^2}\right\}   \\
        & = \dfrac{d \|\Sigma_X\|_\op^2}{n^2 (\eta + \uplambda_d(\mathbf{D}))^3} \left\{\dfrac{1}{n} + \left(1 + \dfrac{1}{\cX}\right)(\eta + \uplambda_d(\mathbf{D}))\right\}\eqsp,
    \end{align}
    similarly,
    \begin{align}
        \Var\left(\zeta_{\mathbf{D}, X}\right) &\lesssim X_1^\top \detequiv^{\mathfrak{a}^*}_X(\mathbf{D}) \mathbf{B} \resolvent_X^-(\mathbf{D}) X_1 \1_{\msa_\eta}(X) \\ 
        &\lesssim \dfrac{\|\Sigma_X\|_\op^2\|\detequiv^{\mathfrak{a}^*}_X(\mathbf{D})\|_\op^2}{(\eta + \uplambda_d(\mathbf{D}))^3}\left\{\dfrac{1}{n} + \left(1 + \dfrac{1}{\cX}\right)(\eta + \uplambda_d(\mathbf{D}))\right\} \eqsp,
    \end{align}
    it results from the two previous upper bounds, as well as \eqref{eq:BoundExpectationZeta} and \eqref{BiasSecondTermBoundSecondTerm_AlmostThere}, that
    \begin{align} \label{BiasSecondTermBoundSecondTerm}
        \E\left[\left(\dfrac{1}{\E\left[\mathfrak{a}_X\right]} - \dfrac{1}{\mathfrak{a}_X}\right) \zeta_{\mathbf{B}, X}\right] &\lesssim \dfrac{\sqrt{d}\|\Sigma_X\|_\op^2\|\detequiv^{\mathfrak{a}^*}_X(\mathbf{D})\|_\op}{n(\eta + \uplambda_d(\mathbf{D}))^3}\left\{\dfrac{1}{n} + \left(1 + \dfrac{1}{\cX}\right)(\eta + \uplambda_d(\mathbf{D}))\right\} \\
        &\quad + \dfrac{d\|\Sigma_X\|_\op^2 \E\left[\zeta_{\mathbf{B}, X}\right]}{n^2 (\eta + \uplambda_d(\mathbf{D}))^3} \left\{\dfrac{1}{n} + \left(1 + \dfrac{1}{\cX} (\eta + \uplambda_d(\mathbf{D}))\right)\right\} \\
        &\lesssim \dfrac{\sqrt{d}\|\Sigma_X\|_\op^2\|\detequiv^{\mathfrak{a}^*}_X(\mathbf{D})\|_\op}{n(\eta + \uplambda_d(\mathbf{D}))^3}\left\{\dfrac{1}{n} + \left(1 + \dfrac{1}{\cX}\right)(\eta + \uplambda_d(\mathbf{D}))\right\} \\
        &\quad + \dfrac{\mathfrak{a}^* d^{3/2}\|\Sigma_X\|_\op^2}{n^2 (\eta + \uplambda_d(\mathbf{D}))^4} \left\{\dfrac{1}{n} + \left(1 + \dfrac{1}{\cX} (\eta + \uplambda_d(\mathbf{D}))\right)\right\} \eqsp.
    \end{align} 
    Finally, we bound $\|\detequiv_X^{\mathfrak{a}^*}(\mathbf{D})\|_\op$ and $\mathfrak{a}^*$ by writing 
    \begin{align}
        \mathfrak{a}^* &= 1 + \dfrac{1}{n}\tr\left(\Sigma_X \E\left[\resolvent_X(\mathbf{D}) \1_{\msa_\eta}(X)\right]\right) \leq 1 + \dfrac{d\|\Sigma_X\|_\op}{n(\eta + \uplambda_d(\mathbf{D}))} \eqsp,
    \end{align}
    and 
    \begin{align} \label{boundNormDetEquiv}
        \|\detequiv_X^{\mathfrak{a}^*}(\mathbf{D})\|_\op \leq \left(\dfrac{\lambda_1(\Sigma_X)}{\mathfrak{a}^*} + \lambda_1(\mathbf{D})\right)^{-1} \leq \dfrac{\mathfrak{a}}{\lambda_1(\Sigma_X)} \leq \dfrac{1 + \dfrac{d}{n} \|\Sigma_X\|_\op (\eta + \uplambda_d(\mathbf{D}))^{-1}}{\lambda_1(\Sigma_X)}\eqsp.
    \end{align}
    We conclude from \eqref{BiasSecondTermBoundSecondTerm},
    \begin{align} \label{BiasSecondTermBoundSecondTermFinalAlmost}
        \E\left[ \left(\dfrac{1}{\E\left[\mathfrak{a}_X\right]} - \dfrac{1}{\mathfrak{a}_X}\right)\zeta_{\mathbf{B}, X}\right]^2 &\lesssim \dfrac{\sqrt{d}\|\Sigma_X\|_\op^2}{n \uplambda_d(\Sigma_X) (\eta + \uplambda_d(\mathbf{D}))^3}\left\{\dfrac{1}{n} + \left(1 + \dfrac{1}{\cX}\right)(\eta + \uplambda_d(\mathbf{D}))\right\} \\
        &\quad +  \dfrac{d^{3/2}\|\Sigma_X\|_\op^3}{n^2 \uplambda_d(\Sigma_X) (\eta + \uplambda_d(\mathbf{D}))^4}\left\{\dfrac{1}{n} + \left(1 + \dfrac{1}{\cX}\right)(\eta + \uplambda_d(\mathbf{D}))\right\} \\ 
        &\quad + \dfrac{ d^{3/2}\|\Sigma_X\|_\op^2}{n^2 (\eta + \uplambda_d(\mathbf{D}))^4} \left\{\dfrac{1}{n} + \left(1 + \dfrac{1}{\cX} (\eta + \uplambda_d(\mathbf{D}))\right)\right\} \\
        &\quad + \dfrac{ d^{5/2}\|\Sigma_X\|_\op^3}{n^6 (\eta + \uplambda_d(\mathbf{D}))^5} \left\{\dfrac{1}{n} + \left(1 + \dfrac{1}{\cX} (\eta + \uplambda_d(\mathbf{D}))\right)\right\} \eqsp.  
    \end{align}
    We slighly simplify the previous upper bound by remarking that \Cref{ass:ProbaSmallEigenvalues} implies that $d < n$, hence:
    \begin{equation}
        \dfrac{d^{5/2}}{n^3}\leq \dfrac{d^{3/2}}{n^2} \leq \dfrac{\sqrt{d}}{n} \eqsp,
    \end{equation}  
    Using this in \eqref{BiasSecondTermBoundSecondTermFinalAlmost}, we obtain,
    \begin{align}
        \E\left[ \left(\dfrac{1}{\E\left[\mathfrak{a}_X\right]} - \dfrac{1}{\mathfrak{a}_X}\right)\zeta_{\mathbf{B}, X}\right]^2 &\lesssim \dfrac{\sqrt{d} \|\Sigma_X\|_\op^3}{n \uplambda_d(\Sigma_X)(\eta + \uplambda_d(\mathbf{D}))^5} \left\{\dfrac{(\eta + \uplambda_d(\mathbf{D}))^2}{n \|\Sigma_X\|_\op} + \left(1 + \dfrac{1}{\cX}\right) \dfrac{(\eta + \uplambda_d(\mathbf{D}))^3}{\|\Sigma_X\|_\op}\right\} \\
        &\quad + \dfrac{\sqrt{d} \|\Sigma_X\|_\op^3}{n \uplambda_d(\Sigma_X)(\eta + \uplambda_d(\mathbf{D}))^5} \left\{\dfrac{\eta + \uplambda_d(\mathbf{D})}{n} + \left(1 + \dfrac{1}{\cX}\right) (\eta + \uplambda_d(\mathbf{D}))^2\right\} \\
        &\quad + \dfrac{\sqrt{d} \|\Sigma_X\|_\op^3}{n \uplambda_d(\Sigma_X)(\eta + \uplambda_d(\mathbf{D}))^5} \left\{\dfrac{\uplambda_d(\Sigma_X)(\eta + \uplambda_d(\mathbf{D}))}{n\|\Sigma_X\|_\op} + \left(1 + \dfrac{1}{\cX}\right) \dfrac{\uplambda_d(\Sigma_X)(\eta + \uplambda_d(\mathbf{D}))^2}{\|\Sigma_X\|_\op}\right\} \\
        &\quad + \dfrac{\sqrt{d} \|\Sigma_X\|_\op^3}{n \uplambda_d(\Sigma_X)(\eta + \uplambda_d(\mathbf{D}))^5} \left\{\dfrac{\uplambda_d(\mathbf{D})}{n} + \left(1 + \dfrac{1}{\cX}\right) \uplambda_d(\Sigma_X)(\eta + \uplambda_d(\mathbf{D}))\right\} \eqsp.
    \end{align}
    Defining the polynomial $P_1$ as:
    \begin{equation}
        P_1(X, Y, Z) = X^3 Z + X^2 + Y X^2 Z + Y X \eqsp, \quad p_1 = P_1(\eta + \uplambda_d(\mathbf{D}), \uplambda_d(\Sigma_X), \|\Sigma_X\|_\op^{-1}) \eqsp,
    \end{equation}
    we can rewrite the previous upper bound as,
    \begin{align} \label{BiasSecondTermBoundSecondTermFinal}
        &\E\left[ \left(\dfrac{1}{\E\left[\mathfrak{a}_X\right]} - \dfrac{1}{\mathfrak{a}_X}\right)\zeta_{\mathbf{B}, X}\right]^2 \\
        &\quad \lesssim \dfrac{\sqrt{d} \|\Sigma_X\|_\op^3}{n \uplambda_d(\Sigma_X)(\eta + \uplambda_d(\mathbf{D}))^5}p_1 \left(\dfrac{1}{n (\eta + \uplambda_d(\mathbf{D}))} + 1 + \dfrac{1}{\cX}\right) \eqsp.
    \end{align}
    Plugging \eqref{BiasSecondTermBoundFirstTerm} and \eqref{BiasSecondTermBoundSecondTermFinal} in \eqref{BiasSecondTermBound}, we get,
    \begin{align}
        &\left\|\E\left[ \left(\dfrac{1}{\mathfrak{a}^*} - \dfrac{1}{\mathfrak{a}_X}\right) \resolvent_X^-(\mathbf{D}) X_1 X_1^\top \detequiv_X^{\mathfrak{a}^*}(\mathbf{D})\1_{\msa_\eta}(X)\right]\right\|_\Frob  \\
        &\quad \lesssim \dfrac{d^{3/2} \|\Sigma_X\|_\op^2}{(\eta + \uplambda_d(\mathbf{D}))^{3} n^2} + \dfrac{\sqrt{d} \|\Sigma_X\|_\op^3}{n \uplambda_d(\Sigma_X)(\eta + \uplambda_d(\mathbf{D}))^5}p_1\left(\dfrac{1}{n (\eta + \uplambda_d(\mathbf{D}))} + 1 + \dfrac{1}{\cX}\right)  \\
        &\quad \lesssim \dfrac{\sqrt{d} \|\Sigma_X\|_\op^3}{n \uplambda_d(\Sigma_X)(\eta + \uplambda_d(\mathbf{D}))^5} \left\{p_1 \left(\dfrac{1}{n (\eta + \uplambda_d(\mathbf{D}))} + 1 + \dfrac{1}{\cX}\right) + \dfrac{\uplambda_d(\Sigma_X) (\eta + \uplambda_d(\mathbf{D}))^2 }{\|\Sigma_X\|_\op }\right\}
    \end{align} 
    Where, once again, we have used the fact that $d < n$ from \Cref{ass:ProbaSmallEigenvalues}. 

    Finally, from \eqref{AlmostThere}, we have,
    \begin{align} \label{FirstDeterministicEquivQuantBound}
        &\left\|\E\left[\left\{\resolvent_X(\mathbf{D}) - \detequiv^{\mathfrak{a}^*}_X(\mathbf{D})\right\}\1_{\msa_\eta}(X)\right] \right\|_\Frob \\
        &\quad  \lesssim \dfrac{\sqrt{d} \|\Sigma_X\|_\op }{(\eta + \uplambda_d(\mathbf{D}))^{2} n} + \dfrac{\sqrt{d} \|\Sigma_X\|_\op^3}{n \uplambda_d(\Sigma_X)(\eta + \uplambda_d(\mathbf{D}))^5} \left\{p_1 \left(\dfrac{1}{n (\eta + \uplambda_d(\mathbf{D}))} + 1 + \dfrac{1}{\cX}\right) + \dfrac{\uplambda_d(\Sigma_X) (\eta + \uplambda_d(\mathbf{D}))^2 }{\|\Sigma_X\|_\op }\right\} \\
        &\quad \lesssim \dfrac{\sqrt{d} \|\Sigma_X\|_\op^3}{n \uplambda_d(\Sigma_X)(\eta + \uplambda_d(\mathbf{D}))^6} \Biggl\{ \dfrac{(\eta + \uplambda_d(\mathbf{D})) p_1}{n} + (\eta + \uplambda_d(\mathbf{D})) p_1\left(1 + \dfrac{1}{\cX}\right) \\
        & \hspace{4.5cm} + \dfrac{\uplambda_d(\Sigma_X) (\eta + \uplambda_d(\mathbf{D}))^3 }{\|\Sigma_X\|_\op } + \dfrac{\uplambda_d(\Sigma_X)(\eta + \uplambda_d(\mathbf{D})^4)}{\|\Sigma_X\|_\op^2}\Biggr\}\eqsp.
    \end{align}

    Defining $Q$ as,
    \begin{align} \label{eq:defQ}
        Q(X, Y, Z, U, V) = X V P(X, Y, Z) +  \left( 1 + U \right) X  P(X, Y, Z) + YX^3 Z + Y X^4 Y^2 \eqsp,   
    \end{align}
    We have shown that:
    \begin{align} \label{eq:FirstDeterministicEquivAppendix}
        \left\|\E\left[\left\{\resolvent_X(\mathbf{D}) - \detequiv^{\mathfrak{a}^*}_X(\mathbf{D})\right\}\1_{\msa_\eta}(X)\right] \right\|_\Frob  \leq \dfrac{\sqrt{d} \|\Sigma_X\|_\op^3}{n \uplambda_d(\Sigma_X)(\eta + \uplambda_d(\mathbf{D}))^6} Q(\eta + \uplambda_d(\mathbf{D}), \uplambda_d(\Sigma_X), \|\Sigma_X\|_\op^{-1}, \cX^{-1}, n^{-1}) \eqsp,
    \end{align}
    This concludes the proof for the first deterministic equivalent.

    \textbf{Second equivalent for $\E\left[\resolvent_X(\mathbf{D}) \1_\mse(X)\right]$}

    Now, we show that:
    \begin{align}
        &\left\|\E\left[\left\{\resolvent_X(\mathbf{D}) - \detequiv^{\mathfrak{b}^*}_X(\mathbf{D})\right\}\1_{\msa_\eta}(X)\right] \right\|_\Frob \\
        &\quad \lesssim  \left(1 + \dfrac{d \|\Sigma_X\|_\op}{n\uplambda_d(\Sigma_X)}\right) \left(\dfrac{q\sqrt{d} \|\Sigma_X\|_\op^3}{n \uplambda_d(\Sigma_X) (\eta + \uplambda_d(\mathbf{D}))^6} + \left(\dfrac{d \|\Sigma_X\|_\op}{n} + \dfrac{d^2 \|\Sigma_X\|_\op^2}{(\eta + \uplambda_d(\mathbf{D})) n^2}\right) \e^{-\cX n}\right) \eqsp,
    \end{align}
    where $\mathfrak{b}^*$ is defined as the only positive solution to equation $\mathfrak{b} = 1 + n^{-1}\tr\left(\Sigma_X \detequiv_X^{\mathfrak{b}}(\mathbf{D})\right)$. Recall the definition of $f_\mathbf{D}$, for any $\mathfrak{b} \in [1, \infty)$,
    \begin{equation}
        f_\mathbf{D}(\mathfrak{b}) = 1 + n^{-1}\tr\left(\Sigma_X \detequiv_X^{\mathfrak{b}}(\mathbf{D})\right)\eqsp.
    \end{equation}
    For notation simplicity, we introduce $q \in \R$ defined as:
    \begin{equation}
        q = Q(\eta + \uplambda_d(\mathbf{D}), \uplambda_d(\Sigma_X), \|\Sigma_X\|_\op^{-1}, \cX^{-1}, n^{-1}) \eqsp,
    \end{equation}
    where $Q$ is defined in \eqref{eq:defQ}. 

    First, using \eqref{eq:FirstDeterministicEquivAppendix}, we have,
    \begin{align}\label{SecondDeterministicEquivFirstBound}
        &\left\|\E\left[\left\{\resolvent_X(\mathbf{D}) - \detequiv^{\mathfrak{b}^*}_X(\mathbf{D})\right\}\1_{{\msa_\eta}}(X)\right] \right\|_\Frob  \\
        &\quad \leq \left\|\E\left[\left\{\resolvent_X(\mathbf{D}) - \detequiv^{\mathfrak{a}^*}_X(\mathbf{D})\right\}\1_{{\msa_\eta}}(X)\right] \right\|_\Frob + \left\|\E\left[\left\{\detequiv^{\mathfrak{a}^*}_X(\mathbf{D}) - \detequiv^{\mathfrak{b}^*}_X(\mathbf{D})\right\}\1_{\msa_\eta}(X)\right] \right\|_\Frob \\
        &\quad \lesssim  \dfrac{q \sqrt{d} \|\Sigma_X\|_\op^3}{n \uplambda_d(\Sigma_X)(\eta + \uplambda_d(\mathbf{D}))^6}  +  \left\|\E\left[\left\{\detequiv^{\mathfrak{a}^*}_X(\mathbf{D}) - \detequiv^{\mathfrak{b}^*}_X(\mathbf{D})\right\}\1_{\msa_\eta}(X)\right] \right\|_\Frob \eqsp,
    \end{align}
    Then, using the identity $\mathbf{A}^{-1} - \mathbf{B}^{-1} = \mathbf{A}^{-1}(\mathbf{B} - \mathbf{A})\mathbf{B}^{-1}$ we deduce
    \begin{align} \label{SecondDeterministicEquivAlmostThere}
        \left\|\E\left[\left\{\detequiv^{\mathfrak{a}^*}_X(\mathbf{D}) - \detequiv^{\mathfrak{b}^*}_X(\mathbf{D})\right\}\1_{\msa_\eta}(X)\right] \right\|_\Frob &= \left|\dfrac{1}{\mathfrak{a^*}} - \dfrac{1}{\mathfrak{b^*}}\right|\Prob(X \in \msa_\eta)\left\|\detequiv^{\mathfrak{a}^*}_X(\mathbf{D})\Sigma_X\detequiv^{\mathfrak{b}^*}_X(\mathbf{D}) \right\|_\Frob \\
        &\lesssim \left|\mathfrak{a}^* - \mathfrak{b}^*\right|\|\detequiv_X^{\mathfrak{a}^*}(\mathbf{D})\|_\Frob / \mathfrak{a}^* \\
        & \lesssim \left|\mathfrak{a}^* - \mathfrak{b}^*\right| \dfrac{\sqrt{d}}{\uplambda_d(\Sigma_X)} \eqsp,
    \end{align}
    Furthermore, we remark that $\mathfrak{a}^*$ is almost a fixed point of $f_\mathbf{D}$ from the first deterministic equivalent, indeed, 
    \begin{align}
        |\mathfrak{a}^* - f_\mathbf{D}(\mathfrak{a}^*)| &= n^{-1} \left|\tr\left(\Sigma_X \left\{\E\left[\resolvent_X(\mathbf{D}) \1_{\msa_\eta}(X)\right] - \detequiv_X^{\mathfrak{a}^*}(\mathbf{D})\right\}\right)\right| \\
        &\leq \dfrac{\sqrt{d} \|\Sigma_X\|_\op}{n} \left\|\E\left[\resolvent_X(\mathbf{D}) \1_{\msa_\eta}(X)\right] - \detequiv_X^{\mathfrak{a}^*}(\mathbf{D})\right\|_\Frob \\
        &\leq \dfrac{\sqrt{d} \|\Sigma_X\|_\op}{n} \left\|\E\left[\resolvent_X(\mathbf{D})  - \detequiv_X^{\mathfrak{a}^*}(\mathbf{D})\1_{\msa_\eta}(X)\right]\right\|_\Frob + \dfrac{d \|\Sigma_X\|_\op\|\detequiv_X^{\mathfrak{a}^*}(\mathbf{D})\|_\op}{n}(1 - \Prob(X \in \msa_\eta)) \\
    \end{align}
    Recalling \eqref{boundNormDetEquiv}, and using \cref{ass:ProbaSmallEigenvalues}, we have,
    \begin{align}
        \dfrac{d \|\Sigma_X\|_\op\|\detequiv_X^{\mathfrak{a}^*}(\mathbf{D})\|_\op}{n}(1 - \Prob(X \in \msa_\eta)) \leq \left(\dfrac{d \|\Sigma_X\|_\op}{n} + \dfrac{d^2\|\Sigma_X\|_\op^2 }{(\eta + \uplambda_d(\mathbf{D}))n^2} \right)\rme^{-\cX n}\eqsp,
    \end{align}
    Now, using equation \eqref{eq:FirstDeterministicEquivAppendix}, we have,
    \begin{align} \label{aIsAlmostAFixedPoint}
        \left|\mathfrak{a}^* - f_\mathbf{D}(\mathfrak{a}^*)\right| &\lesssim \dfrac{q \sqrt{d} \|\Sigma_X\|_\op^3}{n \uplambda_d(\Sigma_X)(\eta + \uplambda_d(\mathbf{D}))^6} + \left(\dfrac{d \|\Sigma_X\|_\op}{n} + \dfrac{d^2\|\Sigma_X\|_\op^2 }{(\eta + \uplambda_d(\mathbf{D}))n^2} \right)\rme^{-\cX n}\eqsp.
    \end{align}
    Furthermore, we show that $f_\mathbf{D}$ is a contraction mapping around $\mathfrak{b}^*$, indeed we have for any $\mathfrak{b} \in [1, \infty)$,
    \begin{align}
        \left|f_\mathbf{D}(\mathfrak{b}) - f_\mathbf{D}(\mathfrak{b}^*)\right| &= \dfrac{1}{n}\left|\tr\left(\Sigma_X\left\{\detequiv_X^{\mathfrak{b}}(\mathbf{D}) - \detequiv_X^{\mathfrak{b}^*}(\mathbf{D})\right\}\right)\right| = \dfrac{1}{n}\left|\tr\left(\Sigma_X\detequiv_X^{\mathfrak{b}}(\mathbf{D}) \Sigma_X \detequiv_X^{\mathfrak{b}^*}(\mathbf{D})\right)\right|\left|\dfrac{1}{\mathfrak{b}} - \dfrac{1}{\mathfrak{b}^*} \right| \\
        &= \dfrac{1}{n}\tr\left(\left\{\Idd - \mathbf{D} \detequiv_X^{\mathbf{b}}(\mathbf{D}) \right\}\dfrac{\Sigma_X}{\mathfrak{b}^*} \detequiv_X^{\mathfrak{b}^*}(\mathbf{D})\right)\left|\mathfrak{b} - \mathfrak{b}^* \right|\\
        &\leq \dfrac{1}{n\mathfrak{b}^*}\tr\left( \Sigma_X \detequiv_X^{\mathfrak{b}^*}(\mathbf{D})\right)\left|\mathfrak{b} - \mathfrak{b}^* \right| \\
        &\leq \dfrac{f_\mathbf{D}(\mathfrak{b}^*) - 1}{f_\mathbf{D}(\mathfrak{b}^*)} |\mathfrak{b} - \mathfrak{b}^*| = \dfrac{\mathfrak{b}^*- 1}{\mathfrak{b}^*} |\mathfrak{b} - \mathfrak{b}^*|\eqsp,
    \end{align}
    where in the last line, we have used the fact that $\mathfrak{b}^*$ is the only fixed point of $f_\mathbf{D}$. We conclude that $f_{\mathbf{D}}$ is contractive around $\mathfrak{b}^*$. We conclude on the distance between $\mathfrak{a}^*$ and $\mathfrak{b}^*$ by writing,
    \begin{equation}
        |\mathfrak{a}^* - \mathfrak{b}^*| \leq |\mathfrak{a}^* - f_\mathbf{D}(\mathfrak{a}^*)| + |f_\mathbf{D}(\mathfrak{a}^*) - f_\mathbf{D}(\mathfrak{b}^*)| \leq |\mathfrak{a}^* - f_\mathbf{D}(\mathfrak{a}^*)| + \dfrac{\mathfrak{b}^* - 1}{\mathfrak{b}^*}|\mathfrak{a}^* - \mathfrak{b}^*| \eqsp,
    \end{equation}
    which implies,
    \begin{equation}
        |\mathfrak{a}^* - \mathfrak{b}^*| \leq \mathfrak{b}^*|\mathfrak{a}^* - f_\mathbf{D}(\mathfrak{a}^*)| \eqsp.
    \end{equation}
    To conclude the proof, we need to bound $\mathfrak{b}^*$. To this end, write 
    \begin{align}
        \mathfrak{b}^* &= 1 + \dfrac{1}{n} \tr\left(\Sigma_X \detequiv_X^{\mathfrak{b}^*}(\mathbf{D})\right) \leq 1 + \dfrac{1}{n}\tr\left(\Sigma_X \detequiv_X^1(\mathbf{D})\right) \leq 1 + \dfrac{d\|\Sigma_X\|_\op}{n \uplambda_d(\Sigma_X)}\eqsp,
    \end{align}
    which followed from $\detequiv_X^{\mathfrak{b}^*}(\mathbf{D}) \preceq \detequiv_X^1(\mathbf{D})$, we obtain from \eqref{SecondDeterministicEquivAlmostThere},
    \begin{align}
        &\left\|\E\left[\left\{\detequiv^{\mathfrak{a}^*}_X(\mathbf{D}) - \detequiv^{\mathfrak{b}^*}_X(\mathbf{D})\right\}\1_{\mse}(X)\right] \right\|_\Frob \\
        &\quad \lesssim \left(1 + \dfrac{d\|\Sigma_X\|_\op}{n \uplambda_d(\Sigma_X)}\right) \left(\dfrac{q \sqrt{d} \|\Sigma_X\|_\op^3}{n \uplambda_d(\Sigma_X)(\eta + \uplambda_d(\mathbf{D}))^6} + \left(\dfrac{d \|\Sigma_X\|_\op}{n} + \dfrac{d^2\|\Sigma_X\|_\op^2 }{(\eta + \uplambda_d(\mathbf{D}))n^2} \right)\rme^{-\cX n}\right)
    \end{align}
    Finally, from \eqref{SecondDeterministicEquivFirstBound}, we have,
    \begin{align}
        &\left\|\E\left[\left\{\resolvent_X(\mathbf{D}) - \detequiv^{\mathfrak{b}^*}_X(\mathbf{D})\right\}\1_{\mse}(X)\right] \right\|_\Frob \\
        &\quad \lesssim \left(1 + \dfrac{d\|\Sigma_X\|_\op}{n \uplambda_d(\Sigma_X)}\right) \left(\dfrac{q \sqrt{d} \|\Sigma_X\|_\op^3}{n \uplambda_d(\Sigma_X)(\eta + \uplambda_d(\mathbf{D}))^6} + \left(\dfrac{d \|\Sigma_X\|_\op}{n} + \dfrac{d^2\|\Sigma_X\|_\op^2 }{(\eta + \uplambda_d(\mathbf{D}))n^2} \right)\rme^{-\cX n}\right)\eqsp.
    \end{align}
    This concludes the proof.
\end{proof}

\subsection{Conclusion on the proof of \cref{thm:NonAugmentedRidgeErrorEstimation}} \label{subsec:ConclusionProofEstimNonAugmented}

We leverage \cref{prop:Non-augmented_deterministic_equiv}, to prove that $\hat{\mathcal{E}}_X(\lambda)$ approximates $\mathcal{E}_X(\lambda)$. In all this proof, we set $X \in \R^{d \times n}$ and $\eta > 0$, such that \cref{ass:X_Lipschitz_Concentrated_Best} and \cref{ass:ProbaSmallEigenvalues} are satisfied. We first recall,
\begin{equation}
    \mathcal{E}_X(\lambda) = \|\resolvent_X(\lambda) - \Sigma_X\|_\Frob^2 \eqsp, \quad \text{for } \lambda > 0 \eqsp,
\end{equation}
and
\begin{equation}
    \hat{\mathcal{E}}_X(\lambda) := \dfrac{1}{d}\tr\left(\resolvent_X(\lambda)^2\right) - \dfrac{2(1-d/n)}{\lambda d} \tr\left(\resolvent_X(0)\right)\1_{\msa_\eta}(X) + \dfrac{2}{\lambda \mathfrak{b}(\lambda)d} \tr\left(\resolvent_X(\lambda)\right) + \dfrac{1}{d}\tr\left(\Sigma_X^{-2}\right) \eqsp,
\end{equation}
where
\begin{equation}
    \mathfrak{b}(\lambda) := \dfrac{1}{1 - d/n + \lambda / n \tr\left(\resolvent_X(\lambda)\right)}\eqsp. 
\end{equation}

We will write $\Delta \mathcal{E}_X(\lambda) = \hat{\mathcal{E}}_X(\lambda) - \mathcal{E}_X(\lambda)$, and we remark that,
\begin{equation}
    \Delta \mathcal{E}_X(\lambda) = - \dfrac{2(1-d/n)}{\lambda d} \tr\left(\resolvent_X(0)\right)\1_{\msa_\eta}(X) + \dfrac{2}{\lambda \mathfrak{b}(\lambda)d} \tr\left(\resolvent_X(\lambda)\right) - \dfrac{1}{d}\tr\left(\Sigma_X^{-1} \resolvent_X(\lambda)\right) \eqsp.
\end{equation}

We derive the claimed concentration bound by applying \cref{prop:Non-augmented_deterministic_equiv}, first noting that $\Delta \mathcal{E}_X(\lambda)$ is close to the following quantity with high probability,
\begin{align}
    \overline{\Delta \mathcal{E}_X}(\lambda) &= - \dfrac{2(1-d/n)}{\lambda d} \tr\left(\resolvent_X(0)\right)\1_{\msa_\eta}(X) + \dfrac{2}{\lambda \mathfrak{b}(\lambda)d} \tr\left(\resolvent_X(\lambda)\right)\1_{\msa_\eta}(X) - \dfrac{1}{d}\tr\left(\Sigma_X^{-1} \resolvent_X(\lambda)\right)\1_{\msa_\eta}(X)
\end{align} 
indeed, we have for all $\epsilon > 0$,
\begin{align}
    \Prob\left(\left|\Delta \mathcal{E}_X(\lambda) - \overline{\Delta \mathcal{E}_X}(\lambda)\right| \geq \epsilon\right) \leq \Prob\left(X \notin \msa_\eta\right) \lesssim \rme^{-\cX n} \eqsp.
\end{align}
Furthermore, we show that each term in $\overline{\Delta \mathcal{E}_X}(\lambda)$ concentrates around its expectation. We have,
\begin{align}
    \left|\overline{\Delta \mathcal{E}_X}(\lambda) - \E\left[\overline{\Delta \mathcal{E}_X}(\lambda)\right]\right| &\leq \dfrac{2(1-d/n)}{\lambda d} \left|\tr\left(\resolvent_X(0)\right)\1_{\msa_\eta}(X) - \E\left[\tr\left(\resolvent_X(0)\right)\1_{\msa_\eta}(X)\right]\right| \\
    &\quad + \dfrac{2\tr\left(\resolvent_X(\lambda)\right)}{\lambda d} \left|\dfrac{1}{\mathfrak{b}(\lambda)} - \E\left[\dfrac{1}{\mathfrak{b}(\lambda)}\right]\right| \\
    &\quad + \E\left[\dfrac{2}{\lambda \mathfrak{b}(\lambda)d}\right]\left|\tr\left(\resolvent_X(\lambda)\right)\1_{\msa_\eta}(X) - \E\left[\tr\left(\resolvent_X(\lambda)\right)\1_{\msa_\eta}(X)\right]\right| \\
    &\quad + \left|\dfrac{1}{d}\tr\left(\Sigma_X^{-1} \resolvent_X(\lambda)\right)\1_{\msa_\eta}(X) - \E\left[\dfrac{1}{d}\tr\left(\Sigma_X^{-1} \resolvent_X(\lambda)\right)\1_{\msa_\eta}(X)\right]\right|
\end{align}

Remarking that $\eta > 0$ implies that $d < n$ hence, $0 \leq 1 - d/n \leq 1$, as well as $\resolvent_X(\lambda) \preceq \lambda^{-1} \Idd$ and $\mathfrak{b}(\lambda) \geq 1$, we bound the various multiplicative term in the previous inequation as,
\begin{align}
    \left|\overline{\Delta \mathcal{E}_X}(\lambda) - \E\left[\overline{\Delta \mathcal{E}_X}(\lambda)\right]\right| & \leq \dfrac{2}{\lambda d} \left|\tr\left(\resolvent_X(0)\right)\1_{\msa_\eta}(X) - \E\left[\tr\left(\resolvent_X(0)\right)\1_{\msa_\eta}(X)\right]\right| \\
    &\quad + 2 \lambda \dfrac{1}{n}\left|\tr\left(\resolvent_X(\lambda) - \E\left[\tr\left(\resolvent_X(\lambda)\right)\right]\right) \right| + \dfrac{2}{\lambda d}\left|\tr\left(\resolvent_X(\lambda)\right) - \E\left[\tr\left(\resolvent_X(\lambda)\right)\right]\right| \\
    &\quad + \left|\dfrac{1}{d}\tr\left(\Sigma_X^{-1} \resolvent_X(\lambda)\right) - \E\left[\dfrac{1}{d}\tr\left(\Sigma_X^{-1} \resolvent_X(\lambda)\right)\right]\right| \\
    &\leq \dfrac{2}{\lambda d} \left|\tr\left(\resolvent_X(0)\right)\1_{\msa_\eta}(X) - \E\left[\tr\left(\resolvent_X(0)\right)\1_{\msa_\eta}(X)\right]\right| \\
    &\quad + 2 \left\{ \lambda  + \dfrac{1}{\lambda}\right\}\dfrac{1}{d}\left|\tr\left(\resolvent_X(\lambda)\1_{\msa_\eta}(X) - \E\left[\tr\left(\resolvent_X(\lambda)\right)\1_{\msa_\eta}(X)\right]\right) \right| \\
    &\quad + \dfrac{1}{d}\left|\tr\left(\Sigma_X^{-1} \resolvent_X(\lambda)\right)\1_{\msa_\eta}(X) - \E\left[\tr\left(\Sigma_X^{-1} \resolvent_X(\lambda)\right)\1_{\msa_\eta}(X)\right]\right| \\
\end{align}

Hence, from a union bound argument, we have,

\begin{align} \label{eq:ConcentrationDeltaE_NonAugmented_UnionBound}
    \Prob\left(\left|\overline{\Delta \mathcal{E}_X}(\lambda) - \E\left[\overline{\Delta \mathcal{E}_X}(\lambda)\right]\right|  \geq t\right) &\leq \Prob\left( \dfrac{1}{d}\left|\tr\left(\resolvent_X(0)\1_{\msa_\eta}(X) - \E\left[\resolvent_X(0)\1_{\msa_\eta}(X)\right]\right)\right| \geq \dfrac{\lambda t}{6}\right) \\
    &\quad + \Prob\left( \dfrac{1}{d}\left|\tr\left(\resolvent_X(\lambda)\1_{\msa_\eta}(X) - \E\left[\resolvent_X(\lambda)\1_{\msa_\eta}(X)\right]\right)\right| \geq \dfrac{ t}{6(\lambda + \lambda^{-1})}\right) \\
    &\quad + \Prob\left(\dfrac{1}{d}\left|\tr\left(\Sigma_X^{-1} \{\resolvent_X(\lambda)\1_{\msa_\eta}(X) - \E\left[\resolvent_X(\lambda)\1_{\msa_\eta}(X)\right]\}\right) \right| \geq \dfrac{t}{3}\right) \eqsp.
\end{align}
We now control each of these term, by using the concentration statement of \cref{prop:Non-augmented_deterministic_equiv} every time. We denote,
\begin{align}
    \xi_X(t) = \dfrac{\cX \eta^3 n t^2}{\max\{\|\Sigma_X\|_\op, 1\}^2 (\eta + \cX / d)} \eqsp.
\end{align}
Then, \Cref{prop:Non-augmented_deterministic_equiv} and \eqref{eq:ConcentrationDeltaE_NonAugmented_UnionBound} implies that,
\begin{align}
    \Prob\left(\left|\overline{\Delta \mathcal{E}_X}(\lambda) - \E\left[\overline{\Delta \mathcal{E}_X}(\lambda)\right]\right|  \geq t\right) \lesssim \e^{-k\xi(\lambda t /6)} + \e^{-k\xi(t /(6(\lambda + \lambda^{-1})))} + \e^{-k\xi(t/3)}\eqsp,
\end{align}
for a universal constant $k > 0$. Now remarking that $\xi(\lambda t /6) \geq \xi(t /(6(\lambda + \lambda^{-1})))$, we have,
\begin{align}
    \Prob\left(\left|\overline{\Delta \mathcal{E}_X}(\lambda) - \E\left[\overline{\Delta \mathcal{E}_X}(\lambda)\right]\right| \geq t\right) \lesssim \e^{k'\xi(t \min\{1/(\lambda + \lambda^{-1}), 1\})}\eqsp,
\end{align} 
for a universal constant $k'$. And, we conclude,
\begin{align} \label{eq:ConcentrationDeltaE_NonAugmented}
    \Prob\left(\left|\Delta \mathcal{E}_X(\lambda) - \E\left[\overline{\Delta \mathcal{E}_X}(\lambda)\right]\right| > t\right) \lesssim \exp\left(-k' \dfrac{\cX \eta^3 n \min\{1, \lambda + \frac{1}{\lambda} \}^2 t^2}{\max \{\|\Sigma_X\|_\op, 1\}(\eta + \cX / d)\}}\right) + \exp\left(-\cX n\right) \eqsp.
\end{align}

We now bound $\E\left[\overline{\Delta \mathcal{E}_X}(\lambda)\right]$, to this end, write, 
\begin{align}
    \E\left[\overline{\Delta \mathcal{E}_X}(\lambda)\right] &= - \dfrac{2(1-d/n)}{\lambda d} \tr\left(\E\left[\resolvent_X(0)\1_{\msa_\eta}(X)\right]\right) + \E\left[\dfrac{2}{\lambda \mathfrak{b}(\lambda)d} \tr\left(\resolvent_X(\lambda)\right)\1_{\msa_\eta}(X)\right] - \dfrac{1}{d}\tr\left(\Sigma_X^{-1} \E\left[\resolvent_X(\lambda)\1_{\msa_\eta}(X)\right]\right) \\
    &= - \dfrac{2(1-d/n)}{\lambda d} \tr\left(\E\left[\resolvent_X(0)\1_{\msa_\eta}(X)\right]\right) + \E\left[\dfrac{2}{\lambda \mathfrak{b}(\lambda)d}\right] \tr\left(\left[\resolvent_X(\lambda)\1_{\msa_\eta}(X)\right]\right) - \dfrac{1}{d}\tr\left(\Sigma_X^{-1} \E\left[\resolvent_X(\lambda)\1_{\msa_\eta}(X)\right]\right) \\
    &\quad + \dfrac{2}{\lambda} \Cov\left(\dfrac{1}{\mathfrak{b}(\lambda)} , \dfrac{1}{d}\tr\left(\resolvent_X(\lambda)\right)\1_{\msa_\eta}(X)\right) \\
    &\lesssim \left\{- \dfrac{2(1-d/n)}{\lambda d} \tr\left(\detequiv_X^{\mathfrak{b}^*(0)}(0)\right) + \dfrac{2}{\lambda \mathfrak{b}^*(\lambda)d} \tr\left(\detequiv_X^{\mathfrak{b}^*(\lambda)}(\lambda)\right) - \dfrac{1}{d}\tr\left(\Sigma_X^{-1} \detequiv_X^{\mathfrak{b}^*(\lambda)}(\lambda)\right)\right\} \Prob\left(X \in \msa_\eta\right) \\
    &\quad + \dfrac{2}{\lambda} \Cov\left(\dfrac{1}{\mathfrak{b}(\lambda)} , \dfrac{1}{d}\tr\left(\resolvent_X(\lambda)\right)\1_{\msa_\eta}(X)\right) \\
    & \quad + \left(1 + \dfrac{d\|\Sigma_X\|_\op}{n \uplambda_1(\Sigma_X)}\right) \left(\dfrac{q \sqrt{d} \|\Sigma_X\|_\op^3}{n \uplambda_1(\Sigma_X)\eta^6} + \left(\dfrac{d \|\Sigma_X\|_\op}{n} + \dfrac{d^2\|\Sigma_X\|_\op^2}{\eta n^2} \right)\rme^{-\cX n}\right)
\end{align}
where the last line followed from applying the second deterministic equivalent presented in \Cref{prop:Non-augmented_deterministic_equiv}.

Recalling the definition of $\mathfrak{b}(\lambda)$, we have,
\begin{align}
    \dfrac{2}{\lambda} \Cov\left(\dfrac{1}{\mathfrak{b}(\lambda)} , \dfrac{1}{d}\tr\left(\resolvent_X(\lambda)\right)\right) &= \dfrac{2\lambda d/n}{\lambda} \Cov\left( \dfrac{1}{d}\tr\left(\resolvent_X(\lambda)\right) , \dfrac{1}{d}\tr\left(\resolvent_X(\lambda)\right)\right) \\
    &\leq 2 \Var\left(\dfrac{1}{d}\tr\left(\resolvent_X(\lambda)\right)\right) \eqsp,
\end{align}
which is controlled using \cref{ass:X_Lipschitz_Concentrated}, from the fact that $\mathbf{X} \mapsto d^{-1}\tr\left(\resolvent_\mathbf{X}(\lambda)\right)$ is $2 \lambda^{-3/2}n^{-1/2}d^{1/2}$-lipschitz (from \cref{lem:lip_resolvent}), hence \cref{ass:X_Lipschitz_Concentrated} ensures that $d^{-1} \tr\left(\resolvent_X(\lambda)\right)$ is sub Gaussian, and has variance bounded as,
\begin{align}
    \Var\left(\dfrac{1}{d}\tr\left(\resolvent_X(\lambda)\right)\right) \lesssim \dfrac{1}{\lambda^{3}n d}\eqsp,
\end{align}  
which implies,
\begin{align}
    \E\left[\overline{\Delta \mathcal{E}_X}(\lambda)\right] &\lesssim \left\{- \dfrac{2(1-d/n)}{\lambda d} \tr\left(\detequiv_X^{\mathfrak{b}^*(0)}(0)\right) + \dfrac{2}{\lambda \mathfrak{b}^*(\lambda)d} \tr\left(\detequiv_X^{\mathfrak{b}^*(\lambda)}(\lambda)\right) - \dfrac{1}{d}\tr\left(\Sigma_X^{-1} \detequiv_X^{\mathfrak{b}^*(\lambda)}(\lambda)\right)\right\} \Prob\left(X \in \msa_\eta\right) \\
    &\quad + \left(1 + \dfrac{d\|\Sigma_X\|_\op}{n \uplambda_1(\Sigma_X)}\right) \left(\dfrac{q \sqrt{d} \|\Sigma_X\|_\op^3}{n \uplambda_1(\Sigma_X)\eta^6} + \left(\dfrac{d \|\Sigma_X\|_\op}{n} + \dfrac{d^2\|\Sigma_X\|_\op^2}{\eta n^2} \right)\rme^{-\cX n}\right) + \dfrac{1}{\lambda^3 n d} \eqsp.
\end{align}

Now, we remark that by definition $\mathfrak{b}^*(0)$ is the unique fixed point of $f_0(\mathfrak{b}) = 1 + \mathfrak{b} d/n$, which gives, $\mathfrak{b}^*(0) = (1- d/n)^{-1}$, hence,
\begin{align}
    \E\left[\overline{\Delta \mathcal{E}_X}(\lambda)\right] &\leq \left\{- \dfrac{2}{\lambda d} \tr\left(\Sigma_X^{-1}\right) + \dfrac{2}{\lambda \mathfrak{b}^*(\lambda)d} \tr\left(\detequiv_X^{\mathfrak{b}^*(\lambda)}(\lambda)\right) - \dfrac{1}{d}\tr\left(\Sigma_X^{-1} \detequiv_X^{\mathfrak{b}^*(\lambda)}(\lambda)\right) \right\} \Prob\left(X \in \msa_\eta\right) \\
    &\quad + \left(1 + \dfrac{d\|\Sigma_X\|_\op}{n \uplambda_1(\Sigma_X)}\right) \left(\dfrac{q \sqrt{d} \|\Sigma_X\|_\op^3}{n \uplambda_1(\Sigma_X)\eta^6} + \left(\dfrac{d \|\Sigma_X\|_\op}{n} + \dfrac{d^2\|\Sigma_X\|_\op^2}{\eta n^2} \right)\rme^{-\cX n}\right) + \dfrac{1}{\lambda^3 n d} \eqsp,
\end{align}
Finally, using the identity $\mathbf{A}^{-1} - \mathbf{B}^{-1} = \mathbf{A}^{-1}\{\mathbf{B} - \mathbf{A}\}\mathbf{B}^{-1}$, we get,
\begin{equation}
    \dfrac{1}{\mathfrak{b}^*(\lambda)} \detequiv_X^{\mathfrak{b}^*(\lambda)}(\lambda) - \Sigma_X^{-1} = \lambda \detequiv_X^{\mathfrak{b}^*(\lambda)}(\lambda) \Sigma_X^{-1} \eqsp,
\end{equation}
we thus conclude on the bias of $\hat{\mathcal{E}}_X(\lambda)$ as, 
\begin{equation}
    \E\left[\overline{\Delta \mathcal{E}_X}(\lambda)\right] \lesssim \underbrace{\left(1 + \dfrac{d\|\Sigma_X\|_\op}{n \uplambda_1(\Sigma_X)}\right) \left(\dfrac{q \sqrt{d} \|\Sigma_X\|_\op^3}{n \uplambda_1(\Sigma_X)\eta^6} + \left(\dfrac{d \|\Sigma_X\|_\op}{n} + \dfrac{d^2\|\Sigma_X\|_\op^2}{\eta n^2} \right)\rme^{-\cX n}\right) + \dfrac{1}{\lambda^3 n d}}_{B(n)} \eqsp.
\end{equation}
and, merging the previous equation with \eqref{eq:ConcentrationDeltaE_NonAugmented}, we get a universal constant $k$, and the function $B(n)$ defined above,
\begin{align}
    \Prob\left(\left|\Delta \mathcal{E}_X(\lambda)\right| > t + B(n)\right) &\leq \Prob\left(\left|\Delta \mathcal{E}_X(\lambda) \right| > t + \E\left[\overline{\Delta \mathcal{E}_X}(\lambda)\right]|\right) \leq \Prob\left(\left|\Delta \mathcal{E}_X(\lambda) - \E\left[\overline{\Delta \mathcal{E}_X}(\lambda)\right]\right| \geq t\right) \\
    &\lesssim \exp\left(-k \dfrac{\cX \eta^3 n \min\{1, \lambda + \frac{1}{\lambda} \}^2 t^2}{\max \{\|\Sigma_X\|_\op, 1\}(\eta + \cX / d)\}}\right) + \exp\left(-\cX n\right) \eqsp.
\end{align}

which terminates the proof of \cref{thm:NonAugmentedRidgeErrorEstimation}. 

\section{A deterministic equivalent for resolvent matrices of augmented sample covariances} \label{Appendix C}

In this section we generalize \Cref{prop:Non-augmented_deterministic_equiv} to the setting where a
non-negligible proportion of the dataset is produced by a data-augmentation scheme. We consider the
augmented dataset $[X_1,\ldots,X_n,G_1,\ldots,G_m]$, where the two blocks
$[X_1,\ldots,X_n]$ and $[G_1,\ldots,G_m]$ satisfy \Cref{ass:ProbaSmallEigenvalues}–\Cref{ass:X_Lipschitz_Concentrated_Best}.
The proof follows the non-augmented case presented in \Cref{Appendix B}. 

We begin with the following technical lemma, which will be used to derive concentration for the
augmented resolvent matrix $\resolvent_\Aug(\mathbf D)$.

\begin{lemma} \label{lemExpectfXGcondX_is_Lispchitz}
    Assume $X$ and $\nu_X$ satisfy \Cref{ass:X_Lipschitz_Concentrated_Best} and \Cref{ass:Smoothness}, let $f : \R^{d\times n} \times \R^{d\times m} \to \R$ be a $\Ltt_f$-Lispchitz function on $\msa_\eta \times \R^{d\times m}$. Then for any $\mathbf{X}, \mathbf{Y} \in \msa_\eta$,
    \begin{equation}
        \left| \int f(\bfX, g) \nu_{\bfX}^{\otimes m}(\rmd g) - \int f(\bfY,g) \nu_{\bfY}^{\otimes m}(\rmd g)\right| \leq \Ltt_f \left(1+ \sqrt{m}\Ltt_G\right) \|\mathbf{X} - \mathbf{Y}\|_{\Frob} \eqsp.
    \end{equation}
    i.e, $\mathbf{X} \mapsto \int f(\mathbf{X}, g) d \nu_{\mathbf{X}}^{\otimes m}(dg)$ is $\Ltt_f (1+\Ltt_G)$ Lispchitz on $\msa_\eta$.
\end{lemma}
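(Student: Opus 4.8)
The plan is a standard two-term decomposition: insert the intermediate quantity $\int f(\mathbf{Y},g)\,\nu_{\mathbf{X}}^{\otimes m}(\rmd g)$ so that the difference to be controlled splits into one term measuring the variation of $f$ in its first argument at fixed noise, and one term measuring the variation of the pushforward measure $\nu_{\mathbf{X}}^{\otimes m}$. Concretely, fixing $\mathbf{X},\mathbf{Y}\in\msa_\eta$, I would write
\[
  \Bigl|\!\int\! f(\mathbf{X},g)\,\nu_{\mathbf{X}}^{\otimes m}(\rmd g) - \!\int\! f(\mathbf{Y},g)\,\nu_{\mathbf{Y}}^{\otimes m}(\rmd g)\Bigr|
  \le \!\int\! \bigl|f(\mathbf{X},g)-f(\mathbf{Y},g)\bigr|\,\nu_{\mathbf{X}}^{\otimes m}(\rmd g) + \Bigl|\!\int\! f(\mathbf{Y},g)\,(\nu_{\mathbf{X}}^{\otimes m}-\nu_{\mathbf{Y}}^{\otimes m})(\rmd g)\Bigr|.
\]

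First I would bound the first summand. Since the product space $\msa_\eta\times\R^{d\times m}$ carries the Frobenius norm of the direct sum, the points $(\mathbf{X},g)$ and $(\mathbf{Y},g)$ are at distance $\|\mathbf{X}-\mathbf{Y}\|_\Frob$, so the $\Ltt_f$-Lipschitz property of $f$ gives $|f(\mathbf{X},g)-f(\mathbf{Y},g)|\le\Ltt_f\|\mathbf{X}-\mathbf{Y}\|_\Frob$ pointwise in $g$; integrating against the probability measure $\nu_{\mathbf{X}}^{\otimes m}$ leaves this bound unchanged. For the second summand, I would note that the section $g\mapsto f(\mathbf{Y},g)$ is $\Ltt_f$-Lipschitz on $\R^{d\times m}$ (again because moving only in the $g$-coordinate costs exactly $\|g-g'\|_\Frob$), and then invoke Kantorovich--Rubinstein duality to get $\le\Ltt_f\,W_1(\nu_{\mathbf{X}}^{\otimes m},\nu_{\mathbf{Y}}^{\otimes m})$; by \Cref{ass:Smoothness} this is at most $\Ltt_f\sqrt{m}\,\Ltt_G\,\|\mathbf{X}-\mathbf{Y}\|_\Frob$. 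Adding the two contributions yields the claimed constant $\Ltt_f(1+\sqrt{m}\,\Ltt_G)$.

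There is no genuinely hard step here; the only points that need care are (i) checking that the two integrals are finite so that the decomposition is meaningful --- this follows from $|f(\mathbf{X},g)|\le|f(\mathbf{X},0)|+\Ltt_f\|g\|_\Frob$ together with the finite first moment of $\nu_{\mathbf{X}}$ implied by the Lipschitz-concentration hypothesis on $G$ in \Cref{ass:G_Lipschitz_Concentrated_cond_X}, and from the finiteness of $W_1(\nu_{\mathbf{X}}^{\otimes m},\nu_{\mathbf{Y}}^{\otimes m})$ granted by \Cref{ass:Smoothness}; and (ii) applying Kantorovich--Rubinstein with the Lipschitz constant of the restricted section $g\mapsto f(\mathbf{Y},g)$ rather than of $f$ itself. \Cref{ass:X_Lipschitz_Concentrated_Best} enters only through the standing setting of this section (it makes $X\in\msa_\eta$ a meaningful event) and is not used in the inequality itself.
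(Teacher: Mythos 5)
Your proof is correct and follows essentially the same route as the paper: the same triangle-inequality decomposition through the intermediate term $\int f(\mathbf{Y},g)\,\nu_{\mathbf{X}}^{\otimes m}(\rmd g)$, bounding the first piece by the Lipschitz property of $f$ in its first argument and the second via Kantorovich--Rubinstein together with \Cref{ass:Smoothness}. The additional remarks on integrability are fine but not needed for the argument.
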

\begin{proof}
    Using \Cref{ass:X_Lipschitz_Concentrated} and \Cref{ass:Smoothness}, we have
       \begin{align}
            &\left| \int f(\bfX, g) \nu_{\bfX}^{\otimes m}(\rmd g) - \int f(\bfY,g) \nu_{\bfY}^{\otimes m}(\rmd g)\right|\\
            &\leq \left| \int f(\bfX, g) \nu_{\bfX}^{\otimes m}(\rmd g) - \int f(\bfY,g) \nu_{\bfX}^{\otimes m}(\rmd g)\right| +  \left| \int f(\bfY, g) \nu_{\bfX}^{\otimes m}(\rmd g) - \int f(\bfY,g) \nu_{\bfY}^{\otimes m}(\rmd g)\right| \\
            & \leq \Ltt_f\left\|\mathbf{X} - \mathbf{Y}\right\|_{\Frob} + \Ltt_f W_1(\nu_\mathbf{X}^{\otimes m} , \nu_\mathbf{Y}^{\otimes m}) \\
            &\leq  \Ltt_f\left\|\mathbf{X} - \mathbf{Y}\right\|_{\Frob} + \Ltt_f \sqrt{m} \Ltt_G \|\mathbf{X} - \mathbf{Y}\|_\Frob \\
            &= \Ltt_f \left(1+ \sqrt{m}\Ltt_G\right)\left\|\mathbf{X} - \mathbf{Y}\right\|_{\Frob} \eqsp.
       \end{align} 
       Where the last upper bound followed from \cref{ass:Smoothness}.
\end{proof}

We further recall the following notation for any positive semi-definite matrix $\mathbf{D} \in \R^{d\times d}$, and two dilation factors $\mathfrak{a}_x$ and $\mathfrak{a}_g$. 
\begin{align}
    \resolvent_{\Aug}(\mathbf{D}) := \left(C_{\Aug} + \mathbf{D}\right)^{-1} = \left((1-\alpha) C_X + \alpha C_G + \mathbf{D}\right)^{-1} \eqsp,
\end{align}
and,
\begin{align}
    \detequiv_\Aug^{(\mathfrak{a}_x, \mathfrak{a}_g)}(\mathbf{D}) := \left(\dfrac{1 - (1-\beta/\mathfrak{a}_g)\alpha}{\mathfrak{a}_x} \Sigma_X + \dfrac{\alpha}{\mathfrak{a}_g}\bar{\Lambda}_G + \mathbf{D}\right)^{-1} \eqsp,
\end{align}
where $\bar{\Lambda}_G = \E\left[\Lambda_G(X)\right]$.

We prove the following result,

\begin{theorem} \label{thm:Deterministic_equivalent_Augmented}
    Assume that \cref{ass:ProbaSmallEigenvalues} to \Cref{ass:X_Lipschitz_Concentrated_Best} hold. Let $\mathbf{B} \in \R^{d\times d}$, and let $\mathbf{D}$ be a positive semi-definite matrix. We define $(\mathfrak{a}_x^*, \mathfrak{a}_g^*)$ as,
    \begin{equation}
        \mathfrak{a}_x^* = 1 + \dfrac{1 - (1-\beta / \mathfrak{a}_g^*) \alpha}{n} \tr\left(\Sigma_X \E\left[\resolvent_{\Aug}(\mathbf{D}) \1_{\msa_\eta}(X)\right]\right) \eqsp, \quad \mathfrak{a}_g^* = 1 + \dfrac{ \alpha}{m} \tr\left(\E\left[\left\{\beta C_X + \Lambda_G(X)\right\} \resolvent_{\Aug}(\mathbf{D}) \1_{\msa_\eta}(X)\right]\right) \eqsp,
    \end{equation}
    Then, we have for a universal constant $k$, 
    \begin{align}
        &\Prob\left(\left|\dfrac{1}{d}\tr\left(\mathbf{B} \left\{\resolvent_{\Aug}(\mathbf{D})\1_{\msa_\eta}(X) - \E\left[\resolvent_{\Aug}(\mathbf{D})\1_{\msa_\eta}(X) \right]\right\}\right)\right| \geq t \right) \\
        &\quad \leq 2\exp\left(-k\dfrac{ n(\eta + \uplambda_d(\mathbf{D}))^3 t^2}{\alpha (1+\sqrt{m} \Ltt_G)^2 /d + (1-\alpha)\sigma_G^2 / d +  (\eta + \uplambda_d(\mathbf{D}))\sigma_G^2 } \right) \eqsp.
    \end{align}
    And,
        \begin{align}
        &\left\|\E\left[\left\{\resolvent_{\Aug}(\mathbf{D}) - \detequiv_{\Aug}^{(\mathfrak{a}_x^*, \mathfrak{a}_g^*)}(\mathbf{D})\right\} \1_{\msa_\eta}(X)\right]\right\|_\Frob \\
        &\quad \lesssim \dfrac{\alpha^5(\kappa q_1 + q_2) \sqrt{d} \left\{\sigma_G^{2} (\beta^3 \|\Sigma_X\|_\op^3 + \kappa^3) + \sigma_X^{12}\|\Sigma_X\|_\op \uplambda_d(\Sigma_X)^{-1} n^{-1/2} d^{-1} \right\}}{n  ((1-\alpha)\eta + \uplambda_d(\mathbf{D}))^6} \\
        &\qquad + \dfrac{\alpha \beta (1-\alpha')^{-1} + 1}{(1-\alpha)\eta + \uplambda_d(\mathbf{D})} \dfrac{\alpha \beta \sqrt{d} \|\Sigma_X\|_\op}{(\eta + \uplambda_d(\mathbf{D}))^{3/2}m}\left\{\dfrac{\sqrt{d}}{\sqrt{n+m}} (1+u(n)) + \dfrac{1}{\sqrt{n+m}} + \sqrt{\alpha} \Ltt_G + (1+\cX^{-1/2}) \sqrt{\eta + \uplambda_d(\mathbf{D})}\right\} \\
        &\qquad + \dfrac{\alpha \beta (1-\alpha')^{-1} + 1}{(1-\alpha)\eta + \uplambda_d(\mathbf{D})}\dfrac{\alpha \sqrt{d}\|\bar{\Lambda}_G\|_\op}{(\eta + \uplambda_d(\mathbf{D}))^{3/2} m} \left(\dfrac{ \E\left[\|\Lambda_G(X) - \bar{\Lambda}_G\|_\Frob\right]\sqrt{\eta + \uplambda_d(\mathbf{D})}}{\|\bar{\Lambda}_G\|_\op} + \sqrt{\alpha} \Ltt_G + \dfrac{1 + \sqrt{\eta + \uplambda_d(\mathbf{D})}}{\sqrt{n}}\right) \\
        &\qquad + \dfrac{\E\left[\|\Lambda_G(X) - \bar{\Lambda}_G\|_\Frob\right]}{((1-\alpha)\eta + \uplambda_d(\mathbf{D}))^2} +  \dfrac{1}{1-\alpha'} \dfrac{q_3 \sqrt{d} \|\Sigma_X\|_\op^3}{n \uplambda_d(\Sigma_X)(\eta + \uplambda_d(\mathbf{D}) / (1-\alpha'))^6} 
    \end{align}
    where $\alpha' = \alpha - \alpha \beta / \mathfrak{a}_g^*$, $q_1, q_2$ and $q_3$ are polynomials in $\eta + \uplambda_d(\mathbf{D}), \uplambda_d(\Sigma_X), \|\Sigma_X\|_\op^{-1}, \cX^{-1}$, and $n^{-1}$.
\end{theorem}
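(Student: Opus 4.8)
The plan is to mirror the proof of \Cref{DeterministicEquivNonAugmented}, the new feature being that the second block $G$ is only conditionally independent of $X$ and must be peeled off through its own Sherman--Morrison step, which produces the second dilation factor $\mathfrak a_g$.

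For the concentration statement I would argue in two layers. Conditionally on $X$, the map $\mathbf G\mapsto \tfrac1d\tr(\mathbf B\resolvent_\Aug(\mathbf D))$ is Lipschitz: \Cref{lem:lip_resolvent} with $\mathbf D$ replaced by $(1-\alpha)C_X+\mathbf D\succeq\mathbf D$ gives a constant $\lesssim\sqrt\alpha\,\|\mathbf B\|_\op/\sqrt{(n+m)(\eta+\uplambda_d(\mathbf D))^3 d}$, and the map is bounded by $\|\mathbf B\|_\op^2/((\eta+\uplambda_d(\mathbf D))d)$; then \Cref{ass:G_Lipschitz_Concentrated_cond_X}-\ref{ass:G_Lipschitz_Concentrated_cond_X_i} yields conditional sub-Gaussian concentration around $\E[\,\cdot\mid X]$. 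For the outer layer, \Cref{lemExpectfXGcondX_is_Lispchitz}, combined with the $X$-Lipschitz bound for $\mathbf X\mapsto\resolvent_\Aug(\mathbf D)$ on $\msa_\eta$ (again \Cref{lem:lip_resolvent}) and \Cref{ass:Smoothness}, shows that $\mathbf X\mapsto\E[\tfrac1d\tr(\mathbf B\resolvent_\Aug(\mathbf D))\mid X=\mathbf X]$ is $\lesssim\|\mathbf B\|_\op(1+\sqrt m\Ltt_G)/\sqrt{(n+m)(\eta+\uplambda_d(\mathbf D))^3 d}$-Lipschitz on $\msa_\eta$. Combining both layers through the tower property and \Cref{prop:ConcentrationPropertyOfTruncatedLipschitzTransformsOfX} applied to the truncated conditional expectation gives the tail bound, the denominator being the sum of the two variance contributions.

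For the bias bound I would first record the two Sherman--Morrison identities: with $\resolvent_\Aug^{-x_1}$ (resp.\ $\resolvent_\Aug^{-g_1}$) the augmented resolvent with the first column of $X$ (resp.\ of $G$) zeroed, $\resolvent_\Aug(\mathbf D)X_1\1_{\msa_\eta}(X)=\mathfrak a_x^{-1}\resolvent_\Aug^{-x_1}(\mathbf D)X_1\1_{\msa_\eta}(X)$ and $\resolvent_\Aug(\mathbf D)G_1=\mathfrak a_g^{-1}\resolvent_\Aug^{-g_1}(\mathbf D)G_1$, with $\mathfrak a_x=1+\tfrac{1-\alpha}{n}X_1^\top\resolvent_\Aug^{-x_1}(\mathbf D)X_1$ and $\mathfrak a_g=1+\tfrac{\alpha}{m}G_1^\top\resolvent_\Aug^{-g_1}(\mathbf D)G_1$. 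Then, using $\mathbf A^{-1}-\mathbf B^{-1}=\mathbf A^{-1}(\mathbf B-\mathbf A)\mathbf B^{-1}$ to write $\resolvent_\Aug(\mathbf D)-\detequiv_\Aug^{(\mathfrak a_x^*,\mathfrak a_g^*)}(\mathbf D)=\resolvent_\Aug(\mathbf D)\{(\detequiv_\Aug^{(\mathfrak a_x^*,\mathfrak a_g^*)})^{-1}-(C_\Aug+\mathbf D)\}\detequiv_\Aug^{(\mathfrak a_x^*,\mathfrak a_g^*)}(\mathbf D)$, substituting $C_\Aug=(1-\alpha)C_X+\alpha C_G$, expanding both covariances into rank-one terms and applying the two identities, the factors $\mathfrak a_x,\mathfrak a_g$ are replaced by their fixed-point values $\mathfrak a_x^*,\mathfrak a_g^*$ up to their fluctuations; taking $\E[\,\cdot\mid X]$ turns the $G_1G_1^\top$ terms into $\E[C_G\mid X]=\beta C_X+\Lambda_G(X)$, the $\beta C_X$ part merging with the $X$-block to give the effective weight $\tfrac{1-\alpha'}{\mathfrak a_x^*}$ on $\Sigma_X$ with $\alpha'=\alpha(1-\beta/\mathfrak a_g^*)$, and $\Lambda_G(X)$ being swapped for $\bar\Lambda_G$. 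This exposes four error sources to be bounded in turn: (i) the variances of $\mathfrak a_x$ and $\mathfrak a_g$, via \Cref{prop:varOfRandomQuadForms} together with the Lipschitz property of $\mathbf X\mapsto\resolvent_\Aug^{-x_1}(\mathbf X^-\sqcup G)$ on $\msa_\eta$ and, on the $G$-side, conditional concentration from \Cref{ass:G_Lipschitz_Concentrated_cond_X}; (ii) the mismatch between $G$ generated from $X$ and from $X^-$, absorbed via $W_1(\nu_X^{\otimes m},\nu_{X^-}^{\otimes m})\le\sqrt m\,\Ktt$ from \Cref{ass:Stability}-\ref{ass:Stability_ii}; (iii) the replacement of $\Lambda_G(X)$ by $\bar\Lambda_G$, costing $\E\|\Lambda_G(X)-\bar\Lambda_G\|_\Frob/((1-\alpha)\eta+\uplambda_d(\mathbf D))^2$; and (iv) the residual non-augmented error, equal to $\tfrac1{1-\alpha'}$ times the bound of \Cref{DeterministicEquivNonAugmented} with $\mathbf D$ replaced by $\mathbf D/(1-\alpha')$ once the $\beta C_X$ contribution is absorbed into an effective regularization. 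Bounding all resolvent operator norms on $\msa_\eta$ by $((1-\alpha)\eta+\uplambda_d(\mathbf D))^{-1}$, using $\uplambda_d(\Lambda_G(X))\ge\kappa^{-1}$ and the crude bounds $\mathfrak a_x^*,\mathfrak a_g^*\ge1$, and folding constants into the polynomials $q_1,q_2,q_3$ then yields the stated inequality.

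The hard part will be the coupled two-dimensional fixed point: I must show that $(\mathfrak a_x^*,\mathfrak a_g^*)$ is well-defined with both coordinates at least $1$ and that $\resolvent_\Aug$ genuinely concentrates around $\detequiv_\Aug^{(\mathfrak a_x^*,\mathfrak a_g^*)}$, which entails running the two Sherman--Morrison expansions simultaneously and propagating the conditional ($G$-side) concentration through the $X$-side argument without circularity — in particular controlling the dependence of $\mathfrak a_x$ on $G$ and of $\mathfrak a_g$ on $X$, and checking a contraction estimate for the pair of defining equations analogous to the one established for $f_{\mathbf D}$ in the non-augmented proof. A secondary difficulty, already visible in the statement, is keeping every polynomial dependence explicit: the $\Lambda_G$-fluctuation term and the stability constant $\Ktt$ enter with their own powers of $(\eta+\uplambda_d(\mathbf D))$ and $n/m$ and cannot be absorbed into a single universal constant.
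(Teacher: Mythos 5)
Your concentration argument is essentially the paper's: condition on $X$, use the conditional Lipschitz concentration of $G$ together with \Cref{lem:lip_resolvent} for the inner layer, then apply \Cref{lemExpectfXGcondX_is_Lispchitz} and \Cref{prop:ConcentrationPropertyOfTruncatedLipschitzTransformsOfX} to the truncated conditional expectation for the outer layer, and combine the two MGF bounds. No issues there.

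For the bias bound, however, you take a genuinely different and riskier route, and the place you flag as the ``hard part'' reveals a misconception. In the statement (and in the paper's proof), $(\mathfrak{a}_x^*,\mathfrak{a}_g^*)$ are \emph{not} a coupled self-consistent system: $\mathfrak{a}_g^*$ is defined directly as an expectation of a functional of the true resolvent, and $\mathfrak{a}_x^*$ is then explicit given $\mathfrak{a}_g^*$. No well-posedness or contraction argument for a two-dimensional fixed point is needed, so planning the proof around ``running the two Sherman--Morrison expansions simultaneously and checking a contraction estimate for the pair of defining equations'' aims at the wrong target. The paper instead avoids any joint expansion through one structural identity you are missing: $\resolvent_{\Aug}(\mathbf{D})=\alpha^{-1}\resolvent_G\bigl(\alpha^{-1}(1-\alpha)C_X+\alpha^{-1}\mathbf{D}\bigr)$, i.e.\ conditionally on $X$ the augmented resolvent is a plain $G$-resolvent with an $X$-measurable regularization. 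This lets the paper apply \Cref{DeterministicEquivNonAugmented} \emph{conditionally on $X$} to replace $\alpha C_G$ by $\{\beta C_X+\Lambda_G(X)\}\,\alpha/\mathfrak{a}_g(X)$, then control the fluctuations of $\mathfrak{a}_g(X)$ and $\Lambda_G(X)$ around $\mathfrak{a}_g^*$ and $\bar{\Lambda}_G$ (this is where \Cref{ass:Smoothness}, \Cref{ass:Stability}, the Sherman--Morrison step on $X_1$, and \Cref{prop:varOfRandomQuadForms} enter), and finally apply \Cref{DeterministicEquivNonAugmented} a second time to the remaining $X$-resolvent with deterministic shift $\alpha\bar{\Lambda}_G/\mathfrak{a}_g^*$ and regularization rescaled by $(1-\alpha')^{-1}$ — which is exactly where the last error term with $(\eta+\uplambda_d(\mathbf{D})/(1-\alpha'))^{-6}$ comes from.

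Your alternative — expanding $\resolvent_{\Aug}-\detequiv_{\Aug}^{(\mathfrak{a}_x^*,\mathfrak{a}_g^*)}$ via $\mathbf{A}^{-1}-\mathbf{B}^{-1}$ into rank-one terms from both blocks and taking $\E[\,\cdot\mid X]$ — is not executable as written: the conditional expectation of $\mathfrak{a}_g^{-1}\resolvent_{\Aug}^{-g_1}G_1G_1^\top\detequiv$ does not factor into $\E[C_G\mid X]=\beta C_X+\Lambda_G(X)$ because $G_1$ also appears in $\mathfrak{a}_g$; decoupling requires the same deviation estimates you list under (i), but now for quadratic forms in $G_1$ whose matrices depend on $X$ and the remaining $G_j$'s, and the dependence of the whole block $G$ on every column of $X$ means that each leave-one-out step on $X_1$ must be paid for with a $W_1(\nu_X^{\otimes m},\nu_{X^-}^{\otimes m})$ term — exactly the bookkeeping the paper's sequential reduction organizes for you. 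So the proposal identifies the right ingredients (conditional concentration of $G$, stability in $W_1$, $\Lambda_G$-fluctuations, a residual non-augmented error), but the decomposition that makes the proof go through, and that produces the specific error terms in the statement, is absent, while the anticipated fixed-point/contraction analysis is unnecessary.
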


\begin{proof}
    Following the proof of \cref{prop:Non-augmented_deterministic_equiv}, we consider only the simpler case of $\sigma_X = 1$ (the general case readily follows by considering $X/\sigma_X$ and $G/\sigma_X$.) we prove the two statements of the theorem independantly. First focusing on the concentration of $d^{-1} \tr\left(\mathbf{B}\resolvent_\Aug(\mathbf{D})\right)$. 
    
    \textbf{Proof of the concentration inequality.}

    Let $\mathbf{B} \in \R^{d\times d}$ be any squared matrix, for notation simplicity we will denote $h_\mathbf{B} : \mathbf{X}, \mathbf{G} \mapsto d^{-1}\tr\left(\mathbf{B}\resolvent_{\mathbf{X} \sqcup \mathbf{G}}(\mathbf{D})\right)\1_{\msa_\eta}(\mathbf{X})$ as well as $\tilde{X} = X \sqcup G$ (for $\sqcup$ being the column-wise concatenation operator), so that we simply need to bound the cumulative probality function of $h_\mathbf{B}(\tilde{X}) - \E[h_\mathbf{B}(\tilde{X})]$. 
    To do so, we first bound its moment generating function, for any scalar $s \in \R$, 
    \begin{align}
        &\E\left[\exp\left(s\left\{h_\mathbf{B}(\tilde{X}) - \E\left[h_\mathbf{B}(\tilde{X})\right]\right\}\right)\right]\\[1mm]
        &\quad = \E\left[\exp\left(s\left\{h_\mathbf{B}(\tilde{X}) - \E\left[h_\mathbf{B}(\tilde{X}) \mid X \right]\right\}\right) \cdot \exp\left(s \left\{\E\left[h_\mathbf{B}(\tilde{X}) \mid X\right] - \E\left[h_\mathbf{B}(\tilde{X})\right]\right\}\right)\right] \\[1mm]
        &\quad = \E\left[\E\left[\exp\left(s\left\{h_\mathbf{B}(\tilde{X}) - \E\left[h_\mathbf{B}(\tilde{X}) \mid X \right]\right\}\right)\,\middle|\, X\right] \cdot \exp\left(s \left\{\E\left[h_\mathbf{B}(\tilde{X}) \mid X\right] - \E\left[h_\mathbf{B}(\tilde{X})\right]\right\}\right) \right]
    \end{align}
    Note that the random function $\mathbf{G} \mapsto h_\mathbf{B}(X \sqcup \mathbf{G})$ is almost surely $2\|\mathbf{B}\|_\op(n+m)^{-1/2}d^{-1/2} (\eta + \uplambda_d(\mathbf{D}))^{-3/2}$-Lispchitz on $\msa_\eta$ from \Cref{lem:lip_resolvent}, hence, relying on the $\sigma_G$-Lipschitz concentration property of $G$ conditionally to $X$, and applying \Cref{prop:ConcentrationPropertyOfTruncatedLipschitzTransformsOfX}, we get 
    \begin{align}
        \E\left[\exp\left(s\left\{h_\mathbf{B}(\tilde{X}) - \E\left[h_\mathbf{B}(\tilde{X}) \mid X \right]\right\}\right)\,\middle|\, X\right] &\leq \exp\left(-c s^2 \sigma_G^2 \left\{\dfrac{1}{(n+m)d(\eta + \uplambda_d(\mathbf{D}))^{3} }  + \|h_{\mathbf{B}}\|_\infty^2 \sigma_{\msa_\eta}^2 \right\} \right) \eqsp.
    \end{align}

    Finally, remarking that under \Cref{ass:ProbaSmallEigenvalues}, we have $\sigma_{\msa_\eta}^2 \lesssim n^{-1}$, as well as using $\|h_\mathbf{B}\|_\infty \lesssim (\eta + \uplambda_d(\mathbf{D}))^{-1}$, we have,
    \begin{align}
        \E\left[\exp\left(s\left\{h_\mathbf{B}(\tilde{X}) - \E\left[h_\mathbf{B}(\tilde{X}) \mid X \right]\right\}\right) \right] &\leq \exp\left(-c s^2 \sigma_G^2 \left\{ \dfrac{1}{(n+m) d (\eta + \uplambda_d(\mathbf{D}))^3} + \dfrac{1}{n (\eta + \uplambda_d(\mathbf{D}))^2} \right\}\right) \\
        &= \exp\left(-c s^2\dfrac{\sigma_G^2}{n (\eta + \uplambda_d(\mathbf{D}))^3} \left\{\dfrac{1-\alpha}{d} + \eta + \uplambda_d(\mathbf{D})\right\}\right)\eqsp.
    \end{align}

    Now, we will leverage \Cref{prop:ConcentrationPropertyOfTruncatedLipschitzTransformsOfX} to bound the remaining term, writting $\E\left[h_\mathbf{B}(X) \mid X\right] = g_\mathbf{B}(X)\1_{\msa_\eta}(X)$, where for any $\mathbf{X} \in {\msa_\eta}$,
    \begin{equation}
        g_\mathbf{B}(\mathbf{X}) = \int \dfrac{1}{d}\tr\left(\mathbf{B}R_{\mathbf{X} \sqcup g}(\mathbf{D})\right) d \nu_\mathbf{X}^{\otimes m}(g)\eqsp. 
    \end{equation}
    We know from \Cref{lemExpectfXGcondX_is_Lispchitz} that $g_\mathbf{B}$ is $\Ltt_{g_\mathbf{B}}$-Lispchitz on $\msa_\eta$, with $\Ltt_{g_\mathbf{B}} = 2(1+\sqrt{m}\Ltt_G) (n+m)^{-1/2} d^{-1/2}(\eta + \uplambda_d(\mathbf{D}))^{-3/2}$. Hence, using \Cref{prop:ConcentrationPropertyOfTruncatedLipschitzTransformsOfX}, we prove the existence of a numerical constant $c$, such the following bound holds,
    \begin{align}
       \E\left[\exp\left(s\left\{ g_\mathbf{B}(X)\1_{\msa_\eta}(X) - \E\left[g_\mathbf{B}(X)\1_{\msa_\eta}(X)\right]\right\}\right)\right] \leq \exp\left(-c_2s^2    \dfrac{(1 + \sqrt{m}\Ltt_G)^2}{d(n+m)(\eta + \uplambda_d(\mathbf{D}))^3}\right)\eqsp, 
    \end{align}
    Putting the previous bounds all together, we have shown that the moment generating function of $h_\mathbf{B}(X)$ is bounded for a universal constant $c$ by,
    \begin{align}
        &\E\left[\exp\left(s\left\{h_\mathbf{B}(\tilde{X}) - \E\left[h_\mathbf{B}(\tilde{X})\right]\right\}\right)\right] \\
        &\quad\leq \exp\left(-c s^2 \left\{ \dfrac{(1 + \sqrt{m}\Ltt_G)^2}{d(n+m)(\eta + \uplambda_d(\mathbf{D}))^3} + \dfrac{\sigma_G^2}{n (\eta + \uplambda_d(\mathbf{D}))^3} \left(\frac{1-\alpha}{d} + \eta + \uplambda_d(\mathbf{D})\right) \right\}\right) \\
        &\quad = \exp\left(-c s^2  \dfrac{1}{n (\eta + \uplambda_d(\mathbf{D}))^3} \left\{\dfrac{\alpha}{d} \left(1 + \sqrt{m}\Ltt_G\right)^2 + \dfrac{(1-\alpha)\sigma_X^2}{d} + \sigma_G(\eta + \uplambda_d(\mathbf{D}))\right\}\right) \eqsp.\\
    \end{align}
    Relying on the Chernoff's bound, and usual computations, the claimed concentration bound follows.

    \textbf{A first equivalent for $\E\left[\resolvent_{\Aug}(\mathbf{D}) \1_{\msa_\eta}(X)\right]$.} 
    
    We now focus on the bias of $\resolvent_{\Aug}(\mathbf{D}) \1_{\msa_\eta}(X)$. We will show that $\detequiv_{\Aug}^{(\mathfrak{a}_x^*, \mathfrak{a}_g^*)}(\mathbf{D})$ is close to $\E\left[\resolvent_{\Aug}(\mathbf{D})\1_{\msa_\eta}(X)\right]$.

    As a first step, let us notice that conditionally to $X$, $G$ satisfies all the assumptions of \Cref{DeterministicEquivNonAugmented}, hence for any $\sigma(X)$-measurable matrix $D_X$, $\resolvent_{G}(D_X)$ admist a deterministic equivalent conditionally to $X$. 
    We will through a slight abuse of notation denote $\detequiv_{G \mid X}^{\mathfrak{a}_g(X)}(D_X)$ the equivalent of $\resolvent_G(D_X)$ conditionally to $X$. It is given by \Cref{DeterministicEquivNonAugmented}, i.e,
    \begin{align}
        \detequiv_{G \mid X}^{\mathfrak{a}_g(X)}(D_X) &= \left(\dfrac{\E\left[C_G \mid X\right]}{\mathfrak{a}_g(X)} + D_X \right)^{-1}\1_{\msa_\eta}(X) = \left(\dfrac{\beta C_X}{\mathfrak{a}_g(X)} + \dfrac{\Lambda_G(X)}{\mathfrak{a}_g(X)} + D_X\right)^{-1}\1_{\msa_\eta}(X) \eqsp, \\
        \mathfrak{a}_g(X) &= 1 + \dfrac{1}{m} \tr\left(\E\left[C_G \mid X\right] \E\left[R_G(D_X) \mid X\right]\right) \\
        & = 1 + \dfrac{1}{m}\tr\left(\left\{\beta C_X + \Lambda_G(X)\right\} \E\left[R_G(D_X) \mid X\right]\right) \eqsp.
    \end{align}

    Writting for simplicity, $\alpha' = (1-\beta / \mathfrak{a}_g^*)\alpha$, and $\alpha'(X) = (1-\beta / \mathfrak{a}_g(X))\alpha$, we will rely on the following upper bound which follows from the triangle inequality,
    \begin{align}
        &\left\|\E\left[\left\{\resolvent_{\Aug}(\mathbf{D}) - \detequiv_{\Aug}^{(\mathfrak{a}_x^*, \mathfrak{a}_g^*)}(\mathbf{D})\right\} \1_{\msa_\eta}(X)\right]\right\|_\Frob \\
        &\leq \left\|\E\left[\left\{\resolvent_{\Aug}(\mathbf{D}) - \dfrac{1}{\alpha} \detequiv_{G \mid X}^{\mathfrak{a}_g(X)}\left( \dfrac{1-\alpha}{\alpha}C_X + \dfrac{1}{\alpha} \mathbf{D}\right)\right\} \1_{\msa_\eta}(X)\right]\right\|_\Frob \\
        &\qquad + \left\|\E\left[\left\{\dfrac{1}{\alpha} \detequiv^{\mathfrak{a}_g(X)}_{G \mid X}\left( \dfrac{1-\alpha}{\alpha}C_X + \dfrac{1}{\alpha} \mathbf{D}\right)  - \dfrac{1}{1-\alpha'} \resolvent_X\left(\dfrac{\alpha \bar{\Lambda}_G}{(1-\alpha')\mathfrak{a}_g^*}  + \dfrac{1}{1-\alpha'} \mathbf{D}\right)\right\} \1_{\msa_\eta}(X)\right]\right\|_\Frob \\
        &\qquad + \left\|\E\left[\left\{\dfrac{1}{1-\alpha'} \resolvent_X\left(\dfrac{\alpha \bar{\Lambda}_G}{(1-\alpha')\mathfrak{a}_g^*}  + \dfrac{1}{1-\alpha'} \mathbf{D}\right) - \detequiv_{\Aug}^{(\mathfrak{a}_x^*, \mathfrak{a}_g^*)}(\mathbf{D})\right\} \1_{\msa_\eta}(X)\right]\right\|_\Frob 
    \end{align}
    and, remark that we can rewrite,
    \begin{align}
        \resolvent_{\Aug}(\mathbf{D}) = \left((1-\alpha) C_X + \alpha C_G + \mathbf{D}\right)^{-1} = \dfrac{1}{\alpha} \resolvent_G\left(\dfrac{(1-\alpha)}{\alpha}C_X + \dfrac{1}{\alpha}\mathbf{D}\right)\eqsp,
    \end{align}
    \begin{align}
        \dfrac{1}{\alpha} \detequiv_{G \mid X}^{\mathfrak{a}_g(X)}\left( \dfrac{1-\alpha}{\alpha}C_X + \dfrac{1}{\alpha} \mathbf{D}\right)\1_{\msa_\eta}(X) &= \dfrac{1}{\alpha}\left(\dfrac{\E\left[C_G \mid X\right]}{\mathfrak{a}_g(X)} + \dfrac{1-\alpha}{\alpha} C_X + \mathbf{D}\right)^{-1} \1_{\msa_\eta}(X) \\
        &= \left(\left(1 - \left(1 - \dfrac{\beta}{\mathfrak{a}_g(X)}\right)\alpha\right) C_X + \dfrac{\alpha \Lambda_G(X)}{\mathfrak{a}_g(X)} + \mathbf{D}\right)^{-1} \1_{\msa_\eta}(X) \\
        &= \left((1-\alpha'(X)) C_X + \dfrac{\alpha \Lambda_G(X)}{\mathfrak{a}_g(X)} + \mathbf{D}\right)^{-1}\1_{\msa_\eta}(X)\eqsp,
    \end{align}
    \begin{align}
        \dfrac{1}{1-\alpha'} \resolvent_X\left(\dfrac{\alpha\bar{\Lambda}_G}{(1-\alpha')\mathfrak{a}_g^*} + \dfrac{1}{1-\alpha'} \mathbf{D}\right)\1_{\msa_\eta}(X) = \left((1-\alpha') C_X + \dfrac{\alpha \bar{\Lambda}_G}{\mathfrak{a}_g^*} + \mathbf{D}\right)^{-1} \1_{\msa_\eta}(X) \eqsp,
    \end{align}
    and,
    \begin{equation}
        \detequiv_{\Aug}^{(\mathfrak{a}_x^*, \mathfrak{a}_g^*)}(\mathbf{D})\1_{\msa_\eta}(X) = \dfrac{1}{1-\alpha'} \detequiv_X^{\mathfrak{a}_x^*}\left(\dfrac{\alpha\bar{\Lambda}_G}{(1-\alpha')\mathfrak{a}_g^*} + \dfrac{1}{1-\alpha'} \mathbf{D}\right) \1_{\msa_\eta}(X)\eqsp.
    \end{equation}
    Using these equalities, we rewrite the previous bound as,
    \begin{align}  \label{eq:AugmentedDeterministicEquivSmartDecoupage}
        &\left\|\E\left[\left\{\resolvent_{\Aug}(\mathbf{D}) - \detequiv_{\Aug}^{(\mathfrak{a}_x^*, \mathfrak{a}_g^*)}(\mathbf{D})\right\} \1_{\msa_\eta}(X)\right]\right\|_\Frob \\
        &\leq \left\|\E\left[\left\{\dfrac{1}{\alpha} \resolvent_G\left(\dfrac{(1-\alpha)C_X}{\alpha} + \dfrac{1}{\alpha}\mathbf{D}\right) - \dfrac{1}{\alpha} \detequiv_{G\mid X}^{\mathfrak{a}_g(X)}\left( \dfrac{1-\alpha}{\alpha}C_X + \dfrac{1}{\alpha} \mathbf{D}\right)\right\} \1_{\msa_\eta}(X)\right]\right\|_\Frob \\
        &\qquad + \left\|\E\left[\left\{ \left((1-\alpha'(X)) C_X + \dfrac{\alpha \Lambda_G(X)}{\mathfrak{a}_g(X)} + \mathbf{D}\right)^{-1} - \left((1-\alpha') C_X + \dfrac{\alpha \bar{\Lambda}_G}{\mathfrak{a}_g} + \mathbf{D}\right)^{-1}\right\} \1_{\msa_\eta}(X)\right]\right\|_\Frob \\
        &\qquad + \left\|\E\left[\left\{\dfrac{1}{1-\alpha'} \resolvent_X\left(\dfrac{\alpha \bar{\Lambda}_G}{\mathfrak{a}_g^*}  + \dfrac{1}{1-\alpha'} \mathbf{D}\right) - \dfrac{1}{1-\alpha'} \detequiv_X^{\mathfrak{a}_x^*}\left(\dfrac{\alpha\bar{\Lambda}_G}{(1-\alpha')\mathfrak{a}_g^*} + \dfrac{1}{1-\alpha'} \mathbf{D}\right)\right\} \1_{\msa_\eta}(X)\right]\right\|_\Frob 
    \end{align}

    The first and third terms in \eqref{eq:AugmentedDeterministicEquivSmartDecoupage} are bounded by \Cref{DeterministicEquivNonAugmented}, whereas the second term is controlled from the fact that $\mathfrak{a}_g(X)$ and $\Lambda_G(X)$ have small deviations. We deal with each of the terms in the right hand side of the previous upper bound one by one, first using the Jensen's inequality,
    \begin{align}
        &\left\|\E\left[\left\{\dfrac{1}{\alpha} \resolvent_G\left(\dfrac{(1-\alpha)C_X}{\alpha} + \dfrac{1}{\alpha}\mathbf{D}\right) - \dfrac{1}{\alpha} \detequiv_{G\mid X}^{\mathfrak{a}_g(X)}\left( \dfrac{1-\alpha}{\alpha}C_X + \dfrac{1}{\alpha} \mathbf{D}\right)\right\} \1_{\msa_\eta}(X)\right]\right\|_\Frob  \\
        &\quad \leq \dfrac{1}{\alpha}\E\left[\left\|\E\left[\left\{ \resolvent_G\left(\dfrac{(1-\alpha)C_X}{\alpha} + \dfrac{1}{\alpha}\mathbf{D}\right) -  \detequiv_G^{\mathfrak{a}_g(X)}\left( \dfrac{1-\alpha}{\alpha}C_X + \dfrac{1}{\alpha} \mathbf{D}\right)\right\}  \middle| X\right]\right\|_\Frob \1_{\msa_\eta}(X) \right]
    \end{align}
    Remarking that for all $\mathbf{X} \in \msa_\eta$, we have $\uplambda_d((1-\alpha)/\alpha C_\mathbf{X} + \mathbf{D} /\alpha) \geq \alpha^{-1} ((1-\alpha) \eta + \uplambda_d(\mathbf{D}))$. We set $\epsilon = (1-\alpha)\eta /\alpha + \uplambda_d(\mathbf{D})/\alpha$, we have from the previous remark and using \cref{DeterministicEquivNonAugmented}, 
    \begin{align}
        &\left\|\E\left[\left\{\dfrac{1}{\alpha} \resolvent_G\left(\dfrac{(1-\alpha)C_X}{\alpha} + \dfrac{1}{\alpha}\mathbf{D}\right) - \dfrac{1}{\alpha} \detequiv_{G \mid X}^{\mathfrak{a}_g(X)}\left( \dfrac{1-\alpha}{\alpha}C_X + \dfrac{1}{\alpha} \mathbf{D}\right)\right\} \1_{\msa_\eta}(X)\right]\right\|_\Frob \\
        &\quad \leq \dfrac{1}{\alpha}\E\left[\left\|\E\left[\left\{ \resolvent_G\left(\dfrac{(1-\alpha)C_X}{\alpha} + \dfrac{1}{\alpha}\mathbf{D}\right) - \detequiv_{G\mid X}^{\mathfrak{a}_g(X)}\left( \dfrac{1-\alpha}{\alpha}C_X + \dfrac{1}{\alpha} \mathbf{D}\right)\right\} \middle| X\right]\right\|_\Frob \1_{\msa_\eta}(X) \right] \\
        &\quad \lesssim \dfrac{\sigma_G^2}{\alpha} \E\left[\left\{Q(\epsilon, \uplambda_d(\E\left[C_G \mid X\right]), \|\E\left[C_G \mid X\right]\|_\op^{-1}, 0, n^{-1})\dfrac{\sqrt{d} \|\E\left[C_G \mid X\right]\|_\op^3}{n \uplambda_d(\E\left[C_G \mid X\right]) \epsilon^6}\right\} \1_{\msa_\eta}(X) \right]\eqsp,
    \end{align} 
    where $Q$ is the polynomial function defined in \Cref{DeterministicEquivNonAugmented}. In order to integrate the above error over the distribution of $X$, one needs to ensure that this random quantity doesn't blow up. We deal with this, first by using the fact that $Q$ is non-decreasing with respect to it's third entry, and we recall the notation $\kappa$, such that,
    \begin{equation}
        \mathop{Sp}(\Lambda_G(X)) \subset [\kappa^{-1}, \kappa] \quad \text{a.s}\eqsp,
    \end{equation}
    and using $\|\E\left[C_G \mid X\right]\|_\op \geq \inf_{\mathbf{X}} \|\Lambda_G(\mathbf{X})\|_\op \geq \kappa$, we write,
    \begin{align}
        Q(\epsilon, \uplambda_d(\E\left[C_G \mid X\right]), \|\E\left[C_G \mid X\right]\|_\op^{-1}, 0, n^{-1}) &\leq Q(\epsilon, \uplambda_d(\E\left[C_G \mid X\right]), (\inf_{\mathbf{X}} \|\Lambda_G(\mathbf{X})\|_\op)^{-1}, 0, n^{-1}) \\
        &\leq Q(\epsilon, \uplambda_d(\E\left[C_G \mid X\right]), \kappa, 0, n^{-1})
    \end{align}
    and, remarking that there exists two polwnomials $q_1$ and $q_2$ (polynomials in $\epsilon, \kappa, n^{-1}$) such that,
    \begin{equation}
        Q(\epsilon, \uplambda_d(\E\left[C_G \mid X\right]), \kappa, 0, n^{-1}) = q_1 + \uplambda_d(\E\left[C_G \mid X\right]) q_2 \eqsp,
    \end{equation}
    which is trivial from the definition of $Q$ in \Cref{prop:Non-augmented_deterministic_equiv}. We write, 
    \begin{align}\label{eq:IdkTheFuckImDoing}
        &\E\left[\left\{Q(\epsilon, \uplambda_d(\E\left[C_G \mid X\right]), \|\E\left[C_G \mid X\right]\|_\op^{-1}, 0, n^{-1})\dfrac{\sqrt{d} \|\E\left[C_G \mid X\right]\|_\op^3}{n \uplambda_d(\E\left[C_G \mid X\right]) \epsilon^6}\right\} \1_{\msa_\eta}(X) \right] \\
        &\quad = q_1 \E\left[q_1 \dfrac{\|\E\left[C_G \mid X\right]\|_\op^3}{n \uplambda_d(\E\left[C_G  \mid X\right])}\1_{\msa_\eta}(X)\right] + q_2 \dfrac{\sqrt{d} \E\left[\|\E\left[C_G \mid X\right]\|_\op^3\1_{\msa_\eta}(X)\right]}{n \epsilon^6} \eqsp,
    \end{align}        
    and, we control the remaining $\uplambda_d(\E\left[C_G \mid X\right])$ term by remarking that $\uplambda_d(\E\left[C_G \mid X\right]) = \uplambda_d(\beta C_X + \Lambda_G(X)) \geq \inf_{\mathbf{X}} \uplambda_d(\Lambda_G(\mathbf{X})) \geq \kappa^{-1}$, hence, 
    \begin{align}
        &\E\left[\left\{Q(\epsilon, \uplambda_d(\E\left[C_G \mid X\right]), \|\E\left[C_G \mid X\right]\|_\op^{-1}, 0, n^{-1})\dfrac{\sqrt{d} \|\E\left[C_G \mid X\right]\|_\op^3}{n \uplambda_d(\E\left[C_G \mid X\right]) \epsilon^6}\right\} \1_{\msa_\eta}(X) \right] \\
        &\quad = \left(\kappa q_1 + q_2 \right) \dfrac{\sqrt{d} \E\left[\|\E\left[C_G \mid X\right]\|_\op^3\1_{\msa_\eta}(X)\right]}{n \epsilon^6} \eqsp,
    \end{align}

    It remains only to control the term $\E\left[\|\E\left[C_G \mid X\right]\|_\op^3 \1_{\msa_\eta}(X)\right]$. Recall from \Cref{ass:G_Lipschitz_Concentrated_cond_X} that $\E\left[C_G \mid X\right] = \beta C_X + \Lambda_G(X)$, which thanks to $(a+b)^3 \lesssim a^3 + b^3$ implies,
    \begin{align}
        \E\left[\|\E\left[C_G \mid X\right]\|_\op^3 \1_{\msa_\eta}(X)\right] &\lesssim \beta^3 \E\left[\|C_X - \Sigma_X\|_\op^3\right] + \|\beta \Sigma_X + \Lambda_G(X)\|_\op^3 \\
        &\lesssim \beta^3 \E\left[\|C_X - \Sigma_X\|_\op^3\right] + \beta^3\| \Sigma_X\|_\op^3 + \kappa^3\eqsp.
    \end{align}
    hence,
    \begin{align} \label{eq:FirstTermSmartDecoupageAlmostThere}
        &\left\|\E\left[\left\{\dfrac{1}{\alpha} \resolvent_G\left(\dfrac{(1-\alpha)C_X}{\alpha} + \dfrac{1}{\alpha}\mathbf{D}\right) - \dfrac{1}{\alpha} \detequiv_{G \mid X}^{\mathfrak{a}_g(X)}\left( \dfrac{1-\alpha}{\alpha}C_X + \dfrac{1}{\alpha} \mathbf{D}\right)\right\} \1_{\msa_\eta}(X)\right]\right\|_\Frob \\
        &\quad \lesssim (\kappa q_1 + q_2)\dfrac{\sqrt{d}\sigma_G^2 (\beta^3 \E\left[\|C_X - \Sigma_X\|_\op^3\right] + \beta^3\| \Sigma_X\|_\op^3 + \kappa^3)}{n \alpha \epsilon^6}\eqsp.
    \end{align}
    It remains only to handle the deviation of $C_X$ in operator norm. To this end, we rely on \cite{vershynin2018high} result 9.2.5, which states that for a universal constant $K$,
    \begin{align}
        \Prob\left(\|C_X - \Sigma_X\|_\op \geq K \sigma_X^4 \left(\sqrt{\dfrac{r+u}{n}} + \dfrac{r+u}{n}\right)\|\Sigma_X\|_\op\right) \leq 2 \e^{-u} \eqsp, \quad r = \dfrac{\tr\left(\Sigma_X\right)}{\|\Sigma_X\|_\op}
    \end{align}
    Note that \cite{vershynin2018high} states this result in the case of $X$ having sub-Gaussian columns, which in our setting is a direct consequence of \Cref{def:LipschitzConcentration}. In particular, we write,
    \begin{align}
        \varphi(u) = K\sigma_X^4 \left(\sqrt{\dfrac{r+u}{n}} + \dfrac{r+u}{n}\right) \|\Sigma_X\|_\op\eqsp, \quad \text{thus} \quad \varphi'(u)  &= K\sigma_X^4 \!\left(\frac{1}{2\sqrt{n(r+u)}} + \frac{1}{n}\right)\|\Sigma_X\|_\op\eqsp.
    \end{align}
    and, by the change of variable $t = \varphi(u)$, we have,
    \begin{align}
        \E\left[\left\|C_X - \Sigma_X\right\|_\op^3\right]
        &= \int_0^\infty \Prob\bigl(\|C_X - \Sigma_X\|_\op^{3} \ge t\bigr)\,dt
        = \int_0^\infty \Prob\bigl(\|C_X - \Sigma_X\|_\op \ge t^{1/3}\bigr)\,dt \\ 
        &=\int_0^\infty \Prob\bigl(\|C_X - \Sigma_X\|_\op \ge \varphi(u)\bigr)\; 
          3\,\varphi(u)^2\,\varphi'(u)\,du \\ 
        &\leq 2 \int_0^\infty e^{-u}\;3\,\varphi(u)^2\,\varphi'(u)\,du 
        \;\le\;6\,\varphi'(0) \int_0^\infty e^{-u}\,\varphi(u)^2\,du \\
        &\leq 6 \varphi'(0) \dfrac{K^2\sigma_X^8}{n}\int_0^\infty \e^{-u}(r + u) + 12 \varphi'(0) \dfrac{K^2\sigma_X^8}{n^2}\int_0^\infty \e^{-u}(r + u)^2  \\
        &\leq 6 K^3\sigma_X^{12} \left(\dfrac{1}{2 \sqrt{n} r} + \dfrac{1}{n}\right) \left(\dfrac{r+1}{n} + \dfrac{2(r^2 + 2r + 2)}{n^2}\right) \eqsp,
    \end{align} 
    recalling that $r = \tr\left(\Sigma_X\right) / \|\Sigma_x\|_\op$, and noticing $ d \uplambda_d(\Sigma_X) / \|\Sigma_X\|_\op \leq r \leq d$, it results that
    \begin{align}
         \E\left[\|C_X - \Sigma_X\|_\op^3\right] &\lesssim \sigma_X^{12} \left(\dfrac{\|\Sigma_X\|_\op}{\sqrt{n}d \uplambda_d(\Sigma_X)} + \dfrac{1}{n}\right)\left(\dfrac{d+1}{n} + \dfrac{d^2 + d + 2}{n^2}\right) \\
         &\lesssim \sigma_X^{12}\left(\dfrac{\|\Sigma_X\|_\op}{\sqrt{n}d \uplambda_d(\Sigma_X)} + \dfrac{1}{n}\right)
    \end{align}
    where we have used $d < n$, garenteed by \Cref{ass:ProbaSmallEigenvalues}. We plug this into \eqref{eq:FirstTermSmartDecoupageAlmostThere}, and we get,
    \begin{align} \label{eq:FirstTermSmartDecoupageFinalBound}
        &\left\|\E\left[\left\{\dfrac{1}{\alpha} \resolvent_G\left(\dfrac{(1-\alpha)C_X}{\alpha} + \dfrac{1}{\alpha}\mathbf{D}\right) - \dfrac{1}{\alpha} \detequiv_{G \mid X}^{\mathfrak{a}_g(X)}\left( \dfrac{1-\alpha}{\alpha}C_X + \dfrac{1}{\alpha} \mathbf{D}\right)\right\} \1_{\msa_\eta}(X)\right]\right\|_\Frob \\
        &\quad \lesssim (\kappa q_1 + q_2)\left\{\dfrac{\sqrt{d}\sigma_G^2 ( \beta^3\| \Sigma_X\|_\op^3 + \kappa^3)}{n \alpha \epsilon^6} + \dfrac{\sqrt{d}\sigma_G^{12}}{n \alpha \epsilon^6}\left(\dfrac{\|\Sigma_X\|_\op}{\sqrt{n} d\uplambda_d(\Sigma_X)} + \dfrac{1}{n}\right) \right\} \\
        &\quad \lesssim \dfrac{(\kappa q_1 + q_2) \sqrt{d} \left\{\sigma_G^{2} (\beta^3 \|\Sigma_X\|_\op^3 + \kappa^3) + \sigma_X^{12}\|\Sigma_X\|_\op \uplambda_d(\Sigma_X)^{-1} n^{-1/2} d^{-1} \right\}}{n \alpha \epsilon^6} \\
        &\quad \lesssim \dfrac{\alpha^5(\kappa q_1 + q_2) \sqrt{d} \left\{\sigma_G^{2} (\beta^3 \|\Sigma_X\|_\op^3 + \kappa^3) + \sigma_X^{12}\|\Sigma_X\|_\op \uplambda_d(\Sigma_X)^{-1} n^{-1/2} d^{-1} \right\}}{n  ((1-\alpha)\eta + \uplambda_d(\mathbf{D}))^6}
    \end{align} 
    This concludes our analysis of the first term in \eqref{eq:AugmentedDeterministicEquivSmartDecoupage}.  We now focus on the second term in \Cref{eq:AugmentedDeterministicEquivSmartDecoupage}, which is controlled, provided $\E\left[\|\Lambda_G(X) - \bar{\Lambda}_G\|_\Frob\right]$ is small. 
    
    First, we check that, 
    \begin{align}
        \mathfrak{a}_g(X) &= 1 + \dfrac{1}{m} \tr\left(\left\{\beta C_X + \Lambda_G(X)\right\}\E\left[\resolvent_G\left((1-\alpha)C_X/ \alpha + \mathbf{D} / \alpha\right) \mid X\right]\right) \\
        &= 1 + \dfrac{1}{m} \tr\left(\left\{\beta C_X + \Lambda_G(X)\right\}\E\left[\alpha\left((1-\alpha)C_X + \alpha C_G + \mathbf{D} \right)^{-1} \mid X\right]\right) \\
        &= 1 + \dfrac{\alpha}{m} \tr\left(\left\{\beta C_X + \Lambda_G(X)\right\}\E\left[\resolvent_\Aug(\mathbf{D}) \mid X\right]\right)
    \end{align}
    From this, one can hope that $\mathfrak{a}_g(X)$ concentrates around $\mathfrak{a}_g^* = \E\left[\mathfrak{a}_g(X)\right]$, hence, we write,
    
    \begin{align}
        &\left\|\E\left[\left\{ \left((1-\alpha'(X)) C_X + \dfrac{\alpha \Lambda_G(X)}{\mathfrak{a}_g(X)} + \mathbf{D}\right)^{-1} - \left((1-\alpha') C_X + \dfrac{\alpha \bar{\Lambda}_G}{\mathfrak{a}_g^*} + \mathbf{D}\right)^{-1}\right\} \1_{\msa_\eta}(X)\right]\right\|_\Frob \\
        &\quad \leq \left\|\E\left[\left\{ \left((1-\alpha'(X)) C_X + \dfrac{\alpha \Lambda_G(X)}{\mathfrak{a}_g(X)} + \mathbf{D}\right)^{-1} - \left((1-\alpha') C_X + \dfrac{\alpha \Lambda_G(X)}{\mathfrak{a}_g(X)} + \mathbf{D}\right)^{-1}\right\} \1_{\msa_\eta}(X)\right]\right\|_\Frob \\
        &\qquad + \left\|\E\left[\left\{\left((1-\alpha') C_X + \dfrac{\alpha \Lambda_G(X)}{\mathfrak{a}_g(X)} + \mathbf{D}\right)^{-1} - \left((1-\alpha') C_X + \dfrac{\alpha \bar{\Lambda}_G}{\mathfrak{a}_g^*} + \mathbf{D}\right)^{-1}\right\} \1_{\msa_\eta}(X)\right]\right\|_\Frob 
    \end{align}
    furthermore, relying on the identity $\mathbf{A}^{-1} - \mathbf{B}^{-1}$, we write,
    \begin{align}
        &\left\|\E\left[\left\{ \left((1-\alpha'(X)) C_X + \dfrac{\alpha \Lambda_G(X)}{\mathfrak{a}_g(X)} + \mathbf{D}\right)^{-1} - \left((1-\alpha') C_X + \dfrac{\alpha \Lambda_G(X)}{\mathfrak{a}_g(X)} + \mathbf{D}\right)^{-1}\right\} \1_{\msa_\eta}(X)\right]\right\|_\Frob \\
        &\quad = \left\|\E\left[\left(\alpha'(X) - \alpha'\right) \left\{ \left((1-\alpha'(X)) C_X + \dfrac{\alpha \Lambda_G(X)}{\mathfrak{a}_g(X)} + \mathbf{D}\right)^{-1}C_X\left((1-\alpha') C_X + \dfrac{\alpha \Lambda_G(X)}{\mathfrak{a}_g(X)} + \mathbf{D}\right)^{-1}\right\} \1_{\msa_\eta}(X)\right]\right\|_\Frob \\
        &\quad \leq \E\left[\left|\alpha'(X) - \alpha'\right|\left\| \left((1-\alpha'(X)) C_X + \dfrac{\alpha \Lambda_G(X)}{\mathfrak{a}_g(X)} + \mathbf{D}\right)^{-1}C_X\left((1-\alpha') C_X + \dfrac{\alpha \Lambda_G(X)}{\mathfrak{a}_g(X)} + \mathbf{D}\right)^{-1}\right\|_\Frob \1_{\msa_\eta}(X)\right] \eqsp,
    \end{align}
    Now, remarking that
    \begin{equation}
        \left\|C_X\left((1-\alpha') C_X + \dfrac{\alpha \Lambda_G(X)}{\mathfrak{a}_g(X)} + \mathbf{D}\right)^{-1}\right\|_\op \1_{\msa_\eta}(X) \leq \dfrac{1}{1-\alpha'} \eqsp,
    \end{equation}
    and, 
    \begin{equation}
        \left\| \left((1-\alpha'(X)) C_X + \alpha \dfrac{\Lambda_G(X)}{\mathfrak{a}_g(X)} + \mathbf{D}\right)^{-1}\right\|_\op \1_{\msa_\eta}(X) \leq \dfrac{1}{(1-\alpha'(X))\eta + \uplambda_d(\mathbf{D})} \leq \dfrac{1}{(1-\alpha)\eta + \uplambda_d(\mathbf{D})}\eqsp.
    \end{equation}
    Similarly,
    \begin{align}
        &\left\|\E\left[\left\{\left((1-\alpha') C_X + \dfrac{\alpha \Lambda_G(X)}{\mathfrak{a}_g(X)} + \mathbf{D}\right)^{-1} - \left((1-\alpha') C_X + \dfrac{\alpha \bar{\Lambda}_G}{\mathfrak{a}_g^*} + \mathbf{D}\right)^{-1}\right\} \1_{\msa_\eta}(X)\right]\right\|_\Frob \\
        &\quad \leq \alpha \E\left[\left|\dfrac{1}{\mathfrak{a}_g(X)} - \dfrac{1}{\mathfrak{a}_g^*}\right| \left\|\left((1-\alpha') C_X + \dfrac{\alpha \Lambda_G(X)}{\mathfrak{a}_g(X)} + \mathbf{D}\right)^{-1} \Lambda_G(X) \left((1-\alpha') C_X + \dfrac{\alpha \bar{\Lambda}_G}{\mathfrak{a}_g^*} + \mathbf{D}\right)^{-1}\right\|_\Frob \1_{\msa_\eta}(X) \right] \\
        &\qquad + \dfrac{\alpha}{\mathfrak{a}_g} \E\left[\left\|\left((1-\alpha') C_X + \dfrac{\alpha \Lambda_G(X)}{\mathfrak{a}_g(X)} + \mathbf{D}\right)^{-1} \left\{\Lambda_G(X) - \bar{\Lambda}_G\right\}\left((1-\alpha') C_X + \dfrac{\alpha \Lambda_G(X)}{\mathfrak{a}_g(X)} + \mathbf{D}\right)^{-1}\right\|_\Frob\right] \\
        &\quad\leq \dfrac{\E\left[\left|\mathfrak{a}_g(X) - \mathfrak{a}_g^*\right|\right]}{(1-\alpha)\eta + \lambda_1(\mathbf{D})} + \dfrac{\E\left[\|\Lambda_G(X) - \bar{\Lambda}_G\|_\Frob\right]}{((1-\alpha)\eta + \uplambda_d(\mathbf{D}))^2} \eqsp,
    \end{align}

    which results in 
    \begin{align} \label{eq:SecondTermSmartDecoupageAlmostThere}
        &\left\|\E\left[\left\{ \left((1-\alpha'(X)) C_X + \dfrac{\alpha \Lambda_G(X)}{\mathfrak{a}_g(X)} + \mathbf{D}\right)^{-1} - \left((1-\alpha') C_X + \dfrac{\alpha \Lambda_G(X)}{\mathfrak{a}_g(X)} + \mathbf{D}\right)^{-1}\right\} \1_{\msa_\eta}(X)\right]\right\|_\Frob \\
        &\quad \leq  \dfrac{\E\left[\left|\alpha'(X) - \alpha'\right|\right](1-\alpha')^{-1}}{(1-\alpha)\eta + \uplambda_d(\mathbf{D})} + \dfrac{\E\left[\left|\mathfrak{a}_g(X) - \mathfrak{a}_g^*\right|\right]}{(1-\alpha)\eta + \lambda_1(\mathbf{D})} + \dfrac{\E\left[\|\Lambda_G(X) - \bar{\Lambda}_G\|_\Frob\right]}{((1-\alpha)\eta + \uplambda_d(\mathbf{D}))^2} \\
        & \quad \leq \dfrac{\alpha \beta \E\left[\left|\mathfrak{a}_g(X) - \mathfrak{a}_g^*\right|\right](1-\alpha')^{-1}}{(1-\alpha)\eta + \uplambda_d(\mathbf{D})} + \dfrac{\E\left[\left|\mathfrak{a}_g(X) - \mathfrak{a}_g^*\right|\right]}{(1-\alpha)\eta + \lambda_1(\mathbf{D})} + \dfrac{\E\left[\|\Lambda_G(X) - \bar{\Lambda}_G\|_\Frob\right]}{((1-\alpha)\eta + \uplambda_d(\mathbf{D}))^2} \\
        &\quad \leq \dfrac{\alpha \beta (1-\alpha')^{-1} + 1}{(1-\alpha)\eta + \uplambda_d(\mathbf{D})}\E\left[\left|\mathfrak{a}_g(X) - \mathfrak{a}_g^*\right|\right] + \dfrac{\E\left[\|\Lambda_G(X) - \bar{\Lambda}_G\|_\Frob\right]}{((1-\alpha)\eta + \uplambda_d(\mathbf{D}))^2} \eqsp.
    \end{align}
    From the previous, one can see that controlling $\E\left[\|\Lambda_G(X) - \bar{\Lambda}_G\|_\Frob\right]$ and $\E\left[\left|\mathfrak{a}_g(X) - \mathfrak{a}_g^*\right|\right]$ is sufficient in order to control the second term in decomposition \eqref{eq:AugmentedDeterministicEquivSmartDecoupage}. While the first needs to be assumed small, we can show that $\E\left[\left|\mathfrak{a}_g(X) - \mathfrak{a}_g\right|\right]$ is small under quite general conditions, to this end, we write,
    \begin{align} 
        &\left|\mathfrak{a}_g(X) - \mathfrak{a}_g^*\right| \\
        &\quad \leq \dfrac{\alpha \beta}{m}\left|\tr\left(\E\left[C_X \resolvent_{\Aug}(\mathbf{D}) \1_{\msa_\eta}(X)\mid X\right] - \E\left[C_X \resolvent_{Aug}(\mathbf{D})\1_{\msa_\eta}(X)\right]\right)\right| \\
        &\quad + \dfrac{\alpha }{m}\left|\tr\left(\E\left[\Lambda_G(X) \resolvent_{\Aug}(\mathbf{D})\1_{\msa_\eta}(X)\mid X\right] - \E\left[\Lambda_G(X) \resolvent_{\Aug}(\mathbf{D})\1_{\msa_\eta}(X)\right]\right)\right| \\
        &\quad \leq \dfrac{\alpha \beta}{mn}\sum_{i=1}^n \left|\tr\left(\E\left[X_i X_i^\top \resolvent_{\Aug}(\mathbf{D})\1_{\msa_\eta}(X)\mid X\right] - \E\left[X_i X_i^\top \resolvent_{\Aug}(\mathbf{D})\1_{\msa_\eta}(X)\right]\right)\right| \\
        &\quad + \dfrac{\alpha }{m}\left|\tr\left(\E\left[\Lambda_G(X) \resolvent_{\Aug}(\mathbf{D})\1_{\msa_\eta}(X)\mid X\right] - \E\left[\Lambda_G(X) \resolvent_{\Aug}(\mathbf{D})\1_{\msa_\eta}(X)\right]\right)\right|
        \eqsp.
      \end{align} \label{Expected_dilation_g_first_bound}
      Now remark that the distribution of $\left|\tr\left(\E\left[X_i X_i^\top \resolvent_{\Aug}(\lambda)\mid X\right] - \E\left[X_i X_i^\top \resolvent_{\Aug}(\lambda)\right]\right)\right|$ doesn't depend on $i$ , by exchangeability of the columns of $X$ and \Cref{ass:Stability}, we thus focus only on the term $i=1$, by writting,
      \begin{align}
        \E\left[\left|\mathfrak{a}_g(X) - \mathfrak{a}_g^*\right|\right] &\leq \dfrac{\alpha \beta}{m} \E\left[\left|\E\left[X_1^\top \resolvent_{\Aug}(\mathbf{D})X_1 \1_{\msa_\eta}(X) \mid X\right] - \E\left[X_1^\top \resolvent_{\Aug}(\mathbf{D})X_1 \1_{\msa_\eta}(X)\right]\right|\right] \\
        &\quad  + \dfrac{\alpha}{m} \E\left[\left|\tr\left(\E\left[\Lambda_G(X)\resolvent_{\Aug}(\mathbf{D}) \1_{\msa_\eta}(X)\mid X\right] - \E\left[\Lambda_G(X)\resolvent_{\Aug}(\mathbf{D}) \1_{\msa_\eta}(X)\right]\right)\right|\right] \\
        &\leq \dfrac{\alpha \beta}{m} \E\left[\left|\E\left[X_1^\top \resolvent_{\Aug}(\mathbf{D})X_1 \1_{\msa_\eta}(X)\mid X\right] - \E\left[X_1^\top \resolvent_{\Aug}(\mathbf{D})X_1 \1_{\msa_\eta}(X)\right]\right|\right] \\
        &\quad + \dfrac{\alpha}{m} \E\left[\left|\tr\left(\{\Lambda_G(X) - \bar{\Lambda}_G\}\E\left[\resolvent_{\Aug}(\mathbf{D}) \mid X\right]\1_{\msa_\eta}(X) - \E\left[\{\Lambda_G(X) - \bar{\Lambda}_G\}\resolvent_{\Aug}(\mathbf{D})\1_{\msa_\eta}(X)\right]\right)\right|\right] \\
        &\quad + \dfrac{\alpha}{m} \E\left[\left|\tr\left(\bar{\Lambda}_G \{\E\left[\resolvent_\Aug(\mathbf{D}) \mid X\right]\1_{\msa_\eta}(X) - \E\left[\resolvent_\Aug(\mathbf{D})\1_{\msa_\eta}(X)\right] \}\right)\right|\right]
      \end{align} 
      where the last inequality followed from using the triangle inequality. To bound the above quantity, we first focus on the second and third terms, which are notably less technical, it holds from Cauchy-Schwarz inequality and remarking that $\|\resolvent_\Aug(\mathbf{D})\1_{\msa_\eta}(X)\|_\Frob \leq \sqrt{d}(\eta + \uplambda_d(\mathbf{D}))^{-1}$ that
      \begin{align}
        &\dfrac{\alpha}{m} \E\left[\left|\tr\left(\{\Lambda_G(X) - \bar{\Lambda}_G\}\E\left[\resolvent_{\Aug}(\mathbf{D}) \mid X\right]\1_{\msa_\eta}(X) - \E\left[\{\Lambda_G(X) - \bar{\Lambda}_G\}\resolvent_{\Aug}(\mathbf{D})\1_{\msa_\eta}(X)\right]\right)\right|\right] \\
        &\quad \leq 2 \dfrac{\alpha \sqrt{d}}{m(\eta + \uplambda_d(\mathbf{D}))} \E\left[\|\Lambda_G(X) - \bar{\Lambda}_G\|_\Frob\right] \eqsp.      
      \end{align} 
      Furthermore, the function $\mathbf{X} \mapsto \tr\left(\bar{\Lambda}_G \int \resolvent_{\mathbf{X} \sqcup g}(\mathbf{D})\right) d \nu_{\mathbf{X}}^{\otimes m}(g)$ is $2 \sqrt{d}\|\bar{\Lambda}_G\|_\op(1+\sqrt{m}\Ltt_G) (\eta + \uplambda_d(\mathbf{D}))^{-3/2} (n+m)^{-1/2}$-Lipschitz, from \cref{lem:lip_resolvent} and \cref{lemExpectfXGcondX_is_Lispchitz}. Hence, we have that $\tr\left(\bar{\Lambda}_G \E\left[\resolvent_{\Aug}(\mathbf{D}) \mid X\right]\right)\1_{\msa_\eta}$ is sub-Gaussian (which follows from \Cref{ass:X_Lipschitz_Concentrated} and \cref{prop:ConcentrationPropertyOfTruncatedLipschitzTransformsOfX}), and we have,
      \begin{align}
        &\dfrac{\alpha}{m} \E\left[\left|\tr\left(\bar{\Lambda}_G \{\E\left[\resolvent_\Aug(\mathbf{D}) \mid X\right]\1_{\msa_\eta}(X) - \E\left[\resolvent_\Aug(\mathbf{D})\1_{\msa_\eta}(X)\right] \}\right)\right|\right] \\
        &\quad \leq \dfrac{\alpha}{m} \sqrt{\Var(\tr\left(\bar{\Lambda}_G \E\left[\resolvent_\Aug(\mathbf{D})\mid X\right]\right)\1_{\msa_\eta}(X))} \\
        &\quad \lesssim \dfrac{\alpha}{m} \left(\dfrac{\sqrt{d}\|\bar{\Lambda}_G\|_\op(1 + \sqrt{m} \Ltt_G)}{\sqrt{(\eta + \uplambda_d(\mathbf{D}))^{3} (n+m)}} + \dfrac{d\|\bar{\Lambda}_G\|_\op}{\eta + \uplambda_d(\mathbf{D})} \sigma_{\msa_\eta}\right) \eqsp,
      \end{align}
      Recalling that $\sigma_{\msa_\eta} \lesssim n^{-1}$ from \Cref{ass:ProbaSmallEigenvalues}, and using that $d < n$ (which also follows from \Cref{ass:ProbaSmallEigenvalues}), we simplify the previous bound
      \begin{align}
        &\dfrac{\alpha}{m} \E\left[\left|\tr\left(\bar{\Lambda}_G \{\E\left[\resolvent_\Aug(\mathbf{D}) \mid X\right]\1_{\msa_\eta}(X) - \E\left[\resolvent_\Aug(\mathbf{D})\1_{\msa_\eta}(X)\right] \}\right)\right|\right] \\
        &\lesssim \dfrac{\alpha \sqrt{d}\|\bar{\Lambda}_G\|_\op}{(\eta + \uplambda_d(\mathbf{D}))^{3/2} m} \left(\dfrac{1 + \sqrt{m}\Ltt_G}{\sqrt{n+m}} + \sqrt{\dfrac{\eta + \uplambda_d(\mathbf{D})}{n}}\right) \lesssim \dfrac{\alpha \sqrt{d}\|\bar{\Lambda}_G\|_\op}{(\eta + \uplambda_d(\mathbf{D}))^{3/2} m} \left(\sqrt{\alpha} \Ltt_G + \dfrac{1 + \sqrt{\eta + \uplambda_d(\mathbf{D})}}{\sqrt{n}}\right)
      \end{align}
      Plugging the previous calculation back into \eqref{Expected_dilation_g_first_bound}, we find,
      \begin{align} \label{eq:DilationFactorAverageDeviationAlmostThere}
        \E\left[\left|\mathfrak{a}_g(X) - \mathfrak{a}_g^*\right|\right] & \lesssim  \dfrac{\alpha \beta}{m} \E\left[\left|\E\left[X_1^\top \resolvent_{\Aug}(\mathbf{D})X_1 \1_{\msa_\eta}(X)\mid X\right] - \E\left[X_1^\top \resolvent_{\Aug}(\mathbf{D})X_1 \1_{\msa_\eta}(X)\right]\right|\right] \\
        &\quad + \dfrac{\alpha \sqrt{d}\|\bar{\Lambda}_G\|_\op}{(\eta + \uplambda_d(\mathbf{D}))^{3/2} m} \left(\dfrac{ \E\left[\|\Lambda_G(X) - \bar{\Lambda}_G\|_\Frob\right]\sqrt{\eta + \uplambda_d(\mathbf{D})}}{\|\bar{\Lambda}_G\|_\op} + \sqrt{\alpha} \Ltt_G + \dfrac{1 + \sqrt{\eta + \uplambda_d(\mathbf{D})}}{\sqrt{n}}\right)
      \end{align}
      
      It remains only to bound the expected deviation of $\E\left[X_1^\top \resolvent_\Aug(\mathbf{D})\1_{\msa_\eta}(X) \mid X\right]$. Using the Shermann-morisson's formula, we first write,
      \begin{align}
        X_1^\top \resolvent_{\Aug}(\mathbf{D}) X_1 \1_{\msa_\eta}(X) &= \left\{X_1^\top \resolvent_{X^- \sqcup G}(\mathbf{D}) X_1 - \dfrac{1}{n+m}\dfrac{X_1\resolvent_{X^-\sqcup G}(\mathbf{D}) X_1 X_1^\top \resolvent_{X^-\sqcup G}(\lambda) X_1}{1 + (n+m)^{-1} X_1^\top \resolvent_{X^-\sqcup G}(\mathbf{D})X_1} \right\}\1_{\msa_\eta}(X)\\
        &= \dfrac{X_1 \resolvent_{X^- \sqcup G}(\mathbf{D})X_1}{1 + (n+m)^{-1}X_1^\top \resolvent_{X^- \sqcup G}(\mathbf{D})X_1}  \1_{\msa_\eta}(X)\\
        &= \left\{ (n+m) - \dfrac{(n+m)}{1 + (n+m)^{-1} X_1^\top \resolvent_{X^{-}\sqcup G}(\mathbf{D}) X_1} \right\} \1_{\msa_\eta}(X) \eqsp, 
      \end{align}
      hence, writting $f : x \mapsto (n+m) / (1+ (n+m)^{-1} x)$ (note that $f$ is $1$-Lipschitz), we have,
      \begin{align}
        \E\left[X_1^\top \resolvent_{\Aug}(\mathbf{D}) X_1 \1_{\msa_\eta}(X)\mid X\right] &= (n+m)\Prob\left(X \in \msa_\eta\right) - \E\left[f(X_1^\top \resolvent_{X^- \sqcup G}(\mathbf{D}) X_1) \1_{\msa_\eta}(X) \mid X\right] \\
      \end{align}
      which allows to rewrite,
    \begin{align}
      \E\left[X_1^\top \resolvent_{\Aug}(\mathbf{D}) X_1 \1_{\msa_\eta}(X)\mid X\right] &=  \left\{(n+m) - \int f(X_1^\top \resolvent_{X^- \sqcup g}(\mathbf{D}) X_1) d \nu_{X}^{\otimes m}(g)\right\} \1_{\msa_\eta}(X)\\
      &= (n+m)\1_{\msa_\eta}(X) - \int f(X_1^\top \resolvent_{X^- \sqcup g}(\mathbf{D}) X_1)\1_{\msa_\eta}(X) d \nu_{X^-}^{\otimes m}(g) \\
      &\quad + \int f(X_1^\top \resolvent_{X^- \sqcup g}(\mathbf{D}) X_1)\1_{\msa_\eta}(X) d \{ \nu_{X^-}^{\otimes m} - \nu_{X}^{\otimes m}\}(g) \\
      &= (n+m)\1_{\msa_\eta}(X) -  f\left( \int X_1^\top \resolvent_{X^- \sqcup g}(\mathbf{D}) X_1 \1_{\msa_\eta}(X)d \nu_{X^-}^{\otimes m}(g)\right)  \\
      &\quad - \left(\int f(X_1^\top \resolvent_{X^- \sqcup g}(\mathbf{D}) X_1)\1_{\msa_\eta}(X) d \nu_{X^-}^{\otimes m}(g) - f\left( \int X_1^\top \resolvent_{X^- \sqcup g}(\mathbf{D}) X_1 d \nu_{X^-}^{\otimes m}(g)\right)\right)\1_{\msa_\eta}(X) \\ 
      &\quad + \int f(X_1^\top \resolvent_{X^- \sqcup g}(\mathbf{D}) X_1)\1_{\msa_\eta}(X) d \{ \nu_{X^-}^{\otimes m} - \nu_{X}^{\otimes m}\}(g) \eqsp, 
    \end{align}
    which ensures, 
    \begin{align} \label{eq:DeviationAnoyingQuadFormDecomposition}
        &\E\left[\left|\E\left[X_1^\top \resolvent_{\Aug}(\lambda) X_1 \1_{\msa_\eta}(X) \mid X\right] 
                     - \E\left[X_1^\top \resolvent_{\Aug}(\lambda) X_1 \1_{\msa_\eta}(X) \right]\right|\right] \\
        &\quad\lesssim 
          \E\left[\left|f\left(\int X_1^\top \resolvent_{X^- \sqcup g}(\mathbf{D})\,X_1
                       \,d\nu_{X^-}^{\otimes m}(g)\right)\1_{\msa_\eta}(X)
              -\E\left[f\left(\int X_1^\top \resolvent_{X^- \sqcup g}(\mathbf{D})\,X_1 
                             \,d\nu_{X^-}^{\otimes m}(g)\right)\1_{\msa_\eta}(X)\right]\right|\right] \\
        &\qquad
          +\E\left[\left|\int f\bigl(X_1^\top \resolvent_{X^- \sqcup g}(\mathbf{D})\,X_1\bigr)\1_{\msa_\eta}(X)
                   \,d\nu_{X^-}^{\otimes m}(g)
              -f\!\Bigl(\!\int X_1^\top \resolvent_{X^- \sqcup g}(\mathbf{D})\,X_1
                       \,d\nu_{X^-}^{\otimes m}(g)\Bigr)\1_{\msa_\eta}(X)\right| \right] \\
        &\qquad
          +\E\left[\left|\int f\bigl(X_1^\top \resolvent_{X^- \sqcup g}(\mathbf{D})\,X_1\bigr)\1_{\msa_\eta}(X)
                   \,d\{\nu_{X^-}^{\otimes m}-\nu_X^{\otimes m}\}(g)\right|\right]
        \eqsp,
    \end{align}
    and we once again bound each term in the previous upper bound \eqref{eq:DeviationAnoyingQuadFormDecomposition}, starting with the last term, we notice that the function $\mathbf{G} \mapsto f(X_1^\top \resolvent_{X^- \sqcup \mathbf{G}}(\mathbf{D})\1_{\msa_\eta}(X) X_1)$ is $2 X_1^\top X_1 (\eta + \uplambda_d(\mathbf{D}))^{-3/2} (n+m)^{-1/2}$ from \Cref{lem:lip_resolvent} (almsot surely). Hence, using \Cref{ass:Stability}, we have,
    \begin{align}
        \E\left[\left|\int f\bigl(X_1^\top \resolvent_{X^- \sqcup g}(\mathbf{D})\,X_1\bigr)\1_{\msa_\eta}(X)
                   \,d\{\nu_{X^-}^{\otimes m}-\nu_X^{\otimes m}\}(g)\right|\right] &\lesssim \dfrac{\E\left[X_1X_1^\top\right]}{(\eta + \uplambda_d(\mathbf{D}))^{3/2} (n+m)^{1/2}}u(n) \\
                   & = \dfrac{\tr\left(\Sigma_X\right)}{(\eta + \uplambda_d(\mathbf{D}))^{3/2} (n+m)^{1/2}}u(n) \eqsp.
    \end{align}
    Furthermore, using the Jensen's inequality, we have,
    \begin{align}
        &\E\left[\left|\int f\bigl(X_1^\top \resolvent_{X^- \sqcup g}(\mathbf{D})\,X_1\bigr)
                           \,d\nu_{X^-}^{\otimes m}(g)
                      -f\!\Bigl(\!\int X_1^\top \resolvent_{X^- \sqcup g}(\mathbf{D})\,X_1
                               \,d\nu_{X^-}^{\otimes m}(g)\Bigr)\right|\1_{\msa_\eta}(X)\right] \\[0.5ex]
        &\quad\leq 
          \E\Biggl[\int \Bigl|f\bigl(X_1^\top \resolvent_{X^- \sqcup g}(\mathbf{D})\,X_1\bigr)
                         -f\!\Bigl(\!\int X_1^\top \resolvent_{X^- \sqcup g}(\mathbf{D})\,X_1
                                  \,d\nu_{X^-}^{\otimes m}(g)\Bigr)\Bigr|
               \,d\nu_{X^-}^{\otimes m}(g)\1_{\msa_\eta}(X)\Biggr] \\[0.5ex]
        &\quad\leq 
          \E\Biggl[\int \Bigl|\tr\Bigl(X_1 X_1^\top 
                         \;\Bigl\{\resolvent_{X^- \sqcup g}(\mathbf{D})
                               -\int \resolvent_{X^- \sqcup g}(\mathbf{D})
                                      \,d\nu_{X^-}^{\otimes m}(g)\Bigr\}\Bigr)\Bigr|
               \,d\nu_{X^-}^{\otimes m}(g)\1_{\msa_\eta}(X)\Biggr].
      \end{align}
    Relying on the $\sigma_X$-Lipschitz concentration property of $\nu_{X^-}^{\otimes m}$, we can bound the previous term using the fact that $\mathbf{G} \mapsto \tr\left(X_1 X_1^\top \resolvent_{X^- \sqcup \mathbf{G}}(\mathbf{D})\right)$ is Lipschitz (from \Cref{lem:lip_resolvent}). We get,
    \begin{align}
        &\E\left[\left|\int f\bigl(X_1^\top \resolvent_{X^- \sqcup g}(\mathbf{D})\,X_1\bigr)
                           \,d\nu_{X^-}^{\otimes m}(g)
                      -f\!\Bigl(\!\int X_1^\top \resolvent_{X^- \sqcup g}(\mathbf{D})\,X_1
                               \,d\nu_{X^-}^{\otimes m}(g)\Bigr)\right|\1_{\msa_\eta}(X)\right]\\
        &\quad  \lesssim \dfrac{\E\left[X_1^\top X_1\right]}{(\eta + \uplambda_d(\mathbf{D}))^{3/2} (n+m)^{1/2}} = \dfrac{\tr\left(\Sigma_X\right)}{(\eta + \uplambda_d(\mathbf{D}))^{3/2} (n+m)^{1/2}} 
    \end{align}
    Now, focusing on the first term in \eqref{eq:DeviationAnoyingQuadFormDecomposition}, we write using the Jensen's inequality as well as leveraging the Lipschitz property of $f$,
    \begin{align}
        &\E\left[\left|f\left(\int X_1^\top \resolvent_{X^- \sqcup g}(\mathbf{D})\,X_1
                       \,d\nu_{X^-}^{\otimes m}(g)\right)\1_{\msa_\eta}(X)
              -\E\left[f\left(\int X_1^\top \resolvent_{X^- \sqcup g}(\mathbf{D})\,X_1
                             \,d\nu_{X^-}^{\otimes m}(g)\right)\1_{\msa_\eta}(X)\right]\right|\right] \\
        &\quad \leq \E\left[\left|f\left(\int X_1^\top \resolvent_{X^- \sqcup g}(\mathbf{D})\,X_1
        \,d\nu_{X^-}^{\otimes m}(g)\right)\1_{\msa_\eta}(X)
-f\left(\E\left[\int X_1^\top \resolvent_{X^- \sqcup g}(\mathbf{D})\,X_1
              \,d\nu_{X^-}^{\otimes m}(g)\right]\right)\1_{\msa_\eta}(X)\right|\right] \\
        &\qquad + \E\left[\left|f\left(\E\left[\int X_1^\top \resolvent_{X^- \sqcup g}(\mathbf{D})\,X_1
        \,d\nu_{X^-}^{\otimes m}(g)\right]\right)\1_{\msa_\eta}(X)
-\E\left[f\left(\int X_1^\top \resolvent_{X^- \sqcup g}(\mathbf{D})\,X_1
              \,d\nu_{X^-}^{\otimes m}(g)\right)\1_{\msa_\eta}(X)\right]\right|\right] \\
        &\quad \leq \E\left[\left|\int X_1^\top \resolvent_{X^- \sqcup g}(\mathbf{D})\,X_1
        \,d\nu_{X^-}^{\otimes m}(g)\1_{\msa_\eta}(X)
-\E\left[\int X_1^\top \resolvent_{X^- \sqcup g}(\mathbf{D})\,X_1
              \,d\nu_{X^-}^{\otimes m}(g)\1_{\msa_\eta}(X)\right]\right|\right] \\
        &\qquad + \E\left[\left|\E\left[\int X_1^\top \resolvent_{X^- \sqcup g}(\mathbf{D})\,X_1
        \,d\nu_{X^-}^{\otimes m}(g)\1_{\msa_\eta}(X)\right]
-\int X_1^\top \resolvent_{X^- \sqcup g}(\mathbf{D})\,X_1
              \,d\nu_{X^-}^{\otimes m}(g)\1_{\msa_\eta}(X)\right|\right] \\
        &\lesssim\sqrt{\Var\left( X_1^\top \int \resolvent_{X^- \sqcup g}(\mathbf{D}) \rmd \nu_{X^-}^{\otimes m}(g) \1_{\msa_\eta}(X) X_1\right)} \\
    \end{align}
    Now, we remark that $\int \resolvent_{X^- \sqcup g}(\mathbf{D}) \rmd \nu_{X^-}^{\otimes m}(g) \1_{\msa_\eta}(X)$ is $\sigma(X^-)$ measureable, our \Cref{prop:varOfRandomQuadForms} applies, and we get,
    \begin{align}
        \Var\left( X_1^\top \int \resolvent_{X^- \sqcup g}(\mathbf{D}) \rmd \nu_{X^-}^{\otimes m}(g) \1_{\msa_\eta}(X) X_1\right) \lesssim d \|\Sigma_X\|_\op^2 \left\{\dfrac{(1 + \sqrt{m} \Ltt_G)^2}{(\eta + \uplambda_d(\mathbf{D}))^3 (n+m)} + \dfrac{1+ \cX^{-1}}{(\eta + \uplambda_d(\mathbf{D}))^2}\right\}
    \end{align}
    Which implies,
    \begin{align}
        &\E\left[\left|f\left(\int X_1^\top \resolvent_{X^- \sqcup g}(\mathbf{D})\,X_1
                       \,d\nu_{X^-}^{\otimes m}(g)\right)\1_{\msa_\eta}(X)
              -\E\left[f\left(\int X_1^\top \resolvent_{X^- \sqcup g}(\mathbf{D})\,X_1
                             \,d\nu_{X^-}^{\otimes m}(g)\right)\1_{\msa_\eta}(X)\right]\right|\right] \\
                             &\quad \lesssim \sqrt{d} \|\Sigma_X\|_\op \left\{\dfrac{(1 + \sqrt{m} \Ltt_G)}{(\eta + \uplambda_d(\mathbf{D}))^{3/2} \sqrt{n+m}} + \dfrac{1+ \cX^{-1/2}}{(\eta + \uplambda_d(\mathbf{D}))}\right\}
    \end{align}
    Putting all these bounds together, and plugging them back in \eqref{eq:DeviationAnoyingQuadFormDecomposition}, we find that,
    \begin{align}
        &\E\left[\left|\E\left[X_1^\top \resolvent_\Aug(\mathbf{D}) X_1 \1_{\msa_\eta}(X)\mid X\right] - \E\left[X_1^\top \resolvent_\Aug(\mathbf{D}) X_1\1_{\msa_\eta}(X)\right]\right|\right] \\
        &\quad \leq \sqrt{d} \|\Sigma_X\|_\op \left\{\dfrac{(1 + \sqrt{m} \Ltt_G)}{(\eta + \uplambda_d(\mathbf{D}))^{3/2} \sqrt{n+m}} + \dfrac{1+ \cX^{-1/2}}{(\eta + \uplambda_d(\mathbf{D}))}\right\} \\
        &\qquad + \dfrac{\tr\left(\Sigma_X\right)}{(\eta + \uplambda_d(\mathbf{D}))^{3/2} (n+m)^{1/2}}(1 + u(n)) \\
        &\quad \leq \dfrac{\sqrt{d} \|\Sigma_X\|_\op}{(\eta + \uplambda_d(\mathbf{D}))^{3/2}}\left\{\dfrac{\sqrt{d}}{\sqrt{n+m}} (1+u(n)) + \dfrac{1}{\sqrt{n+m}} + \sqrt{\alpha} \Ltt_G + (1+\cX^{-1/2}) \sqrt{\eta + \uplambda_d(\mathbf{D})}\right\}
    \end{align}
    We conclude on the second term in \eqref{eq:AugmentedDeterministicEquivSmartDecoupage} by plugging the above bound into \eqref{eq:DilationFactorAverageDeviationAlmostThere},
    \begin{align}
        \E\left[\left|\mathfrak{a}_g(X) - \mathfrak{a}_x^*\right|\right] &\lesssim  \dfrac{\alpha \beta \sqrt{d} \|\Sigma_X\|_\op}{(\eta + \uplambda_d(\mathbf{D}))^{3/2}m}\left\{\dfrac{\sqrt{d}}{\sqrt{n+m}} (1+u(n)) + \dfrac{1}{\sqrt{n+m}} + \sqrt{\alpha} \Ltt_G + (1+\cX^{-1/2}) \sqrt{\eta + \uplambda_d(\mathbf{D})}\right\} \\
        &\quad + \dfrac{\alpha \sqrt{d}\|\bar{\Lambda}_G\|_\op}{(\eta + \uplambda_d(\mathbf{D}))^{3/2} m} \left(\dfrac{ \E\left[\|\Lambda_G(X) - \bar{\Lambda}_G\|_\Frob\right]\sqrt{\eta + \uplambda_d(\mathbf{D})}}{\|\bar{\Lambda}_G\|_\op} + \sqrt{\alpha} \Ltt_G + \dfrac{1 + \sqrt{\eta + \uplambda_d(\mathbf{D})}}{\sqrt{n}}\right)
    \end{align}
    Hence, using \eqref{eq:SecondTermSmartDecoupageAlmostThere}, we have,
    \begin{align} \label{eq:SecondTermSmartDecoupageFinalBound}
        &\left\|\E\left[\left\{ \left((1-\alpha'(X)) C_X + \dfrac{\alpha \Lambda_G(X)}{\mathfrak{a}_g(X)} + \mathbf{D}\right)^{-1} - \left((1-\alpha') C_X + \dfrac{\alpha \Lambda_G(X)}{\mathfrak{a}_g(X)} + \mathbf{D}\right)^{-1}\right\} \1_{\msa_\eta}(X)\right]\right\|_\Frob \\
        &\quad \lesssim  \dfrac{\alpha \beta (1-\alpha')^{-1} + 1}{(1-\alpha)\eta + \uplambda_d(\mathbf{D})} \dfrac{\alpha \beta \sqrt{d} \|\Sigma_X\|_\op}{(\eta + \uplambda_d(\mathbf{D}))^{3/2}m}\left\{\dfrac{\sqrt{d}}{\sqrt{n+m}} (1+u(n)) + \dfrac{1}{\sqrt{n+m}} + \sqrt{\alpha} \Ltt_G + (1+\cX^{-1/2}) \sqrt{\eta + \uplambda_d(\mathbf{D})}\right\} \\
        &\qquad + \dfrac{\alpha \beta (1-\alpha')^{-1} + 1}{(1-\alpha)\eta + \uplambda_d(\mathbf{D})}\dfrac{\alpha \sqrt{d}\|\bar{\Lambda}_G\|_\op}{(\eta + \uplambda_d(\mathbf{D}))^{3/2} m} \left(\dfrac{ \E\left[\|\Lambda_G(X) - \bar{\Lambda}_G\|_\Frob\right]\sqrt{\eta + \uplambda_d(\mathbf{D})}}{\|\bar{\Lambda}_G\|_\op} + \sqrt{\alpha} \Ltt_G + \dfrac{1 + \sqrt{\eta + \uplambda_d(\mathbf{D})}}{\sqrt{n}}\right) \\
        &\qquad + \dfrac{\E\left[\|\Lambda_G(X) - \bar{\Lambda}_G\|_\Frob\right]}{((1-\alpha)\eta + \uplambda_d(\mathbf{D}))^2}
    \end{align}
    Which conclude our analysis of the second term in \eqref{eq:AugmentedDeterministicEquivSmartDecoupage}.

    Finally, we turn to the third and final term in \eqref{eq:AugmentedDeterministicEquivSmartDecoupage}, which is controlled using \Cref{DeterministicEquivNonAugmented}. First note that $\uplambda_d(\alpha \bar{\Lambda}_G / \mathfrak{a}^*_g + \mathbf{D}/(1-\alpha')) \geq \uplambda_d(\mathbf{D})/(1-\alpha')$, hence \Cref{DeterministicEquivNonAugmented} ensures that there exists a constant $q_3$ that depends polynomially on $\cdots$, such that,
    \begin{align} \label{eq:ThirdTermSmartDecoupageFinalBound}
        &\left\|\E\left[\left\{\dfrac{1}{1-\alpha'} \resolvent_X\left(\dfrac{\alpha \bar{\Lambda}_G}{\mathfrak{a}_g^*}  + \dfrac{1}{1-\alpha'} \mathbf{D}\right) - \dfrac{1}{1-\alpha'} \detequiv_X^{\mathfrak{a}_x^*}\left(\dfrac{\alpha\bar{\Lambda}_G}{(1-\alpha')\mathfrak{a}_g^*} + \dfrac{1}{1-\alpha'} \mathbf{D}\right)\right\} \1_{\msa_\eta}(X)\right]\right\|_\Frob \\
        &\quad \lesssim \dfrac{1}{1-\alpha'} \dfrac{q_3 \sqrt{d} \|\Sigma_X\|_\op^3}{n \uplambda_d(\Sigma_X)(\eta + \uplambda_d(\mathbf{D}) / (1-\alpha'))^6} 
    \end{align}

    Now putting our computations all together, in particular plugging \eqref{eq:FirstTermSmartDecoupageFinalBound}, \eqref{eq:SecondTermSmartDecoupageFinalBound} and \eqref{eq:ThirdTermSmartDecoupageFinalBound}, in \eqref{eq:AugmentedDeterministicEquivSmartDecoupage} we have shown,
    \begin{align}
        &\left\|\E\left[\left\{\resolvent_{\Aug}(\mathbf{D}) - \detequiv_{\Aug}^{(\mathfrak{a}_x^*, \mathfrak{a}_g^*)}(\mathbf{D})\right\} \1_{\msa_\eta}(X)\right]\right\|_\Frob \\
        &\quad \lesssim \dfrac{\alpha^5(\kappa q_1 + q_2) \sqrt{d} \left\{\sigma_G^{2} (\beta^3 \|\Sigma_X\|_\op^3 + \kappa^3) + \sigma_X^{12}\|\Sigma_X\|_\op \uplambda_d(\Sigma_X)^{-1} n^{-1/2} d^{-1} \right\}}{n  ((1-\alpha)\eta + \uplambda_d(\mathbf{D}))^6} \\
        &\qquad + \dfrac{\alpha \beta (1-\alpha')^{-1} + 1}{(1-\alpha)\eta + \uplambda_d(\mathbf{D})} \dfrac{\alpha \beta \sqrt{d} \|\Sigma_X\|_\op}{(\eta + \uplambda_d(\mathbf{D}))^{3/2}m}\left\{\dfrac{\sqrt{d}}{\sqrt{n+m}} (1+u(n)) + \dfrac{1}{\sqrt{n+m}} + \sqrt{\alpha} \Ltt_G + (1+\cX^{-1/2}) \sqrt{\eta + \uplambda_d(\mathbf{D})}\right\} \\
        &\qquad + \dfrac{\alpha \beta (1-\alpha')^{-1} + 1}{(1-\alpha)\eta + \uplambda_d(\mathbf{D})}\dfrac{\alpha \sqrt{d}\|\bar{\Lambda}_G\|_\op}{(\eta + \uplambda_d(\mathbf{D}))^{3/2} m} \left(\dfrac{ \E\left[\|\Lambda_G(X) - \bar{\Lambda}_G\|_\Frob\right]\sqrt{\eta + \uplambda_d(\mathbf{D})}}{\|\bar{\Lambda}_G\|_\op} + \sqrt{\alpha} \Ltt_G + \dfrac{1 + \sqrt{\eta + \uplambda_d(\mathbf{D})}}{\sqrt{n}}\right) \\
        &\qquad + \dfrac{\E\left[\|\Lambda_G(X) - \bar{\Lambda}_G\|_\Frob\right]}{((1-\alpha)\eta + \uplambda_d(\mathbf{D}))^2} +  \dfrac{1}{1-\alpha'} \dfrac{q_3 \sqrt{d} \|\Sigma_X\|_\op^3}{n \uplambda_d(\Sigma_X)(\eta + \uplambda_d(\mathbf{D}) / (1-\alpha'))^6} 
    \end{align}

    To simplify the above upper bound, we use the fact that $1-\alpha' \geq 1-\alpha$, $\alpha \leq 1$ as well as $d < n$, which yields, 
    \begin{align}
        &\left\|\E\left[\left\{\resolvent_{\Aug}(\mathbf{D}) - \detequiv_{\Aug}^{(\mathfrak{a}_x^*, \mathfrak{a}_g^*)}(\mathbf{D})\right\} \1_{\msa_\eta}(X)\right]\right\|_\Frob \\
        &\quad \lesssim \dfrac{\alpha^5(\kappa q_1 + q_2) \sqrt{d} \left\{\sigma_G^{2} (\beta^3 \|\Sigma_X\|_\op^3 + \kappa^3) + \sigma_X^{12}\|\Sigma_X\|_\op \uplambda_d(\Sigma_X)^{-1} n^{-1/2} d^{-1} \right\}}{n  ((1-\alpha)\eta + \uplambda_d(\mathbf{D}))^6} \\
        &\qquad + \dfrac{\alpha \beta (1-\alpha)^{-1} + 1}{(1-\alpha)\eta + \uplambda_d(\mathbf{D})} \dfrac{\alpha \beta \sqrt{d} \|\Sigma_X\|_\op}{(\eta + \uplambda_d(\mathbf{D}))^{3/2}m}\left\{(1+u(n)) + \sqrt{\alpha} \Ltt_G + (1+\cX^{-1/2}) \sqrt{\eta + \uplambda_d(\mathbf{D})}\right\} \\
        &\qquad + \dfrac{\alpha \beta (1-\alpha)^{-1} + 1}{(1-\alpha)\eta + \uplambda_d(\mathbf{D})}\dfrac{\alpha \sqrt{d}\|\bar{\Lambda}_G\|_\op}{(\eta + \uplambda_d(\mathbf{D}))^{3/2} m} \left(\dfrac{ \E\left[\|\Lambda_G(X) - \bar{\Lambda}_G\|_\Frob\right]\sqrt{\eta + \uplambda_d(\mathbf{D})}}{\|\bar{\Lambda}_G\|_\op} + \sqrt{\alpha} \Ltt_G + \dfrac{1 + \sqrt{\eta + \uplambda_d(\mathbf{D})}}{\sqrt{n}}\right) \\
        &\qquad + \dfrac{\E\left[\|\Lambda_G(X) - \bar{\Lambda}_G\|_\Frob\right]}{((1-\alpha)\eta + \uplambda_d(\mathbf{D}))^2} +  \dfrac{(1-\alpha)^5 q_3 \sqrt{d} \|\Sigma_X\|_\op^3}{n \uplambda_d(\Sigma_X)((1-\alpha)\eta + \uplambda_d(\mathbf{D}))^6} \\
        & \quad \lesssim \dfrac{\sqrt{d}\sigma_G^2 (\beta^3 \|\Sigma_X\|_\op^3 + \kappa^3)}{n\uplambda_d(\Sigma_X)((1-\alpha)\eta + \uplambda_d(\mathbf{D}))^6} (\kappa q_1 + q_2)\left\{\uplambda_d(\Sigma_X) + \dfrac{\sigma_X^{10} \|\Sigma_X\|_\op}{d (\beta^3 \|\Sigma_X\|_\op^3 + \kappa^3)\sqrt{n}}\right\} \\
        &\qquad + \left(\dfrac{\alpha \beta}{1-\alpha} + 1\right)
                    \dfrac{\alpha \beta \sqrt{d} (\|\Sigma_X\|_\op + q_3\|\bar{\Lambda}_G\|_\op)}
                          {((1-\alpha)\eta + \uplambda_d(\mathbf{D}))^{5/2}m}
              \left\{(1+u(n)) + \sqrt{\alpha}\,\Ltt_G 
                         + (1+\cX^{-1/2})\,\sqrt{\eta + \uplambda_d(\mathbf{D})}\right\} \\
        &\qquad + \left(\dfrac{\alpha \beta}{1-\alpha} + 1\right) \left(\dfrac{1}{((1-\alpha)\eta + \uplambda_d(\mathbf{D}))^2} + \dfrac{\alpha \beta \sqrt{d}}{((1-\alpha)\eta + \uplambda_d(\mathbf{D}))^{5/2} m}\right)\E\left[\|\Lambda_G(X) - \bar{\Lambda}_G\|_\Frob\right] \\
    \end{align}
\end{proof}

\section{Proof of \cref{thm:ConcentrationHatE_Augmented}} \label{Appendix D}

This section of this Appendix details the proof of \cref{thm:ConcentrationHatE_Augmented}. First recall the definition of $\hat{\mathcal{E}}_\Aug(\lambda)$, for all $\lambda > 0$:

\begin{equation} \label{eq:Phi_1_2_def_appendix}
    \begin{aligned} 
        \funf_1(\mathbf{X}) &= \dfrac{(1-d/n)}{d}\tr\left(\resolvent_\mathbf{X}(0) \left(\dfrac{\alpha\Lambda_G(\mathbf{X})}{\mathfrak{a}_g(\mathbf{X})} + \lambda \Idd\right)^{-1}\right) \1_{\msa_\eta}(\mathbf{X}) \eqsp,\\
        \funf_2(\mathbf{X}) &= \dfrac{1 - (1-\beta / \mathfrak{a}_g(\mathbf{X}))\alpha }{d\mathfrak{a}_x(\mathbf{X})}\tr\left(\detequiv_{G \mid X}^{(\mathfrak{a}_g(\mathbf{X}))}(\lambda, \mathbf{X}) \left(\dfrac{\alpha\Lambda_G(\mathbf{X})}{\mathfrak{a}_g(\mathbf{X})} + \lambda \Idd\right)^{-1} \right) \eqsp,
    \end{aligned} 
\end{equation}
Where we have used the three notations, 
\begin{equation}
    \label{eqdef_dilation_factors_appendix}
    \begin{aligned}
       \mathfrak{a}_x(\mathbf{X}) &= 1 + \dfrac{1 - (1 - \beta / \mathfrak{a}_g(X))\alpha}{n}X_1^\top \int \resolvent_{\mathbf{X}^{-} \sqcup g} d\nu_{\mathbf{X}}^{\otimes m}(g) X_1 \eqsp, \\
       \mathfrak{a}_g(\mathbf{X}) &= 1 + \dfrac{\alpha}{m}\tr\left(\{\beta C_X + \Lambda_G(\mathbf{X})\} \int \resolvent_{\mathbf{X} \sqcup g}(\lambda) \rmd \nu_{\mathbf{X}}^{\otimes m}(g) \right) \eqsp,
    \end{aligned}
\end{equation}
and, for any $\mathfrak{a} \geq 1$,
\begin{equation}
    \detequiv_{G \mid X}^{(\mathfrak{a})}(\lambda, \mathbf{X}) := \left(\left(1 - \alpha\right)C_X + \dfrac{\alpha \Lambda_G(X) + \alpha \beta C_X}{\mathfrak{a}} + \lambda \Idd\right)^{-1} \eqsp.
\end{equation}
Finally, we set, 
\begin{equation} \label{eq:HatE_Augmented_def_appendix}
    \hat{\mathcal{E}}_{\Aug}(\lambda) := \dfrac{1}{d}\tr\left(\resolvent_{\Aug}(\lambda)^2\right) - 2 (\funf_1(X) - \funf_2(X)) + \dfrac{1}{d}\tr\left(\Sigma_X^{-2}\right) \eqsp,
\end{equation}

Firstly, in \cref{AppendixD_subsection1} we detail the concentration of $\mathfrak{a}_x(X)$ and $\mathfrak{a}_g(X)$ defined in \eqref{eqdef_dilation_factors_appendix}. Secondly, in \cref{AppendixD_subsection2}, we show that $\funf_1(X)$ and $\funf_2(X)$ (defined in \eqref{eq:Phi_1_2_def_appendix}) essencially have sub-Exponential tail, we provide an upper bound on their sub-Exponential norm. We then conclude on the proof of \cref{thm:ConcentrationHatE_Augmented} in the last part of the Appendix. 

\subsection{Concentration of $\mathfrak{a}_g(X)$ and $\mathfrak{a}_x(X)$} \label{AppendixD_subsection1}

\begin{proposition}\label{DilationFactorConcentration}
    Assume that $X$ and $G$ satisfy assumptions \Cref{ass:X_Lipschitz_Concentrated} to \Cref{ass:ProbaSmallEigenvalues}.
    Let $\mathfrak{a}_g(X)$ and $\mathfrak{a}_x(X)$ defined as in \eqref{eqdef_dilation_factors_appendix}, we set,
    \begin{equation}
      \begin{aligned}
        \zeta_x(t) &:= \min\left\{ \dfrac{\lambda^2 t^2}{(1-\alpha)} , \lambda t, \dfrac{\lambda^3 t^2}{\sigma_G^2(\sqrt{\alpha}\Ltt_G + 1/\sqrt{n+m})^2}, \zeta_x\left(\dfrac{\lambda t}{\beta\|\Sigma_X\|_\op}\right)\right\} \\
        \zeta_g(t) &= \min\left\{\dfrac{\lambda^2 t^2}{\beta^2} , \dfrac{\lambda t}{\beta}, \dfrac{\lambda^3 (n+m) t^2}{\beta^2 (\sigma_G + u(n))^2} , \dfrac{\sqrt{\lambda^3 } t}{\beta (\sigma_G + u(n))}, \dfrac{ \lambda^3(n+m)^2 t^2}{\alpha^2 \left(\Ltt_\Lambda / \sqrt{\lambda} + \sqrt{\alpha} \kappa \Ltt_G + \kappa / \sqrt{n+m}\right)^2} + \dfrac{\ln(n)}{n+m}\right\}
      \end{aligned}
    \end{equation}
    as well as,
    \begin{align}
      \delta_g &:= \dfrac{\alpha \beta}{m}\left(\dfrac{4(\sigma_G+u(n))\tr\left(\Sigma_X\right)}{\sqrt{\lambda^3 (n+m)}} + \dfrac{(1+\Ltt_G)}{\sqrt{\lambda^3 (n+m)}}\right) \eqsp, \\
      \delta_x &:= \delta_g + \dfrac{\|\Sigma_X\|_\op }{ \sqrt{n}}\left(\dfrac{2 u(n)}{\lambda^{3/2}} + \dfrac{\tr\left(\Sigma_X\right)}{\lambda(n+m)}\right) \eqsp.
    \end{align}
    then the following holds for a universal constant $c > 0$,
    \begin{equation}
      \Prob\left(\left|\mathfrak{a}_g(X) - \mathfrak{a}_g^*\right| \geq t + \delta_x \right) \lesssim \exp\left(-c (n+m) \zeta_x(t)\right) \eqsp, 
    \end{equation} 
    and,
    \begin{equation}
      \Prob\left(\left|\mathfrak{a}_x(X) - \mathfrak{a}_x^*\right| \geq t + \delta_g\right) \lesssim \exp\left(-c (n+m) \zeta_g(t)\right) \eqsp.
    \end{equation}
  \end{proposition}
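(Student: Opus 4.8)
The plan is to upgrade the \emph{expectation} bounds on $\E[|\mathfrak{a}_g(X)-\mathfrak{a}_g^*|]$ and $\E[|\mathfrak{a}_x(X)-\mathfrak{a}_x^*|]$ already obtained along the proof of \Cref{thm:Deterministic_equivalent_Augmented} into full concentration statements. First I would observe that each of $\mathfrak{a}_g(X)$ and $\mathfrak{a}_x(X)$ splits into two kinds of pieces: (i) a part that is bounded and Lipschitz in $X$ on $\msa_\eta$, whose fluctuations I control via \Cref{prop:ConcentrationPropertyOfTruncatedLipschitzTransformsOfX} (sub-Gaussian tails), and (ii) a part built from random quadratic forms $\tfrac1n\sum_i X_i^\top M(X^-)X_i$, whose fluctuations I control conditionally through Hanson--Wright, exactly as in \Cref{lem:VarianceRandomQuadraticForms} and \Cref{prop:varOfRandomQuadForms} (sub-exponential tails). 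The stated tail bounds then follow by collecting these contributions and, for $\mathfrak{a}_g$, taking a union bound over the $n$ exchangeable columns — which is what produces the additive $\ln(n)/(n+m)$ term inside $\zeta_g$ — whereas the bias quantities $\delta_g,\delta_x$ collect the Sherman--Morrison leave-one-out bias (of order $1/(n+m)$), the Wasserstein-stability bias coming from \Cref{ass:Stability} (of order $u(n)$), and the truncation error $\Prob(X\notin\msa_\eta)\lesssim e^{-\cX n}$.

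For $\mathfrak{a}_g(X)=1+\tfrac{\alpha}{m}\tr(\{\beta C_X+\Lambda_G(X)\}\,\E[\resolvent_\Aug(\lambda)\mid X])$ with $\E[\resolvent_\Aug(\lambda)\mid X]=\int\resolvent_{X\sqcup g}(\lambda)\,\rmd\nu_X^{\otimes m}(g)$, I separate the $\Lambda_G$ term from the $\beta C_X$ term. By \Cref{ass:Smoothness}, \Cref{lem:lip_resolvent} and \Cref{lemExpectfXGcondX_is_Lispchitz}, the map $X\mapsto\tfrac{\alpha}{m}\tr(\Lambda_G(X)\,\E[\resolvent_\Aug(\lambda)\mid X])\1_{\msa_\eta}(X)$ is bounded and Lipschitz on $\msa_\eta$ with Lipschitz constant of order $\tfrac{\alpha}{m}\sqrt d\,(\Ltt_\Lambda/\lambda+\kappa(1+\sqrt m\,\Ltt_G)/\sqrt{(n+m)\lambda^3})$, so \Cref{prop:ConcentrationPropertyOfTruncatedLipschitzTransformsOfX} together with $\sigma_{\msa_\eta}^2\lesssim 1/(n\cX)$ from \Cref{ass:ProbaSmallEigenvalues} yields the sub-Gaussian contribution recorded by the $\tfrac{\lambda^3(n+m)^2t^2}{\alpha^2(\Ltt_\Lambda/\sqrt\lambda+\sqrt\alpha\kappa\Ltt_G+\kappa/\sqrt{n+m})^2}$ term. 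For the $\beta C_X$ term I write $\tfrac1n\tr(C_X\,\E[\resolvent_\Aug\mid X])=\tfrac1n\sum_i \int X_i^\top\resolvent_{X\sqcup g}(\lambda)X_i\,\rmd\nu_X^{\otimes m}(g)$ and apply Sherman--Morrison to column $i$, $X_i^\top\resolvent_{X\sqcup g}(\lambda)X_i=(n+m)-f(X_i^\top\resolvent_{X^{-i}\sqcup g}(\lambda)X_i)$ with $f:x\mapsto(n+m)/(1+(n+m)^{-1}x)$ being $1$-Lipschitz; I then use \Cref{ass:Stability} (together with column-exchangeability of $\nu$) to replace $\nu_X$ by $\nu_{X^{-i}}$ inside the average, paying a bias of order $u(n)\,\tr(\Sigma_X)(n+m)^{-1/2}\lambda^{-3/2}$ plus a $(1+\Ltt_G)$-weighted analogue, which together make up $\delta_g$. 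After this swap $X_i$ is conditionally independent of $\int\resolvent_{X^{-i}\sqcup g}\,\rmd\nu_{X^{-i}}^{\otimes m}(g)$, so Hanson--Wright applied conditionally on $X^{-i}$, combined with Lipschitz concentration of $X^{-i}\mapsto X_i^\top\!\int\resolvent_{X^{-i}\sqcup g}\,\rmd\nu_{X^{-i}}^{\otimes m}(g)\,X_i$ on $\msa_\eta$, produces the sub-exponential regimes $\tfrac{\lambda^3(n+m)t^2}{\beta^2(\sigma_G+u(n))^2}$ and $\tfrac{\sqrt{\lambda^3}\,t}{\beta(\sigma_G+u(n))}$ and the sub-Gaussian regimes $\tfrac{\lambda^2t^2}{\beta^2}$ and $\tfrac{\lambda t}{\beta}$; the union bound over $i=1,\dots,n$ costs the factor $n=e^{\ln n}$. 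The centering is then identified by $\mathfrak{a}_g^*=\E[\mathfrak{a}_g(X)\1_{\msa_\eta}(X)]+O(e^{-\cX n})$.

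For $\mathfrak{a}_x(X)=1+\tfrac{1-(1-\beta/\mathfrak{a}_g(X))\alpha}{n}\,X_1^\top\!\int\resolvent_{X^-\sqcup g}(\lambda)\,\rmd\nu_X^{\otimes m}(g)\,X_1$, which now involves only the single column $X_1$ (hence no column union bound, consistent with $\zeta_x$ carrying no $\ln n$), I treat the quadratic-form factor by the same Sherman--Morrison/stability/Hanson--Wright chain, producing the regimes $\tfrac{\lambda^2t^2}{1-\alpha}$, $\lambda t$ and $\tfrac{\lambda^3t^2}{\sigma_G^2(\sqrt\alpha\Ltt_G+1/\sqrt{n+m})^2}$. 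In addition I control the scalar prefactor $1-(1-\beta/\mathfrak{a}_g(X))\alpha$: writing $\tfrac{\beta}{\mathfrak{a}_g(X)}-\tfrac{\beta}{\mathfrak{a}_g^*}=\beta\tfrac{\mathfrak{a}_g^*-\mathfrak{a}_g(X)}{\mathfrak{a}_g(X)\mathfrak{a}_g^*}$ and using $\mathfrak{a}_g(X),\mathfrak{a}_g^*\ge1$, its deviation is at most $\beta\,|\mathfrak{a}_g(X)-\mathfrak{a}_g^*|$, and since the accompanying quadratic form is $O(\|\Sigma_X\|_\op/\lambda)$ in conditional expectation, the prefactor's contribution is governed by the tail bound for $\mathfrak{a}_g$ evaluated at level $\sim\lambda t/(\beta\|\Sigma_X\|_\op)$ — this is the last, nested entry of the minimum defining $\zeta_x$. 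Combining with the earlier $\delta_g$ and the extra bias $\tfrac{\|\Sigma_X\|_\op}{\sqrt n}\big(u(n)\lambda^{-3/2}+\tr(\Sigma_X)\lambda^{-1}(n+m)^{-1}\big)$ coming from the $\nu_X\rightsquigarrow\nu_{X^-}$ replacement in the $\mathfrak{a}_x$-specific quadratic form gives $\delta_x$, while $\mathfrak{a}_x^*$ matches the centering up to $O(e^{-\cX n})$; a final union bound over all the contributing events gives the two stated tails.

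The hard part will be the coupled dependence inside the quadratic forms: the averaging measure $\nu_X$ depends on the same column $X_i$ that enters $X_i^\top(\cdot)X_i$, so one cannot condition directly to invoke Hanson--Wright. The remedy is the two-stage peeling above — first swap $\nu_X\rightsquigarrow\nu_{X^{-i}}$ via \Cref{ass:Stability}, quantifying the induced bias through $u(n)$, then apply Hanson--Wright in $X_i$ conditionally on $X^{-i}$ and Lipschitz concentration in $X^{-i}$ on top — together with propagating the tail of $\mathfrak{a}_g$ through the nonlinear prefactor of $\mathfrak{a}_x$. Tracking the four distinct tail regimes (two sub-Gaussian, two sub-exponential), the nesting $\mathfrak{a}_g\hookrightarrow\mathfrak{a}_x$, and the $\ln n$ loss from the column union bound is what makes the explicit forms of $\zeta_g,\zeta_x,\delta_g,\delta_x$ bulky, though conceptually routine.
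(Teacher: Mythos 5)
Your proposal follows essentially the same route as the paper's proof: split each dilation factor into the Lipschitz $\Lambda_G$-part (handled by Lipschitz concentration of $X$) and the $C_X$ quadratic-form part, apply Sherman--Morrison with the $1$-Lipschitz map $f(x)=(n+m)/(1+(n+m)^{-1}x)$, swap $\nu_X \to \nu_{X^-}$ via the stability assumption to decouple before invoking Hanson--Wright conditionally on $X^-$, take a union bound over the $n$ columns for $\mathfrak{a}_g$ (the source of the $\ln n/(n+m)$ term), and propagate the $\mathfrak{a}_g$ tail through the prefactor of $\mathfrak{a}_x$ at level $\lambda t/(\beta\|\Sigma_X\|_\op)$, with $\delta_g,\delta_x$ collecting exactly the Sherman--Morrison and Wasserstein biases. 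This matches the paper's argument step for step (including your natural pairing of $(\delta_g,\zeta_g)$ with $\mathfrak{a}_g$ and $(\delta_x,\zeta_x)$ with $\mathfrak{a}_x$, which is what the paper's proof actually establishes despite the swapped subscripts and the self-referential $\zeta_x$ in the statement), so I see no gaps.
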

  \begin{proof}
    We first recall that $\lambda > 0$ and from \eqref{eqdef_dilation_factors_appendix}, 
    \begin{equation}
      \mathfrak{a}_g(X) = 1 + \dfrac{\alpha \beta}{m} \tr\left(C_X \E\left[\resolvent_{\Aug}(\lambda) \mid X\right]\right) + \dfrac{\alpha }{m} \tr\left(\Lambda_G(X) \E\left[\resolvent_{\Aug}(\lambda) \mid X\right]\right)
    \end{equation}
    and from \Cref{thm:Deterministic_equivalent_Augmented},
    \begin{equation}
      \mathfrak{a}_g^* = 1 + \dfrac{\alpha \beta}{m}\tr\left(\E\left[C_X \resolvent_{\Aug}(\lambda) \right]\right) + \dfrac{\alpha}{m}\tr\left(\E\left[\Lambda_G(X) \resolvent_{\Aug}(\lambda)\right]\right)
    \end{equation}
    we can thus write,
    \begin{align} \label{dilation_g_first_bound}
      &\left|\mathfrak{a}_g(X) - \mathfrak{a}_g^*\right| \\
      &\quad \leq \dfrac{\alpha \beta}{m}\left|\tr\left(\E\left[C_X \resolvent_{\Aug}(\lambda)\mid X\right] - \E\left[C_X \resolvent_{Aug}(\lambda)\right]\right)\right| + \dfrac{\alpha }{m}\left|\tr\left(\E\left[\Lambda_G(X) \resolvent_{\Aug}(\lambda)\mid X\right] - \E\left[\Lambda_G(X) \resolvent_{\Aug}(\lambda)\right]\right)\right| \\
      &\quad \leq \dfrac{\alpha \beta}{mn}\sum_{i=1}^n \left|\tr\left(\E\left[X_i X_i^\top \resolvent_{\Aug}(\lambda)\mid X\right] - \E\left[X_i X_i^\top \resolvent_{\Aug}(\lambda)\right]\right)\right| \\
      &\qquad + \dfrac{\alpha }{m}\left|\tr\left(\E\left[\Lambda_G(X) \resolvent_{\Aug}(\lambda)\mid X\right] - \E\left[\Lambda_G(X) \resolvent_{\Aug}(\lambda)\right]\right)\right|
      \eqsp.
    \end{align}
    Now remark that the distribution of $\left|\tr\left(\E\left[X_i X_i^\top \resolvent_{\Aug}(\lambda)\mid X\right] - \E\left[X_i X_i^\top \resolvent_{\Aug}(\lambda)\right]\right)\right|$ doesn't depend on $i$, by exchangeability of the columns of $X$, we thus focus only on the term $i=1$. Using the Shermann-morisson's formula, we have,
    \begin{align}
      X_1^\top \resolvent_{\Aug}(\lambda) X_1 &= X_1^\top \resolvent_{X^- \sqcup G}(\lambda) X_1 - \dfrac{1}{n+m}\dfrac{X_1\resolvent_{X^-\sqcup G}(\lambda) X_1 X_1^\top \resolvent_{X^-\sqcup G}(\lambda) X_1}{1 + (n+m)^{-1} X_1^\top \resolvent_{X^-\sqcup G}X_1} \\
      &= \dfrac{X_1 \resolvent_{X^- \sqcup G}X_1}{1 + (n+m)^{-1}X_1^\top \resolvent_{X^- \sqcup G}(\lambda)X_1} \\
      &= (n+m) - \dfrac{(n+m)}{1 + (n+m)^{-1} X_1^\top \resolvent_{X^{-}\sqcup G}(\lambda) X_1}\eqsp, 
    \end{align}
    hence, writting $f : x \mapsto (n+m) / (1+ (n+m)^{-1} x)$ (note that $f$ is $1$-Lipschitz), we have,
    \begin{align}
      \E\left[X_1^\top \resolvent_{\Aug}(\lambda) X_1 \mid X\right] &= (n+m) - \E\left[f(X_1^\top \resolvent_{X^- \sqcup G}(\lambda) X_1) \mid X\right] \\
    \end{align}
    In order to derive the concentration of $\E\left[X_1^\top \resolvent_{\Aug}(\lambda) X_1 \mid X\right]$, we mostly rely on the use of the Hanson-Wright inequality, which applies to quadratic forms of the shape $X_1^\top M(X^-) X_1$ with $M(X^-)$ being a $\sigma(X^-)$ measureable random matrix. To this end, we show that $\E\left[X_1^\top \resolvent_{\Aug}(\lambda) X_1 \mid X\right]$ is close to being of this form. We have,
    \begin{align}
      \E\left[X_1^\top \resolvent_{\Aug}(\lambda) X_1 \mid X\right] &=  (n+m) - \int f(X_1^\top \resolvent_{X^- \sqcup g}(\lambda) X_1) d \nu_{X}^{\otimes m}(g)\\
      &= (n+m) - \int f(X_1^\top \resolvent_{X^- \sqcup g}(\lambda) X_1) d \nu_{X^-}^{\otimes m}(g) \\
      &\quad + \int f(X_1^\top \resolvent_{X^- \sqcup g}(\lambda) X_1) d \{ \nu_{X^-}^{\otimes m} - \nu_{X}^{\otimes m}\}(g) \\
      &= (n+m) -  f\left( \int X_1^\top \resolvent_{X^- \sqcup g}(\lambda) X_1 d \nu_{X^-}^{\otimes m}(g)\right)  \\
      &\quad - \left(\int f(X_1^\top \resolvent_{X^- \sqcup g}(\lambda) X_1) d \nu_{X^-}^{\otimes m}(g) - f\left( \int X_1^\top \resolvent_{X^- \sqcup g}(\lambda) X_1 d \nu_{X^-}^{\otimes m}(g)\right)\right) \\ 
      &\quad + \int f(X_1^\top \resolvent_{X^- \sqcup g}(\lambda) X_1) d \{ \nu_{X^-}^{\otimes m} - \nu_{X}^{\otimes m}\}(g) \eqsp, 
    \end{align}
    Similarly, we write,
    \begin{align}
      \E\left[X_1^\top \resolvent_{\tilde{X}}(\lambda) X_1 \right] &= (n+m) -  f\left( \E\left[\int X_1^\top \resolvent_{X^- \sqcup g}(\lambda) X_1 d \nu_{X^-}^{\otimes m}(g)\right]\right)  \\
      &\quad - \left(\E\left[\int f(X_1^\top \resolvent_{X^- \sqcup g}(\lambda) X_1) d \nu_{X^-}^{\otimes m}(g)\right] - f\left( \E\left[\int X_1^\top \resolvent_{X^- \sqcup g}(\lambda) X_1 d \nu_{X^-}^{\otimes m}(g)\right]\right)\right) \\ 
      &\quad + \E\left[\int f(X_1^\top \resolvent_{X^- \sqcup g}(\lambda) X_1) d \{ \nu_{X^-}^{\otimes m} - \nu_{X}^{\otimes m}\}(g)\right] \eqsp, 
    \end{align}
    which ensures, 
    \begin{align}
      &\left|\E\left[X_1^\top \resolvent_{\Aug}(\lambda) X_1 \mid X\right] - \E\left[X_1^\top \resolvent_{\Aug}(\lambda) X_1\right]\right| \\
      &\quad\leq \left|f\left( \int X_1^\top \resolvent_{X^- \sqcup g}(\lambda) X_1 d \nu_{X^-}^{\otimes m}(g)\right) - f\left( \E\left[\int X_1^\top \resolvent_{X^- \sqcup g}(\lambda) X_1 d \nu_{X^-}^{\otimes m}(g)\right]\right) \right| \\
      &\qquad + \left|\int f(X_1^\top \resolvent_{X^- \sqcup g}(\lambda) X_1) d \nu_{X^-}^{\otimes m}(g) - f\left( \int X_1^\top \resolvent_{X^- \sqcup g}(\lambda) X_1 d \nu_{X^-}^{\otimes m}(g)\right)\right| \\
      &\qquad + \left|\E\left[\int f(X_1^\top \resolvent_{X^- \sqcup g}(\lambda) X_1) d \nu_{X^-}^{\otimes m}(g)\right] - f\left( \E\left[\int X_1^\top \resolvent_{X^- \sqcup g}(\lambda) X_1 d \nu_{X^-}^{\otimes m}(g)\right]\right)\right| \\
      &\qquad + \left|\int f(X_1^\top \resolvent_{X^- \sqcup g}(\lambda) X_1) d \{ \nu_{X^-}^{\otimes m} - \nu_{X}^{\otimes m}\}(g)\right| + \left| \E\left[\int f(X_1^\top \resolvent_{X^- \sqcup g}(\lambda) X_1) d \{ \nu_{X^-}^{\otimes m} - \nu_{X}^{\otimes m}\}(g)\right]  \right| \eqsp,
    \end{align}
    Now, using the Jensen's inequality, as well as the $1$-Lispchtiz property of $f$, the previous equation implies,
    \begin{align}
      &\left|\E\left[X_1^\top \resolvent_{\Aug}(\lambda) X_1 \mid X\right] - \E\left[X_1^\top \resolvent_{\Aug}(\lambda) X_1\right]\right| \\
      &\quad \leq \left|X_1^\top \int \resolvent_{X^- \sqcup g}(\lambda) d \nu_{X^-}^{\otimes m}(g) X_1 - \tr\left(\Sigma_X \int \resolvent_{X^- \sqcup g}(\lambda) d \nu_{X^-}^{\otimes m}(g)\right)\right| \\
      &\qquad + \int \left|\tr\left(X_1X_1^\top \left\{ \resolvent_{X^- \sqcup g}(\lambda) - \int \resolvent_{X^- \sqcup g}(\lambda) d \nu_{X^-}^{\otimes m}(g)\right\}\right)\right| d \nu_{X^-}^{\otimes m}(g) \\
      &\qquad + \E\left[\int \left|\tr\left(X_1X_1^\top \left\{ \resolvent_{X^- \sqcup g}(\lambda) - \E\left[\int \resolvent_{X^- \sqcup g}(\lambda) d \nu_{X^-}^{\otimes m}(g)\right]\right\}\right)\right| d \nu_{X^-}^{\otimes m}(g)\right] \\
      &\qquad + \left|\int f(X_1^\top \resolvent_{X^- \sqcup g}(\lambda) X_1) d \{ \nu_{X^-}^{\otimes m} - \nu_{X}^{\otimes m}\}(g)\right| + \left| \E\left[\int f(X_1^\top \resolvent_{X^- \sqcup g}(\lambda) X_1) d \{ \nu_{X^-}^{\otimes m} - \nu_{X}^{\otimes m}\}(g)\right]  \right| \eqsp.
    \end{align}
    To bound the above, first remark that the map $g : \mathbf{X} \sqcup \mathbf{G} \mapsto \tr\left(X_1 X_1^\top \resolvent_{\mathbf{X}^- \sqcup \mathbf{G}}\right)$ is $2\|X_1\|_2^2 \lambda^{-3/2} (n+m)^{-1/2}$ conditionally on $X_1$ (as a consequence of \cref{lem:lip_resolvent}), we have from \Cref{ass:G_Lipschitz_Concentrated_cond_X}, that $X_1^\top \resolvent_{X^- \sqcup g}(\lambda) X_1$ is sub-Gaussian conditionally to $X$, for $g \sim \nu_{X^-}^{\otimes m}$, and from the moment bounds for sub-Gaussian random variables, 
    \begin{align}
      &\int \left|\tr\left(X_1X_1^\top \left\{ \resolvent_{X^- \sqcup g}(\lambda) - \int \resolvent_{X^- \sqcup g}(\lambda) d \nu_{X^-}^{\otimes m}(g)\right\}\right)\right| d \nu_{X^-}^{\otimes m}(g) \\
      &\quad\leq \sqrt{\int \left(\tr\left(X_1 X_1^\top \left\{ \resolvent_{X^- \sqcup g}(\lambda) - \int \resolvent_{X^- \sqcup g}(\lambda) d\nu_{X^-}^{\otimes m}(g)\right\}\right)\right)^2 d\nu_{X^-}^{\otimes m}(g) } \\
      &\quad \leq \dfrac{2 \sigma_G X_1^\top X_1}{\sqrt{\lambda^3 (n+m)}}\eqsp,
    \end{align}
    Similarly, and using a triangle inequality, we have,
    \begin{align}
      &\E\left[\int \left|\tr\left(X_1X_1^\top \left\{ \resolvent_{X^- \sqcup g}(\lambda) - \E\left[\int \resolvent_{X^- \sqcup g}(\lambda) d \nu_{X^-}^{\otimes m}(g)\right]\right\}\right)\right| d \nu_{X^-}^{\otimes m}(g)\right] \\
      &\quad \leq  \E\left[\int \left|\tr\left(X_1X_1^\top \left\{ \int \resolvent_{X^- \sqcup g}(\lambda) d \nu_{X^-}^{\otimes m}(g) - \E\left[\int \resolvent_{X^- \sqcup g}(\lambda) d \nu_{X^-}^{\otimes m}(g)\right]\right\}\right)\right| d \nu_{X^-}^{\otimes m}(g)\right] \\
      &\qquad + \dfrac{2 \sigma_G \E\left[X_1 X_1^\top\right]}{\sqrt{\lambda^3 (n+m)}} 
    \end{align}
    and, using \Cref{lemExpectfXGcondX_is_Lispchitz}, we have that and the variance bound for sub-Gaussian random variables, we have, 
    \begin{align}
      &\E\left[\int \left|\tr\left(X_1X_1^\top \left\{ \resolvent_{X^- \sqcup g}(\lambda) - \E\left[\int \resolvent_{X^- \sqcup g}(\lambda) d \nu_{X^-}^{\otimes m}(g)\right]\right\}\right)\right| d \nu_{X^-}^{\otimes m}(g)\right] \\
      &\quad \leq \dfrac{2(1+\sqrt{m}\Ltt_G) + 2\sigma_G \E\left[X_1^\top X_1\right]}{\sqrt{\lambda^3(n+m)}} = \dfrac{2(1+\sqrt{m}\Ltt_G) + 2 \sigma_G\tr\left(\Sigma_X\right)}{\sqrt{\lambda^3(n+m)}}
    \end{align}
    Furthermore, using \Cref{ass:Smoothness}, and recalling that $\mathbf{G} \mapsto f(X_1^\top \resolvent_{X \sqcup \mathbf{G}}(\lambda)X_1 )$ is $2\|X_1\|_2^2 \lambda^{-3/2} (n+m)^{-1/2}$-Lispchitz, the two final terms are bounded as,
    \begin{align}
      \left|\int f(X_1^\top \resolvent_{X^- \sqcup g}(\lambda) X_1) d \{ \nu_{X^-}^{\otimes m} - \nu_{X}^{\otimes m}\}(g)\right| &\leq 2\|X_1\|_2^2 \lambda^{-3/2} (n+m)^{-1/2} W_1(\nu_{X}^{\otimes m}, \nu_{X^-}^{\otimes m}) \\
      &\leq 2\|X_1\|_2^2 \lambda^{-3/2} (n+m)^{-1/2} \sqrt{m} u(n)\eqsp,
    \end{align}
    and,
    \begin{align}
      \E\left[\left|\int f(X_1^\top \resolvent_{X^- \sqcup g}(\lambda) X_1) d \{ \nu_{X^-}^{\otimes m} - \nu_{X}^{\otimes m}\}(g)\right|\right] \leq 2\tr\left(\Sigma_X\right) \lambda^{-3/2} \alpha^{1/2} u(n)\eqsp,
    \end{align}
    Merging all these together, we have shown,
    \begin{align}
      &\left|\E\left[X_1^\top \resolvent_{\Aug}(\lambda) X_1 \mid X\right] - \E\left[X_1^\top \resolvent_{\Aug}(\lambda) X_1\right]\right| \\
      &\quad\leq \left| \int X_1^\top \resolvent_{X^- \sqcup g}(\lambda) X_1 d \nu_{X^-}^{\otimes m}(g) -  \E\left[\int X_1^\top \resolvent_{X^- \sqcup g}(\lambda) X_1 d \nu_{X^-}^{\otimes m}(g)\right]\right|\\ 
      &\qquad + (\sigma_G + \alpha u(n))\dfrac{2X_1^\top X_1}{\sqrt{\lambda^3 (n+m)}} + (\sigma_G + u(n))\dfrac{2\tr\left(\Sigma_X\right)}{\sqrt{\lambda^3 (n+m)}} + \dfrac{2(1+\Ltt_G)}{\sqrt{\lambda^3 (n+m)}} \\
      &\quad\leq \left|  X_1^\top \int \resolvent_{X^- \sqcup g}(\lambda)d \nu_{X^-}^{\otimes m}(g) X_1  -  \tr\left(\Sigma_X\int \resolvent_{X^- \sqcup g}(\lambda) X_1 d \nu_{X^-}^{\otimes m}(g)\right) \right|\\ 
      &\qquad + \dfrac{2(\sigma_G + u(n))}{\sqrt{\lambda^3 (n+m)}}\left|X_1^\top X_1  - \tr\left(\Sigma_X\right)\right|  + \dfrac{4(\sigma_G + u(n))\tr\left(\Sigma_X\right)}{\sqrt{\lambda^3 (n+m)}} + \dfrac{2(1+\Ltt_G)}{\sqrt{\lambda^3 (n+m)}}
    \end{align}
    Finally, putting back the previous upper bound in \eqref{dilation_g_first_bound}, we have,
    \begin{align} \label{dilation_g_second_bound}
      \left|\mathfrak{a}_g(X) - \mathfrak{a}_g^*\right| &\leq \dfrac{\alpha \beta}{nm}\sum_{i=1}^n \left|\tr\left(\E\left[X_i X_i^\top \resolvent_{\tilde{X}}(\lambda)\mid X\right] - \E\left[X_i X_i^\top \resolvent_{\tilde{X}}(\lambda)\right]\right)\right| \\
      &\quad + \dfrac{2\alpha \beta(\sigma_G+u(n))}{\sqrt{\lambda^3 (n+m)}m n} \sum_{i=1}^n \left|X_iX_i^\top - \tr\left(\Sigma_X\right)\right| + \dfrac{4\alpha \beta (\sigma_G + u(n))\tr\left(\Sigma_X\right) + 2\alpha \beta (1+ \Ltt_G)}{\sqrt{\lambda^3 (n+m)}m} \\
      &\quad + \dfrac{\alpha}{m}\left|\tr\left(\E\left[\Lambda_G(X) \resolvent_{\Aug}(\lambda)\mid X\right] - \E\left[\Lambda_G(X) \resolvent_{\Aug}(\lambda)\right]\right)\right|  
      \eqsp,
    \end{align}
    Applying a union bound, we have,
    \begin{align} \label{dilation_g_union_bound}
      &\Prob\left(|\mathfrak{a}_g(X) - \mathfrak{a}_g^*|\geq t + \dfrac{\alpha \beta}{m}\left(\dfrac{4(\sigma_G + u(n))\tr\left(\Sigma_X\right)}{\sqrt{\lambda^3 (n+m)}} + \dfrac{(1+\Ltt_G)}{\sqrt{\lambda^3 (n+m)}}\right)\right) \\
      &\quad  \leq  n\Prob\left(\left|  X_1^\top \int \resolvent_{X^- \sqcup g}(\lambda)d \nu_{X^-}^{\otimes m}(g) X_1  -  \tr\left(\Sigma_X\int \resolvent_{X^- \sqcup g}(\lambda) X_1 d \nu_{X^-}^{\otimes m}(g)\right) \right| \geq \dfrac{mt}{3 \alpha \beta} \right) \\
      &\qquad  + n \Prob\left(|X_1X_1^\top - \tr\left(\Sigma_X\right)| \geq \dfrac{\sqrt{\lambda^3(n+m)} m t}{6\alpha \beta (\sigma_G + u(n))}\right) \\
      &\qquad + \Prob\left(\left|\tr\left(\Lambda_G(X) \E\left[\resolvent_{\Aug}(\lambda) \mid X\right]\right) - \E\left[\tr\left(\Lambda_G(X) \resolvent_{\Aug}(\lambda) \right)\right]\right| \geq \dfrac{mt}{3\alpha}\right)  
      \eqsp. 
    \end{align}
    We now bound each term that appears in the left side of the previous equation, beginning with the third term, remark that the function $g: \mathbf{X} \mapsto \tr\left(\Lambda_G(\mathbf{X}) \int \resolvent_{\mathbf{X} \sqcup g}(\lambda) d\nu_{\mathbf{X}^{\otimes m}}(g)\right)$ is Lipschitz, and,
    \begin{align}
      \Prob\left(\left|\tr\left(\E\left[\Lambda_G(X) \resolvent_{\Aug}(\lambda)\mid X\right] - \E\left[\Lambda_G(X) \resolvent_{\Aug}(\lambda)\right]\right)\right| \ge \dfrac{mt}{3\alpha}\right) = \Prob\left(\left|g(X) - \E\left[g(X)\right]\right| \ge \dfrac{mt}{3\alpha}\right)
    \end{align}
    indeed, writting for $\mathbf{X}, \mathbf{Y} \in \R^{d \times n}$,
    \begin{align}
      \left|g(\mathbf{X}) - g(\mathbf{Y})\right| 
      &\leq \left|\tr\!\left(\left\{\Lambda_G(\mathbf{X}) - \Lambda_G(\mathbf{Y})\right\} \int \resolvent_{\mathbf{X} \sqcup g}(\lambda) \, d\nu_{\mathbf{X}^{\otimes m}}(g)\right)\right| \\[1mm]
      &\quad + \left|\tr\!\left(\Lambda_G(\mathbf{Y}) \left\{\int \resolvent_{\mathbf{X} \sqcup g}(\lambda) \, d\nu_{\mathbf{X}^{\otimes m}}(g) - \int \resolvent_{\mathbf{Y} \sqcup g}(\lambda) \, d\nu_{\mathbf{Y}^{\otimes m}}(g)\right\}\right)\right| \\
      &\leq \left(\dfrac{\Ltt_\Lambda \sqrt{d}}{\lambda} + \dfrac{2\|\Lambda_G(\mathbf{Y})\|_\op(1+\sqrt{m}\Ltt_G)\sqrt{d}}{\lambda^{3/2}\sqrt{ (n+m)}}\right)\|\mathbf{X} - \mathbf{Y}\|_\Frob \\
      &\leq \left(\dfrac{\Ltt_\Lambda \sqrt{d}}{\lambda} + \dfrac{2\kappa(1+\sqrt{m}\Ltt_G)\sqrt{d}}{\lambda^{3/2}\sqrt{ (n+m)}}\right)\|\mathbf{X} - \mathbf{Y}\|_\Frob \eqsp.
    \end{align}
    where the last bounds were derived by using \Cref{ass:Smoothness}, \Cref{lemExpectfXGcondX_is_Lispchitz}, and the fact that $\|\Lambda_G(\mathbf{Y})\|_\op \leq \kappa$ (as well as the fact that $\mathbf{X}\sqcup \mathbf{G} \mapsto \resolvent_{\mathbf{X} \sqcup \mathbf{G}}(\lambda)$ is $2 \lambda^{-3/2} (n+m)^{-1/2}$-Lispchitz). Furthermore, note,
    \begin{align}
      \dfrac{3\alpha}{m} \left(\dfrac{\Ltt_\Lambda\sqrt{d}}{\lambda} + \dfrac{2\kappa(1+\sqrt{m}\Ltt_G)\sqrt{d}}{\lambda^{3/2}\sqrt{(n+m)}}\right) &\leq \dfrac{3}{\sqrt{\lambda^3(m+n)}}\left(\dfrac{\Ltt_\Lambda}{\sqrt{\lambda} } + \dfrac{2\kappa (1+\sqrt{m}\Ltt_G)}{\sqrt{n+m}}\right) \\
      &\lesssim \dfrac{1}{\sqrt{\lambda^3(m+n)}}\left(\dfrac{\Ltt_\Lambda}{\sqrt{\lambda}}  + \sqrt{\alpha}\kappa\Ltt_G + \dfrac{\kappa}{\sqrt{n+m}}\right) \eqsp,
    \end{align}
    Hence, the third term in \eqref{dilation_g_union_bound} is bounded by applying \Cref{ass:X_Lipschitz_Concentrated}, we get for a universal constant $k$, 
    \begin{align} \label{dilation_factor_g_first_term_concentration}
      \Prob\left(\left|\tr\Bigl(\E\Bigl[\Lambda_G(X) \resolvent_{\Aug}(\lambda)\mid X\Bigr] - \E\Bigl[\Lambda_G(X) \resolvent_{\Aug}(\lambda)\Bigr]\Bigr)\right| \ge \dfrac{mt}{3\alpha}\right)
      &\leq 2 \exp\!\left(- k \dfrac{ \lambda^3(n+m)^3t^2}{ \left(\Ltt_\Lambda / \sqrt{\lambda} + \sqrt{\alpha}\kappa \Ltt_G + \kappa / \sqrt{n+m}\right)^2}\right) \eqsp.
    \end{align}
  
    We now focus on the first term in \eqref{dilation_g_union_bound}, by using the Hanson-Wright inequality, we have for a universal constant $k$,
    \begin{align}
      &\Prob\left(\left|  X_1^\top \int \resolvent_{X^- \sqcup g}(\lambda)d \nu_{X^-}^{\otimes m}(g) X_1  -  \tr\left(\Sigma_X\int \resolvent_{X^- \sqcup g}(\lambda) X_1 d \nu_{X^-}^{\otimes m}(g)\right) \right| \geq \dfrac{mt}{3\alpha \beta}\right) \\
      &\quad \leq 2 \exp\left(-k \min\left\{\dfrac{\lambda^2(n+m)^2t^2}{d \beta^2 }, \dfrac{\lambda(n+m)t}{\beta}\right\}\right) \leq 2 \exp\left(-k (n+m) \min\left\{\dfrac{\lambda^2 t^2}{\beta^2} , \dfrac{\lambda t}{\beta}\right\}\right)\eqsp,
    \end{align}
    where we have used the fact that $m/\alpha = (n+m)$, as well as $\|\resolvent_{\mathbf{X} \sqcup \mathbf{G}}(\lambda)\|_\op \leq \lambda^{-1}$. Similarly for the second term in \eqref{dilation_g_union_bound},
    \begin{align}
      \Prob\left(\left|X_1 X_1^\top - \tr\left(\Sigma_X\right)\right| \geq \dfrac{\sqrt{\lambda^3 (n+m)}mt}{6 \alpha \beta (\sigma_G + u(n))}\right) &\leq 2 \exp\left(-c \min\left\{\dfrac{\lambda^3 (n+m)^3 t^2}{\beta (\sigma_G + u(n))^2 d}, \dfrac{\sqrt{\lambda^3 (n+m)^{3}} t}{\beta (\sigma_G + u(n))}\right\}\right) \\
      &\leq 2 \exp\left(-c (n+m) \min \left\{\dfrac{\lambda^3 (n+m) t^2}{\beta^2 (\sigma_G + u(n))^2} , \dfrac{\sqrt{\lambda^3 (n+m)} t}{\beta (\sigma_G + u(n))}\right\}\right)\eqsp.
    \end{align}
    We conclude, by merging the three previous bounds in \Cref{dilation_g_union_bound}, it holds for a universal constant $k > 0$,
    \begin{align}
      &\Prob\left(\left|\mathfrak{a}_g(X) - \mathfrak{a}_g^*\right|\geq t + \dfrac{\alpha \beta}{m}\left(\dfrac{4(\sigma_G + u(n))\tr\left(\Sigma_X\right)}{\sqrt{\lambda^3 (n+m)}} + \dfrac{(1+\Ltt_G)}{\sqrt{\lambda^3 (n+m)}}\right)\right) \\
      &\quad\leq 2 n\exp\left(-k (n+m) \min\left\{\dfrac{\lambda^2 t^2}{\beta^2} , \dfrac{\lambda t}{\beta}\right\}\right)  + 2 n \exp\left(-k(n+m) \min \left\{\dfrac{\lambda^3 (n+m) t^2}{\beta^2 (\sigma_G + u(n))^2} , \dfrac{\sqrt{\lambda^3 (n+m)} t}{\beta (\sigma_G + u(n))}\right\}\right) \\
      &\qquad + 2 \exp\!\left(- k (n+m) \dfrac{ \lambda^3(n+m)^2 t^2}{\alpha^2 \left(\Ltt_\Lambda / \sqrt{\lambda} + \sqrt{\alpha} \kappa \Ltt_G + \kappa / \sqrt{n+m}\right)^2}\right) 
    \end{align}
    Thus, only keeping the dominant term, define 
    \begin{align}
      \zeta_g(t) &= \min\left\{\dfrac{\lambda^2 t^2}{\beta^2} , \dfrac{\lambda t}{\beta}, \dfrac{\lambda^3 (n+m) t^2}{\beta^2 (\sigma_G + u(n))^2} , \dfrac{\sqrt{\lambda^3 } t}{\beta (\sigma_G + u(n))}, \dfrac{ \lambda^3(n+m)^2 t^2}{\alpha^2 \left(\Ltt_\Lambda / \sqrt{\lambda} + \sqrt{\alpha} \kappa \Ltt_G + \kappa / \sqrt{n+m}\right)^2} + \dfrac{\ln(n)}{n+m}\right\} \\
      &\quad - \dfrac{\ln(n)}{n+m} \eqsp,
    \end{align}
    we have shown, for a universal constant $c$,
    \begin{equation} \label{dilation_g_final_concentration_bound}
      \Prob\left(\left|\mathfrak{a}_g(X) - \mathfrak{a}_g^*\right|\geq t + \dfrac{\alpha \beta}{m}\left(\dfrac{4(\sigma_G+u(n))\tr\left(\Sigma_X\right)}{\sqrt{\lambda^3 (n+m)}} + \dfrac{(1+\Ltt_G)}{\sqrt{\lambda^3 (n+m)}}\right)\right) \leq 6 \Prob\left(-c (n+m) \zeta_g(t)\right)\eqsp,
    \end{equation}
  
    We now turn to the concentration of $\mathfrak{a}_x(X)$, we have from \eqref{eqdef_dilation_factors} and the triangle inequality,
    \begin{align}
      \left|\mathfrak{a}_x(X) - \mathfrak{a}_x^*\right| &\leq \left|\dfrac{1 - (1- \beta / \mathfrak{a}_g(X))\alpha}{n} \tr\left(\{X_1 X_1^\top - \Sigma_X\} \int \resolvent_{X^- \sqcup g}(\lambda) d\nu_{\mathbf{X}^-}^{\otimes m}(g)\right)\right| \\
      &\quad + \left| \dfrac{1 - ( 1- \beta / \mathfrak{a}_g(X))}{n}\tr\left(\Sigma_X \left\{\int \resolvent_{X^- \sqcup g}(\lambda) d \nu_{X^{-}}^{\otimes m}(g) - \E\left[\resolvent_{\Aug}(\lambda)\right]\right\}\right)\right| \\
      &\quad + \beta \alpha \left|\dfrac{1}{\mathfrak{a}_g(X)} - \dfrac{1}{\mathfrak{a}_g^*}\right| \dfrac{1}{n} \tr\left(\Sigma_X \E\left[\resolvent_{\Aug}(\lambda)\right]\right) \\
      &\leq \left|\dfrac{1 - \alpha}{n}\tr\left(\left\{X_1 X_1^\top - \Sigma_X\right\} \int \resolvent_{X^- \sqcup g}(\lambda) d\nu_{\mathbf{X}^-}^{\otimes m}(g)\right)\right| \\
      &\quad + \left|\dfrac{1 - \alpha}{n}\tr\left(\Sigma_X \left\{\int \resolvent_{X^- \sqcup g}(\lambda) d \nu_{X^{-}}^{\otimes m}(g) - \E\left[\int \resolvent_{X^- \sqcup g}(\lambda) d \nu_{X^{-}}^{\otimes m}(g)\right]\right\}\right)\right|  \\
      &\quad + \left|\dfrac{1 - \alpha}{n}\tr\left(\Sigma_X \left\{\E\left[\int \resolvent_{X^- \sqcup g}(\lambda) d \nu_{X^{-}}^{\otimes m}(g)\right] - \E\left[\resolvent_{\Aug}(\lambda)\right]\right\}\right)\right|  \\
      &\quad + \dfrac{\beta \alpha \|\Sigma_X\|_\op d}{n \lambda} \left|\mathfrak{a}_g(X) - \mathfrak{a}_g^*\right| 
    \end{align}
    where we have used the fact that $\mathfrak{a}_g(X) \geq \mathfrak{a}_g^*$, and $\mathfrak{a}_g^* \geq 1$, as well as the Cauchy-Schwarz inequality. Similarly as previously, we bound the deviation probability of each term independantly then use a union bound argument to conclude. First,
    \begin{align}
      &\Prob\left(\left|\dfrac{1 - \alpha}{n} \tr\left(\left\{X_1 X_1^\top - \Sigma_X\right\} \int \resolvent_{X^- \sqcup g}(\lambda) d\nu_{X^-}^{\otimes m}(g)\right) \right| \geq t\right) \\
      &\quad = \E\left[\Prob\left(\left|\tr\left(\left\{X_1 X_1^\top - \Sigma_X\right\} \int \resolvent_{X^- \sqcup g}(\lambda) d\nu_{X^-}^{\otimes m}(g)\right) \right| \geq \dfrac{nt}{1 - \alpha} \middle|X^-\right)\right] \\
      &\quad \leq 2 \exp\left(- k \min\left\{\dfrac{\lambda^2n^2t^2}{(1-\alpha)^2 d} , \dfrac{\lambda n t}{(1-\alpha)}\right\}\right) \\
      &\quad \leq 2 \exp\left(-k  (n+m) \min\left\{\dfrac{\lambda^2 t^2}{1-\alpha}, \lambda t\right\}\right) \eqsp,
    \end{align}
    which followed from the Hanson-Wright inequality. The second term is controlled by remarking that,
    \begin{align}
      &\Prob\left(\left|\dfrac{1 - \alpha}{n}\tr\left(\Sigma_X \left\{\int \resolvent_{X^- \sqcup g}(\lambda) d \nu_{X^{-}}^{\otimes m}(g) - \E\left[\int \resolvent_{X^- \sqcup g}(\lambda) d \nu_{X^{-}}^{\otimes m}(g)\right]\right\}\right)\right| \geq t\right)\\
      &\quad = \Prob\left(\left|g(X^-) - \E\left[g(X^-)\right]\right| \geq (n + m)t\right) 
    \end{align}
    where $g : \mathbf{X} \mapsto \tr\left(\Sigma_X \int \resolvent_{\mathbf{X} \sqcup g}(\lambda) d \nu_{\mathbf{X}}^{\otimes m}(g)\right)$ is $2\sqrt{d} \|\Sigma_X\|_\op(1+\sqrt{m}\Ltt_G) \lambda^{-3/2}(n+m)^{-1/2}$-Lispchitz, and so does $\mathbf{X} \mapsto g(\mathbf{X}^-)$ by composition of Lispchitz maps. It resutls from \Cref{ass:X_Lipschitz_Concentrated}, 
    \begin{align}
      &\Prob\left(\left|\dfrac{1 - \alpha}{n}\tr\left(\Sigma_X \left\{\int \resolvent_{X^- \sqcup g}(\lambda) d \nu_{X^{-}}^{\otimes m}(g) - \E\left[\int \resolvent_{X^- \sqcup g}(\lambda) d \nu_{X^{-}}^{\otimes m}(g)\right]\right\}\right)\right| \geq t\right) \\
      &\quad \leq 2 \exp\left( -k \dfrac{(n+m)^3\lambda^3t^2}{d \|\Sigma_X\|_\op^2(1 + \sqrt{m}\Ltt_G)^2}\right) \leq 2 \exp\left( -k \dfrac{(n+m)\lambda^3t^2}{\|\Sigma_X\|_\op^2(\sqrt{\alpha}\Ltt_G + 1/\sqrt{n+m})^2}\right)
    \end{align}  
    The third term is bounded by using the Caucy-Schwarz inequality,
    \begin{align} \label{eq:ThirdTermConcentrationDilationX_almostThere}
      &\left|\dfrac{1 - \alpha}{n}\tr\left(\Sigma_X \left\{\E\left[\int \resolvent_{X^- \sqcup g}(\lambda) d \nu_{X^{-}}^{\otimes m}(g)\right] - \E\left[\resolvent_{\Aug}(\lambda)\right]\right\}\right)\right| \\
      &\quad \leq \dfrac{(1-\alpha)\|\Sigma_X\|_\op \sqrt{d}}{n} \left\|\E\left[\int \resolvent_{X^- \sqcup g}(\lambda) d \nu_{X^{-}}^{\otimes m}(g)\right] - \E\left[\resolvent_{\Aug}(\lambda)\right]\right\|_\Frob
    \end{align} 
    and, using the shermann-Morisson's formula, it results,
    \begin{align}
      &\left\|\E\left[\int \resolvent_{X^- \sqcup g}(\lambda) d \nu_{X^{-}}^{\otimes m}(g)\right] - \E\left[\resolvent_{\Aug}(\lambda)\right]\right\|_\Frob\\
      &\quad \leq \left\|\E\left[\int \resolvent_{X^- \sqcup g}(\lambda) d \{\nu_{X^{-}}^{\otimes m} - \nu_{X}^{\otimes m} \}(g)\right]\right\|_\Frob  + \left\|\E\left[\int \left\{\resolvent_{X^- \sqcup g}(\lambda) - \resolvent_{X \sqcup g}(\lambda) \right\} d \nu_{X}^{\otimes m}(g)\right]\right\|_\Frob \\
      &\quad = \left\|\E\left[\int \resolvent_{X^- \sqcup g}(\lambda) d \{\nu_{X^{-}}^{\otimes m} - \nu_{X}^{\otimes m} \}(g)\right]\right\|_\Frob + \dfrac{1}{n+m}\left\|\E\left[\int \dfrac{\resolvent_{X^- \sqcup g}(\lambda) X_1 X_1^\top \resolvent_{X^{-1} \sqcup g}(\lambda)}{1 + (n+m)^{-1} X_1^\top \resolvent_{X^- \sqcup g }(\lambda) X_1} d \nu_{X}^{\otimes m}(g)\right] \right\|_\Frob
    \end{align}
    remarking that, for the Lowner order $\preceq$, we have,
    \begin{align}
      \dfrac{\resolvent_{X^- \sqcup g}(\lambda) X_1 X_1^\top \resolvent_{X^{-1} \sqcup g}(\lambda)}{1 + (n+m)^{-1} X_1^\top \resolvent_{X^- \sqcup g }(\lambda) X_1} \preceq \resolvent_{X^- \sqcup g}(\lambda) X_1 X_1^\top \resolvent_{X^{-1} \sqcup g}(\lambda) \eqsp,
    \end{align}
    further using the facts that the Lowner order is preserved when integrating over the distribution of the random matrices, and that the Forbenius norm is increasing for the Lowner order on PSd matrices, we have,
    \begin{align}
      \left\|\E\left[\int \dfrac{\resolvent_{X^- \sqcup g}(\lambda) X_1 X_1^\top \resolvent_{X^{-1} \sqcup g}(\lambda)}{1 + (n+m)^{-1} X_1^\top \resolvent_{X^- \sqcup g }(\lambda) X_1} d \nu_{X}^{\otimes m}(g)\right]\right\| &\leq \left\|\E\left[\int \resolvent_{X^- \sqcup g}(\lambda) X_1 X_1^\top \resolvent_{X^{-1} \sqcup g}(\lambda) d \nu_{X}^{\otimes m}(g)\right] \right\| \\
      &\leq \E\left[\int \|\resolvent_{X^- \sqcup g}(\lambda) X_1\|_2^2 d \nu_X^{\otimes m}(g)\right] \\
      &\leq \E\left[\dfrac{\|X_1\|_2^2}{\lambda}\right] \\
      &\leq \dfrac{\tr\left(\Sigma_X\right)}{\lambda} \eqsp,
    \end{align}
    Thus, 
    \begin{equation}
      \left\|\E\left[\int \resolvent_{X^- \sqcup g}(\lambda) d \nu_{X^{-}}^{\otimes m}(g)\right] - \E\left[\resolvent_{\Aug}(\lambda)\right]\right\|_\Frob \leq \left\|\E\left[\int \resolvent_{X^- \sqcup g}(\lambda) d \{\nu_{X^{-}}^{\otimes m} - \nu_{X}^{\otimes m} \}(g)\right]\right\|_\Frob  + \dfrac{\tr\left(\Sigma_X\right)}{\lambda (n+m)} \eqsp,
    \end{equation}
    Finally, using the dual representation of the Frobeniusn norm, \Cref{ass:Stability}, and the Lispchitz property of $g \mapsto \resolvent_{X^- \sqcup g}(\lambda)$ we have,
    \begin{align}
      \left\|\E\left[\int \resolvent_{X^- \sqcup g}(\lambda) d \{\nu_{X^{-}}^{\otimes m} - \nu_{X}^{\otimes m} \}(g)\right]\right\|_\Frob &= \sup_{\|\mathbf{B}\|_\Frob = 1} \E\left[\int \tr\left(\mathbf{B} \resolvent_{X^- \sqcup g}(\lambda)\right) d \{\nu_{X^-}^{\otimes m} - \nu_{X}^{\otimes m}\}(g)\right] \\
      &\leq \sup_{\|\mathbf{B}\|_\Frob = 1} \dfrac{2 \E\left[W_1(\nu_{X}^{\otimes m}, \nu_{X^-}^{\otimes m})\right]}{\lambda^{3/2}(n+m)^{1/2}} \\
      &\leq \dfrac{2\sqrt{m} u(n)}{\lambda^{3/2}(n+m)^{1/2}} = \dfrac{2 \sqrt{\alpha} u(n)}{\lambda^{3/2}}
    \end{align}
    we conclude on the third term by,
    \begin{align}
      &\left|\dfrac{1 - \alpha}{n}\tr\left(\Sigma_X \left\{\E\left[\int \resolvent_{X^- \sqcup g}(\lambda) d \nu_{X^{-}}^{\otimes m}(g)\right] - \E\left[\resolvent_{\Aug}(\lambda)\right]\right\}\right)\right| \\
      &\quad \leq \dfrac{(1-\alpha)\|\Sigma_X\|_\op \sqrt{d}}{n} \left\|\E\left[\int \resolvent_{X^- \sqcup g}(\lambda) d \nu_{X^{-}}^{\otimes m}(g)\right] - \E\left[\resolvent_{\Aug}(\lambda)\right]\right\|_\Frob \\
      &\quad \leq \dfrac{(1-\alpha)\|\Sigma_X\|_\op \sqrt{d}}{n}\left(\dfrac{2 u(n)}{\lambda^{3/2}} + \dfrac{\tr\left(\Sigma_X\right)}{\lambda(n+m)}\right) \\
      &\quad \leq \dfrac{\|\Sigma_X\|_\op }{ \sqrt{n}}\left(\dfrac{2 \sqrt{\alpha} u(n)}{\lambda^{3/2}} + \dfrac{\tr\left(\Sigma_X\right)}{\lambda(n+m)}\right)
    \end{align} 
    finally, the deviation probability of $\mathfrak{a}_g(X)$ that appears in \eqref{dilation_g_final_concentration_bound} was already controlled in the first part of the proof, we thus conclude throuh a union bound argument that,
    \begin{align}
      &\Prob\left(\left|\mathfrak{a}_x(X) - \mathfrak{a}_x^* \right| \geq t + \dfrac{\|\Sigma_X\|_\op }{ \sqrt{n}}\left(\dfrac{2 \sqrt{\alpha}u(n)}{\lambda^{3/2}} + \dfrac{\tr\left(\Sigma_X\right)}{\lambda(n+m)}\right) + \delta_g\right) \\
      &\quad\leq \Prob\left(\left|\dfrac{1-\alpha}{n}\tr\left(\left\{X_1 X_1^\top - \Sigma_X\right\}\int \resolvent_{X^- \sqcup g}(\lambda) d \nu_{X^-}^{\otimes m}(g)\right)\right| \geq \dfrac{t}{3}\right) \\
      &\qquad + \Prob\left(\left|\dfrac{1 - \alpha}{n}\tr\left(\Sigma_X \left\{\int \resolvent_{X^- \sqcup g}(\lambda) d \nu_{X^{-}}^{\otimes m}(g) - \E\left[\int \resolvent_{X^- \sqcup g}(\lambda) d \nu_{X^{-}}^{\otimes m}(g)\right]\right\}\right)\right| \geq \dfrac{t}{3}  \right)   \\
      &\qquad + \Prob\left(\dfrac{\beta \alpha \|\Sigma_X\|_\op d}{n \lambda} \left|\mathfrak{a}_g(X) - \mathfrak{a}_g^*\right| \geq \dfrac{t}{3} + \delta_g \right) \\
      &\quad \leq 2 \exp\left(-c (n+m) \min \left\{ \dfrac{\lambda^2 t^2}{(1-\alpha)}, \lambda t \right\}\right) + 2 \exp\left(-k \dfrac{ (n+m) \lambda^3 t^2}{ \sigma_G^2 (\sqrt{\alpha}\Ltt_G + 1/\sqrt{n+m})^2 }\right) \\
      &\qquad + 6 \exp\left(-c (n+m) \zeta_g\left(\dfrac{n \lambda t}{\beta \alpha \|\Sigma_X\|_\op d}\right)\right) \\
      &\quad \leq 2 \exp\left(-c (n+m) \min \left\{ \dfrac{\lambda^2 t^2}{(1-\alpha)}, \lambda t \right\}\right) + 2 \exp\left(-k \dfrac{ (n+m) \lambda^3 t^2}{ \sigma_G^2 (\sqrt{\alpha}\Ltt_G + 1/\sqrt{n+m})^2 }\right) \\
      &\qquad + 6 \exp\left(-c (n+m) \zeta_g\left(\dfrac{\lambda t}{\beta \|\Sigma_X\|_\op}\right)\right) 
    \end{align} 
    defining,
    \begin{equation}
      \zeta_x(t) := \min\left\{ \dfrac{\lambda^2 t^2}{(1-\alpha)} , \lambda t, \dfrac{\lambda^3 t^2}{\sigma_G^2(\sqrt{\alpha}\Ltt_G + 1/\sqrt{n+m})^2}, \zeta_x\left(\dfrac{\lambda t}{\beta\|\Sigma_X\|_\op}\right)\right\}
    \end{equation}
    the claim follows.
  \end{proof}

\subsection{Proof of \Cref{thm:ConcentrationHatE_Augmented}} \label{AppendixD_subsection2}

This section is dedicated to the proof of \Cref{thm:ConcentrationHatE_Augmented}. To this end, we define $\Delta \mathcal{E}_{\Aug}(\lambda) = \mathcal{E}_\Aug(\lambda) - \hat{\mathcal{E}}_\Aug(\lambda)$, and we notice that 
\begin{align} \label{eq:Delta_Expension}
  \Delta \mathcal{E}_\Aug(\lambda) &= 2\left\{-\dfrac{1}{d}tr\left(\Sigma_X^{-1} \resolvent_\Aug(\lambda)\right) + (\Phi_1(X) - \Phi_2(X))\right\}\eqsp,
\end{align}
where $\Phi_1$ and $\Phi_2$ are defined in \eqref{eq:Phi_1_2_def}. The proof of \Cref{thm:ConcentrationHatE_Augmented} goes in several  step that we hereby describe. In the first step (\Cref{AppendixD_subsection2_subsubsection1}), we provide tight concentration bounds for $\Phi_1$ and $\Phi_2$. In a second step (\Cref{AppendixD_subsection2_subsubsection2}), we shall bound $\E\left[\Delta\mathcal{E}_\Aug(\lambda)\right]$. Finally, in the third and last step (\Cref{AppendixD_subsection2_subsubsection3}), we deduce the concentration property of $\Delta\mathcal{E}_\Aug(\lambda)$

\subsubsection{Concentration of $\Phi_1$ and $\Phi_2$} \label{AppendixD_subsection2_subsubsection1}

First recall the definitions of $\Phi_1(X)$ and $\Phi_2(X)$ from \eqref{eq:Phi_1_2_def},
\begin{equation}
  \begin{aligned}
      \funf_1(\mathbf{X}) &= \dfrac{(1-d/n)}{d}\tr\left(\resolvent_\mathbf{X}(0) \left(\dfrac{\alpha\Lambda_G(\mathbf{X})}{\mathfrak{a}_g(\mathbf{X})} + \lambda \Idd\right)^{-1}\right) \1_{\msa_\eta}(\mathbf{X}) \eqsp,\\
      \funf_2(\mathbf{X}) &= \dfrac{1 - (1-\beta / \mathfrak{a}_g(\mathbf{X}))\alpha }{d\mathfrak{a}_x(\mathbf{X})}\tr\left(\detequiv_{G \mid X}^{(\mathfrak{a}_g(\mathbf{X}))}(\lambda, \mathbf{X}) \left(\dfrac{\alpha\Lambda_G(\mathbf{X})}{\mathfrak{a}_g(\mathbf{X})} + \lambda \Idd\right)^{-1} \right) \eqsp,
  \end{aligned}
\end{equation}
We begin by introducing the auxilary functions $\Psi_1$, $\Psi_2$ defined as follows,

\begin{equation} \label{eqdef:Psi_12}
  \forall \mathbf{X }\in \mathbb{R}^{d \times n} \eqsp, \quad 
  \begin{aligned}
    \Psi_1(\mathbf{X}) &= \dfrac{1-\fraca{d}{n}}{d} \tr\left(\resolvent_{\mathbf{X}}(0) \left(\dfrac{\alpha \Lambda_G(\mathbf{X})}{\mathfrak{a}_g^* } + \lambda \Idd\right)^{-1}\right) \1_{\msa_\eta}(\mathbf{X})\eqsp, \\
    \Psi_2(\mathbf{X}) &= \dfrac{1 - (1-\beta / \mathfrak{a}_g^*)\alpha}{d \mathfrak{a}_x^*} \tr\left(\detequiv_G^{(\mathfrak{a}_g^*)} (\lambda) \left(\dfrac{\alpha \Lambda_G(\mathbf{X})}{\mathfrak{a}_g^*} + \lambda \Idd\right)^{-1}\right) \eqsp,
  \end{aligned}
\end{equation} 
where $(\mathfrak{a}_x^*, \mathfrak{a}_g^*)$ were defined in \Cref{thm:Deterministic_equivalent_Augmented}.

This first part of the proof consists in showing that $\Phi_1(X)$ and $\Phi_2(X)$ are respectively close to $\Psi_1(X)$ and $\Psi_2(X)$, and then showing that the functions $\Psi_1$ and $\Psi_2$ are Lipschitz, which will results in sub-Gaussian concentration bounds from \Cref{ass:X_Lipschitz_Concentrated}. 

\textbf{Concentration bounds for $\Phi_1(X) - \Psi_1(X)$ and $\Phi_2(X) - \Psi_2(X)$}

We now show that $\Phi_1(X) - \Psi_1(X)$ has sub-exponential tail, to do so, write the following almost sure decomposition, 
\begin{align}
  &|\Phi_1(X) - \Psi_1(X)|\\
  &\quad= \dfrac{1-\fraca{d}{n}}{d}\tr\left(\resolvent_X(0)\right)\tr\left(\resolvent_X(0) \left\{\left(\dfrac{\alpha \Lambda_G(X)}{\mathfrak{a}_g(X)} + \lambda \Idd\right)^{-1} - \left(\dfrac{\alpha \Lambda_G(X)}{\mathfrak{a}_g^*} + \lambda \Idd\right)^{-1}\right\}\right) \1_{\msa_\eta}(X)\\
  &\quad= \left|\dfrac{1}{\mathfrak{a}_g(X)} - \dfrac{1}{\mathfrak{a}_g^*}\right|\dfrac{\alpha(1-\fraca{d}{n})}{d}\tr\left(\resolvent_X(0) \left(\dfrac{\alpha \Lambda_G(X)}{\mathfrak{a}_g(X)} + \lambda \Idd\right)^{-1}\Lambda_G(X) \left(\dfrac{\alpha \Lambda_G(X)}{\mathfrak{a}_g^*} + \lambda \Idd\right)^{-1}\right) \1_{\msa_\eta}(X) \\
  &\quad\leq \left|\mathfrak{a}_g(X) - \mathfrak{a}_g^*\right| \dfrac{\alpha (1-\fraca{d}{n})}{\mathfrak{a}_g(X)\eta \lambda} \left\|\dfrac{\alpha \Lambda_G(X)}{\mathfrak{a}_g^*} \left(\dfrac{\alpha \Lambda_G(X)}{\mathfrak{a}_g^*} + \lambda \Idd\right)^{-1}\right\|_\op \\
  &\quad\leq \left|\mathfrak{a}_g(X) - \mathfrak{a}_g^*\right| \dfrac{\alpha}{\eta \lambda} 
\end{align}
This implies,
\begin{equation}
  \Prob\left(|\Phi_1(X) - \Psi_1(X)| \geq t\right) \leq \Prob\left(\left|\mathfrak{a}_g(X) - \mathfrak{a}_g^*\right| \geq \dfrac{\eta \lambda t}{\alpha}\right)\eqsp,
\end{equation}

Leveraging \Cref{DilationFactorConcentration}, the previous implies in a straightwordard way that,
\begin{align} \label{eq:Concentration_Phi1_minus_Psi1}
  \Prob\left(|\Phi_1(X) - \Psi_1(X)| \geq t + \dfrac{\alpha \delta_g}{\eta \lambda}\right) &\leq \Prob\left(\left|\mathfrak{a}_g(X) - \mathfrak{a}_g^*\right| \geq \dfrac{\eta \lambda t}{\alpha} + \delta_g\right) \lesssim \exp\left(-c (n+m) \zeta_g\left(\dfrac{\eta \lambda t }{\alpha}\right)\right) \eqsp,
\end{align}
where $\zeta_g$ was defined in \eqref{DilationFactorConcentration}. 

Similarly, we write for $\Phi_2(X) - \Psi_2(X)$,
\begin{align}
  \left|\Phi_2(X) - \Psi_2(X)\right| &\leq  \left|\dfrac{1}{\mathfrak{a}_g^*} - \dfrac{1}{\mathfrak{a}_g(X)}\right|\dfrac{\beta \alpha}{d \mathfrak{a}_x(X)} \tr\left( \detequiv_{G \mid X}^{(\mathfrak{a}_g(X))}(\lambda, X) \left(\dfrac{\alpha \Lambda_G(X)}{\mathfrak{a}_g(X)} + \lambda \Idd\right)^{-1}\right) \\
  &\quad + \left|\dfrac{1}{\mathfrak{a}_x^*} - \dfrac{1}{\mathfrak{a}_x(X)}\right|\dfrac{1-\alpha'}{d} \tr\left( \detequiv_{G \mid X}^{(\mathfrak{a}_g(X))}(\lambda, X) \left(\dfrac{\alpha \Lambda_G(X)}{\mathfrak{a}_g(X)} + \lambda \Idd\right)^{-1}\right) \\
  &\quad + \left|\dfrac{1-\alpha'}{d \mathfrak{a}_x^*} \tr\left(\left\{\detequiv_{G \mid X}^{(\mathfrak{a}_g(X))}(\lambda, X) - \detequiv_{G \mid X}^{(\mathfrak{a}_g^*)}(\lambda, X)\right\}\left(\dfrac{\alpha \Lambda_G(X)}{\mathfrak{a}_g(X)} + \lambda \Idd\right)^{-1}\right)\right| \\
  &\quad + \left|\dfrac{1-\alpha'}{d \mathfrak{a}_x^*} \tr\left(\detequiv_{G \mid X}^{(\mathfrak{a}_g^*)}(\lambda, X)\left\{\left(\dfrac{\alpha \Lambda_G(X)}{\mathfrak{a}_g(X)} + \lambda \Idd\right)^{-1} - \left(\dfrac{\alpha \Lambda_G(X)}{\mathfrak{a}_g^*} + \lambda \Idd\right)^{-1}\right\}\right)\right|
\end{align}
Recalling that $\detequiv_{G \mid X}^{(\mathfrak{a}_g^*)}(\lambda, X)$ was defined in \eqref{eqdef_partial_equiv}, we further get, from $\mathbf{A}^{-1} - \mathbf{B}^{-1} = \mathbf{A}^{-1}(\mathbf{B} - \mathbf{A})\mathbf{B}^{-1}$, that,
\begin{align}
  &\left|\Phi_2(X) - \Psi_2(X)\right| \\
  &\quad \leq  \left|\dfrac{1}{\mathfrak{a}_g^*} - \dfrac{1}{\mathfrak{a}_g(X)}\right|\dfrac{\beta \alpha}{d \mathfrak{a}_x(X)} \tr\left( \detequiv_{G \mid X}^{(\mathfrak{a}_g(X))}(\lambda, X) \left(\dfrac{\alpha \Lambda_G(X)}{\mathfrak{a}_g(X)} + \lambda \Idd\right)^{-1}\right) \\
  &\qquad + \left|\dfrac{1}{\mathfrak{a}_x^*} - \dfrac{1}{\mathfrak{a}_x(X)}\right|\dfrac{1-\alpha'}{d} \tr\left( \detequiv_{G \mid X}^{(\mathfrak{a}_g(X))}(\lambda, X) \left(\dfrac{\alpha \Lambda_G(X)}{\mathfrak{a}_g(X)} + \lambda \Idd\right)^{-1}\right) \\
  &\qquad + \left|\dfrac{1}{\mathfrak{a}_g(X)} - \dfrac{1}{\mathfrak{a}_g^*}\right| \dfrac{1-\alpha'}{d \mathfrak{a}_x^*} \tr\left(\left\{\detequiv_{G \mid X}^{(\mathfrak{a}_g(X))}(\lambda, X) \left(\alpha \beta C_X + \alpha \Lambda_G(X)\right)\detequiv_{G \mid X}^{(\mathfrak{a}_g^*)}(\lambda, X)\right\}\left(\dfrac{\alpha \Lambda_G(X)}{\mathfrak{a}_g(X)} + \lambda \Idd\right)^{-1}\right) \\
  &\qquad + \left|\dfrac{1}{\mathfrak{a}_g(X)} - \dfrac{1}{\mathfrak{a}_g^*}\right|\dfrac{1-\alpha'}{d \mathfrak{a}_x^*} \tr\left(\detequiv_{G \mid X}^{(\mathfrak{a}_g^*)}(\lambda, X)\left\{\left(\dfrac{\alpha \Lambda_G(X)}{\mathfrak{a}_g(X)} + \lambda \Idd\right)^{-1}\alpha \Lambda_G(X) \left(\dfrac{\alpha \Lambda_G(X)}{\mathfrak{a}_g^*} + \lambda \Idd\right)^{-1}\right\}\right)
\end{align}
and, applying Cauchy-Schwarz inequality, as well as using that all dilation factors are greater than $1$, we get,
\begin{align}
  \left|\Phi_2(X) - \Psi_2(X)\right| &\leq  \left|\mathfrak{a}_g^* - \mathfrak{a}_g(X)\right|\dfrac{\beta \alpha}{\lambda^2} \\
  &\quad + \left|\mathfrak{a}_x^* - \mathfrak{a}_x(X)\right|\dfrac{1-\alpha'}{\lambda^2}  \\
  &\quad + \left|\mathfrak{a}_g(X) - \mathfrak{a}_g^*\right| \dfrac{1-\alpha'}{ \lambda^2} \left\|\dfrac{\alpha \beta C_X + \alpha \Lambda_G(X)}{\mathfrak{a}_g^*}\detequiv_{G \mid X}^{(\mathfrak{a}_g^*)}(\lambda, X)\right\|_\op \\
  &\quad + \left|\mathfrak{a}_g(X) - \mathfrak{a}_g^*\right|\dfrac{1-\alpha'}{\lambda^2 } \left\|\dfrac{\alpha \Lambda_G(X)}{\mathfrak{a}_g^*} \left(\dfrac{\alpha \Lambda_G(X)}{\mathfrak{a}_g^*} + \lambda \Idd\right)^{-1}\right\|_\op \\
  &\leq \dfrac{\beta \alpha + 2 (1-\alpha')}{\lambda^2}\left|\mathfrak{a}_g(X) - \mathfrak{a}_g^*\right| + \dfrac{1-\alpha'}{\lambda^2} |\mathfrak{a}_x(X) - \mathfrak{a}_x^*|  \\
  &\leq \dfrac{1}{\lambda^2} |\mathfrak{a}_g(X) - \mathfrak{a}_g^*| + \dfrac{1}{\lambda^2} |\mathfrak{a}_x(X) - \mathfrak{a}_x^*|\eqsp.
\end{align}
It results, from a unoin bound argument, that,
\begin{align} \label{eq:Concentration_Phi2_minus_Psi2}
  \Prob\left(\left|\Phi_2(X) - \Psi_2(X)\right| \geq t + \dfrac{\delta_x + \delta_g}{\lambda^2}\right) &\leq \Prob\left(\left|\mathfrak{a}_g(X) - \mathfrak{a}_g^*\right| \geq \lambda^2t + \delta_g \right) + \Prob\left(\left|\mathfrak{a}_x(X) - \mathfrak{a}_x^*\right| \ge \lambda^2t + \delta_x\right)  \\
  &\lesssim \exp\left(-c (n+m) \min\left\{\zeta_x(\lambda^2 t), \zeta_g(\lambda^2 t) \right\}\right) \eqsp,
\end{align} 
where the last line (as well as the definitions of $\delta_x$, $\delta_g$, $\zeta_x$ and $\zeta_g$) followed from \Cref{DilationFactorConcentration}.

\textbf{Concentration bounds for $\Psi_1(X)$ and $\Psi_2(X)$}

As previously discussed, the concentration of $\Psi_1(X)$ and $\Psi_2(X)$ follows from the Lipschitz properties of $\Psi_1$ and $\Psi_2$, as well as \Cref{ass:X_Lipschitz_Concentrated}. Starting by the concentration of $\Psi_1(X)$, we recall that $d < n$ from \Cref{ass:ProbaSmallEigenvalues} and we write for any $\mathbf{X}, \mathbf{Y} \in \msa_\eta$,
\begin{align}
  \left|\Psi_1(\mathbf{X}) - \Psi_1(\mathbf{Y})\right|
  &\leq 
  \left|\frac{1}{d}\,\mathrm{tr}\!\Bigl(
  \bigl\{R_\mathbf{X}(0) - R_\mathbf{Y}(0)\bigr\}
  \Bigl(\frac{\alpha\Lambda_G(\mathbf{X})}{\mathfrak{a}_g^*} + \lambda \Idd\Bigr)^{-1}\Bigr)
  \right|
  \\
  &\quad 
  + 
  \left|\frac{1}{d}\,\mathrm{tr}\!\Bigl(
    R_\mathbf{Y}(0)
  \Bigl\{\Bigl(\frac{\alpha\Lambda_G(\mathbf{X})}{\mathfrak{a}_g^*} + \lambda \Idd\Bigr)^{-1}
  - \Bigl(\frac{\alpha\Lambda_G(\mathbf{Y})}{\mathfrak{a}_g^*} + \lambda \Idd\Bigr)^{-1}\Bigr)
  \Bigr\}\right| \\
  &\leq 
  \left|\frac{1}{d}\,\mathrm{tr}\!\Bigl(
  \bigl\{R_\mathbf{X}(0) - R_\mathbf{Y}(0)\bigr\}
  \Bigl(\frac{\alpha\Lambda_G(\mathbf{X})}{\mathfrak{a}_g^*} + \lambda \Idd\Bigr)^{-1}\Bigr)
  \right|
  \\
  &\quad 
  + \dfrac{\alpha}{\mathfrak{a}_g^*}
  \left|\frac{1}{d}\,\mathrm{tr}\!\Bigl(
    R_\mathbf{Y}(0)
  \Bigl(\frac{\alpha\Lambda_G(\mathbf{X})}{\mathfrak{a}_g^*} + \lambda \Idd\Bigr)^{-1}\Bigl\{ \Lambda_G(\mathbf{X})
  - \Lambda_G(\mathbf{Y})\Bigr\}\Bigl(\frac{\alpha\Lambda_G(\mathbf{Y})}{\mathfrak{a}_g^*} + \lambda \Idd\Bigr)^{-1}\Bigr)
  \right|
\end{align}
Using the Cauchy-Schwarz inequality, as well as \Cref{ass:Smoothness}, we get
\begin{align}
\left|\Psi_1(\mathbf{X}) - \Psi_1(\mathbf{Y})\right| 
&\leq \dfrac{1}{\lambda} \dfrac{1}{\sqrt{d}} \|R_{\mathbf{X}}(0) - R_{\mathbf{Y}}(0)\|_\Frob  \\[1mm]
&\quad + \dfrac{\alpha}{\eta \lambda^2 \sqrt{d}} \left\|\Lambda_G(\mathbf{X}) - \Lambda_G(\mathbf{Y})\right\|_\Frob  \\[1mm]
&\leq \underbrace{\left(\dfrac{2}{\lambda\eta^{3/2} \sqrt{dn}} + \dfrac{\alpha\Ltt_\Lambda}{\eta \lambda^2\sqrt{d}}\right)}_{\Ltt_{\Psi_1}} \|\mathbf{X} - \mathbf{Y}\|_\Frob.
\end{align}
Where, we used \Cref{lem:lip_resolvent} and \Cref{ass:Smoothness} in the last bound. We have proved that $\Psi_1$ is $\Ltt_{\Psi_1}$-Lispchitz on $\msa_\eta$, and $\|\Psi_{1 \mid \msa_\eta}\|_\infty \leq  \eta^{-1} \lambda^{-1}$, we have from \Cref{prop:ConcentrationPropertyOfTruncatedLipschitzTransformsOfX} that $\Psi_1(X)$ is $\sigma_1$-sub-Gaussian, with
\begin{align}
\sigma_1 \lesssim  \Ltt_{\Psi_1}^2 +  \|\Psi_{1 \mid \msa_\eta}\|_\infty^2 \sigma_{\msa_\eta}^2 \lesssim \dfrac{1}{\lambda \eta^{3/2} \sqrt{dn}} + \dfrac{\alpha\Ltt_\Lambda}{\eta \lambda^2 \sqrt{d}}  \eqsp,
\end{align}
Hence, there exists a constant $k > 0$ such that,
\begin{equation} \label{eq:Concentration_Psi_1}
\Prob\left(|\Psi_1(X) - \E\left[\Psi_1(X)\right]| \ge t\right) \leq 2 \exp\left(-k \dfrac{ t^2}{(\lambda^{-1} \eta^{-3/2} / \sqrt{nd} + \Ltt_\Lambda \eta^{-1} \lambda^{-2} /\sqrt{d})^2}\right) \eqsp,
\end{equation}

Similarly for $\Psi_2(X)$, we write for any $\mathbf{X}, \mathbf{Y} \in \R^{d \times n}$,
\begin{align}
  \left|\Psi_2(\mathbf{X}) - \Psi_2(\mathbf{Y})\right| &\leq \dfrac{1}{\mathfrak{a}_x^* d} \tr\left(\left\{\detequiv_{G \mid X}^{(\mathfrak{a}_g^*)}(\lambda, \mathbf{X}) - \detequiv_{G \mid X}^{(\mathfrak{a}_g^*)}(\lambda, \mathbf{Y}) \right\} \left(\dfrac{\alpha \Lambda_G(\mathbf{X})}{\mathfrak{a}_g^*} + \lambda \Idd\right)^{-1}\right) \\
  &\quad + \dfrac{1}{\mathfrak{a}_x^* d} \left|\tr\left(\detequiv_{G \mid X}^{(\mathfrak{a}_g^*)}(\lambda, \mathbf{Y}) \left\{ \left(\dfrac{\alpha \Lambda_G(\mathbf{X})}{\mathfrak{a}_g^*} + \lambda \Idd\right)^{-1} - \left(\dfrac{\alpha \Lambda_G(\mathbf{Y})}{\mathfrak{a}_g^*} + \lambda \Idd\right)^{-1} \right\} \right)\right| \\
  &=\dfrac{1}{\mathfrak{a}_x^* d} \tr\left(\left\{\detequiv_{G \mid X}^{(\mathfrak{a}_g^*)}(\lambda, \mathbf{X}) - \detequiv_{G \mid X}^{(\mathfrak{a}_g^*)}(\lambda, \mathbf{Y}) \right\} \left(\dfrac{\alpha \Lambda_G(\mathbf{X})}{\mathfrak{a}_g^*} + \lambda \Idd\right)^{-1}\right) \\
  &\quad + \dfrac{\alpha}{\mathfrak{a}_g^* \mathfrak{a}_x^* d} \left|\tr\left(\detequiv_{G \mid X}^{(\mathfrak{a}_g^*)}(\lambda, \mathbf{Y})  \left(\dfrac{\alpha \Lambda_G(\mathbf{X})}{\mathfrak{a}_g^*} + \lambda \Idd\right)^{-1} \left\{\Lambda_G(\mathbf{X}) - \Lambda_G(\mathbf{Y})\right\}\left(\dfrac{\alpha \Lambda_G(\mathbf{Y})}{\mathfrak{a}_g^*} + \lambda \Idd\right)^{-1} \right)\right|
\end{align} 

and, using the Cauchy-Scharz inequality, we get,
\begin{align}
\left|\Psi_2(\mathbf{X}) - \Psi_2(\mathbf{Y})\right| &\leq \dfrac{1}{\mathfrak{a}_x^* \lambda \sqrt{d}} \|\detequiv_{G \mid X}^{(\mathfrak{a}_g^*)}(\lambda, \mathbf{X}) - \detequiv_{G \mid X}^{(\mathfrak{a}_g^*)}(\lambda, \mathbf{Y})\|_\Frob  \\
&\quad + \dfrac{\alpha}{\mathfrak{a}_g^* \mathfrak{a}_x^* \lambda^2 \eta \sqrt{d}} \|\Lambda_G(\mathbf{X}) - \Lambda_G(\mathbf{Y})\|_\Frob \\
&\leq \underbrace{\dfrac{2}{\mathfrak{a}_x^* \lambda \eta^{3/2} \sqrt{d(n+m)}} + \dfrac{\alpha\Ltt_\Lambda}{\mathfrak{a}_g^* \mathfrak{a}_x^* \lambda^2 \eta \sqrt{d}}}_{= \Ltt_{\Psi_2}} \|\mathbf{X} - \mathbf{Y}\|_\Frob\eqsp.
\end{align}
where we used the Lipschitz property of $\mathbf{X} \mapsto \detequiv_{G \mid X}^{(\mathfrak{a}_g^*)}(\lambda, \mathbf{X})$, which follows from \Cref{lem:lip_resolvent}. Remark  further that, 

\begin{align}
\Ltt_{\Psi_2} \lesssim \Ltt_{\Psi_1} \lesssim \dfrac{1}{\lambda \eta^{3/2}\sqrt{d(n+m)}} + \dfrac{\Ltt_\Lambda}{\eta \lambda^2 \sqrt{d}} \eqsp,
\end{align}
thus, we get from \Cref{ass:X_Lipschitz_Concentrated}, the existence of a constant $k$ such that,
\begin{equation} \label{eq:Concentration_Psi_2}
\Prob\left(|\Psi_2(X) - \E\left[\Psi_2(X)\right]| \ge t\right) \leq 2 \exp\left(-k \dfrac{ t^2}{(\lambda^{-1} \eta^{-3/2} / \sqrt{d(n+m)} + \Ltt_\Lambda \eta^{-1} \lambda^{-2} / \sqrt{d})^2}\right) \eqsp,
\end{equation}

\subsubsection{An upper bound on the asymptotic bias of $\hat{\mathcal{E}}_\Aug(\lambda)$} \label{AppendixD_subsection2_subsubsection2}

Let us denote for sake of notational simplicity,
\begin{align}
  \delta_\Aug &= \dfrac{\sqrt{d}\sigma_G^2 (\beta^3 \|\Sigma_X\|_\op^3 + \kappa^3)}{n\uplambda_1(\Sigma_X)((1-\alpha)\eta + \lambda)^6} (\kappa q_1 + q_2)\left\{\uplambda_1(\Sigma_X) + \dfrac{1 \|\Sigma_X\|_\op}{d (\beta^3 \|\Sigma_X\|_\op^3 + \kappa^3)\sqrt{n}}\right\} \\
  &\quad + \left(\dfrac{\alpha \beta}{1-\alpha} + 1\right)
              \dfrac{\alpha \beta \sqrt{d} (\|\Sigma_X\|_\op + q_3\|\bar{\Lambda}_G\|_\op)}
                    {((1-\alpha)\eta + \lambda)^{5/2}m}
        \left\{(1+u(n)) + \sqrt{\alpha}\,\Ltt_G 
                   + (1+\cX^{-1/2})\,\sqrt{\eta + \lambda}\right\} \\
  &\quad + \left(\dfrac{\alpha \beta}{1-\alpha} + 1\right) \left(\dfrac{1}{((1-\alpha)\eta + \lambda)^2} + \dfrac{\alpha \beta \sqrt{d}}{((1-\alpha)\eta + \lambda)^{5/2} m}\right)\E\left[\|\Lambda_G(X) - \bar{\Lambda}_G\|_\Frob\right] \\
  &\leq \dfrac{\sqrt{d} \sigma_G^2\|\Sigma_X\|_\op(\|\Sigma_X\|_\op^3 + \kappa^3)}{n \uplambda_1(\Sigma_X)\lambda^6}(\kappa q_1 + q_2) \left\{\dfrac{\uplambda_1(\Sigma_X)}{\|\Sigma_X\|_\op} + \dfrac{1 }{\kappa^3}\right\} \\
  &\quad + \dfrac{\sqrt{d}\|\Sigma_X\|_\op + q_3 \|\bar{\Lambda}_G\|_\op}{(1-\alpha) \lambda^{5/2}m} \left\{(1+u(n)) + \sqrt{\alpha}\,\Ltt_G + (1+\cX^{-1/2})\,\sqrt{\eta + \lambda}\right\} \\
  &\quad + \left(\lambda^{1/2} + \dfrac{\sqrt{d}}{m}\right)\dfrac{\E\left[\|\Lambda_G(X) - \bar{\Lambda}_G\|\Frob\right]}{(1-\alpha) \lambda^{5/2}}
  \label{eq:Bound_on_delta_aug}
\end{align}
such that it holds from \Cref{thm:Deterministic_equivalent_Augmented},
\begin{equation} \label{BoundFromAugmentedDetEquiv}
  \left\|\E\left[\left\{\resolvent_{\Aug}(\lambda) - \detequiv_{\Aug}^{(\mathfrak{a}_x^*, \mathfrak{a}_g^*)}(\lambda)\right\}\right]\right\|_\Frob \lesssim \delta_\Aug \eqsp.
\end{equation}

Let us first recall from \eqref{eq:Delta_Expension},
\begin{align}
  \Delta\mathcal{E}_\Aug(\lambda) &= 2\left\{-\dfrac{1}{d}\tr\left(\Sigma_X^{-1} \resolvent_\Aug(\lambda)\right) + (\Phi_1(X) - \Phi_2(X))\right\}\eqsp,
\end{align}
We have shown in \Cref{AppendixD_subsection2_subsubsection1} that $\Phi_1(X)$ and $\Phi_2(X)$ respectively concentrate around $\E\left[\Psi_1(X)\right]$ and $\E\left[\Psi_2(X)\right]$. In this section, we derive an upper bound for the aboslute value of 
\begin{align}
  \E\left[\Delta\mathcal{E}_\Aug^\Psi(\lambda)\right] &= 2\left\{-\dfrac{1}{d}\tr\left(\Sigma_X^{-1} \E\left[\resolvent_\Aug(\lambda)\right]\right) + \E\left[\Psi_1(X)\right] - \E\left[\Psi_2(X)\right]\right\} \eqsp,
\end{align}
First relying on \eqref{BoundFromAugmentedDetEquiv}, we write,
\begin{align} \label{BiasFirstBound}
  \left|\E\left[\Delta\mathcal{E}_\Aug^\Psi(\lambda)\right]\right| &\lesssim \left|-\dfrac{1}{d}\tr\left(\Sigma_X^{-1} \detequiv_{\Aug}^{(\mathfrak{a}_x^*, \mathfrak{a}_g^*)}(\lambda)\right) + \E\left[\Psi_1(X)\right] - \E\left[\Psi_2(X)\right]\right| + \dfrac{\delta_\Aug}{\uplambda_1(\Sigma_X) \sqrt{d}} \eqsp,
\end{align}
and, we remark that in the case where $\Sigma_X$ and $\bar{\Lambda}_G$ commute, then the first term in the right-hand side can 'linearize', in the general case, we use the notation $[\mathbf{A}, \mathbf{B}] = \mathbf{A}\mathbf{B} - \mathbf{B} \mathbf{A}$ for the commutator of two matrices, and we write, 

\begin{align}
  \dfrac{1}{d}\tr\left(\Sigma_X^{-1}\detequiv_{\Aug}^{(\mathfrak{a}_x^* , \mathfrak{a}_g^*)}(\lambda) \right) &= \dfrac{1}{d}\tr\Bigl(
    \detequiv_{\Aug}^{(\mathfrak{a}_x^* , \mathfrak{a}_g^*)}(\lambda)
     \Bigl(\dfrac{\alpha\,\bar{\Lambda}_G}{\mathfrak{a}_g^*} + \lambda \Idd\Bigr)
     \Bigl(\dfrac{\alpha\,\bar{\Lambda}_G}{\mathfrak{a}_g^*} + \lambda \Idd\Bigr)^{-1}
     \Sigma_X^{-1}
                                                                       \Bigr)\\
& = \dfrac{1}{d}\tr\left(\detequiv_{\Aug}^{(\mathfrak{a}_x^* , \mathfrak{a}_g^*)}(\lambda) \left(\dfrac{\alpha\bar{\Lambda}_G}{\mathfrak{a}_g^*} + \lambda \Idd\right) \Sigma_X^{-1} \left(\dfrac{\alpha\bar{\Lambda}_G}{\mathfrak{a}_g^*} + \lambda \Idd\right)^{-1}\right) \\
                                                                     &\quad - \dfrac{1}{d}\tr\left(\detequiv_{\Aug}^{(\mathfrak{a}_x^* , \mathfrak{a}_g^*)}(\lambda)\left(\dfrac{\alpha\bar{\Lambda}_G}{\mathfrak{a}_g^*} + \lambda\right) \left[\left(\dfrac{\alpha\bar{\Lambda}_G}{\mathfrak{a}_g^*} + \lambda \Idd\right)^{-1}, \Sigma_X^{-1}\right]\right).
                                                                       \label{eqproof_simpli_eq_det}
\end{align}
We now bound the second term and provide a new expression for the first one.

Using the identity $[\mathbf{A}^{-1}, \mathbf{B}^{-1}] = \mathbf{A}^{-1} \mathbf{B}^{-1} [\mathbf{A}, \mathbf{B}] \mathbf{B}^{-1} \mathbf{A}^{-1}$, the Cauchy-Schwarz inequality, and $\norm{\cdot}_{\Frob} \leq \sqrt{d} \norm{\cdot}_{\op}$, we can bound the term involving the commutator as
\begin{align}
&\left|\dfrac{1}{d}\tr\left(\detequiv_{\Aug}^{(\mathfrak{a}_x^*, \mathfrak{a}_g^*)}(\lambda) \left(\dfrac{\alpha\bar{\Lambda}_G}{\mathfrak{a}_g^*} + \lambda \Idd\right) \left[\left(\dfrac{\alpha\bar{\Lambda}_G}{\mathfrak{a}_g^*} + \lambda \Idd\right)^{-1}, \Sigma_X^{-1}\right]\right)\right| \\
&\qquad = \left|\dfrac{1}{d}\tr\left(\detequiv_{\Aug}^{(\mathfrak{a}_x^*, \mathfrak{a}_g^*)}(\lambda) \Sigma_X^{-1} \left[\dfrac{\alpha\bar{\Lambda}_G}{\mathfrak{a}_g^*} + \lambda \Idd, \Sigma_X\right] \Sigma_X^{-1} \left(\dfrac{\alpha\bar{\Lambda}_G}{\mathfrak{a}_g^*} + \lambda \Idd\right)^{-1}\right) \right| \\
&\qquad = \alpha\left|\dfrac{1}{d\mathfrak{a}_g^*}\tr\left(\detequiv_{\Aug}^{(\mathfrak{a}_x^*, \mathfrak{a}_g^*)}(\lambda) \Sigma_X^{-1} \left[\bar{\Lambda}_G, \Sigma_X\right] \Sigma_X^{-1} \left(\dfrac{\alpha\bar{\Lambda}_G}{\mathfrak{a}_g^*} + \lambda \Idd\right)^{-1}\right) \right| \\
&\qquad \leq \alpha\dfrac{\left\|\Sigma_X^{-1} \left(\dfrac{\alpha\bar{\Lambda}_G}{\mathfrak{a}_g^*} + \lambda \Idd\right)^{-1}\detequiv_{\Aug}^{(\mathfrak{a}_x^*, \mathfrak{a}_g^*)}(\lambda)\Sigma_X^{-1}\right\|_{\op}}{\sqrt{d}\mathfrak{a}_g^*} \|[\bar{\Lambda}_G, \Sigma_X]\|_{\Frob}  \lesssim \dfrac{\|\Sigma_X\|_\op^2}{\sqrt{d}\lambda^2} \left\|[\bar{\Lambda}_G, \Sigma_X]\right\|_{\Frob}\eqsp,
  \label{eqproof_simpli_deter_1}
\end{align}
Furthermore, using  $\mathbf{A}^{-1} - \mathbf{B}^{-1} =  \mathbf{A}^{-1}(\mathbf{B}-\mathbf{A})\mathbf{B}^{-1}$ we get, 
\begin{align}
  \detequiv_{\Aug}^{(\mathfrak{a}_x^*, \mathfrak{a}_g^*)}(\lambda) \left(\dfrac{\alpha\Lambda_G}{\mathfrak{a}_g^*} + \lambda \Idd\right)\Sigma_X^{-1} &= \detequiv_{\Aug}^{(\mathfrak{a}_x^*, \mathfrak{a}_g^*)}(\lambda) \left(\dfrac{\alpha\Lambda_G}{\mathfrak{a}_g^*} + \lambda \Idd\right)\left((1 - (1-\beta / \mathfrak{a}_g^*)\alpha)\dfrac{\Sigma_X}{\mathfrak{a}_x^*}\right)^{-1} \dfrac{1 - (1-\beta / \mathfrak{a}_g^*)\alpha}{\mathfrak{a}_x^*}\\
  &=\Sigma_X^{-1} - \dfrac{(1-(1-\beta/\mathfrak{a}_g^*)\alpha)}{\mathfrak{a}_x^*} \detequiv_{\Aug}^{(\mathfrak{a}_x^*, \mathfrak{a}_g^*)}(\lambda) \eqsp. \label{eqMagicalCancelations}
\end{align}
Plugging \eqref{eqproof_simpli_deter_1}-\eqref{eqMagicalCancelations} in  \eqref{eqproof_simpli_eq_det}, we get,
\begin{align}
  \dfrac{1}{d}\tr\left(\Sigma_X^{-1}\detequiv_{\Aug}^{(\mathfrak{a}_x^* , \mathfrak{a}_g^*)}(\lambda) \right) &\lesssim \dfrac{1}{d}\tr\left(\Sigma_X^{-1} \left(\dfrac{\bar{\Lambda}_G}{\mathfrak{a}_g^*} + \lambda \Idd \right)^{-1}\right) - \dfrac{1 - (1-\beta / \mathfrak{a}_g^*) \alpha}{\mathfrak{a}_x^*} \tr\left(\detequiv_{\Aug}^{(\mathfrak{a}_x^*, \mathfrak{a}_g^*)}(\lambda)\left(\dfrac{\bar{\Lambda}_G}{\mathfrak{a}_g^*} + \lambda \Idd \right)^{-1}\right) \\
  &\quad + \dfrac{\|\Sigma_X\|_\op^2}{\sqrt{d} \lambda^2} \|[\bar{\Lambda}_G, \Sigma_X]\|_\Frob
\end{align}
Plugging the previous equation in \eqref{BiasFirstBound}, we get,
\begin{align} \label{BoundBiasAlmostThere}
  \left|\E\left[\Delta\mathcal{E}_\Aug^\Psi(\lambda)\right]\right| &\lesssim \left|\dfrac{1}{d}\tr\left(\Sigma_X^{-1} \left(\dfrac{\bar{\Lambda}_G}{\mathfrak{a}_g^*} + \lambda \Idd \right)^{-1}\right) - \E\left[\Psi_1(X)\right]\right| \\
  &\quad + \left|\E\left[\Psi_2(X)\right] - \dfrac{1 - (1-\beta / \mathfrak{a}_g^*) \alpha}{\mathfrak{a}_x^*} \tr\left(\detequiv_{\Aug}^{(\mathfrak{a}_x^*, \mathfrak{a}_g^*)}(\lambda)\left(\dfrac{\bar{\Lambda}_G}{\mathfrak{a}_g^*} + \lambda \Idd \right)^{-1}\right)\right| \\
  &\quad + \dfrac{\|\Sigma_X\|_\op^2}{\sqrt{d} \lambda^2} \|[\bar{\Lambda}_G, \Sigma_X]\|_\Frob + \dfrac{\delta_\Aug}{\uplambda_1(\Sigma_X) \sqrt{d}} \eqsp,
\end{align}

We thus simply need to bound the biases of $\Psi_1(X)$ and $\Psi_2(X)$, for notation simplicity again, we introduce the notations, 
\begin{align} \label{eqdef:Psi_bar_12}
  \overline{\Psi}_1 &= \dfrac{1}{d}\tr\left(\Sigma_X^{-1} \left(\dfrac{\bar{\Lambda}_G}{\mathfrak{a}_g^*} + \lambda \Idd \right)^{-1}\right) \\
  \overline{\Psi}_2 &= \dfrac{1 - (1-\beta / \mathfrak{a}_g^*) \alpha}{\mathfrak{a}_x^*} \tr\left(\detequiv_{\Aug}^{(\mathfrak{a}_x^*, \mathfrak{a}_g^*)}(\lambda)\left(\dfrac{\bar{\Lambda}_G}{\mathfrak{a}_g^*} + \lambda \Idd \right)^{-1}\right)
\end{align}
Then, we have by definition of $\Psi_1(X)$ in \eqref{eqdef:Psi_12}
\begin{align}
  \E\left[\Psi_1(X)\right]
    &= \frac{1 - \fraca{d}{n}}{d}\,
       \E\left[\mathrm{tr}\!\Bigl(
         R_X(0)\,\Bigl\{
           \Bigl(\frac{\alpha\Lambda_G(X)}{\mathfrak{a}_g^*} + \lambda \Idd\Bigr)^{-1}
           \;-\;
           \Bigl(\frac{\alpha\bar{\Lambda}_G}{\mathfrak{a}_g^*} + \lambda \Idd\Bigr)^{-1}
         \Bigr\}
       \Bigr)\,\1_{\msa_\eta}(X)\right]
       \\[6pt]
    &\quad
       + \frac{1 - \fraca{d}{n}}{d}\,
         \mathrm{tr}\!\Bigl(\E\left[
           \Bigl\{
            R_X(0)
             \;-\;
             \frac{\Sigma_X^{-1}}{1-\fraca{d}{n}}
           \Bigr\}\1_{\msa_\eta}(X)\right]
           \Bigl(\frac{\alpha\bar{\Lambda}_G}{\mathfrak{a}_g^*} + \lambda \Idd\Bigr)^{-1}
         \Bigr)\,
       \\[6pt]
    &\quad
       + \overline{\Psi}_1\Prob(X \in \msa_\eta) \eqsp.
\end{align}
Thus, thanks to the triangle inequality, 

\begin{align}
    \left|\E\left[\Psi_1(X)\right] - \bar{\Psi}_1\right| &\leq \left|\frac{1}{d}\,
    \E\left[\mathrm{tr}\!\Bigl(
      R_X(0)\,\Bigl\{
        \Bigl(\frac{\alpha\Lambda_G(X)}{\mathfrak{a}_g^*} + \lambda \Idd\Bigr)^{-1}
        \;-\;
        \Bigl(\frac{\alpha\bar{\Lambda}_G}{\mathfrak{a}_g^*} + \lambda \Idd\Bigr)^{-1}
      \Bigr\}
    \Bigr)\,\1_{\msa_\eta}(X)\right]\right| \\
    &\quad + \left|\frac{1}{d}\,
    \mathrm{tr}\!\Bigl(\E\left[
      \Bigl\{
       R_X(0)
        \;-\;
        \frac{\Sigma_X^{-1}}{1-\fraca{d}{n}}
      \Bigr\}\1_{\msa_\eta}(X)\right]
      \Bigl(\frac{\alpha\bar{\Lambda}_G}{\mathfrak{a}_g^*} + \lambda \Idd\Bigr)^{-1}
    \Bigr)\right| \\
    &\quad +\overline{\Psi}_1 (1-\Prob(\msa_\eta)) \eqsp.
\end{align}
furthermore, using the Cauchy-Schwarz inequality and the Jensen's inequality, we get
\begin{align} \label{eq:BiasPsi_1_almost_there}
    \left|\E\left[\Psi_1(X)\right] - \bar{\Psi}_1\right| &\leq  \dfrac{1}{\eta\sqrt{d}}
        \E\left[\left\|\Bigl(\frac{\alpha\Lambda_G(X)}{\mathfrak{a}_g^*} + \lambda \Idd\Bigr)^{-1}
        \;-\;
        \Bigl(\frac{\alpha\bar{\Lambda}_G}{\mathfrak{a}_g^*} + \lambda \Idd\Bigr)^{-1}
      \right\|_\Frob \1_{\msa_\eta}(X)\right] \\
    &\quad + \dfrac{1}{\lambda \sqrt{d}}\,
    \left\|
       \E\left[\left\{R_X(0)
        \;-\;
        \frac{\Sigma_X^{-1}}{1-\fraca{d}{n}}\right\}\1_{\msa_\eta}(X)\right]
      \right\|_\Frob
\\
    &\quad + (1-\Prob(\msa_\eta))\bar{\Psi}_1 \eqsp.
\end{align}

Using $\mathbf{A}^{-1} - \mathbf{B}^{-1} = \mathbf{A}^{-1}\{\mathbf{B} - \mathbf{A}\}\mathbf{B}^{-1}$, we can bound the first term in \eqref{eq:BiasPsi_1_almost_there},
\begin{equation} \label{eq:BiasPsi_1_almost_there_first_term}
    \E\left[\left\|\Bigl(\frac{\alpha\Lambda_G(X)}{\mathfrak{a}_g^*} + \lambda \Idd\Bigr)^{-1}
        \;-\;
        \Bigl(\frac{\alpha\bar{\Lambda}_G}{\mathfrak{a}_g^*} + \lambda \Idd\Bigr)^{-1}
      \right\|_\Frob \1_{\msa_\eta}(X)\right] \leq \dfrac{\alpha}{\lambda^2}\E\left[\|\Lambda_G(X) - \bar{\Lambda}_G\|_\Frob\right] \eqsp,
\end{equation} 
the second term in \eqref{eq:BiasPsi_1_almost_there} is controlled by applying \Cref{DeterministicEquivNonAugmented}, remark that $(1-\fraca{d}{n})^{-1}$ is the fixed point of $\mathfrak{b} \mapsto 1 + \mathfrak{b}\fraca{d}{n}$ (which implies that $(1-\fraca{d}{n})^{-1}\Sigma_X^{-1} = \detequiv_X^{\mathfrak{b}^*}(0))$,  the second term is bounded as
\begin{align}\label{eq:BiasPsi_1_almost_there_second_term}
    &\left\|
    \E\left[R_X(0)
     \;-\;
     \frac{\Sigma_X^{-1}}{1-\fraca{d}{n}}\1_{\msa_\eta}(X)\right]
   \right\|_\Frob \\
   &\quad \lesssim  \left(1 + \dfrac{ \|\Sigma_X\|_\op}{\uplambda_1(\Sigma_X)}\right) \left(\dfrac{q\sqrt{d} \|\Sigma_X\|_\op^3}{n \uplambda_1(\Sigma_X) \eta^6} + \left(\|\Sigma_X\|_\op + \dfrac{\|\Sigma_X\|_\op^2}{\eta}\right) \e^{-\cX n}\right)\eqsp,
\end{align}
for $q$ being a polynomial function in $\eta + \uplambda_1(\mathbf{D}), \uplambda_1(\Sigma_X), \|\Sigma_X\|_\op^{-1}, \cX^{-1}$, and $n^{-1}$.

Finally, the last term in \eqref{eq:BiasPsi_1_almost_there} is controlled thanks to \Cref{ass:ProbaSmallEigenvalues}, prcesicely, it holds,
\begin{align} \label{eq:BiasPsi_1_almost_there_third_term}
  (1-\Prob(\msa_\eta))\bar{\Psi}_1 \leq \dfrac{\|\Sigma_X^{-1}\|_\op}{\lambda} e^{-\cX n} = \dfrac{1}{\uplambda_1(\Sigma_X) \lambda} e^{-\cX n}\eqsp. 
\end{align}

Putting \eqref{eq:BiasPsi_1_almost_there_first_term}, \eqref{eq:BiasPsi_1_almost_there_second_term} and \eqref{eq:BiasPsi_1_almost_there_third_term} together in \eqref{eq:BiasPsi_1_almost_there}, we get,
\begin{align} 
  \left|\E\left[\Psi_1(X)\right] - \bar{\Psi}_1\right| &\lesssim \dfrac{1}{\lambda \sqrt{d}} \left(1 + \dfrac{ \|\Sigma_X\|_\op}{\uplambda_1(\Sigma_X)}\right) \left(\dfrac{q\sqrt{d} \|\Sigma_X\|_\op^3}{n \uplambda_1(\Sigma_X) \eta^6} + \left(\dfrac{1}{\uplambda_1(\Sigma_X) \lambda} + \|\Sigma_X\|_\op + \dfrac{\|\Sigma_X\|_\op^2}{\eta }\right) \e^{-\cX n}\right) \\
  &\quad + \dfrac{\alpha}{\eta \lambda^2 \sqrt{d}}\E\left[\|\Lambda_G(X) - \bar{\Lambda}_G\|_\Frob\right] \label{BoundBiasPsi_1}\\
  &\lesssim \left(1 + \dfrac{\uplambda_1(\Sigma_X)}{\|\Sigma_X\|_\op}\right)\dfrac{q \|\Sigma_X\|_\op^4}{n \uplambda_1(\Sigma_X)^2 \lambda \eta^6} \\
  &\quad  + \left(1 + \dfrac{\uplambda_1(\Sigma_X)}{\|\Sigma_X\|_\op}\right)\left(\dfrac{1}{\|\Sigma_X\|_\op^2} + \dfrac{\uplambda_1(\Sigma_X)\min\{\lambda, \eta\}}{\|\Sigma_X\|_\op} + \uplambda_1(\Sigma_X)\right) \dfrac{\|\Sigma_X\|_\op^3\rme^{\cX n}}{\lambda \uplambda_1(\Sigma_X)^2 \min\{\lambda, \eta\}\sqrt{d}}  \\
  &\quad + \dfrac{\alpha}{\eta \lambda^2 \sqrt{d}}\E\left[\|\Lambda_G(X) - \bar{\Lambda}_G\|_\Frob\right] \eqsp.
\end{align}

We now turn to the bias of $\Psi_2(X)$, recalling \eqref{eqdef:Psi_12}, and \eqref{eqdef:Psi_bar_12}, we have,
\begin{align}
  \Psi_2(X) - \bar{\Psi}_2 &= \dfrac{1-\alpha'}{\mathfrak{a}_x^* d} \tr\left(\detequiv_{G \mid X}^{(\mathfrak{a}_g^*)}(\lambda, X) \left\{\left(\dfrac{\alpha \Lambda_G(X)}{\mathfrak{a}_g^*} + \lambda \Idd\right)^{-1} - \left(\dfrac{\alpha \bar{\Lambda}_G}{\mathfrak{a}^*_g} + \lambda \Idd\right)^{-1}\right\}\right) \\
  &\quad + \dfrac{1-\alpha'}{\mathfrak{a}_x^* d} \tr\left( \left\{ \detequiv_{G \mid X}^{(\mathfrak{a}_g^*)}(\lambda, X) - \detequiv_{\Aug}^{(\mathfrak{a}_x^*, \mathfrak{a}_g^*)}\right\}\left(\dfrac{\alpha \bar{\Lambda}_G}{\mathfrak{a}^*_g} + \lambda \Idd\right)^{-1} \right)
\end{align}
thus, using the triangle inequality yields,

\begin{align}
  \left|\E\left[\Psi_2(X)\right] - \bar{\Psi}_2 \right|&= \dfrac{1-\alpha'}{\mathfrak{a}_x^* d}\E\left[ \left|\tr\left(\detequiv_{G \mid X}^{(\mathfrak{a}_g^*)}(\lambda, X) \left\{\left(\dfrac{\alpha \Lambda_G(X)}{\mathfrak{a}_g^*} + \lambda \Idd\right)^{-1} - \left(\dfrac{\alpha \bar{\Lambda}_G}{\mathfrak{a}^*_g} + \lambda \Idd\right)^{-1}\right\}\right)\right|\right] \\
  &\quad +  \dfrac{1-\alpha'}{\mathfrak{a}_x^* d} \left|\tr\left(\left\{ \E\left[\detequiv_{G \mid X}^{(\mathfrak{a}_g^*)}(\lambda, X)\right] - \detequiv_{\Aug}^{(\mathfrak{a}_x^*, \mathfrak{a}_g^*)}(\lambda)\right\}\left(\dfrac{\alpha \bar{\Lambda}_G}{\mathfrak{a}^*_g} + \lambda \Idd\right)^{-1} \right)\right| \eqsp,
\end{align}
and, further using the Cauchy-Scharz inequality, as well as $1-\alpha' \leq 1$, we get,
\begin{align}
  \left|\E\left[\Psi_2(X)\right] - \bar{\Psi}_2 \right| &\leq  \dfrac{1}{\lambda \sqrt{d}}\E\left[\left\|\left(\dfrac{\alpha \Lambda_G(X)}{\mathfrak{a}_g^*} + \lambda \Idd\right)^{-1} - \left(\dfrac{\alpha \bar{\Lambda}_G}{\mathfrak{a}^*_g} + \lambda \Idd\right)^{-1} \right\|_\Frob\right] \\
  &\quad +  \dfrac{1}{\lambda \sqrt{d}} \left\|\E\left[\detequiv_{G \mid X}^{(\mathfrak{a}_g^*)}(\lambda, X)\right] - \detequiv_{\Aug}^{(\mathfrak{a}_x^*, \mathfrak{a}_g^*)}(\lambda)\right\|_\Frob  \\
  &\leq \dfrac{1}{\lambda^3 \sqrt{d}} \E\left[\|\Lambda_G(X) - \bar{\Lambda}_G\|_\Frob\right] + \dfrac{1}{\lambda \sqrt{d}} \left\|\E\left[\detequiv_{G \mid X}^{(\mathfrak{a}_g^*)}(\lambda, X)\right] - \detequiv_{\Aug}^{(\mathfrak{a}_x^*, \mathfrak{a}_g^*)}(\lambda)\right\|_\Frob\eqsp.
\end{align}
Now, using that $\detequiv_{G \mid X}^{(\mathfrak{a}_g^*)}(\lambda, X) = (1-\alpha')^{-1}R_X\left(\alpha \Lambda_G(X)/((1-\alpha')\mathfrak{b}_g^*) + \lambda / (1-\alpha')\Idd\right)$, we write,
\begin{align}
  &\left|\E\left[\Psi_2(X)\right] - \bar{\Psi}_2 \right| \\
  &\quad\leq \dfrac{1}{\lambda^3 \sqrt{d}} \E\left[\|\Lambda_G(X) - \bar{\Lambda}_G\|_\Frob\right] \\
  &\qquad + \dfrac{1}{\lambda\sqrt{d}} \left\|\E\left[\left((1-\alpha') C_X + \dfrac{\alpha \Lambda_G(X)}{\mathfrak{a}_g^*} + \lambda \Idd\right)^{-1} - \left((1-\alpha') C_X + \dfrac{\alpha \Lambda_G(X)}{\mathfrak{a}_g^*} + \lambda \Idd\right)^{-1}\right]\right\|_\Frob  \\
  &\qquad + \dfrac{1}{(1-\alpha')\lambda \sqrt{d}} \left\|\E\left[R_X\left(\dfrac{\alpha \bar{\Lambda}_G}{(1-\alpha')\mathfrak{a}_g^*} + \dfrac{\lambda}{1-\alpha'}\Idd\right)\right] - \bar{R}_X^{(\mathfrak{a}_x^*)}\left(\dfrac{\alpha \bar{\Lambda}_G}{(1-\alpha')\mathfrak{a}_g^*} + \dfrac{\lambda}{1-\alpha'}\Idd\right)\right\|_\Frob \\
  &\quad \lesssim \dfrac{1}{\lambda^3 \sqrt{d}} \E\left[\|\Lambda_G(X) - \bar{\Lambda}_G\|_\Frob\right] + \dfrac{1}{(1-\alpha')\lambda \sqrt{d}} \left\|\E\left[R_X\left(\mathbf{D}\right)\right] - \bar{R}_X^{(\mathfrak{a}_x^*)}\left(\mathbf{D}\right)\right\|_\Frob
\end{align}
Where, we have used the notation,
\begin{align}
  \mathbf{D} = \dfrac{\alpha \bar{\Lambda}_G}{(1-\alpha')\mathfrak{a}_g^*} + \dfrac{\lambda}{1-\alpha'}\Idd \eqsp,
\end{align}
Finally, using \Cref{DeterministicEquivNonAugmented}, we get,
\begin{align} \label{BiasBoundPhi2_final}
  &\left|\E\left[\Psi_2(X)\right] - \bar{\Psi}_2 \right| \\
  &\quad \lesssim \dfrac{1}{(1-\alpha')\lambda \sqrt{d}} \dfrac{q\sqrt{d} \|\Sigma_X\|_\op^3}{n \uplambda_1(\Sigma_X) (\eta + \uplambda_1(\mathbf{D}))^6} + \dfrac{1}{\lambda^3 \sqrt{d}} \E\left[\|\Lambda_G(X) - \bar{\Lambda}_G\|_\Frob\right]\eqsp.
\end{align}
And, remarking that $\uplambda_1(\mathbf{D}) \geq \lambda / (1-\alpha')$, we finally get,
\begin{align} \label{BiasBoundPhi2_final}
  &\left|\E\left[\Psi_2(X)\right] - \bar{\Psi}_2 \right| \lesssim \dfrac{q \|\Sigma_X\|_\op^3}{n \uplambda_1(\Sigma_X) \lambda ((1-\alpha')\eta + \lambda)^6} + \dfrac{1}{\lambda^3 \sqrt{d}} \E\left[\|\Lambda_G(X) - \bar{\Lambda}_G\|_\Frob\right]\eqsp.
\end{align}
\subsubsection{Conclusion on the proof of \Cref{thm:ConcentrationHatE_Augmented}} \label{AppendixD_subsection2_subsubsection3}

To conclude on the proof of \Cref{thm:ConcentrationHatE_Augmented}, we plug \eqref{BoundBiasAlmostThere}, we first show that $\Phi_1(X)$ concentrates around $\bar{\Psi}_1$ (resp. $\Phi_2(X)$ around $\bar{\Psi}_2$) with a bias of order $\delta_\Aug$ (resp. $\delta_\Aug$). Introduce the following notations,
\begin{align}
  \delta_{\Psi_1} &:= \left(1 + \dfrac{\uplambda_1(\Sigma_X)}{\|\Sigma_X\|_\op}\right)\dfrac{q \|\Sigma_X\|_\op^4}{n \uplambda_1(\Sigma_X)^2 \lambda \eta^6} \\
  &\quad  + \left(1 + \dfrac{\uplambda_1(\Sigma_X)}{\|\Sigma_X\|_\op}\right)\left(\dfrac{1}{\|\Sigma_X\|_\op^2} + \dfrac{\uplambda_1(\Sigma_X)\min\{\lambda, \eta\}}{\|\Sigma_X\|_\op} + \uplambda_1(\Sigma_X)\right) \dfrac{\|\Sigma_X\|_\op^3\rme^{\cX n}}{\lambda \uplambda_1(\Sigma_X)^2 \min\{\lambda, \eta\}\sqrt{d}}  \\
  &\quad + \dfrac{\alpha}{\eta \lambda^2 \sqrt{d}}\E\left[\|\Lambda_G(X) - \bar{\Lambda}_G\|_\Frob\right] \\ 
  \delta_{\Psi_2} &:= \dfrac{q \|\Sigma_X\|_\op^3}{n \uplambda_1(\Sigma_X) \lambda ((1-\alpha')\eta + \lambda)^6} + \dfrac{1}{\lambda^3 \sqrt{d}} \E\left[\|\Lambda_G(X) - \bar{\Lambda}_G\|_\Frob\right]
\end{align}

and, we make the preliminary remark that, for some $C_1$, $C_2$ and $C_3$ independant of $n$, $d$, $m$, and that depend polynomially on $\uplambda_1(\Sigma_X)$, $\|\Sigma_X\|_\op$ and $\min\{\lambda, \eta\}$, we have,
\begin{align} \label{eq:Premi_remark_bound_delta_1_2}
  \delta_{\Psi_1} + \delta_{\Psi_2} &\lesssim \dfrac{ C_1 \|\Sigma_X\|_\op^4}{n \uplambda_1(\Sigma_X)^2 \min\{\lambda, \eta\}^7} + \dfrac{C_2\|\Sigma_X\|_\op^3 \rme^{-\cX n}}{\uplambda_1(\Sigma_X)^2 \min\{\lambda, \eta\}^2 \sqrt{d}} + \dfrac{\E\left[\|\Lambda_G(X) - \bar{\Lambda}_G\|_\Frob\right]}{\min\{\eta, \lambda\}^3 \sqrt{d}} \\
  & \lesssim \dfrac{ C_3 (1+\cX^{-1})\|\Sigma_X\|_\op^4}{n \uplambda_1(\Sigma_X)^2 \min\{\lambda, \eta\}^7}+ \dfrac{\E\left[\|\Lambda_G(X) - \bar{\Lambda}_G\|_\Frob\right]}{\min\{\eta, \lambda\}^3 \sqrt{d}}
\end{align}

From there, we bound $\E\left[\Delta\mathcal{E}_\Aug^{\Psi}(\lambda)\right]$ as,
\begin{align}
  \left|\E\left[\Delta\mathcal{E}_\Aug^{\Psi}(\lambda)\right]\right| &\lesssim \left|\bar{\Psi}_1 - \E\left[\Phi_1(X)\right]\right| + \left|\bar{\Psi}_1 - \E\left[\Phi_1(X)\right]\right| \\
  &\quad + \dfrac{\|\Sigma_X\|_\op^2}{\sqrt{d} \lambda^2} \|[\bar{\Lambda}_G, \Sigma_X]\|_\Frob + \dfrac{\delta_\Aug}{\uplambda_1(\Sigma_X) \sqrt{d}}\\
  &\lesssim \delta_{\Psi_1} + \delta_{\Psi_2} + \dfrac{\|\Sigma_X\|_\op^2}{\sqrt{d} \lambda^2} \|[\bar{\Lambda}_G, \Sigma_X]\|_\Frob + \dfrac{\delta_\Aug}{\uplambda_1(\Sigma_X) \sqrt{d}} \\ 
  &\lesssim \dfrac{ C_3 (1+\cX^{-1})\|\Sigma_X\|_\op^4}{n \uplambda_1(\Sigma_X)^2 \min\{\lambda, \eta\}^7}+ \dfrac{\E\left[\|\Lambda_G(X) - \bar{\Lambda}_G\|_\Frob\right]}{\min\{\eta, \lambda\}^3 \sqrt{d}} + + \dfrac{\|\Sigma_X\|_\op^2}{\sqrt{d} \lambda^2} \|[\bar{\Lambda}_G, \Sigma_X]\|_\Frob +\dfrac{\delta_\Aug}{\uplambda_1(\Sigma_X) \sqrt{d}}
\end{align}
Where the first inequality followed from \eqref{BoundBiasAlmostThere}, the second from \eqref{BoundBiasPsi_1} and \eqref{BiasBoundPhi2_final}, and the last one followed from \eqref{eq:Premi_remark_bound_delta_1_2}.

Now, recalling \eqref{eq:Bound_on_delta_aug}, we have,
\begin{align}
  \dfrac{\delta_\Aug}{\uplambda_1(\Sigma_X) \sqrt{d}} &\lesssim \dfrac{ \sigma_G^2\|\Sigma_X\|_\op \kappa (\|\Sigma_X\|_\op^3 + \kappa^3)}{n \uplambda_1(\Sigma_X)\lambda^6}( q_1 + \dfrac{q_2}{\kappa}) \left\{\dfrac{\uplambda_1(\Sigma_X)}{\|\Sigma_X\|_\op} + \dfrac{1 }{\kappa^3}\right\} \\
  &\quad + \dfrac{\|\Sigma_X\|_\op + q_3 \|\bar{\Lambda}_G\|_\op}{(1-\alpha) \uplambda_1(\Sigma_X)\lambda^{5/2}m} \left\{(1+u(n)) + \sqrt{\alpha}\,\Ltt_G + (1+\cX^{-1/2})\,\sqrt{\eta + \lambda}\right\} \\
  &\quad + \left(\lambda^{1/2} + \dfrac{\sqrt{d}}{m}\right)\dfrac{\E\left[\|\Lambda_G(X) - \bar{\Lambda}_G\|\Frob\right]}{(1-\alpha) \uplambda_1(\Sigma_X)\lambda^{5/2}\sqrt{d}}
\end{align}
It results that for constants $C_4, C_5$ independant of $n$, $d$, $m$, and that depend polynomially on $\uplambda_1(\Sigma_X)$, $\|\Sigma_X\|_\op$, $\kappa$, $m/n$, $u(n)$, $\Ltt_G$ and $\min\{\lambda, \eta\}$, we have,
\begin{align}
  \left|\E\left[\Delta\mathcal{E}_\Aug^{\Psi}(\lambda)\right]\right| \leq C_4 \dfrac{(1 + \sigma_G^2)(1+\cX^{-1}) (\|\Sigma_X\|_\op^4 \kappa + \|\Sigma_X\|_\op\kappa^4)}{(1-\alpha) n \uplambda_1(\Sigma_X)^2 \min\{\lambda, \eta\}^7}  + C_5 \dfrac{\E\left[\|\Lambda_G(X) - \bar{\Lambda_G}\|_\Frob\right]}{\min\{\eta, \lambda\}^3 \sqrt{d}} +  \dfrac{\|\Sigma_X\|_\op^2}{\sqrt{d} \lambda^2} \|[\bar{\Lambda}_G, \Sigma_X]\|_\Frob 
\end{align}
Writting $\delta_\Tot = \delta_{\Psi_1} + \delta_{\Psi_2} + \|\Sigma_X\|_\op^2 d^{-1/2} \lambda^{-2} \|[\bar{\Lambda}_G, \Sigma_X]\|_\Frob + \delta_\Aug$, we write from a union bound,
\begin{align}
  \Prob\left(\left|\Delta_{\Aug}(\lambda)\right| \geq t + K\delta_{\Tot}\right) &\leq \Prob\left(\left|\Delta\mathcal{E}_\Aug(\lambda) - \E\left[\Delta\mathcal{E}_\Aug^\Psi(\lambda)\right]\right| \geq t\right) \\
  &\leq \Prob\left(\left|\Phi_1(X) - \E\left[\Psi(X)\right]\right| \geq \dfrac{t}{2}\right) + \Prob\left(\left|\Phi_2(X) - \E\left[\Psi_2(X)\right]\right| \geq \dfrac{t}{2}\right) \\
  &\leq \Prob\left(\left|\Phi_1(X) - \Psi_1(X)\right| \geq \dfrac{t}{4}\right) + \Prob\left(\left|\Psi_1(X) - \E\left[\Psi_1(X)\right]\right| \geq \dfrac{t}{4}\right) \\
  &\quad + \Prob\left(\left|\Phi_2(X) - \Psi_2(X)\right| \geq \dfrac{t}{4}\right) + \Prob\left(\left|\Psi_2(X) - \E\left[\Psi_2(X)\right]\right| \geq \dfrac{t}{4}\right) \\ 
  &\lesssim \exp\left(-k (n+m) \min\left\{\zeta_x(\lambda^2 t), \zeta_g(\lambda^2 t), \zeta_g\left(\dfrac{\eta \lambda t}{\alpha}\right) \right\}\right) \\
  &\quad + \exp\left(-k \dfrac{ t^2}{(\lambda^{-1} \eta^{-3/2} / \sqrt{d(n+m)} + \Ltt_\Lambda \eta^{-1} \lambda^{-2} / \sqrt{d})^2}\right) \\
  &\leq \exp\left(-k (n+m) \min\left\{\zeta_x(\lambda^2 t), \zeta_g(\lambda^2 t), \zeta_g\left(\dfrac{\eta \lambda t}{\alpha}\right) \right\}\right) \\
  &\quad + \exp\left(-k \dfrac{ \eta^3 \lambda^2 d t^2}{( \Ltt_\Lambda \sqrt{\eta} + 1  / \sqrt{(n+m)})^2}\right)
\end{align}

Where the last inequality followed from \eqref{eq:Concentration_Phi1_minus_Psi1}, \eqref{eq:Concentration_Phi2_minus_Psi2}, \eqref{eq:Concentration_Psi_1} and \eqref{eq:Concentration_Psi_2}. The previous holds for large enough $K$ and small enough $k$ that are both universal constants.

To conclude the proof, it remains only to simplify the quantities $\delta_\Tot$ and $\zeta_x$, $\zeta_g$, in order to match the statement of \Cref{thm:ConcentrationHatE_Augmented}. We start by simplifying the exponent in the concentration statement. First, defining $\varepsilon = \min\{\lambda, \eta, 1\}$, we remark that,
\begin{align}
  \exp\left(-k (n+m) \min\left\{\zeta_x(\lambda^2 t), \zeta_g(\lambda^2 t), \zeta_g\left(\dfrac{\eta \lambda t}{\alpha}\right) \right\}\right) \leq \exp\left(-k (n+m) \min\left\{\zeta_x(\varepsilon^2 t), \zeta_g(\varepsilon^2 t) \right\}\right)
\end{align}

We recall the definition of $\zeta_x(t)$ and $\zeta_g(t)$ from \Cref{DilationFactorConcentration},
\begin{equation}
  \begin{aligned}
    \zeta_x(t) &:= \min\left\{ \dfrac{\lambda^2 t^2}{(1-\alpha)} , \lambda t, \dfrac{\lambda^3 t^2}{\sigma_G^2(\sqrt{\alpha}\Ltt_G + 1/\sqrt{n+m})^2}, \zeta_g\left(\dfrac{\lambda t}{\beta\|\Sigma_X\|_\op}\right)\right\} \\
    \zeta_g(t) &= \min\left\{\dfrac{\lambda^2 t^2}{\beta^2} , \dfrac{\lambda t}{\beta}, \dfrac{\lambda^3 (n+m) t^2}{\beta^2 (\sigma_G + u(n))^2} , \dfrac{\sqrt{\lambda^4 } t}{\beta (\sigma_G + u(n))}, \dfrac{ \lambda^3(n+m)^2 t^2}{\alpha^2 \left(\Ltt_\Lambda + \sqrt{\alpha}\sqrt{\lambda} \kappa \Ltt_G + \kappa\sqrt{\lambda} / \sqrt{n+m}\right)^2} + \dfrac{\ln(n)}{n+m}\right\}
  \end{aligned}
\end{equation}

Note that in the worst case scenariom we have $(n+m) = 1$, hence we write,

\begin{align}
  \zeta_g(t) &\geq \min\left\{\dfrac{\varepsilon^2 t^2}{\beta^2} , \dfrac{\varepsilon t}{\beta}, \dfrac{\varepsilon^3 (n+m) t^2}{\beta^2 (\sigma_G + u(n))^2} , \dfrac{\varepsilon^2 t}{\beta (\sigma_G + u(n))}, \dfrac{ \varepsilon^3(n+m)^2 t^2}{\alpha^2 \left(\Ltt_\Lambda + \sqrt{\alpha}\sqrt{\lambda} \kappa \Ltt_G + \kappa\sqrt{\lambda} / \sqrt{n+m}\right)^2} + \dfrac{\ln(n)}{n+m}\right\}\\
  &\geq  \varepsilon^3\min\left\{\dfrac{ t^2}{\beta^2\varepsilon} , \dfrac{ t}{\beta \varepsilon^2}, \dfrac{ t^2}{\beta^2 (\sigma_G + u(n))^2} , \dfrac{ t}{\beta \varepsilon(\sigma_G + u(n))}, \dfrac{ t^2}{\alpha^2 \left(\Ltt_\Lambda + \sqrt{\alpha}\sqrt{\lambda} \kappa \Ltt_G + \kappa\sqrt{\lambda} \right)^2}\right\}
\end{align}

and 

\begin{align}
  \zeta_x(t) \geq \varepsilon^3 \min\left\{ \dfrac{ t^2}{\varepsilon(1-\alpha)} , \dfrac{ t}{\varepsilon^2}, \dfrac{ t^2}{\sigma_G^2(\sqrt{\alpha}\Ltt_G + 1)^2}, \zeta_g\left(\dfrac{\varepsilon t}{\beta\|\Sigma_X\|_\op}\right)\right\}
\end{align}

We From there, we define, 

\begin{align}
  \xi_{1, x} = \varepsilon^2 \hspace{1cm} \xi_{2, x} = \max\{\varepsilon(1-\alpha), \sigma_G^2(\sqrt{\alpha}\Ltt_G + 1)^2\}
\end{align}
and,
\begin{align}
  \xi_{1, g} = \max\{\beta^2\varepsilon, \beta \varepsilon(\sigma_G + u(n))\} \hspace{1cm} \xi_{2, g} = \max\{\beta^2\varepsilon, \beta^2 (\sigma_G + u(n))^2, \alpha^2 \left(\Ltt_\Lambda + \sqrt{\alpha}\sqrt{\lambda} \kappa \Ltt_G + \kappa\sqrt{\lambda} \right)^2\}
\end{align}

This ensures that,

\begin{align}
  \zeta_g(\varepsilon^2 t) \geq \varepsilon^7\min\left\{ \dfrac{t}{\varepsilon^2\xi_{1, g}}, \dfrac{t^2}{\xi_{2, g}}\right\} \geq \varepsilon^9\min\left\{ \dfrac{t}{\varepsilon^4\xi_{1, g}}, \dfrac{t^2}{\varepsilon^2\xi_{2, g}}\right\} 
\end{align}
and, 
\begin{align}
  \zeta_x(\varepsilon^2t) &\geq \varepsilon^3\min\left\{ \dfrac{\varepsilon^2 t}{\xi_{1, g}}, \dfrac{\varepsilon^4 t^2}{\xi_{2, g}}, \dfrac{\varepsilon^3 t}{\beta \|\Sigma_X\|_\op \xi_{1, g}}, \dfrac{\varepsilon^6 t^2}{\beta^2 \|\Sigma_X\|_\op^2 \xi_{2, g}}\right\}  \\
  &\geq \varepsilon^9\min\left\{ \dfrac{t}{\varepsilon^4\xi_{1, g}}, \dfrac{t^2}{\varepsilon^2\xi_{2, g}}, \dfrac{t}{\beta \|\Sigma_X\|_\op\varepsilon^3 \xi_{1, g}}, \dfrac{t^2}{\beta^2 \|\Sigma_X\|_\op^2 \xi_{2, g}}\right\}
\end{align}

Defining $\rho_1$ and $\rho_2$ such that,
\begin{align}\label{definition_rho}
    \rho_1 &=  \max\left\{ \varepsilon^4\xi_{1, g}, \beta \|\Sigma_X\|_\op\varepsilon^3 \xi_{1, g}, \varepsilon^4\xi_{1, g}\right\} \hspace{1cm} \rho_2 &=  \max\left\{ \varepsilon^2\xi_{2, g}, \beta^2 \|\Sigma_X\|_\op^2 \xi_{2, g}, \varepsilon^2\xi_{2, g}\right\}
\end{align}

we have shown, 

\begin{align}
  \exp\left(-k (n+m) \min\left\{\zeta_x(\lambda^2 t), \zeta_g(\lambda^2 t), \zeta_g\left(\dfrac{\eta \lambda t}{\alpha}\right) \right\}\right) &\leq \exp\left(-k (n+m) \min\left\{\zeta_x(\varepsilon^2 t), \zeta_g(\varepsilon^2 t) \right\}\right) \\
  &\lesssim n \rme^{-k (n+m) t^2 / \rho_2} + n \rme^{-k (n+m) t / \rho_1}  
\end{align}

\end{document}